\documentclass{article}

 \usepackage[final]{ewrl_2026}
\usepackage[utf8]{inputenc} % allow utf-8 input
\usepackage[T1]{fontenc}    % use 8-bit T1 fonts
\usepackage{hyperref}       % hyperlinks
\usepackage{url}            % simple URL typesetting
\usepackage{booktabs}       % professional-quality tables
\usepackage{amsfonts}       % blackboard math symbols
\usepackage{nicefrac}       % compact symbols for 1/2, etc.
\usepackage{microtype}      % microtypography
\usepackage{xcolor}         % colors
\usepackage{amsmath, amssymb, amsfonts}
\usepackage{amsthm}
\usepackage{kky}
\newcommand{\E}{\mathbb{E}}
\newcommand{\R}{\mathbb{R}}
\newcommand{\eps}{\varepsilon}

\usepackage{algorithm}
\usepackage{algpseudocode}
\usepackage{enumitem}
\usepackage{comment}
\usepackage{hyperref}
\title{Minimax PAC Bounds for Learning in Exogenous Contextual MDPs}

\author{
\begin{tabular}{cccc}
Corentin Pla &
Hugo Richard &
Marc Abeille &
Vianney Perchet
\end{tabular}
\\[1ex]
CREST, ENSAE \\ Criteo AI Lab \\ FairPlay Joint Team \\ Paris, France
}

\begin{document}

\maketitle

\begin{abstract}
We study PAC learning in tabular discounted Markov decision processes with exogenous i.i.d. contexts, with discount factor $\gamma$, finite state space $\mathcal X$, action space $\mathcal A$, and context space $\mathcal Z$. At each time step, a context is drawn independently from an unknown distribution $\mu$ and revealed before the agent acts. This context may affect both rewards and transitions, while remaining
uncontrolled by the agent. Depending on the regime, the learner has access either to a sampling oracle for $\mu$, to a sampling oracle for the transition kernel conditioned on state-context-action tuples, or to both.
Oracles can be accessed before and during policy execution. The sample complexity is measured by a couple $(n,m)$, where $n$ is the number of calls to the sampling oracles before execution and $m$ is the number of calls to the sampling oracles during
execution. When rewards and transitions are known and only the context distribution $\mu$ is sampled, we give a variance-reduced algorithm that solves policy evaluation (PE), best-value estimation (BVE), and best-policy
extraction (BPE) with $\left(\widetilde O\left(1/((1-\gamma)^3\varepsilon^2)\right), 0 \right) $
sample complexity.  The rate is independent of $|\mathcal Z|$ and minimax optimal up to
logarithmic factors. As a corollary, we also obtain tight rates in the case of one-step perfect look-ahead, improving upon the existing guarantees.
In the fully unknown regime, where both \(\mu\) and \(P\) must be
learned, we show that PE remains \(|\mathcal Z|\)-free, with matching upper and lower bounds
\(\bigl(\widetilde O(|\mathcal X|/((1-\gamma)^3\varepsilon^2)),\,
\widetilde O(1/((1-\gamma)^2\varepsilon^2))\bigr)\). 
\end{abstract}

\section{Introduction}
Exogenous i.i.d. contexts arise naturally when independent external conditions are revealed before the agent acts. A canonical example is Tetris-like games (\citep{algorta2019gametetris,bodoia2012competitive,gabillon2013tetris}), where the board configuration is the controlled state, the incoming piece is the exogenous context, the action is its placement and rotation, and the reward is the score or number of cleared lines. The i.i.d. assumption corresponds to versions where pieces are independently sampled from a fixed distribution.
The same structure appears in online advertising (\cite{badanidiyuru2021learningbidcontextualprice,NIPS2017_0bed45bd,abhishek2020designingtruthfulcontextualmultiarmed}), the controlled state is the advertiser's budget, and the exogenous context is the arriving user or query profile. The action is a bid and the reward is a conversion or click value. In ride-hailing (\citep{hailing-1,bao2025timingmatchdeepreinforcement}), the controlled state es the fleet configuration, whereas the context is the incoming ride request, including origin, destination or timing. The action assigns drivers, and the reward captures revenue or waiting-time penalties. In e-commerce pricing (\citep{ban2021personalized,wang2025dynamicpricingcovariates, chen2022privacypreserving,zhao2026contextualdynamicpricing}), the controlled state includes inventory or platform dynamics, while the context is the customer or session information observed before pricing. The action is the posted price and the reward is the resulting revenue or profit.

\subsection{Related work.}

The sample complexity of tabular reinforcement learning with a generative
model is well understood. \cite{azar2012samplecomplexityreinforcementlearning}
proved minimax PAC bounds for discounted MDPs, with optimal dependence
\(\widetilde\Theta(|\mathcal X||\mathcal A|/((1-\gamma)^3\varepsilon^2))\).
Near-optimal algorithms were then obtained through variance-reduced q-value
iteration \citep{sidford2019nearoptimaltimesamplecomplexities} and model-based
plug-in methods \citep{agarwal2020modelbasedreinforcementlearninggenerative}.
For model-free methods, variance-reduced Q-learning is minimax optimal up to
logs \citep{wainwright2019variancereducedqlearningminimaxoptimal}, whereas
standard Q-learning is generally suboptimal, with tight \((1-\gamma)^{-4}\)
dependence when \(|\mathcal A|\ge 2\)
\citep{li2023qlearningminimaxoptimaltight}. Instance-dependent guarantees for
policy evaluation and best-policy identification were also studied in
generative and online settings
\citep{pananjady2020instancedependentellinftyboundspolicyevaluation,marjani2021navigatingbestpolicymarkov}.
Our work departs from this line by replacing the state-action generative model
with samples from an exogenous context distribution.

A complementary line exploits structural knowledge of the dynamics to improve
sample efficiency. \citep{Alharbi_2024} study additive-disturbance MDPs with
known transition function and i.i.d.\ unknown disturbances, obtaining bounds
independent of state and action cardinalities. \citep{modi2019samplecomplexityreinforcementlearning}
consider model-based RL with a given simulator ensemble, where the true MDP is
approximated by state-dependent convex combinations of base transition and
reward models; their complexity depends on the number of mixing parameters
rather than on $|\Scal|$ or $|\Acal|$.

Several recent works study RL with exogenous components. In ExoMDPs,
\citep{efroni2022sampleefficientreinforcementlearningpresence} consider
action-independent, reward-irrelevant exogenous variables, so the goal is to learn a policy that
ignores them. \citep{trimponias2026reinforcementlearningexogenousstates}
allow policy-relevant exogenous variables, but with an additive
action-independent reward component that can be subtracted without changing
optimal policies. \citep{sinclair2023hindsightlearningmdpsexogenous} study
resource-management Exo-MDPs where endogenous dynamics and rewards are known
deterministic functions of actions and exogenous inputs, leaving uncertainty
only over the input distribution. Finally,
\citep{wan2025exploitingexogenousstructuresampleefficient} obtain regret
bounds for Exo-MDPs via tabular and linear-mixture MDPs, with complexity driven by the exogenous state space rather than by the full state-action space. Our setting differs:
contexts are observed, decision-relevant, and enter PAC estimation of
context-dependent Bellman operators.

Our setting also resembles model predictive control \citep{CamachoBordons2013},
where known dynamics are driven by exogenous disturbances, such as price,
weather, or demand signals, akin to our contexts. Stochastic and scenario-based
MPC \citep{1632303,7740982} treat these disturbances through samples or
forecasts and compute receding-horizon controls conditioned on their
realizations. However, this literature focuses mainly on stability, recursive
feasibility, and constraint satisfaction, rather than sample-complexity
guarantees for learning from disturbance samples.

Our framework also directly subsumes reinforcement learning with one-step transition lookahead (\cite{pla2026hardnessreinforcementlearningtransition,merlis2024reinforcementlearninglookaheadinformation}). In that setting, before acting, the agent
observes the collection of next states that would be reached under the
available actions; this collection can be viewed as a single exogenous context,
sampled by the environment and revealed before the action. \cite{lu2025reinforcementlearningimperfecttransition} study MDPs with multi-step transition predictions and provide finite-sample guarantees for learning a prediction-aware Bayesian value function. In the perfect one-step
transition-lookahead case, our results sharpen this sample-complexity analysis.

\subsection{Contribution.}
We introduce a PAC learning framework for discounted MDPs with exogenous i.i.d. contexts, with finite state space $\mathcal X$, action space $\mathcal A$, and context space $\mathcal Z$. The agent has access to a generative model which takes as input a state-context-action couple and returns a realization of the reward and next state-context. The generative model can be accessed before and during policy execution. The sample complexity is measured by a couple $(n_{\rm learn},m_{\rm query})$, where $n_{\rm learn}$ is the number of calls before execution and $m_{\rm query}$ is the number of calls during
execution. First, when transition kernel \(P\) and reward \(r\) are known and the context distribution \(\mu\) is unknown, we give a variance-reduced algorithm for PE, BVE,
and BPE with complexity
\((n_{\rm learn},m_{\rm query})=(\widetilde O(\beta^3/\eps^2),0)\), where
\(\beta=(1-\gamma)^{-1}\). This rate is independent of \(|\Zcal|\) and is
minimax tight up to logarithmic factors. As a corollary, we also obtain tight rates in the case of one-step perfect look-ahead, improving upon the existing rates. We then move to the fully unknown regime, where both \(\mu\) and \(P\) must be learned. 
For policy evaluation  to query the algorithm at a realized \((x,z)\), the learner must draw fresh transition samples. 
We prove matching upper and lower bounds
\(
(n_{\rm learn},m_{\rm query})
=
\bigl(
\widetilde\Theta(|\Xcal|\beta^3/\eps^2),
\widetilde\Theta(\beta^2/\eps^2)
\bigr),
\)
with no dependence on \(|\Zcal|\).

\section{Setting and objectives}
\label{sec:setting}
\paragraph{Notation.}
For two nonnegative quantities $f$ and $g$, we write $f=O(g)$, or
$f\lesssim g$, if $f\le Cg$ for a universal constant $C>0$, and
$f=\Omega(g)$, or $f\gtrsim g$, if $f\ge cg$ for a universal constant
$c>0$. We write $f=\Theta(g)$, or $f\asymp g$, when both bounds hold.
Finally, $\widetilde O(\cdot)$ and $\widetilde\Theta(\cdot)$ denote
the corresponding notions up to logarithmic factors in the problem
parameters.

\subsection{Exogenous contextual Markov decision process.}
\label{subsec:model}
\paragraph{Model.} We consider finite discounted Markov decision processes with exogenous
i.i.d.\ contexts. A model is a tuple $\Mcal=(\Xcal,\Zcal,\Acal,P,r,\mu,\gamma)$, where \(\Xcal\) is the finite controlled state space, \(\Zcal\) is the context
space, \(\Acal\) is the finite action space, \(P\) is a controlled transition kernel, \(r\) is a reward function, \(\mu\) is the context distribution, and
\(\gamma\in[0,1)\) is the discount factor. The transition kernel is
$ P:\Xcal\times\Acal\times\Zcal \to \Delta(\Xcal),$
so that \(P(\cdot\mid x,a,z)\) is the law of the next controlled state after
taking action \(a\) in controlled state \(x\) under context \(z\). The reward
function is $r:\Xcal\times\Acal\times\Zcal\to[0,1]$
Throughout the paper, for simplicity, \(r\) is deterministic and known. Interactions are as follows: at each time \(t\ge 0\), the learner observes the augmented state \((X_t,Z_t)\in\Xcal \times \Zcal\), chooses an action \(A_t\in\Acal\), receives reward
\(r(X_t,A_t,Z_t)\), and the next state-context is generated according to $X_{t+1}\sim P(\cdot\mid X_t,A_t,Z_t),
Z_{t+1}\sim\mu(\cdot),$
where \(Z_{t+1}\) is independent of the past. 

\paragraph{Context-aware policies and value function.} A deterministic stationary context-aware policy is a map
\(\pi:\Xcal\times\Zcal\to\Acal\), under which the action at time \(t\) is \(A_t=\pi(X_t,Z_t)\). Restricting to policies
that ignore \(Z_t\) would be suboptimal and would not capture the
decision problem of interest.
For a policy \(\pi\), define its value function at state-context $(x,z)$ by
$
V^\pi(x,z)
:=
\E^\pi\!\left[
\sum_{t=0}^{\infty}\gamma^t r(X_t,A_t,Z_t)
\,\middle|\,
X_0=x,\ Z_0=z
\right].
$
The optimal value is $V^\star(x,z):=\sup_{\pi}V^\pi(x,z)$
where the supremum is over deterministic stationary context-aware policies.

The optimal value satisfies the following Bellman optimality equation :
\(V^\star(x,z)=\max_{a\in\Acal}\{r(x,a,z)+\gamma
\sum_{x',z'}P(x'\mid x,a,z)\mu(z')V^\star(x',z')\}\).  Since the next context
is drawn independently from \(\mu\), the dependence on \(z'\) enters only
through the \emph{averaged value}
\(\bar V^\star(x'):=\E_{Z'\sim\mu}[V^\star(x',Z')]\). Hence
\(V^\star(x,z)=\max_{a\in\Acal}\{r(x,a,z)+\gamma
\sum_{x'}P(x'\mid x,a,z)\bar V^\star(x')\}\), and averaging over
\(z\sim\mu\) on both sides gives the following Bellman optimality equation for the averaged value :
\[
\bar V^\star(x)
=
\E_{Z\sim\mu}
\left[
\max_{a\in\Acal}
\left\{
r(x,a,Z)
+
\gamma
\sum_{x'}
P(x'\mid x,a,Z)\bar V^\star(x')
\right\}
\right].
\]
For any context-aware policy \(\pi\), the same argument gives
\(V^\pi(x,z)=r(x,\pi(x,z),z)+\gamma
\sum_{x'}P(x'\mid x,\pi(x,z),z)\bar V^\pi(x')\), where
\(\bar V^\pi(x):=\E_{Z\sim\mu}[V^\pi(x,Z)]\). \(\bar V^\pi\) and \(\bar V^\star\) will be  the quantities learned by our
algorithms: they summarize future exogenous randomness in dimension
\(|\Xcal|\). 

For any function \(\bar v:\Xcal\to\R\), define
\( g_{x,\bar v}^{\star}(z):=\max_{a\in\Acal}\left\{r(x,a,z)+\gamma\sum_{x'}P(x'\mid x,a,z)\bar v(x')\right\},
\)
and, for any policy \(\pi\),
\(g_{x,\bar v}^{\pi}(z):=r(x,\pi(x,z),z)+\gamma\sum_{x'}P(x'\mid x,\pi(x,z),z)\bar v(x').
\)
These are the one-step Bellman expressions at state \(x\) and context
\(z\), with \(\bar v\) used to evaluate the next state. The corresponding
context-averaged operators are
\((T\bar v)(x):=\E_{Z\sim\mu}[g_{x,\bar v}^{\star}(Z)],\text{ and }(T^\pi\bar v)(x):=\E_{Z\sim\mu}[g_{x,\bar v}^{\pi}(Z)].
\)

\subsection{Learning objectives}
\label{sec:pac-objectives}
Our objective is to determine the sample complexity of policy evaluation (PE), best value estimation (BVE), and best-policy extraction (BPE), which are the tasks of estimating $V^{\pi}(x, z)$, $V^*(x, z)$, and $\pi^*(x, z)$, respectively. The learner has access to the generative oracle : depending on the regime, an oracle call returns either a sample from the context distribution \(\mu\), or a transition sample generated from \(P\) at a chosen state-context-action triplet. Oracle calls may be made before the query point \((x,z)\) is known, during an offline learning phase, or after \((x,z)\) is revealed, when answering that query.

Formally, a learning algorithm is a pair \((\Acal_{\rm off},\Ocal)\). The offline algorithm \(\Acal_{\rm off}\), given access to the prescribed generative oracles, makes at most \(n_{\rm learn}\) oracle calls and outputs a query mechanism \(\Ocal \). Given a state-context couple $(x,z)$, the query mechanism $\Ocal(x,z)$ may make at most \(m_{\rm query}\) new generative-oracle calls to produce its output.
Depending on the objective, the output of \(\Ocal\) when evaluated at state-context couple $(x,z)$ is denoted by
\(
    \widehat{\Vcal}_{}(x,z)\in\R
\) for value-estimation tasks, and by
\(
    \widehat{\Pi}_{}(x,z)\in\Acal
\) for policy-extraction tasks.

\begin{definition}[Algorithmic complexity]
\label{def:algorithmic-complexity}
A learning algorithm $(\Acal_{\rm off},\Ocal)$ has complexity
\(
    (n_{\rm learn},m_{\rm query})
\)
if the offline algorithm $\Acal_{\rm off}$ uses at most
$n_{\rm learn}$ generative-oracle calls, and if, for every query point
$(x,z)$, the query-time procedure $\Ocal$ uses at most
$m_{\rm query}$ additional generative-oracle calls.
\end{definition}

\begin{definition}[Decision-time PAC objectives]
\label{def:pac-objectives}
Fix \(\eps>0\) and \(\delta\in(0,1)\). 
It satisfies the following objectives if, for every query point
\((x,z)\), the probability is taken jointly over the offline randomness
of \(\Acal_{\rm off}\), the samples obtained in the offline phase, and
the fresh randomness used by \(\Ocal\) at query time:
\begin{enumerate}[label=\rm(\roman*),leftmargin=*]
\item \emph{Policy evaluation (PE).}
\(
    \ \mathbb P\!\left(
    \bigl|\widehat{\Vcal}^{\pi}(x,z)-V^{\pi}(x,z)\bigr|>\eps
    \right)\le \delta .
\)

\item \emph{Best-value estimation (BVE).}
\(
    \ \mathbb P\!\left(
    \bigl|\widehat{\Vcal}^{\star}(x,z)-V^{\star}(x,z)\bigr|>\eps
    \right)\le \delta .
\)

\item \emph{Best-policy extraction (BPE).}
\(
    \ \mathbb P\!\left(
    |V^{\star}(x,z)-V^{\widehat\Pi}(x,z)|>\eps
    \right)\le \delta .
\)
\end{enumerate}
\end{definition}

\section{Learning the Context Distribution with Known Dynamics}
\label{sec:mu-unknown-P-known}

Assume that \(P\) and \(r\) are known, and that only the context distribution
\(\mu\) is learned from i.i.d.\ samples. The key point is to learn only the
averaged values
\(
\bar V^\pi(x):=\E_{Z\sim\mu}[V^\pi(x,Z)],
\text{ and }
\bar V^\star(x):=\E_{Z\sim\mu}[V^\star(x,Z)],
\)
rather than a value for every \((x,z)\). Once an estimate
\(\hat{\bar V}\in\mathbb R^{|\Xcal|}\) has been computed offline, the value at
a realized query \((x,z)\) is obtained by a single Bellman backup using the
known model:
\begin{equation}
\Ocal : (x,z) \rightarrow \widehat{\mathcal V}^\star(x,z)
\;:=\;
\max_{a\in\Acal}\Bigl\{r(x,a,z)+\gamma\sum_{x'}P(x'\mid x,a,z)\,
\hat{\bar V}^\star(x')\Bigr\},
\end{equation}
Similarly, for a fixed policy $\pi$, an averaged estimate $\hat{\bar V}^\pi$
induces the policy-evaluation query mechanism
\begin{equation}
\Ocal :(x,z) \rightarrow \widehat{\mathcal V}^\pi(x,z)
\;:=\;
\E_{a\sim\pi(\cdot\mid x,z)}\Bigl[r(x,a,z)+\gamma\sum_{x'}P(x'\mid x,a,z)\,
\hat{\bar V}^\pi(x')\Bigr].
\end{equation}

and the policy query mechanism as the greedy selector
\begin{equation}
\label{eq:pi-from-vbar}
\Ocal:(x,z) \rightarrow \widehat\Pi(x,z)
\;\in\;
\arg\max_{a\in\Acal}\Bigl\{r(x,a,z)+\gamma\sum_{x'}P(x'\mid x,a,z)\,
\hat{\bar V}^\star(x')\Bigr\}.
\end{equation}
Since \(P\) and \(r\) are known, these algorithms use no additional sampling at query time. Thus \(m_{\mathrm{query}}=0\), and the only cost to report is the offline learning budget \(n_{\text{learn}}\).

\subsection{Main results}
\label{sec:main-results-mu-unknown}
We now state the guarantee for the algorithms defined above.

\begin{proposition}[Minimax lower bound]
\label{prop:lb-main}
Fix $\gamma\in[1/2,1)$. There exists an absolute constant $c>0$ such
that, for every $\eps\in(0,\beta/18)$ and every PAC objective
$\mathcal P\in\{\mathrm{PE},\mathrm{BVE},\mathrm{BPE}\}$ from
Definition~\ref{def:pac-objectives}, no algorithm
$(\Acal_{\rm off},\Ocal)$ with total budget
$n_{\rm learn}+m_{\rm query}<c\beta^3/\eps^2$ can satisfy
$\mathcal P$ with confidence $\delta=1/6$ uniformly over a family of
contextual MDPs in which only the context distribution $\mu$ varies.
\end{proposition}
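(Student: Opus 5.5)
We reduce to a two-point hypothesis test between context distributions $\mu_+$ and $\mu_-$ that are close in KL but induce values differing by more than $2\eps$. Since $P$ and $r$ are fixed and known, each oracle call returns only a draw from $\mu$. Hence the whole transcript, offline and query-time, is a sequence of at most $N=n_{\rm learn}+m_{\rm query}$ i.i.d.\ draws from $\mu$. Adaptivity cannot help, because the oracle has no input. By Le Cam's method (Bretagnolle--Huber or Pinsker), any test distinguishing $\mu_+$ from $\mu_-$ with error at most $1/6$ needs $N\,\mathrm{KL}(\mu_+\|\mu_-)\gtrsim 1$. It therefore suffices to build the instance so that $\mathrm{KL}\lesssim \eps^2/\beta^3$ while the target quantity separates by more than $2\eps$.

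\textbf{Construction.} We use the standard hard instance of \cite{azar2012samplecomplexityreinforcementlearning}, transplanted into the context.
\begin{itemize}
\item There is one relevant state $x_0$, one action (for PE and BVE), and contexts $\Zcal=\{z_{\rm stay},z_{\rm leave}\}$.
\item Under $z_{\rm stay}$, the agent stays in $x_0$ with reward $1$. Under $z_{\rm leave}$, it moves to an absorbing zero-reward state $x_1$ (with reward $1$ on that step, say).
\item Set $\mu_\pm(z_{\rm stay})=p_\pm$ with $p_\pm=p\pm\Delta$ and $p=\gamma$-scale, i.e.\ $1-p\asymp 1-\gamma$. Concretely, take an effective continuation probability $q=\gamma p$ chosen so that $1/(1-q)\asymp\beta$.
\end{itemize}
Then $\bar V(x_0)=1/(1-\gamma p)$ up to the $z$-dependent first step, and $V(x_0,z)$ inherits the same sensitivity: $\partial \bar V/\partial p=\gamma/(1-\gamma p)^2\asymp\beta^2$. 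Choosing $\Delta\asymp\eps/\beta^2$ gives a value gap above $2\eps$. Meanwhile $\mathrm{KL}(\mathrm{Ber}(p_+)\|\mathrm{Ber}(p_-))\asymp\Delta^2/(p(1-p))\asymp \Delta^2\beta = \eps^2/\beta^3$. Hence $N\gtrsim\beta^3/\eps^2$. The constraints $\eps<\beta/18$ and $\gamma\ge 1/2$ ensure that $p\pm\Delta\in(0,1)$ and that the constants work out. The choice is $p=4\gamma-\ldots$ in the Azar et al.\ form $p=(4\gamma-1)/(3\gamma)$, which gives $1-\gamma p\asymp 1-\gamma$ and $1-p\asymp 1-\gamma$.

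\textbf{Per-objective reduction.}
\begin{itemize}
\item \emph{PE and BVE.} With a single action the two coincide, and a $\eps$-accurate estimate at the query $(x_0,z_{\rm stay})$ decides the sign of $\pm$, contradicting the testing bound.
\item \emph{BPE.} Add a second action at $x_0$ that, regardless of context, gives a known fixed value $w$ equal to the midpoint of the two values, e.g.\ through a deterministic chain reward. Under $\mu_+$ the optimal action is the stay-mechanism action, and under $\mu_-$ it is the safe action, each with suboptimality gap above $\eps$. An $\eps$-optimal output $\widehat\Pi(x_0,z)$ then identifies the hypothesis.
\end{itemize}

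\textbf{Main obstacle.} The delicate part is getting $\mathrm{KL}\asymp\Delta^2\beta$ rather than $\Delta^2$. This requires the Bernoulli mean near the boundary, $1-p\asymp 1-\gamma$, together with a value derivative still of order $\beta^2$. We need to check that $1-\gamma p\asymp 1-\gamma$ holds simultaneously with $1-p\asymp1-\gamma$, which the Azar et al.\ parameterization guarantees for $\gamma\ge1/2$. We must also verify for BPE that the midpoint arm yields gaps of order $\eps$ on both sides.
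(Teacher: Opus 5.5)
Your PE/BVE argument is essentially the paper's. It uses the same two-state stay/leave instance, a leave probability of order $1/\beta$, value sensitivity $\asymp\beta^2$, per-sample divergence $\asymp\beta\Delta^2$, and Le Cam once every oracle call is seen to be a $\mu$-draw. The paper perturbs one-sidedly around $\mu_1=1-\rho$, $\rho=1/(2\gamma\beta)$, which only changes constants. With your symmetric choice you need $16\eps/(9\gamma^2\beta^2)<\Delta<1-p=1/(3\gamma\beta)$, and this interval is nonempty for $\eps<\beta/18$.

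The gap is in the BPE reduction, which fails as stated. If the safe arm of value $w$ is added at $x_0$ itself, then $x_0$ is revisited, and the agent can fall back on the safe arm at $(x_0,z_{\mathrm{leave}})$. The $Q$-value of staying is then no longer the one-action value $1+\gamma/(1-\gamma p_\pm)$ you calibrated $w$ against. Concretely, take $1<w<\beta$; your midpoint $w\approx 1+3\gamma\beta/4$ lies in this range. The policy ``stay at $z_{\mathrm{stay}}$, safe at $z_{\mathrm{leave}}$'' has $\bar V(x_0)=(p+(1-p)w)/(1-\gamma p)$, hence $1+\gamma\bar V(x_0)=(1+\gamma(1-p)w)/(1-\gamma p)$. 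This exceeds $w$ if and only if $w<\beta$, a condition that does not involve $p$. So this policy is greedy with respect to its own value and is optimal for every $p$. (For $w\le 1$, staying is optimal at both contexts, again for every $p$.) The optimal context-aware policy is therefore identical under $\mu_+$ and $\mu_-$. A fixed policy satisfies BPE with zero samples, and no test can be extracted. The fallback at the leave context removes exactly the penalty that made the value sensitive to $p$. The ``verify the gaps on both sides'' step you postponed would fail for every choice of $w$.

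The missing idea is to place the decision at a state that is never revisited and whose successors involve no further choice. The paper uses a zero-reward root from which $a_i$ moves deterministically to $x_i$. There $x_1$ and $x_2$ have action-independent dynamics with averaged self-loop probabilities $\bar p_1=\mu_1$ and $\bar p_2=1-(1-q)\mu_1$, which cross at $\mu_1=1-\rho$. With $\mu_1=1-\rho\pm\Delta$, the wrong action costs $\gamma|\bar V^\star(x_1)-\bar V^\star(x_2)|\ge\gamma^2\beta^2\Delta/3$, while $\chi^2\le 32\gamma\beta\Delta^2$. Your midpoint idea also works once moved to such a root. Let $a_1$ lead to your action-independent stay chain, and $a_2$ to a $\mu$-independent self-loop with reward $w/\beta\in[0,1]$, where $w$ is the midpoint of $\bar V_\pm(x_0)=1/(1-\gamma p_\pm)$. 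The suboptimality at the root is then $\gamma(\bar V_+-\bar V_-)/2\gtrsim\gamma^2\beta^2\Delta$ under both hypotheses. The rest of your argument goes through unchanged.
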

The proof constructs, for each objective \(\mathcal P\), a hard family
\(\Fcal_{\mathcal P}\) of contextual MDPs sharing the same transition
kernel and reward function, and differing only through the context
distribution \(\mu\).

\begin{proof}[Proof sketch]
We sketch the argument for PE and BVE; the full proof, including the
BPE construction, is given in Appendix~\ref{subsec:lb-mdp}.

Since there is only one action, PE and BVE coincide. The hard instance has two states: a state \(x_0\) with reward one and an absorbing zero-reward state \(x_T\). The next state is determined by the context: \(z=1\) keeps the
agent  at \(x_0\), while \(z=2\) sends it to \(x_T\). Thus the value at
\((x_\star,z_\star)=(x_0,1)\) depends only on the probability
\(\mu_1=\mu(z=1)\), namely
\(
    V_\mu(x_0,1)=1+\frac{\gamma}{1-\gamma\mu_1}.
\)

We compare two nearby context distributions with
\(\mu_1=1-\rho\) and \(\mu_1=1-\rho+\Delta\), where
\(\rho=1/(2\gamma\beta)\). Around this point, the exit context is rare,
so the effective horizon is of order \(\beta\), and a perturbation
\(\Delta\) in \(\mu_1\) changes the value by order \(\beta^2\Delta\).
Hence \(\eps\)-accurate estimation requires distinguishing alternatives
with \(\Delta\asymp\eps/\beta^2\). But each oracle call only gives one
i.i.d.\ sample from \(\mu\), and the two alternatives have per-sample
divergence of order \(\beta\Delta^2\). Le Cam's two-point method
therefore forces
\(
    n_{\rm learn}+m_{\rm query}
    \gtrsim \frac{1}{\beta\Delta^2}
    \asymp \frac{\beta^3}{\eps^2}.
\)
\end{proof}

This lower bound is matched, up to logarithmic factors, by the following upper bound.
\begin{theorem}
\label{thm:main-mu-unknown}
Assume that \(P\) and \(r\) are known and that \(\mu\) is accessible through
i.i.d.\ samples. For every \(\eps\in(0,1]\) and \(\delta\in(0,1)\), each of
PE, BVE, and BPE admits a decision-time mechanism with complexity
\[
(n_{\rm learn},m_{\rm query})
=
\left(
O\!\left(
\frac{\beta^3\log(|\Xcal|/\delta)\log(\beta/\eps)}
{\eps^2}
\right),
0
\right)
=
\left(
\widetilde O\!\left(\frac{\beta^3}{\eps^2}\right),
0
\right),
\]
satisfying the corresponding PAC objective of
Definition~\ref{def:pac-objectives}. 
\end{theorem}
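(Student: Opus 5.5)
We aim to compute an estimate \(\hat{\bar V}\in\R^{|\Xcal|}\) with \(\|\hat{\bar V}-\bar V\|_\infty\le \eps/\gamma\) (up to constants), and then plug it into the query mechanisms of Section~\ref{sec:mu-unknown-P-known}. For any \((x,z)\), the one-step backup with known \(P,r\) is \(\gamma\)-Lipschitz in \(\ell_\infty\). Hence \(|\widehat{\mathcal V}(x,z)-V(x,z)|\le\gamma\|\hat{\bar V}-\bar V\|_\infty\) for PE and BVE, with no query-time sampling, so \(m_{\rm query}=0\). For BPE, the greedy policy \(\widehat\Pi\) with respect to \(\hat{\bar V}^\star\) is evaluated via the averaged equations. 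Write \(\bar V^{\widehat\Pi}-\bar V^\star=(T^{\widehat\Pi}\bar V^{\widehat\Pi}-T^{\widehat\Pi}\bar V^\star)+(T^{\widehat\Pi}\bar V^\star-T\bar V^\star)\). The second term is at least \(-2\gamma\|\hat{\bar V}^\star-\bar V^\star\|_\infty\) by greediness. This gives \(\|\bar V^{\widehat\Pi}-\bar V^\star\|_\infty\le 2\gamma\beta\|\hat{\bar V}^\star-\bar V^\star\|_\infty\), which loses a factor \(\beta\). To avoid it, I would instead use the standard argument that a policy greedy with respect to a value satisfying the monotone certificate \(\hat{\bar V}\le T^{\widehat\Pi}\hat{\bar V}\) and \(\hat{\bar V}\ge \bar V^\star-\eps\) is \(\eps\)-optimal. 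Variance-reduced value iteration in the style of \citet{sidford2019nearoptimaltimesamplecomplexities} produces exactly such monotone iterates. The pointwise guarantee at \((x,z)\) then follows by one more backup.

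The core is the averaged Bellman operator \(T\bar v(x)=\E_{Z\sim\mu}[g^\star_{x,\bar v}(Z)]\), which is a \(\gamma\)-contraction. It is an expectation over a single scalar random variable \(Z\), and for each fixed \(\bar v\) the functions \(g_{x,\bar v}\) are computable exactly. I would run epochs \(k=1,\dots,K=O(\log(\beta/\eps))\) of variance-reduced value iteration, halving the error each epoch. At epoch start we hold \(\bar v_k\) with \(\|\bar v_k-\bar V^\star\|_\infty\le u_k\). We draw \(N_k\) fresh context samples and form \(\widehat{T\bar v_k}(x)=\frac1{N_k}\sum_i g^\star_{x,\bar v_k}(Z_i)\). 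The inner loop runs \(L=O(\beta)\) iterations of the form
\[
\bar w_{l+1}(x)=\widehat{T\bar v_k}(x)+\frac1{M}\sum_{j}\bigl(g^\star_{x,\bar w_l}(Z'_j)-g^\star_{x,\bar v_k}(Z'_j)\bigr),
\]
using fresh samples. Because \(|g^\star_{x,\bar w}-g^\star_{x,\bar v}|\le\gamma\|\bar w-\bar v\|_\infty=O(u_k)\), the correction terms have range \(O(u_k)\). Hoeffding with \(M=O(\beta^2\log(|\Xcal|\beta/\delta))\) keeps their error at \(u_k/(c\beta)\), which is tolerable after accumulation through the contraction. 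Only \(|\Xcal|\) coordinates are estimated, and all of them share the same samples, so the union bound costs only \(\log|\Xcal|\). There is no \(|\Zcal|\) dependence, because \(Z\) is only integrated against bounded known functions.

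The main obstacle is the term \(\widehat{T\bar v_k}-T\bar v_k\). With plain Hoeffding over the range \(\beta\) of \(g\), it would require \(N_k\asymp\beta^2\cdot\beta^2/u_k^2\) samples, which gives a \(\beta^4\) rate. The \(\beta^3\) rate requires a Bernstein bound with the variance \(\sigma_x^2=\mathrm{Var}_{Z\sim\mu}(g^\star_{x,\bar V^\star}(Z))\). One then needs a total-variance argument showing that the error propagated through \((I-\gamma P^{\pi})^{-1}\)-type resolvents satisfies \(\|(I-\gamma\bar P^{\pi})^{-1}\sigma\|_\infty\lesssim\beta^{3/2}\). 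Here \(\bar P^\pi(x,x')=\E_Z[P(x'|x,\pi(x,Z),Z)]\). The key step is a law of total variance for the averaged chain. Note that \(V^\pi(X_t,Z_t)\) has conditional variance given \(X_t\) equal to \(\mathrm{Var}_Z(g^\pi_{x,\bar V^\pi}(Z))\), so \(\sigma_x^2\) is exactly the one-step variance of the return at state \(x\) in the joint chain. The classical identity \(\sum_t\gamma^{2t}\E[\mathrm{Var}_t]\le\beta^2\cdot\beta\) (Azar et al.) then gives the bound. I would verify this identity carefully in the augmented chain on \((x,z)\), where the only randomness at each step beyond \(X_{t+1}\) is \(Z_{t+1}\). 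I would also handle the mismatch between \(\bar v_k\) and \(\bar V^\star\) in the variance by the triangle inequality \(\sigma(\bar v_k)\le\sigma(\bar V^\star)+u_k\).

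With this, taking \(N_k=O(\beta^3\log(|\Xcal|K/\delta)/u_k^2)\) suffices. The first-epoch costs dominate in a geometric sum, giving \(n_{\rm learn}=O(\beta^3\log(|\Xcal|/\delta)\log(\beta/\eps)/\eps^2)\) after \(K\) epochs. The same scheme with \(g^\pi\) in place of \(g^\star\) handles PE.
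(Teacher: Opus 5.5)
Your route is the paper's: a Sidford-style variance-reduced halving scheme on the averaged operator $T$, with a Bernstein-controlled anchor, Hoeffding-controlled increments of range $O(u)$, a total-variance bound for $\bar P^\pi$, the perturbation $\sqrt{v_0}\le\sqrt{v_\star}+u$, a sub-solution certificate for BPE, and a one-step lift.

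\textbf{The gap.} The inner update you actually write down is unbiased and non-monotone. It therefore does not produce the certificate $\hat{\bar V}\le T\hat{\bar V}$, $\hat{\bar V}\le\bar V^\star$ on which your BPE step rests, and without it you fall back to the $2\gamma\beta$ loss you computed yourself. It also leaves the value analysis incomplete:
\begin{itemize}
\item Your claim that the corrections have range $O(u_k)$ presupposes $\|\bar w_l-\bar v_k\|_\infty=O(u_k)$. That requires two-sided control of $\bar w_l-\bar V^\star$.
\item The upper deviation propagates through the kernels $\bar P^{\pi_l}$ of the time-varying greedy policies of the iterates. The fixed-policy bound on $(I-\gamma\bar P^{\pi})^{-1}\sqrt{v^\pi}$ does not cover these products.
\item A crude $\ell_\infty$ bound on the anchor error, propagated through the contraction, is too large by a factor $\sqrt\beta$.
\end{itemize}

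\textbf{How the paper closes it.} Every estimate is made a computable lower confidence bound, and the iterates are monotonized:
\begin{itemize}
\item $w=\tilde w-\sqrt{2\alpha_1\hat\sigma}-4\alpha_1^{3/4}\beta-\tfrac23\alpha_1\beta$;
\item $\Delta^{(i)}$ is shifted down by $u/(16\beta)$;
\item $\bar v^{(i)}=\max(\bar v^{(i-1)},w+\Delta^{(i)})$.
\end{itemize}
On the good event this gives $\bar v^{(0)}\le\bar v^{(i)}\le\bar V^\star$ and $\bar v^{(i)}\le T\bar v^{(i)}$. From these you get the range bound for free, the certificate, and a one-sided recursion $\bar V^\star-\bar v^{(i)}\le\gamma\bar P^{\pi^\star}(\bar V^\star-\bar v^{(i-1)})+\xi$ that involves only $\pi^\star$.

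The specific ingredient you are missing is that the Bernstein width $\sqrt{2\alpha_1 v_0(x)}$ is not computable. The downward shift must use the empirical variance $\hat\sigma(x)$, plus a lower-order slack absorbing $|\sqrt{\hat\sigma(x)}-\sqrt{v_0(x)}|$. Using the worst-case width $\beta\sqrt{2\alpha_1}$ instead costs a factor $\beta$ and brings back a $\beta^4$ rate.

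\textbf{Two smaller slips.} First, the total-variance identity you need is $\sum_{t}\gamma^{2t}(\bar P^\pi)^t v^\pi\preceq\beta^2\mathbf 1$. This holds because the return lies in $[0,\beta]$ and only the context part of each martingale increment is kept. Lemma~C.3 of \cite{sidford2019nearoptimaltimesamplecomplexities} then gives $\sqrt2\,\beta^{3/2}$. The right-hand side $\beta^2\cdot\beta$ you wrote would only give $\sqrt2\,\beta^2$, hence a $\beta^4$ rate. Second, $N_k\propto u_k^{-2}$ increases across epochs, so it is the last epoch, not the first, that dominates the geometric sum.
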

\begin{proof}
The algorithm is described Section~\ref{sec:halferr-overview}, while the full proof is given in  Appendix~\ref{proof:mu-known-ub-star}.
\end{proof}

\subsection{The algorithm}
\label{sec:halferr-overview}

We now describe the algorithm behind Theorem~\ref{thm:main-mu-unknown}.
It learns \(\bar V^\star\), the unique fixed point of the
context-averaged Bellman operator \(T\) introduced in Section \ref{subsec:model}.  Since \(P\) and \(r\) are known, this quantity can be computed exactly on every sampled context. Thus the only statistical task is to approximate the expectation over \(Z\sim\mu\). Given samples \(Z_1,\ldots,Z_n\sim\mu\), the empirical operator is
\(
(\hat T\bar v)(x) := \frac1n\sum_{i=1}^n g_{x,\bar v}^\star(Z_i).
\)
The algorithm below is designed to approximate this operator without recomputing a full empirical Bellman backup from a large new batch of samples at every iteration.

\paragraph{Algorithm outline.}
Our algorithm is a contextual adaptation of the variance-reduced value
iteration scheme of~\citep{sidford2019nearoptimaltimesamplecomplexities}.
The idea is to estimate one reference Bellman backup accurately, and
then update it through cheaper estimates of small differences. The
routine \textsc{HalfErr} takes as input an initial value $\bar v^{(0)}$ which is a $u$-subsolution meaning it satisfies
\(\bar v^{(0)}\le T\bar v^{(0)}\) 
and \(\|\bar V^\star-\bar v^{(0)}\|_\infty\le u\). \textsc{HalfErr} then outputs $\bar v^R$ which is with high probability a $\frac{u}{2}$-subsolution. Applying the procedure $\log(\frac{\beta}{\epsilon})$ times yield $\hat{\bar{V}}$ which is with high probability a $\epsilon$-subsolution. Having a $\epsilon$-subsolution is interesting because the policy $\hat{\Pi}$ greedy with respect to $\hat{\bar{V}}$ then satisfies $\|V^{\Pi} - V^*\|_\infty \leq \epsilon$.
Indeed $\hat{ \bar V} \leq T \hat{ \bar V} = T^{\Pi} \hat{ \bar V} $ so $\hat{ \bar V} \leq \bar{V}^{\Pi}$ and we know $\hat{ \bar V} \geq V^* - \epsilon$. With this algorithm, best-policy extraction is therefore no more costly than best-value estimation.

We now describe the \textsc{HalfErr} procedure. First, the algorithm computes a high probability lower bound $w$ of $T \bar v^{(0)}$ using $n_1$ samples. Then, for $i \in 1\ldots R$, it computes sequentially $\Delta^{(i)}$ a lower bound of $T \bar v^{(i-1)} - T \bar v^{(0)}$ using $n_2$ samples and sets $\bar v^{(i)} = \max(\bar v^{(i-1)}, w(x) + \Delta^i(x))$. The algorithm then returns $\bar v^{(R)}$.

Let us now describe the intuition behind the choice of $n_1, n_2$ and $R$.
Ignoring logs, constants and lower order terms, the lower bounds satisfy with high probability, by Lemmas~\ref{lem:anchor-error} and \ref{lem:inner-error}, combined in Lemma~\ref{lem:sandwich}
\[
\bigg(T \bar v^{(i-1)} - (w + \Delta^i)\bigg)(x) \leq \xi(x) := \sqrt{ \frac{\Var(g^*_{x, \bar v^{(0)}}(Z))}{n_1}} + \gamma \| \bar v^{(i-1)} - \bar v^{(0)} \|_\infty \sqrt{\frac1{n_2}}.
\]
By definition of $\bar{v}^i$ we have $\bar{v}^i \geq w + \Delta^i \geq T \bar v^{(i-1)} - \xi$. by Lemma~\ref{lem:contraction}(iv) that 
\[
\bar{V}^* - \bar v^{(i)} \leq \gamma \bar{P}^{\pi^*} (\bar V^* - \bar v^{(i-1)}) + \xi.
\]

Iterating the inequality gives $\bar{V^*} - \bar v^{(R)} \leq \gamma^R (\bar{P}^{\pi^*})^R (V^* - \bar v^{(0)}) + \sum_{i=0}^{\infty} \gamma^i (\bar{P}^{\pi^*})^i \xi$.
Then, Lemma~\ref{lem:sqrt-stab} together with Corollary~\ref{cor:V-star-bound} shows that 
\[
\|(I - \gamma \bar P^{\pi^*})^{-1} \sqrt{\Var(g^*_{\cdot, \bar v^{(0)}}(Z))}\|_\infty \lesssim \beta^{3/2} + \beta u .
\]
It follows that choosing $R$ on the order of $\beta$, $n_1$ on the order of $\frac{\beta^3}{u^2}$ and $n_2$ on the order of $\beta^2$ guarantees 
$\|\bar{V}^* - \bar v^{(R)}\|_\infty \leq \frac{u}{2}$. 
The sample complexity of \textsc{HalfErr} is dominated by $n_1$ which is always smaller than $\frac{\beta^3}{\epsilon^2}$.

\begin{algorithm}[H]
\caption{\textsc{HalfErr} (BVE)}
\label{alg:halferr-main}
\begin{algorithmic}[1]
\Require Known $P$ and $r$; sampling oracle for $\mu$; initial value
$\bar v^{(0)}$ with $0\le\bar v^{(0)}\le\beta\mathbf 1$,
$\bar v^{(0)} \le T\bar v^{(0)}$, and
$\bar V^\star - \bar v^{(0)} \le u\mathbf 1$;
target accuracy $u\in(0,\beta]$; confidence $\delta\in(0,1)$.
\Ensure $\bar v$ such that $\bar v\le T\bar v$ and
$\bar V^\star - \bar v \le (u/2)\mathbf 1$.
\State Set
$R \gets \lceil c_1\beta\ln(4\beta/u)\rceil$,\;
$n_1 \gets c_2\beta^3 u^{-2}L$,\;
$n_2 \gets c_3\beta^2\log(2R|\Xcal|/\delta)$,\;
$\alpha_1 \gets L/n_1$.
\Statex \textbf{Phase 1 — anchor estimate of $T\bar v^{(0)}$}
\State Draw $Z_1,\dots,Z_{n_1}\stackrel{\mathrm{i.i.d.}}{\sim}\mu$.
\For{each $x\in\Xcal$}
  \State $\tilde w(x) \gets \frac{1}{n_1}\sum_{i=1}^{n_1} g_{x,\bar v^{(0)}}^\star(Z_i)$.
  \State $\hat\sigma(x) \gets \frac{1}{n_1}\sum_{i=1}^{n_1}\bigl(g_{x,\bar v^{(0)}}^\star(Z_i)\bigr)^2 - \tilde w(x)^2$.
  \State $w(x) \gets \tilde w(x) - \sqrt{2\alpha_1\,\hat\sigma(x)} - 4\alpha_1^{3/4}\beta - \tfrac{2}{3}\alpha_1\beta$.
\EndFor
\Statex \textbf{Phase 2 — variance-reduced inner iterations}
\For{$i = 1,\dots,R$}
  \State Draw $\tilde Z_1^{(i)},\dots,\tilde Z_{n_2}^{(i)}\stackrel{\mathrm{i.i.d.}}{\sim}\mu$,
  independent of all previous samples.
  \For{each $x\in\Xcal$}
    \State $\Delta^{(i)}(x) \gets \frac{1}{n_2}\sum_{j=1}^{n_2}\Bigl[g_{x,\bar v^{(i-1)}}^\star(\tilde Z_j^{(i)}) - g_{x,\bar v^{(0)}}^\star(\tilde Z_j^{(i)})\Bigr] - \frac{u}{16\beta}$.
    \State $\tilde v^{(i)}(x) \gets w(x) + \Delta^{(i)}(x)$.
    \State $\bar v^{(i)}(x) \gets \max\bigl(\tilde v^{(i)}(x),\, \bar v^{(i-1)}(x)\bigr)$.
    \Comment{Monotonicity}
  \EndFor
\EndFor
\State \Return $\bar v^{(R)}$.
\end{algorithmic}
\end{algorithm}

To perform policy evaluation, \(g_{x,\bar v}^{\star}\) must be replaced by the fixed-policy expression \(g_{x,\bar v}^{\pi}\), and \(T\) must be replaced by \(T^\pi\). The rest of the algorithm is unchanged and the analysis mirrors that of best value evaluation.

\subsection{One-step perfect transition lookahead}
\label{sec:1-step-look-ahead-application}
We now apply our results above by specialising them to a model of independent interest: the perfect one-step transition lookahead (\citep{pla2026hardnessreinforcementlearningtransition, merlis2024reinforcementlearninglookaheadinformation}). The model is built on top of a finite discounted tabular MDP $\Mcal_0 = (\Xcal, \Acal, P_0, r, \gamma)$, with state space $\Xcal$, action space $\Acal$, transition kernel $P_0:\Xcal\times\Acal\to\Delta(\Xcal)$, reward $r:\Xcal\times\Acal\to[0,1]$, and discount factor $\gamma\in[0,1)$.

In the look-ahead model, at each time step $t$, before committing to an action, the agent observes the entire tensor of hypothetical successors that would be reached from every $(x,a)$ pair under $P_0$. Formally, the context space is $\Zcal := \Xcal^{\Xcal\times\Acal}$, and a context $z=(z(x,a))_{(x,a)\in\Xcal\times\Acal}\in\Zcal$ specifies, for every $(x,a)$, the next state $z(x,a)\in\Xcal$ that would be reached if action $a$ were taken at state $x$. Contexts are drawn independently across coordinates from the underlying kernel $P_0$:
\(
\mu \;=\; \bigotimes_{(x,a)\in\Xcal\times\Acal} P_0(\cdot\mid x,a).
\) Having observed $(X_t, Z_t)$, the agent picks $A_t$, receives reward $r(X_t, A_t)$, and transitions deterministically to $X_{t+1} = Z_t(X_t, A_t)$, while a fresh independent context $Z_{t+1}\sim\mu$ is drawn for the next step.

\begin{proposition}[]
\label{prop:exo-look}
Consider the perfect one-step transition lookahead model built on a
finite discounted MDP $\Mcal_0=(\Xcal,\Acal,P_0,r,\gamma)$
Then the lookahead model coincides with the exogenous contextual MDP
$\Mcal=(\Xcal,\Zcal,\Acal,P,\tilde r,\mu,\gamma)$ of
Section~\ref{sec:setting} defined by the closed-form, $\mu$-independent
specification
\[
  P(x'\mid x,a,z) \;=\; \mathbf 1\{x'=z(x,a)\},
  \qquad
  \tilde r(x,a,z) \;=\; r(x,a),
  \qquad (x,a,z,x')\in\Xcal\times\Acal\times\Zcal\times\Xcal.
\]
\end{proposition}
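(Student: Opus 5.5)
The plan is to check that the two processes have the same law. Both are built on the same observation space $\Xcal\times\Zcal$ and action space $\Acal$, and use the same discount $\gamma$. So it suffices to show three things: the reward at each step coincides, the conditional law of $(X_{t+1},Z_{t+1})$ given the history and $A_t$ coincides, and the initial context law coincides. Once this is shown, the two models induce the same trajectory distribution under every context-aware (possibly history-dependent) policy. This follows by induction on $t$ via the Ionescu--Tulcea construction. Consequently the value functions $V^\pi$ and $V^\star$ agree, as do the Bellman equations of Section~\ref{subsec:model}.

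First I handle the context. In the lookahead model, $Z_{t+1}\sim\mu=\bigotimes_{(x,a)}P_0(\cdot\mid x,a)$ is drawn fresh and independently of the past. This is exactly the exogenous i.i.d. context mechanism of Section~\ref{sec:setting}. Note that $\mu$ depends only on $P_0$, not on the agent.

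Second I handle the reward. The reward $r(X_t,A_t)\in[0,1]$ does not depend on $Z_t$, so setting $\tilde r(x,a,z)=r(x,a)$ gives a deterministic reward in $[0,1]$, as the model requires.

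Third I handle the controlled transition. Given $(X_t,Z_t,A_t)=(x,z,a)$, the lookahead next state $z(x,a)$ is deterministic. Its law is therefore the Dirac mass $\mathbf 1\{x'=z(x,a)\}$, which is a valid element of $\Delta(\Xcal)$. It is also a function of $(x,a,z)$ alone, with no dependence on $\mu$, and it is independent of the fresh $Z_{t+1}$. This matches $X_{t+1}\sim P(\cdot\mid X_t,A_t,Z_t)$ and $Z_{t+1}\sim\mu$ independent.

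As a sanity check I would verify consistency with $\Mcal_0$. Averaging $P$ over $z\sim\mu$ gives
\[
\E_{Z\sim\mu}[\mathbf 1\{x'=Z(x,a)\}]=P_0(x'\mid x,a),
\]
so context-unaware policies recover the dynamics of $\Mcal_0$. This is not needed for the identity, but it confirms the construction.

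The only real subtlety is measurability and well-definedness, and it is mild. Because $\Zcal=\Xcal^{\Xcal\times\Acal}$ is finite, every object is a finite table and no measure-theoretic issues arise. I therefore expect no genuine obstacle. The main care point is stating precisely that the joint transition kernels on $\Xcal\times\Zcal$ coincide, namely
\[
\mathbf 1\{x'=z(x,a)\}\,\mu(z').
\]
From this, equality of the trajectory laws follows.
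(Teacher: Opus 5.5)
Your proposal is correct and follows essentially the same route as the paper: a direct component-by-component check (fresh i.i.d.\ context from the product law $\mu$, $z$-independent reward, Dirac transition at $z(x,a)$), with your Ionescu--Tulcea step making ``coincides'' precise as equality of trajectory laws under every policy. Your consistency check, restricted to context-unaware policies, is in fact more careful than the paper's side remark that $\bar V^\star$ coincides with $V_0^\star$; that remark fails in general, because a context-aware policy can exploit the observed successors and so lookahead can strictly increase the optimal value.
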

Consequently, when $P_0$ is unknown and accessed only through lookahead
samples $Z\sim\mu$, the perfect one-step lookahead learning problem is
an instance of the regime of Section~\ref{sec:mu-unknown-P-known}: $P$
and $\tilde r$ are known in closed form, while only the context
distribution $\mu$ is unknown. See section \ref{proof:exo-look} for the proof.  Proposition~\ref{prop:exo-look} embeds the lookahead learning problem
into the framework of Section~\ref{sec:mu-unknown-P-known}, so
Theorem~\ref{thm:main-mu-unknown} applies directly and yields the
following sample-complexity guarantee.

\begin{corollary}[Lookahead, optimal rates]
\label{cor:lookahead-optimal} In the perfect one-step transition-lookahead model, for every \(\eps\in(0,1]\) and \(\delta\in(0,1)\), each of PE, BVE, and BPE admits a decision-time mechanism with complexity \[ (n_{\rm learn},m_{\rm query}) = \left( \widetilde O\!\left(\frac{\beta^3}{\eps^2}\right), 0 \right), \] satisfying the PAC objective of Definition~\ref{def:pac-objectives} on the lookahead MDP. 
\end{corollary}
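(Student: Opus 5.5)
The corollary should follow by combining Proposition~\ref{prop:exo-look} with Theorem~\ref{thm:main-mu-unknown}. The only work is to check that the hypotheses of the theorem hold for the embedded model, and that the PAC guarantees and the budget transfer back to the lookahead problem without loss.

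\emph{Step 1: the reduction.} Proposition~\ref{prop:exo-look} identifies the lookahead model with the exogenous contextual MDP $\Mcal=(\Xcal,\Zcal,\Acal,P,\tilde r,\mu,\gamma)$, where $P(x'\mid x,a,z)=\mathbf 1\{x'=z(x,a)\}$ and $\tilde r(x,a,z)=r(x,a)$.

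\begin{itemize}
\item Both $P$ and $\tilde r$ are explicit and do not depend on $\mu$, so they are known to the learner.
\item Since $r$ takes values in $[0,1]$, so does $\tilde r$.
\item $\tilde r$ is deterministic, and $\Xcal$, $\Acal$ and $\Zcal=\Xcal^{\Xcal\times\Acal}$ are finite.
\end{itemize}

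The trajectories of the two models have the same law for any context-aware policy. Hence $V^\pi$, $V^\star$ and the greedy policies are the same objects in both models, and the PE, BVE and BPE objectives of Definition~\ref{def:pac-objectives} are literally the same statements.

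\emph{Step 2: the oracle.} In the lookahead problem, each sample is one draw of the successor tensor $Z\sim\mu=\bigotimes_{(x,a)}P_0(\cdot\mid x,a)$. This is exactly the i.i.d.\ sampling oracle for $\mu$ assumed in Theorem~\ref{thm:main-mu-unknown}, so oracle calls are counted identically. The one point to verify is that the algorithm of Section~\ref{sec:halferr-overview} only needs $g^\star_{x,\bar v}(z)=\max_a\{r(x,a)+\gamma\,\bar v(z(x,a))\}$ on sampled contexts and at the query context. This quantity is computable from $z$ and $r$ alone, so the algorithm never needs $P_0$ itself. The query mechanisms of Section~\ref{sec:mu-unknown-P-known} likewise use only $r$, the known $P$ and $\hat{\bar V}$, so $m_{\rm query}=0$.

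\emph{Step 3: the rate.} Applying Theorem~\ref{thm:main-mu-unknown} gives $n_{\rm learn}=O\!\left(\beta^3\log(|\Xcal|/\delta)\log(\beta/\eps)/\eps^2\right)$. This bound contains no factor of $|\Zcal|=|\Xcal|^{|\Xcal||\Acal|}$, which is the point of the improvement, and it contains only a logarithmic factor in $|\Xcal|$. The result is therefore $\widetilde O(\beta^3/\eps^2)$ with $m_{\rm query}=0$.

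Optimality is justified by the following additional remark, not by the theorem itself. The two-state hard instance of Proposition~\ref{prop:lb-main} is realizable as a lookahead instance: take a single action with $P_0(x_0\mid x_0)=\mu_1$, so that the coordinate $z(x_0)$ plays the role of the context. The $\beta^3/\eps^2$ lower bound therefore also applies in the lookahead model.

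The main obstacle is not technical. It is making sure that Theorem~\ref{thm:main-mu-unknown} was proved for an arbitrary finite $\Zcal$ with no hidden dependence on its size, for example through a union bound over contexts. Its stated log factor involves only $|\Xcal|$, and the concentration in Lemmas~\ref{lem:anchor-error} and \ref{lem:inner-error} is per state $x$, over scalar bounded variables $g^\star_{x,\bar v}(Z)\in[0,\beta]$. So the transfer goes through as planned.
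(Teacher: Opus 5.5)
Your proposal is correct and follows the paper's route: the paper also proves the corollary by embedding the lookahead model via Proposition~\ref{prop:exo-look} and then applying Theorem~\ref{thm:main-mu-unknown} as a black box. The embedding gives the same i.i.d.\ oracle for $\mu$ and the closed-form kernel $P(x'\mid x,a,z)=\mathbf 1\{x'=z(x,a)\}$, so each backup is a table lookup and the bound has only a $\log|\Xcal|$ factor and no $|\Zcal|$ dependence. Your realization of the two-state hard instance as a lookahead instance is a valid extra the paper does not spell out, though it certifies optimality only for PE and BVE since that instance has a single action; this goes beyond what the statement itself claims.
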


Note that a single
lookahead sample $Z\sim\mu$ exposes one hypothetical successor for every
$(x,a)$ pair, i.e.\ $|\Xcal||\Acal|$ tabular transitions. Counted in
individual transitions, Corollary~\ref{cor:lookahead-optimal} therefore
corresponds to
$\widetilde O(\beta^3\,|\Xcal||\Acal|/\eps^2)$ transitions. This tightens by a
factor $\beta$ the bound of
\cite{lu2025reinforcementlearningimperfecttransition}.

\section{Learning the Context Distribution and the Dynamics}
\label{sec:mu-P-unknown}

We now turn to PE in the fully unknown regime, where neither \(\mu\) nor
\(P\) is available in closed form. The offline phase learns the averaged
value \(\bar V^\pi\in\mathbb R^{|\Xcal|}\), the fixed point of
\((T^\pi \bar v)(x)=\E_{Z\sim\mu}[g^\pi_{x,\bar v}(Z)]\). To answer a
query \((x,z)\), this averaged value is converted into the contextual
value through the one-step identity
\(V^\pi(x,z)=r(x,\pi(x,z),z)+\gamma\,\E_{X'\sim P(\cdot\mid x,\pi(x,z),z)}
[\bar V^\pi(X')]\). Because \(P\) is unknown, this last expectation is
estimated from fresh transition samples at query time.

\subsection{Main result}
\label{subsec:main-results-Punknown}

\begin{proposition}[Minimax tradeoff lower bound]
\label{prop:minimax-tradeoff}
Fix $\gamma\in[1/2,1)$ and let $\beta=(1-\gamma)^{-1}$.
There exist universal constants $c_0,c_1,c_2,C>0$ such that, for every
$S\ge4$, every $\varepsilon\in(0,c_0\beta)$, and every $\bar n\in\mathbb N$,
there exists a family $\mathcal F_{\bar n,S,\varepsilon,\gamma}$ of exogenous
contextual MDPs with $|\mathcal X|=S$, and
\(
|\mathcal Z_{\bar n}|
\le
C\!\left(1+\frac{\bar n\varepsilon^2}{\beta^2}\right),
\)
such that any decision-time mechanism satisfying the PE objective of
Definition~\ref{def:pac-objectives} uniformly over
$\mathcal F_{\bar n,S,\varepsilon,\gamma}$ with
$\delta_{}\le 1/12$, using at most
$n_{\rm learn}\le\bar n$ offline calls and at most $m_{\rm query}$ calls per
query, must satisfy
\[
m_{\rm query}
\ge
c_1\beta^2/\varepsilon^2
\qquad
n_{\rm learn}+S\,m_{\rm query}
\ge
c_2S\beta^3/\varepsilon^2.
\]
In particular, for any algorithm with
$m_{\rm query}=O(\beta^2/\varepsilon^2)$, then
\(
n_{\rm learn}
=
\Omega\!\left(S\beta^3/\varepsilon^2\right)
\)
\end{proposition}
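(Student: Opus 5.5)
The plan is to prove the two inequalities separately, each by a Le Cam two-point argument on a suitably randomized hard family, and then to merge the two constructions into a single family $\mathcal F_{\bar n,S,\varepsilon,\gamma}$. The merge uses disjoint blocks of states and contexts so that neither sub-instance carries information about the other. The contexts are needed only for the first bound, which explains the bound $|\mathcal Z_{\bar n}|\le C(1+\bar n\varepsilon^2/\beta^2)$.

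\textbf{Query-time bound $m_{\rm query}\ge c_1\beta^2/\varepsilon^2$.}
\begin{enumerate}[leftmargin=*]
\item Take a state $x_0$ with a single action and two absorbing successors, $x_G$ with reward $1$ and $x_B$ with reward $0$. Then $\bar V^\pi(x_G)=\beta$ and $\bar V^\pi(x_B)=0$.
\item Let $\mathcal Z=\{1,\dots,K\}$ with $\mu$ uniform and $K\asymp 1+\bar n\varepsilon^2/\beta^2$.
\item For each context $z$, pick a hidden sign $\sigma_z\in\{\pm1\}$ and set $P(x_G\mid x_0,z)=\tfrac12+\sigma_z\cdot 3\varepsilon/(\gamma\beta)$. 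Then $V^\pi(x_0,z)=\gamma\beta P(x_G\mid x_0,z)$, so the two signs produce values $6\varepsilon$ apart, and an $\varepsilon$-accurate answer at $(x_0,z)$ identifies $\sigma_z$.
\item Offline samples at contexts $z'\ne z$ are independent of $\sigma_z$. Since $\sum_z N_z\le\bar n$, where $N_z$ is the number of offline calls at $(x_0,\cdot,z)$, Markov's inequality gives some $z$ with $N_z\le 2\bar n/K\lesssim\beta^2/\varepsilon^2$. The constant in $K$ is chosen so that this is at most a small multiple of $\beta^2/\varepsilon^2$.
\item The adaptive choice of which $z$ is poorly covered is handled by averaging over uniformly random signs and a uniformly random query context.
\item For that context, the total number of Bernoulli$(\tfrac12\pm 3\varepsilon/(\gamma\beta))$ observations is $N_z+m_{\rm query}$. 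Each observation has KL divergence $O(\varepsilon^2/\beta^2)$, so the Bretagnolle--Huber inequality with $\delta\le1/12$ forces $N_z+m_{\rm query}\gtrsim\beta^2/\varepsilon^2$. This gives the first claim.
\end{enumerate}

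\textbf{Total-budget bound $n_{\rm learn}+S\,m_{\rm query}\ge c_2S\beta^3/\varepsilon^2$.}
\begin{enumerate}[leftmargin=*]
\item Reuse the mechanism of Proposition~\ref{prop:lb-main}, now placing the uncertainty in $P$ rather than in $\mu$ (a single trivial context suffices).
\item Build $S-O(1)$ independent copies of the two-state gadget. Copy $s$ has a rewarding state $x_s$ with reward one and a shared absorbing zero-reward sink, which accounts for the requirement $S\ge4$.
\item Set $P(x_s\mid x_s)=1-\rho+\theta_s\Delta$ with $\rho=1/(2\gamma\beta)$, $\theta_s\in\{0,1\}$ hidden, and $\Delta\asymp\varepsilon/\beta^2$. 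As in Proposition~\ref{prop:lb-main}, changing $\theta_s$ moves $V^\pi(x_s,z)$ by more than $2\varepsilon$.
\item Each sample from $P(\cdot\mid x_s)$ carries KL divergence $O(\beta\Delta^2)=O(\varepsilon^2/\beta^3)$. Samples at other states carry no information on $\theta_s$.
\item Offline calls split as $\sum_s N_s\le n_{\rm learn}$, so some $s$ has $N_s\le n_{\rm learn}/(S-O(1))$. This is again handled via random $\theta$ and a random query index.
\item A query at $(x_s,z)$ adds at most $m_{\rm query}$ samples relevant to $\theta_s$. Le Cam then gives $n_{\rm learn}/S+m_{\rm query}\gtrsim\beta^3/\varepsilon^2$, which is the claim.
\item The condition $\varepsilon<c_0\beta$ keeps $\rho\pm\Delta$ a valid probability.
\end{enumerate}

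\textbf{Main obstacle.} The delicate point is making the ``pick the least-covered coordinate'' step rigorous against adaptive offline sampling and a PAC guarantee required for every query point. I would handle it by putting a uniform prior on the hidden parameters and bounding the Bayes error at a uniformly random query coordinate. By the chain rule for KL under adaptive sampling, the expected information about coordinate $s$ (or context $z$) is $\mathbb E[N_s]\cdot\mathrm{KL}_{\rm per\text{-}sample}$. Averaging over $s$ then replaces the worst-case $N_s$ by $n_{\rm learn}/S$, and a Bayes error exceeding $1/12$ contradicts uniform correctness. Finally, the PE answer at a $\theta_s$-query depends only on $\theta_s$, so the blocks do not interact and the two families can be taken as a disjoint union.
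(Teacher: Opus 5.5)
Your proposal is correct and follows essentially the same route as the paper. It uses two disjoint gadgets in one family. The first has $\Theta(S)$ independent branch bits, each with per-sample divergence $\asymp\varepsilon^2/\beta^3$, which forces $n_{\rm learn}+S\,m_{\rm query}\gtrsim S\beta^3/\varepsilon^2$. The second has $\asymp 1+\bar n\varepsilon^2/\beta^2$ query contexts, too many for offline sampling to cover, each with per-sample divergence $\asymp\varepsilon^2/\beta^2$, which forces $m_{\rm query}\gtrsim\beta^2/\varepsilon^2$. Both bounds are closed by averaging over the hidden coordinate and applying the KL chain rule.

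The differences are cosmetic:
\begin{itemize}
\item You put an independent sign on every query context instead of a single planted one. This conveniently lets all offline counts be averaged under one common prior.
\item You use Bretagnolle--Huber instead of Pinsker.
\item You make the branch self-loops context-independent, rather than informative only at a distinguished context $z_0$ as the paper does.
\end{itemize}
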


\begin{proof}[Proof sketch]
The proof combines two independent constructions inside the same contextual
MDP. The first construction forces the offline sample complexity. It contains
\(S-3\) independent states \(x^{(1)},\dots,x^{(S-3)}\), each carrying a bit
\(\theta_k\in\{0,1\}\), encoded only at the informative context \(z_0\): from
\(x^{(k)}\), the chain self-loops with probability \(p+\theta_k\alpha\) and
otherwise moves to a zero-value absorbing state, while all other contexts
self-loop deterministically. Since \(\mu(z_0)=1/2\), the averaged self-loop
probability is \(\bar p_k(\theta_k)=\bar p+\theta_k\alpha/2\), with
\(1-\gamma\bar p\asymp 1/\beta\).

At \(z_0\),
\(V^\pi_{\theta_k}(x^{(k)},z_0)
=
1+\gamma(p+\theta_k\alpha)/(1-\gamma(\bar p+\theta_k\alpha/2))\), so flipping
\(\theta_k\) changes the value by order
\(\alpha[\gamma/(1-\gamma\bar p)+\gamma^2p/(1-\gamma\bar p)^2]
\asymp\alpha\beta^2\). Taking \(\alpha\asymp\eps/\beta^2\) gives a
\(2\eps\) value gap. Meanwhile, one sample from \((x^{(k)},a_0,z_0)\)
distinguishes Bernoulli laws with parameters \(p\) and \(p+\alpha\), whose
chi-square divergence is
\(\chi^2=\alpha^2/p+\alpha^2/(1-p-\alpha)\asymp\alpha^2/(1-p)\). Since
\(1-p\asymp1/\beta\), this gives
\(\chi^2\asymp\eps^2/\beta^3\). Hence testing one coordinate requires
\(\Omega(\beta^3/\eps^2)\) informative samples, and summing over coordinates
yields \(n_{\rm learn}+S\,m_{\rm query}
=\Omega(S\beta^3/\eps^2)\).

The second construction forces the decision-time sample complexity. It
contains one query state \(x_\eta\) and \(M\) possible query contexts
\(z^{(1)},\dots,z^{(M)}\). One index \(i\) is planted: at \(z^{(i)}\), the
transition from \(x_\eta\) goes to a value-\(\beta\) absorbing state with
probability \(1/2+\sigma\Delta\), \(\sigma\in\{-1,+1\}\), whereas all
non-planted contexts use probability \(1/2\). Thus the sign is encoded only
at \((x_\eta,a_0,z^{(i)})\), and the value gap at the queried point is
\(2\gamma\beta\Delta\). Taking \(\Delta\asymp\eps/\beta\) creates a
\(2\eps\) gap. One sample at the planted triplet has divergence
\(\chi^2\asymp\Delta^2\asymp\eps^2/\beta^2\), so distinguishing the two
signs requires \(\Omega(\beta^2/\eps^2)\) samples. Choosing
\(M\asymp\bar n\eps^2/\beta^2\), the offline budget cannot sample all
possible query contexts enough; by averaging, the remaining information must
come from the query phase, hence \(m_{\rm query}=\Omega(\beta^2/\eps^2)\).

Finally, the two constructions hold simultaneously because they are placed
on disjoint parts of the MDP: the first depends only on
\(\theta_1,\dots,\theta_{S-3}\) through \((x^{(k)},a_0,z_0)\), and the second
only on \((\sigma,i)\) through \((x_\eta,a_0,z^{(i)})\). Therefore both
change-of-measure arguments remain valid after the constructions are merged,
and \(m_{\rm query}\ge c_1\beta^2/\eps^2\) while
\(n_{\rm learn}+S\,m_{\rm query}\ge c_2S\beta^3/\eps^2\).
\end{proof}

\begin{theorem}[Policy evaluation, \(\mu\) and \(P\) unknown]
\label{thm:main-pe-Punknown}
Assume that \(r\) is known, that \(\mu\) is accessible through i.i.d.\
context samples, and that \(P\) is accessible through a transition oracle.
For every context-aware policy \(\pi\), every
\(\eps\in(0,1]\), and every \(\delta\in(0,1)\), policy evaluation
admits a decision-time mechanism with complexity
\[
(n_{\rm learn},m_{\rm query})
=
\left(
\widetilde O\!\left(
\frac{|\Xcal|\beta^3}{\eps^2}
\right),
\widetilde O\!\left(
\frac{\beta^2}{\eps^2}
\right)
\right),
\]
satisfying the PE objective of Definition~\ref{def:pac-objectives}. Proof is provided \ref{sec:unknown}.
\end{theorem}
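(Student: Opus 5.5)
The plan has two phases: an offline phase that estimates the averaged value \(\bar V^\pi\in\R^{|\Xcal|}\) to sup-norm accuracy \(\eps/(2\gamma)\), and a query phase that plugs this estimate into the one-step identity using fresh transition samples. Write \(a_z:=\pi(x,z)\).

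\textbf{Query phase.} Given \((x,z)\), draw \(m\) next states \(X'_1,\dots,X'_m\sim P(\cdot\mid x,a_z,z)\) and output
\(\widehat{\Vcal}^\pi(x,z)=r(x,a_z,z)+\frac{\gamma}{m}\sum_{j}\hat{\bar V}(X'_j)\).
The error splits as \(\gamma\|\hat{\bar V}-\bar V^\pi\|_\infty\) plus a Hoeffding term. The summands lie in \([0,\beta]\), so the Hoeffding term is at most \(\gamma\beta\sqrt{\log(4/\delta)/(2m)}\). Taking \(m=O(\beta^2\log(1/\delta)/\eps^2)\) gives \(m_{\rm query}=\widetilde O(\beta^2/\eps^2)\). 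This step is routine.

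\textbf{Offline phase.} The goal is to estimate \(\bar V^\pi\) uniformly over \(x\) with \(\widetilde O(|\Xcal|\beta^3/\eps^2)\) calls, with no dependence on \(|\Zcal|\). The natural object is the marginalized chain \(\bar P^\pi(x'\mid x)=\E_{Z\sim\mu}[P(x'\mid x,\pi(x,Z),Z)]\) and the averaged reward \(\bar r^\pi(x)=\E_Z[r(x,\pi(x,Z),Z)]\). Then \(\bar V^\pi=(I-\gamma\bar P^\pi)^{-1}\bar r^\pi\), which is an ordinary tabular Markov reward process on \(\Xcal\).

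A single generative sample from this chain costs two oracle calls at a fixed \(x\):
\begin{enumerate}[label=\rm(\roman*),leftmargin=*]
\item draw \(Z\sim\mu\);
\item draw \(X'\sim P(\cdot\mid x,\pi(x,Z),Z)\), and record the pair \((r(x,\pi(x,Z),Z),X')\).
\end{enumerate}
This is an exact sample of the averaged reward/transition pair, and it never requires revisiting any particular \(z\). So \(|\Zcal|\) disappears entirely. The problem is now policy evaluation in a \(|\Xcal|\)-state MRP with a generative model, using \(N\) samples per state.

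\textbf{Main step.} I would run the plug-in estimator \(\hat{\bar V}=(I-\gamma\hat P)^{-1}\hat r\) and invoke the model-based policy-evaluation analysis of \citet{agarwal2020modelbasedreinforcementlearninggenerative}. The alternative is to reuse the variance-reduced \textsc{HalfErr} routine of Section~\ref{sec:halferr-overview} with \(T^\pi\) replaced by its sampled version. In that version each "context sample" is the pair \((Z,X')\), and the estimator of \(g^\pi_{x,\bar v}\) becomes \(r+\gamma\bar v(X')\). The sandwich lemmas (Lemmas~\ref{lem:anchor-error}, \ref{lem:inner-error}, \ref{lem:sandwich}) carry over because they only used boundedness and the variance of an i.i.d. average. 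The total-variance argument of Lemma~\ref{lem:sqrt-stab} with Corollary~\ref{cor:V-star-bound} applies to \(\bar P^\pi\) unchanged. It bounds \(\|(I-\gamma\bar P^\pi)^{-1}\sqrt{\Var}\|_\infty\lesssim\beta^{3/2}\), where the variance is that of \(r+\gamma\bar V^\pi(X')\) under the joint sampling.

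This yields \(N=\widetilde O(\beta^3/\eps^2)\) per state, and hence \(n_{\rm learn}=\widetilde O(|\Xcal|\beta^3/\eps^2)\). A union bound over \(\Xcal\) and over the \(O(\log(\beta/\eps))\) halving rounds supplies the \(\log(|\Xcal|/\delta)\) factor.

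\textbf{Main obstacle.} The delicate point is the variance term. Part of the randomness now comes from \(X'\) rather than from \(Z\), so I must check that the law of total variance still telescopes along the averaged chain. The relevant quantity is the one-step variance of \(r(x,A,Z)+\gamma\bar V^\pi(X')\), and I would verify that it equals the variance appearing in the Bellman variance identity for the process \((X_t)\) under \(\bar P^\pi\), including the reward randomness through \(Z\). Once that identity is checked, the \(\beta^{3/2}\) total-variance bound follows exactly as in the known-\(P\) case.
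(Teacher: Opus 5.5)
Your second route is essentially the paper's proof. It has the same ingredients: sampled backups $r+\gamma\bar v(X')$ inside \textsc{HalfErr}, with each pair $(Z,X')$ used in both terms of the increment; the joint variance $\Var(r+\gamma\bar V^\pi(X'))=v^\pi+\sigma^\pi$ controlled by a total-variance bound along $\bar P^\pi$; the halving meta-algorithm; and a Hoeffding lift at query time.

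The check you flag as the main obstacle goes through directly in your MRP view. The process $\sum_{s<t}\gamma^sR_s+\gamma^t\bar V^\pi(X_t)$ is a martingale whose increments have conditional variance $\gamma^{2t}(v^\pi+\sigma^\pi)(X_t)$; this is Lemma~\ref{lem:variance-bound-pe}.

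Your transfer from $\bar V^\pi$ to the anchor $\bar v^{(0)}$ is also valid, and sharper than the paper's. The two sampled backups differ by $\gamma(\bar v^{(0)}-\bar V^\pi)(X')$, so the $L^2$ triangle inequality gives $\sqrt{s_0}\le\sqrt{s_\star}+\gamma u$, exactly as in Lemma~\ref{lem:sqrt-stab}. Lemma~\ref{lem:variance-perturb-pe} instead bounds $|s_0-s_\star|$ and passes through $(I-\gamma^2\bar P^\pi)^{-1}$.

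What does not carry over unchanged is the halving arithmetic at coarse radii. The transfer gives $\|(I-\gamma\bar P^\pi)^{-1}\sqrt{s_0}\|_\infty\le\sqrt2\,\beta^{3/2}+\gamma u\beta$, and the second term is not $\lesssim\beta^{3/2}$ once $u>\sqrt\beta$.
\begin{itemize}
\item With the known-$P$ anchor size, $\alpha_1=u^2/(c_2\beta^3)$, the contribution $2\sqrt{2\alpha_1}\,\gamma u\beta$ is of order $\beta^{3/2}/\sqrt{c_2}$ at the first call $u_0=\beta$. This exceeds the $u/16$ budget once $\beta$ is large compared with $c_2$.
\item The residual $\alpha_1^{3/4}\beta^2$ behaves the same way. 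The corresponding inequalities in the proof of Proposition~\ref{prop:halving} tacitly need $u\le\sqrt\beta$.
\item The paper's proof of this theorem closes the halving uniformly in $u\in(0,\beta]$ by taking $n_1=\lceil c_2(1\vee u)\beta^3u^{-2}L\rceil$. With your sharper transfer, a floor $n_1\gtrsim\beta^2L$ suffices.
\end{itemize}
Either fix costs only $\widetilde O(\beta^3)$ extra samples per round, so the final rate is unaffected.

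Your first route, the plug-in estimator on the averaged chain, is a genuinely different, model-based argument. It also works, but not off the shelf. Agarwal et al.\ assume a known deterministic reward shared by the true and empirical models, whereas here $\bar r^\pi$ must be estimated from rewards correlated with $X'$.
\begin{itemize}
\item Write $\hat{\bar V}-\bar V^\pi=(I-\gamma\hat P)^{-1}[(\hat r-\bar r^\pi)+\gamma(\hat P-\bar P^\pi)\bar V^\pi]$.
\item The reward part is at most $\beta\sqrt{L/N}\lesssim\eps/\sqrt\beta$ for $N\asymp\beta^3L/\eps^2$.
\item The transition part is their fixed-policy self-bounding argument, with the total-variance lemma applied to the empirical MRP $(\hat P,\hat r)$. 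This is legitimate because $\hat r\in[0,1]$.
\end{itemize}
That route avoids the halving machinery and its coarse-radius bookkeeping, at the cost of importing their variance-conversion lemmas. The \textsc{HalfErr} route instead reuses the paper's own infrastructure.
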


A classical rollout estimator gives one benchmark point of the tradeoff. It uses no offline learning: to answer a fixed query \((x,z)\), it averages discounted returns over simulated trajectories starting from \((x,z)\). Each trajectory first samples \(X'\sim P(\cdot\mid x,\pi(x,z),z)\), then repeatedly samples a fresh context \(Z\sim\mu\), plays \(\pi(X,Z)\), and samples the next state from \(P(\cdot\mid X,\pi(X,Z),Z)\). Since discounted returns are bounded by \(\beta\), this gives complexity \((n_{\rm learn},m_{\rm query}) =(0,\widetilde O(\beta^3/\eps^2))\).

Theorem~\ref{thm:main-pe-Punknown} gives another benchmark point. Its
offline phase learns the averaged value \(\bar V^\pi\); then a query
\((x,z)\) is answered by estimating only the one-step Bellman lift from
fresh samples of \(P(\cdot\mid x,\pi(x,z),z)\). This yields complexity
\((n_{\rm learn},m_{\rm query})
=(\widetilde O(|\Xcal|\beta^3/\eps^2),
\widetilde O(\beta^2/\eps^2))\).
Thus the rollout estimator and our reusable evaluation procedure realize
two Pareto-optimal points of the offline/online tradeoff.

Together with Proposition~\ref{prop:minimax-tradeoff}, these two
procedures show that the lower bound is tight, up to logarithmic factors,
at the purely online point \(n_{\rm learn}=0\) and at the reusable-oracle
point \(n_{\rm learn}\simeq |\Xcal|\beta^3/\eps^2\). Whether the
intermediate tradeoff is tight is left open.

\subsection{The algorithm}

We describe the algorithm behind Theorem~\ref{thm:main-pe-Punknown}. As in the
known-dynamics case, the offline phase learns the averaged value
\(\bar V^\pi\in\R^{|\Xcal|}\), defined as the fixed point of
\((T^\pi\bar v)(x)=\E_{Z\sim\mu}[g^\pi_{x,\bar v}(Z)]\). The only difference is
that, once a context \(Z\) is sampled, \(g^\pi_{x,\bar v}(Z)\) is no longer
directly observable: the transition kernel
\(P(\cdot\mid x,\pi(x,Z),Z)\) must also be sampled.

The algorithm is therefore the policy-evaluation analogue of the
variance-reduced halving scheme of Section~\ref{sec:mu-unknown-P-known}, with
exact Bellman evaluations replaced by sampled ones. For each sampled context
\(Z_i\) and each state \(x\), it draws
\(X_{i,x}\sim P(\cdot\mid x,\pi(x,Z_i),Z_i)\) and forms
\(\widehat g^\pi_{x,\bar v}(Z_i)
:=r(x,\pi(x,Z_i),Z_i)+\gamma \bar v(X_{i,x})\). This is an unbiased estimate of
\((T^\pi\bar v)(x)\). Thus each context sample is shared across all states, but
must be completed by one transition sample per state, which accounts for the
additional factor \(|\Xcal|\) in the offline transition budget.

Variance reduction is unchanged. The anchor phase estimates
\(T^\pi\bar v^{(0)}\), and inner iterations estimate only the correction
\(T^\pi\bar v^{(i-1)}-T^\pi\bar v^{(0)}\). They use the same transition draw in
both terms, producing
\(\gamma(\bar v^{(i-1)}(X_{i,x})-\bar v^{(0)}(X_{i,x}))\). If the current
iterate is within distance \(u\) of the anchor, this term is bounded by
\(\gamma u\). As before, the algorithm estimates small Bellman differences
rather than full backups.

The analysis needs one extra variance term. Besides the contextual
variance
\(v^\pi(x):=\Var_{Z\sim\mu}(g^\pi_{x,\bar V^\pi}(Z))\), the sampled transition
contributes
\(\sigma^\pi(x):=\gamma^2\E_{Z\sim\mu}[
\Var_{X'\sim P(\cdot\mid x,\pi(x,Z),Z)}(\bar V^\pi(X'))]\). Appendix~\ref{sec:unknown}
shows that the key resolvent bound survives with \(v^\pi+\sigma^\pi\), namely
\(\|(I-\gamma\bar P^\pi)^{-1}\sqrt{v^\pi+\sigma^\pi}\|_\infty
\le \sqrt 2\,\beta^{3/2}\). Thus transition sampling does not change the
\(\beta^3/\eps^2\) rate per state. It only makes each Bellman sample cost
\(|\Xcal|\) transitions, yielding the offline complexity of
Theorem~\ref{thm:main-pe-Punknown}, with no dependence on \(|\Zcal|\).

\subsection{Decision-time mechanism}
\label{subsec:decision-time-lift-Punknown}
The previous subsection describes only the offline computation of \(\hat{\bar V}^\pi\), an estimate of the averaged value on \(\Xcal\). To complete the decision-time mechanism of Theorem~\ref{thm:main-pe-Punknown}, we must explain how to answer a realized query \((x,z)\), namely how to estimate the pointwise value \(V^\pi(x,z)\).  The query procedure therefore draws
\(
X_1,\ldots,X_{m_{\rm query}}
\stackrel{\mathrm{iid}}{\sim}
P(\cdot\mid x,\pi(x,z),z)
\), and returns
\begin{equation}
\widehat{\mathcal V}^\pi(x,z)
:=
r(x,\pi(x,z),z)
+
\frac{\gamma}{m_{\rm query}}
\sum_{\ell=1}^{m_{\rm query}}
\hat{\bar V}^\pi(X_\ell).
\end{equation}

This separates the two roles of the algorithm. The offline phase learns the
future averaged value \(\bar V^\pi\), which summarizes all future context
randomness. The query phase only estimates the current one-step transition
expectation at the realized triplet \((x,\pi(x,z),z)\). Since
\(\hat{\bar V}^\pi\in[0,\beta]^{|\Xcal|}\), Hoeffding's inequality gives a
query-time sampling error of order \(\eps\) with \( m_{\rm query} = \widetilde O\!\left( \beta^2/\eps^2 \right) \)
fresh transition samples. The offline error
\( \|\hat{\bar V}^\pi-\bar V^\pi\|_\infty = O(\eps) \) contributes another \(O(\eps)\) through the same one-step identity. Therefore the resulting estimate is \(\eps\)-accurate at the queried pair \((x,z)\), which proves the PE guarantee of Theorem~\ref{thm:main-pe-Punknown}.

\section{Conclusion}
\label{sec:conclusion}

We introduce a PAC learning framework for discounted MDPs with exogenous i.i.d.\ contexts, in which sample complexity is decomposed into an offline learning budget $n_{\rm learn}$ and a cost-per-query $m_{\rm query}$. The central question we addressed is whether the statistical price of large context spaces is unavoidable, and our results give a negative answer in two distinct regimes. When the transition kernel and the reward are known and only the context distribution must be learned, we proved that PE, BVE, and BPE all admit decision-time mechanism with complexity $(n_{\rm learn},m_{\rm query})=(\widetilde O(\beta^3/\eps^2),0)$. The rate is independent of \(|\mathcal Z|\) and minimax optimal up to logarithmic factors.  Specializing to the perfect one-step transition lookahead, we obtained $(\widetilde O(\beta^3/\eps^2),0)$ lookahead samples, which corresponds to $\widetilde O(\beta^3 |\Xcal||\Acal|/\eps^2)$ tabular transitions and tightens by a factor $\beta$ the bound of \cite{lu2025reinforcementlearningimperfecttransition}. When both \(\mu\) and \(P\) are unknown, we focused on PE and showed that the sample complexity remains independent of the cardinality of the context space. The algorithm learns an averaged value \(\bar V^\pi\in\mathbb R^{|\Xcal|}\) offline, and lifts it at query time by sampling the unknown transition kernel at the realized \((x,z)\). This gives the tight complexity
\( (n_{\rm learn},m_{\rm query}) = \left( \widetilde O\!\left(|\Xcal|\beta^3/\eps^2\right), \widetilde O\!\left(\beta^2/\eps^2\right) \right), \)
with no dependence on \(|\Zcal|\). Thus, unknown dynamics do not force the
learner to estimate values on the augmented space \(\Xcal\times\Zcal\); they
only make the decision-time Bellman lift stochastic. Several questions remain open. A first direction is to understand the full offline/online tradeoff when both \(\mu\) and \(P\) are unknown. Our lower bound is matched, up to logarithmic factors, at two points: the purely online rollout point \(n_{\rm learn}=0\), and the query-efficient point obtained by our reusable evaluator. Whether the same lower bound is tight for intermediate values of \(n_{\rm learn}\) remains open. A second direction is to extend the fully unknown analysis to BVE and BPE. The main difficulty is that the contextual Bellman maximum must then be estimated from transition samples, which creates a selection bias when several actions are nearly tied. An other direction is to relax the i.i.d.\ assumption on the contexts and allow the exogenous process to be Markovian, or more generally mixing. In that case, the averaged Bellman operator remains meaningful, but concentration and variance propagation must account for temporal dependence.  Finally, it would be interesting to move beyond tabular controlled state spaces. For instance, in linear or otherwise structured MDPs, one may hope to learn an averaged value representation whose complexity scales with the dimension of the controlled state space, rather than with the size of the augmented space \(\Xcal\times\Zcal\). 

\bibliographystyle{plainnat}
\bibliography{biblio}
\appendix

\section{Proofs : \texorpdfstring{$\mu$}{μ} known, \texorpdfstring{$P$}{P} unknown}

\label{sec:lb-worst-case}

\subsection{Proof of proposition~\ref{thm:main-mu-unknown}}
\label{subsec:lb-mdp}

This section establishes Proposition~\ref{thm:main-mu-unknown}.We recall that, an \emph{algorithm} is a pair $(\Acal_{\rm off}, \Ocal)$ : an offline procedure $\Acal_{\rm off}$ consuming $n_{\rm learn}$ context samples from $\mu$ and producing a learned object $\Theta$, together with a decision-time mechanism $\Ocal$.

\begin{lemma}
\label{lem:cpq-mu-unknown}
For any algorithm $\Acal = (\Acal_{\rm off}, \Ocal)$ with budget
$(n_{\rm learn}, m_{\rm query})$ in the sense of
Definition~\ref{def:pac-objectives}, there exists an algorithm
$\Acal' = (\Acal'_{\rm off}, \Ocal')$ with budget
$(n_{\rm learn} + m_{\rm query}, 0)$ such that, for every contextual
MDP and every query point $(x,z)$, the output
$\widehat\Vcal^\pi(x,z)$ produced by $\Acal'$ has the same distribution
as the output produced by $\Acal$.
\end{lemma}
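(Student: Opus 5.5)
The plan is a simulation argument that moves all query-time oracle calls into the offline phase. This works because, in the regime of Section~\ref{sec:mu-unknown-P-known}, the only oracle is the context sampler for $\mu$. Its answers are i.i.d.\ draws from $\mu$ and do not depend on the query point $(x,z)$ or on anything the algorithm has computed. So the $m_{\rm query}$ samples that $\Ocal$ would request after seeing $(x,z)$ can be drawn in advance and stored.

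\textbf{Construction of $\Acal'$.} Define $\Acal'_{\rm off}$ in two steps. First, it runs $\Acal_{\rm off}$ unchanged, forwarding that procedure's at most $n_{\rm learn}$ oracle calls, and obtains the learned object $\Theta$. Second, it makes $m_{\rm query}$ further calls to obtain a reserve $Y_1,\dots,Y_{m_{\rm query}}\stackrel{\mathrm{iid}}{\sim}\mu$, drawn independently of everything else. Its output is the pair $\Theta'=(\Theta,(Y_1,\dots,Y_{m_{\rm query}}))$. The new query mechanism $\Ocal'(x,z)$ runs $\Ocal(x,z)$ with learned object $\Theta$ and the same internal randomness. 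Whenever $\Ocal$ issues its $\ell$-th oracle call, $\Ocal'$ answers it with $Y_\ell$ instead of calling the oracle. Since $\Ocal$ makes at most $m_{\rm query}$ calls, the reserve always suffices. Hence $\Acal'$ uses $n_{\rm learn}+m_{\rm query}$ offline calls and $0$ query calls.

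\textbf{Equality in distribution.} Fix a contextual MDP and a query point $(x,z)$. I will couple the two executions by giving them the same internal coins and the same offline samples. Then $\Theta$ is identical in both runs. Conditionally on $\Theta$ and on the coins, the query-time samples are i.i.d.\ $\mu$ and independent of $\Theta$ in both runs. In $\Acal$ this holds by the oracle model, where each call returns a fresh sample. In $\Acal'$ it holds by construction of the reserve. This extends to adaptive choices, namely the decision whether to make another call and what to do with the answer. I will argue this by induction on the number $\ell$ of calls issued so far: the conditional law of the $\ell$-th answer given the history is $\mu$ in both executions. It follows that the whole transcript, and hence $\widehat\Vcal(x,z)$ (or $\widehat\Pi(x,z)$), has the same joint law with $\Theta$ in both cases, and in particular the same marginal law.

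\textbf{Main obstacle.} The only delicate point is this adaptivity. The number and timing of query-time calls may depend on $(x,z)$ and on earlier answers, so the reserve must be consumed sequentially and not by a fixed rule. The induction above handles this, because the oracle's answer law never depends on the history. I will also remark that the argument relies on the oracle being query-independent. It would fail for the transition oracle of Section~\ref{sec:mu-P-unknown}, whose input $(x,a,z)$ is only known at query time. This is consistent with the nontrivial $m_{\rm query}$ lower bound of Proposition~\ref{prop:minimax-tradeoff}.
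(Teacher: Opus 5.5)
Your proposal is correct and follows essentially the same simulation argument as the paper. In both, $m_{\rm query}$ extra context samples are drawn offline, stored alongside the learned object, and fed to $\Ocal$ in place of its query-time oracle calls, so the output is the same measurable function of identically distributed inputs. Your inductive treatment of adaptively issued query-time calls is slightly more careful than the paper's, which simply treats the query-time samples as a fixed i.i.d.\ block of length $m_{\rm query}$.
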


\begin{proof}
Since $P$ is known in closed form, we may assume that
every generative-oracle call made by $\Acal$ whether during the
offline phase or at query time draws a context $Z \sim \mu$ and
nothing else. Let
\begin{align*}
S_{\rm off} &:= (Z_1^{\rm off}, \dots, Z_{n_{\rm learn}}^{\rm off}) \\
S_{\rm q} &:= (Z_1^{\rm q}, \dots, Z_{m_{\rm query}}^{\rm q})
\end{align*}
denote the offline and query-time samples drawn by $\Acal$ on a given
run. 

We now describe $\Acal'$. The offline phase $\Acal'_{\rm off}$
performs $N := n_{\rm learn} + m_{\rm query}$ oracle calls and
collects
\begin{equation*}
\widetilde S := (\widetilde Z_1, \dots, \widetilde Z_N)
\stackrel{\rm i.i.d.}{\sim} \mu.
\end{equation*}
It splits $\widetilde S$ into a prefix
$\widetilde S_{\rm pre} := (\widetilde Z_1, \dots,
\widetilde Z_{n_{\rm learn}})$ of length $n_{\rm learn}$ and a suffix
$\widetilde S_{\rm suf} := (\widetilde Z_{n_{\rm learn}+1}, \dots,
\widetilde Z_N)$ of length $m_{\rm query}$. It then runs the offline
phase of $\Acal$ on the prefix as if these were oracle outputs,
producing
$\widetilde\Hcal_{\rm learn} := \Acal_{\rm off}(\widetilde S_{\rm pre})$.
The complete offline output of $\Acal'$ is
$\Hcal'_{\rm learn} := (\widetilde\Hcal_{\rm learn},
\widetilde S_{\rm suf})$  (i.e., $\Acal'$ stores both the simulated
$\Hcal_{\rm learn}$ and the unused suffix).

At query time, $\Ocal'$ takes $\Hcal'_{\rm learn}$  runs $\Ocal$ on
$\widetilde\Hcal_{\rm learn}$ at $(x,z)$, replacing each of $\Ocal$'s
$m_{\rm query}$ query-time oracle calls by the corresponding entry of
$\widetilde S_{\rm suf}$ . $\Ocal'$ does not call the
generative oracle itself, so its query budget is zero.

It remains to check that the output of $\Acal'$ has the same
distribution as that of $\Acal$. By construction:
\begin{itemize}
\item $\widetilde S_{\rm pre}$ has the same law as $S_{\rm off}$ (both
are i.i.d.\ samples of size $n_{\rm learn}$ from $\mu$), so
$\widetilde\Hcal_{\rm learn}$ has the same law as
$\Hcal_{\rm learn}$.
\item $\widetilde S_{\rm suf}$ has the same law as $S_{\rm q}$
(both are i.i.d.\ samples of size $m_{\rm query}$ from $\mu$), and is
independent of $\widetilde S_{\rm pre}$, just as $S_{\rm q}$ is
independent of $S_{\rm off}$, since query-time draws are i.i.d.\ and
independent of the offline phase.
\end{itemize}
The pair $(\widetilde\Hcal_{\rm learn}, \widetilde S_{\rm suf})$ thus
has the same joint law as $(\Hcal_{\rm learn}, S_{\rm q})$, and since
the output of $\Ocal'$ at $(x,z)$ is the same measurable function of
this pair as the output of $\Ocal$ at $(x,z)$ on $(\Hcal_{\rm learn},
S_{\rm q})$, the two outputs are equal in distribution.
\end{proof}

\textit{Proof of PE and BVE}

We define a one-parameter family of contextual MDPs used to lower-bound
PE and BVE. The state space is $\Xcal = \{x_0, x_T\}$ with $x_T$
absorbing, the action space is $\Acal = \{a_0\}$ (so PE and BVE
coincide), and the context space is $\Zcal = \{1, 2\}$. Rewards are
$r(x_0, a_0, z) = 1$ and $r(x_T, a_0, z) = 0$ (independant of context $z)$. The transition kernel
is deterministic: at $x_0$, the next state is $x_0$ if $z = 1$ and
$x_T$ if $z = 2$. The context distribution $\mu \in \Delta(\Zcal)$ is
parametrized by $\mu_1 := \mu(\{1\}) \in [0, 1]$, and we write
\[
\Fcal_{\rm val} := \{\Mcal_\mu : \mu_1 \in [0, 1]\}.
\]
The evaluation point is $(x_\star, z_\star) := (x_0, 1)$.

\begin{lemma}
\label{lem:lb-value}
Under the unique policy $\pi \equiv a_0$, the pointwise value function
on $\Mcal_\mu \in \Fcal_{\rm val}$ satisfies
\begin{equation*}
V^\pi_\mu(x_0, 1) = 1 + \frac{\gamma}{1 - \gamma\mu_1},
\qquad
V^\pi_\mu(x_0, 2) = 1
\quad (\text{independent of } \mu),
\end{equation*}
and the $\mu$-averaged value
$\bar V^\pi_\mu(x_0) := \E_{Z \sim \mu}[V^\pi_\mu(x_0, Z)]$ is given by
\begin{equation*}
\bar V^\pi_\mu(x_0) = \frac{1}{1 - \gamma\mu_1}.
\end{equation*}
\end{lemma}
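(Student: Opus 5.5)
The plan is to use the reduction to the averaged value established in Section~\ref{subsec:model}. Since there is a single action, the unique policy $\pi\equiv a_0$ satisfies the one-step identity
\[
V^\pi(x,z)=r(x,a_0,z)+\gamma\sum_{x'}P(x'\mid x,a_0,z)\,\bar V^\pi(x'),
\qquad
\bar V^\pi(x):=\E_{Z\sim\mu}[V^\pi(x,Z)].
\]
I will first compute $\bar V^\pi$ at the absorbing state. Then I will solve a scalar linear equation for $\bar V^\pi(x_0)$. Finally, I will lift this back to the pointwise values at $(x_0,1)$ and $(x_0,2)$.

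\textbf{Step 1: the absorbing state.} At $x_T$ the reward is $0$ for every context, and the chain stays at $x_T$ forever. Every trajectory started from $x_T$ therefore has zero return, so $V^\pi(x_T,z)=0$ for all $z$, and hence $\bar V^\pi(x_T)=0$.

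\textbf{Step 2: the averaged value at $x_0$.} Apply the identity at $x_0$. Under $z=1$ the transition is deterministic to $x_0$, which gives $V^\pi(x_0,1)=1+\gamma\bar V^\pi(x_0)$. Under $z=2$ the transition goes to $x_T$, which gives $V^\pi(x_0,2)=1+\gamma\bar V^\pi(x_T)=1$. The second value does not involve $\mu$. Averaging with weights $\mu_1$ and $1-\mu_1$ yields
\[
\bar V^\pi(x_0)=1+\gamma\mu_1\bar V^\pi(x_0).
\]
Because $\gamma\mu_1<1$, this linear equation has a unique solution, namely $\bar V^\pi(x_0)=1/(1-\gamma\mu_1)$. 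This uniqueness is the same fact as $\bar V^\pi$ being the unique fixed point of the $\gamma$-contraction $T^\pi$.

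\textbf{Step 3: the pointwise value.} Substituting back gives $V^\pi(x_0,1)=1+\gamma/(1-\gamma\mu_1)$, as claimed.

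\textbf{Check and main difficulty.} As an independent check, I will compute the value directly. Starting from $(x_0,1)$, the agent earns $1$ at time $0$. It then earns $1$ at every subsequent step $t\ge1$ until the first time a context $2$ is drawn. That step still yields reward $1$ at $x_0$ before the move to $x_T$. Step $t\ge1$ is therefore rewarded with probability $\mu_1^{t-1}$, and the return is $1+\sum_{t\ge1}\gamma^t\mu_1^{t-1}=1+\gamma/(1-\gamma\mu_1)$, which agrees.

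The only point requiring care is the timing convention: the reward at step $t$ is collected at the current state, and the context drawn at that step decides the exit. The derivation is otherwise routine, so no real obstacle is expected.
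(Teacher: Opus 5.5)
Your proof is correct and follows the paper's argument: $\bar V^\pi_\mu(x_T)=0$ from absorption, then the one-step identity at $x_0$ gives $V^\pi_\mu(x_0,2)=1$ and $V^\pi_\mu(x_0,1)=1+\gamma\bar V^\pi_\mu(x_0)$, and averaging over $\mu$ yields the scalar equation $\bar V^\pi_\mu(x_0)=1+\gamma\mu_1\bar V^\pi_\mu(x_0)$, which you solve and substitute back. Your direct trajectory computation is a valid extra check that the paper does not include.
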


\begin{proof}
Since $x_T$ is absorbing with zero reward, $V^\pi_\mu(x_T, z) = 0$ for
every $z$, hence $\bar V^\pi_\mu(x_T) = 0$. The Bellman equation at
$x_0$ gives, for each $z$,
\begin{equation*}
V^\pi_\mu(x_0, z) = 1 + \gamma\,\bar V^\pi_\mu(\mathrm{next}(z)),
\end{equation*}
where $\mathrm{next}(1) = x_0$ and $\mathrm{next}(2) = x_T$. At $z = 2$,
$\bar V^\pi_\mu(\mathrm{next}(2)) = \bar V^\pi_\mu(x_T) = 0$, so
$V^\pi_\mu(x_0, 2) = 1$. At $z = 1$, $V^\pi_\mu(x_0, 1) = 1 + \gamma\,\bar V^\pi_\mu(x_0)$.
Taking the expectation over $Z \sim \mu$ on both sides,
\begin{equation*}
\bar V^\pi_\mu(x_0)
= \mu_1\bigl[1 + \gamma\bar V^\pi_\mu(x_0)\bigr] + (1 - \mu_1) \cdot 1
= 1 + \gamma\mu_1\bar V^\pi_\mu(x_0),
\end{equation*}
which solves to $\bar V^\pi_\mu(x_0) = 1/(1 - \gamma\mu_1)$.
Substituting back, $V^\pi_\mu(x_0, 1) = 1 + \gamma/(1 - \gamma\mu_1)$.
\end{proof}

Set $\rho:=1/(2\gamma\beta)\in(0,1/2]$ and, for $\Delta\in(0,\rho/2]$,
\begin{align}
\mu^{(0)}_{\rm val}&:=(1-\rho,\,\rho), &
\mu^{(1)}_{\rm val}&:=(1-\rho+\Delta,\,\rho-\Delta).
\end{align}
Set $V_j:=\bar V^\pi_{\mu^{(j)}_{\rm val}}(x_0)$ and $V_j^{\rm c}:=1+\gamma V_j$.

\begin{lemma}[Value gap on $\mathcal F_{\rm val}$]
\label{lem:lb-gap}
\begin{equation}
    V_1-V_0\ge 4\gamma\beta^2\Delta/9, \quad V_1^{\rm c}-V_0^{\rm c}=\gamma(V_1-V_0)\ge 4\gamma^2\beta^2\Delta/9
\end{equation}
\end{lemma}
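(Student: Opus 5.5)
We compute the gap in closed form from Lemma~\ref{lem:lb-value}, which gives $V_j = 1/(1-\gamma\mu_1^{(j)})$. With $\mu_1^{(0)}=1-\rho$ and $\mu_1^{(1)}=1-\rho+\Delta$, the two denominators are
\[
D_0 := 1-\gamma(1-\rho) = (1-\gamma)+\gamma\rho = \tfrac{1}{\beta}+\tfrac{1}{2\beta} = \tfrac{3}{2\beta},
\qquad
D_1 := D_0-\gamma\Delta .
\]
Here we used $\gamma\rho = 1/(2\beta)$ and $1-\gamma=1/\beta$. Hence
\[
V_1-V_0 = \frac{1}{D_1}-\frac{1}{D_0} = \frac{\gamma\Delta}{D_0 D_1}.
\]

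Since $\Delta>0$, we have $D_1\le D_0$. This gives $D_0D_1\le D_0^2 = 9/(4\beta^2)$. We must also check that $D_1>0$, so that the value is finite and the expression is valid. This follows from $\Delta\le\rho/2$: we get $\gamma\Delta\le\gamma\rho/2 = 1/(4\beta)$, so $D_1\ge \tfrac{3}{2\beta}-\tfrac{1}{4\beta}=\tfrac{5}{4\beta}>0$. We also note that $\mu^{(1)}$ is a valid distribution, since $\rho-\Delta\ge\rho/2>0$. Combining these,
\[
V_1-V_0 = \frac{\gamma\Delta}{D_0D_1}\ \ge\ \frac{\gamma\Delta}{D_0^2} = \frac{4\gamma\beta^2\Delta}{9}.
\]

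For the second claim, $V_j^{\rm c}=1+\gamma V_j$ by definition. This is exactly $V^\pi_\mu(x_0,1)$ from Lemma~\ref{lem:lb-value}. The constant $1$ cancels in the difference, so $V_1^{\rm c}-V_0^{\rm c}=\gamma(V_1-V_0)\ge 4\gamma^2\beta^2\Delta/9$.

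No real obstacle arises; the only point needing care is confirming positivity of $D_1$ and the direction of the inequality $D_1\le D_0$. The cruder bound obtained by dropping $D_1$ only strengthens the denominator in our favor.
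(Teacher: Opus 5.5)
Your proof is correct and follows the paper's argument: you compute $D_0=3/(2\beta)$ and $D_1=D_0-\gamma\Delta\ge 5/(4\beta)>0$, write the gap as $\gamma\Delta/(D_0D_1)$, and bound it below using $D_1\le D_0$. The second claim then follows immediately from $V_j^{\rm c}=1+\gamma V_j$, which the paper leaves implicit.
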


\begin{proof}
On one hand, $1-\gamma(1-\rho)=1/\beta+1/(2\beta)=3/(2\beta)$.\newline
On the other hand, $1-\gamma(1-\rho+\Delta)=3/(2\beta)-\gamma\Delta\ge 3/(2\beta)-1/(4\beta)=5/(4\beta)>0$
for $\Delta\le 1/(4\gamma\beta)=\rho/2$.\newline Hence, by lemma \ref{lem:lb-value},
\begin{align}
V_1-V_0&=\frac{\gamma\Delta}{(3/(2\beta))(3/(2\beta)-\gamma\Delta)}\ \\& \ge\ \frac{\gamma\Delta}{(3/(2\beta))^2}\\& =\frac{4\gamma\beta^2\Delta}{9}. \qedhere
\end{align}
\end{proof}

\begin{lemma}[Per-sample $\chi^2$ on $\mathcal F_{\rm val}$]
\label{lem:lb-chi2-val}
\begin{equation}
    \chi^2(\mu^{(1)}_{\rm val},\mu^{(0)}_{\rm val})=\Delta^2/(\rho(1-\rho))\le 4\gamma\beta\Delta^2
\end{equation}
\end{lemma}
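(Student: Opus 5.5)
The plan is to compute the $\chi^2$ divergence directly from its definition on the two-point space $\Zcal=\{1,2\}$, and then bound the result using the definition $\rho = 1/(2\gamma\beta)$ together with the standing restriction $\Delta\le\rho/2$.

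First, recall $\chi^2(Q,P)=\sum_z (Q(z)-P(z))^2/P(z)$. Take $P=\mu^{(0)}_{\rm val}=(1-\rho,\rho)$ and $Q=\mu^{(1)}_{\rm val}=(1-\rho+\Delta,\rho-\Delta)$. The coordinatewise differences are $+\Delta$ and $-\Delta$, so
\[
\chi^2(\mu^{(1)}_{\rm val},\mu^{(0)}_{\rm val})=\frac{\Delta^2}{1-\rho}+\frac{\Delta^2}{\rho}=\frac{\Delta^2}{\rho(1-\rho)} .
\]
This gives the claimed equality. It is well defined because $\rho\in(0,1/2]$: since $\gamma\in[1/2,1)$ and $\beta\ge 2$, we have $2\gamma\beta\ge 2$, so $0<\rho\le 1/2<1$.

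For the inequality, $\rho\le 1/2$ gives $1-\rho\ge 1/2$. Hence
\[
\frac{1}{\rho(1-\rho)}\le \frac{2}{\rho}=4\gamma\beta,
\]
which yields $\chi^2\le 4\gamma\beta\Delta^2$.

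There is no real obstacle here. The only point needing care is the range of $\rho$, in particular $1-\rho\ge 1/2$, which relies on the hypothesis $\gamma\ge 1/2$ of Proposition~\ref{prop:lb-main}. The condition $\Delta\le\rho/2$ only ensures that $\mu^{(1)}_{\rm val}$ is a valid distribution with $\rho-\Delta>0$.
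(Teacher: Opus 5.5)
Your proposal is correct and follows essentially the same route as the paper: compute $\chi^2$ directly on the two-point space to get $\Delta^2/(\rho(1-\rho))$, then use $\rho\le 1/2$ to bound $\rho(1-\rho)\ge\rho/2$, giving $2\Delta^2/\rho=4\gamma\beta\Delta^2$. Your added check that $\rho\le 1/2$ follows from $\gamma\ge 1/2$ is a detail the paper leaves implicit.
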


\begin{proof}
By definition of the chi-square divergence between two Bernoulli,
\begin{align*}
    \chi^2(\mu^{(1)}_{\rm val},\mu^{(0)}_{\rm val})&=(1-\rho)(\Delta/(1-\rho))^2+\rho(\Delta/\rho)^2\\&=\Delta^2/(\rho(1-\rho))
\end{align*}
Since $\rho\le 1/2$, $\rho(1-\rho)\ge\rho/2$ \newline So, 
\begin{align*}
    \chi^2(\mu^{(1)}_{\rm val},\mu^{(0)}_{\rm val})&\le 2\Delta^2/\rho\\&=4\gamma\beta\Delta^2
\end{align*}
\end{proof}

\begin{lemma}[$\mathbf Z_N$-sufficiency on $\Fcal_{\rm val}$]
\label{lem:z-sufficient-val}
On $\Fcal_{\rm val}$, the output of any algorithm
$(\Acal_{\rm off}, \Ocal)$ in the sense of
Definition~\ref{def:pac-objectives} is, without loss of generality, a
measurable function $T(\mathbf Z_N, \omega)$, where
$\mathbf Z_N = (Z_1, \dots, Z_N)
\stackrel{\rm i.i.d.}{\sim} \mu$ aggregates the $N := n_{\rm learn} +
m_{\rm query}$ context samples drawn across both phases, and $\omega$
is computational randomness with a $\mu$-independent law.
\end{lemma}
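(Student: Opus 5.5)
We combine Lemma~\ref{lem:cpq-mu-unknown} with the observation that, on $\Fcal_{\rm val}$, every oracle call reveals nothing but a fresh context. The argument has three steps.

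\emph{Step 1: all calls are context draws.} On $\Fcal_{\rm val}$ the transition kernel $P$ and the reward $r$ are the same for every member of the family and are known in closed form. The only unknown is $\mu$. A transition-oracle call at $(x,a,z)$ returns a deterministic next state that the algorithm can compute itself. Each such call can therefore be simulated internally, which only decreases the budget. Hence every informative oracle call, whether offline or at query time, reduces to drawing one $Z\sim\mu$, independently of all previous draws.

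\emph{Step 2: move query-time calls offline.} We apply Lemma~\ref{lem:cpq-mu-unknown}. It turns $(\Acal_{\rm off},\Ocal)$ into an algorithm $(\Acal'_{\rm off},\Ocal')$ with budget $(N,0)$, where $N=n_{\rm learn}+m_{\rm query}$, and whose output at every query point $(x,z)$ has the same law. The transformed algorithm draws all $N$ contexts offline as $\mathbf Z_N$. It feeds the first $n_{\rm learn}$ of them to $\Acal_{\rm off}$ and stores the remaining $m_{\rm query}$ to replay them at query time.

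\emph{Step 3: write the output as a function of $\mathbf Z_N$.} All remaining internal randomness of $\Acal_{\rm off}$ and $\Ocal$, including any randomized choices of which oracle to query, is collected into a single variable $\omega$. We take $\omega$ to be, for instance, a uniform random variable on $[0,1]$, or a sequence of them, independent of $\mathbf Z_N$. Its law does not depend on $\mu$, because the algorithm cannot access $\mu$ except through oracle outputs. The output is then a composition of measurable maps applied to $(\mathbf Z_N,\omega)$ and the fixed query point. Write it as $T(\mathbf Z_N,\omega)$.

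\emph{Main obstacle: adaptivity.} An adaptive algorithm may decide whether, and how many times, to call the oracle depending on past outputs. To handle this we pad: we always draw exactly $N$ samples and ignore the unused ones. This does not change the law of the output, and it keeps the draws i.i.d. We also use that query-time draws are independent of the offline phase. Formally this is a transfer or randomization argument: a sequential procedure driven by i.i.d. draws can be realized on a probability space carrying $\mathbf Z_N$ and an independent $\omega$, with identical output law. Since the PAC guarantee depends only on the output law, this is without loss of generality.
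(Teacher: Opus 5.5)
Your proposal is correct and follows the paper's own argument: you replace every transition call by internal simulation because $P$ is known (here even deterministic), you invoke Lemma~\ref{lem:cpq-mu-unknown} to batch all $N=n_{\rm learn}+m_{\rm query}$ context draws offline, and you collect the remaining randomness into a $\mu$-independent $\omega$. Your remark on adaptivity and padding adds only detail that the paper leaves implicit in Lemma~\ref{lem:cpq-mu-unknown}: the returned context is $\mu$-distributed whatever the algorithm queries, so pre-drawing the contexts is harmless.
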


\begin{proof}
Since $P$ is known in closed form, any transition sample
$X' \sim P(\cdot \mid x, a, z)$ that an algorithm could obtain from the
generative oracle can equivalently be produced by an internal
simulation of $P$ using the algorithm's computational randomness, with
the same conditional distribution. The only sample outputs that
genuinely depend on $\mu$ are the context draws $Z \sim \mu$.
We may therefore assume without loss of generality that every
generative-oracle call (offline or query-time) returns a context.

By Lemma~\ref{lem:cpq-mu-unknown}, all $N$ such context draws can be
made in a single offline batch, yielding
$\mathbf Z_N = (Z_1, \dots, Z_N) \stackrel{\rm i.i.d.}{\sim} \mu$.
The algorithm's output at any query point is then a measurable
function of $\mathbf Z_N$ together with the algorithm's internal
randomness $\omega$ (used both to simulate transitions of $P$ and to
randomize decisions); the law of $\omega$ is fixed by the algorithm
and does not depend on $\mu$.
\end{proof}

\begin{lemma}[Le Cam two-point]
\label{lem:lb-lecam}
Let $\mu^{(0)},\mu^{(1)}$ be two distributions on $\mathcal Z$, and
$P_j$ the joint law of $(\mathbf Z_N,\omega)$ when
$\mathbf Z_N\sim(\mu^{(j)})^{\otimes N}$ and $\omega$ has a fixed
parameter-free law. Any test
$\psi:\mathcal Z^N\times\Omega\to\{0,1\}$ with
$\mathbb P_{P_0}(\psi=1)+\mathbb P_{P_1}(\psi=0)\le 1/3$ satisfies
\[
N\ \ge\ \frac{\ln(25/9)}{\chi^2(\mu^{(1)},\mu^{(0)})}.
\]
\end{lemma}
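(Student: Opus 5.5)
The plan is to reduce the testing guarantee to a total-variation lower bound between $P_0$ and $P_1$. Then we control total variation through the chi-square divergence, which tensorizes over the $N$ i.i.d.\ samples.

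\emph{Step 1: from the test to total variation.} For any test $\psi$, $\mathbb P_{P_0}(\psi=1)+\mathbb P_{P_1}(\psi=0)\ge 1-\mathrm{TV}(P_0,P_1)$. The hypothesis therefore gives $\mathrm{TV}(P_0,P_1)\ge 2/3$.

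\emph{Step 2: removing $\omega$ and tensorizing.} Since $\omega$ has the same law under both hypotheses and is independent of $\mathbf Z_N$, we have $P_j=(\mu^{(j)})^{\otimes N}\otimes Q$. Every $f$-divergence between $P_1$ and $P_0$ then equals the one between the product laws. For chi-square, $1+\chi^2(P_1,P_0)=\bigl(1+\chi^2(\mu^{(1)},\mu^{(0)})\bigr)^N\le \exp\bigl(N\chi^2(\mu^{(1)},\mu^{(0)})\bigr)$.

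\emph{Step 3: linking total variation to chi-square.} We need an inequality whose constants reproduce $\ln(25/9)$. The natural choice goes through the Hellinger affinity or Bhattacharyya coefficient; alternatively use $\mathrm{KL}\le\ln(1+\chi^2)$ together with Bretagnolle--Huber, $\mathrm{TV}\le\sqrt{1-e^{-\mathrm{KL}}}$. From $\mathrm{TV}\ge 2/3$ this gives $1-e^{-\mathrm{KL}}\ge 4/9$, that is $e^{-\mathrm{KL}}\le 5/9$, so $\mathrm{KL}(P_1,P_0)\ge\ln(9/5)$.

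This is not $\ln(25/9)=2\ln(5/3)$, so the constant must be tuned. Squaring via $1-\mathrm{TV}^2\ge(\text{affinity})^2$ combined with $\mathrm{BC}\ge 1/\sqrt{1+\chi^2}$ gives $1+\chi^2\ge 1/(1-\mathrm{TV}^2)=9/5$. That is the same bound. Instead I would use the sharper chain $1-\mathrm{TV}\ge\tfrac12 e^{-\mathrm{KL}}$, or the direct inequality $\mathrm{TV}\le 1-1/(1+\chi^2)\cdot(\ldots)$, whichever yields $1+\chi^2(P_1,P_0)\ge 25/9$.

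One concrete route is $(1-\mathrm{TV})^2\ge\ldots$. Another is to use $\mathrm{TV}\le\sqrt{\chi^2}/2$, or the Cauchy--Schwarz form $\mathrm{TV}\le\tfrac12\sqrt{\chi^2}$. Then $\chi^2(P_1,P_0)\ge 16/9$, so $1+\chi^2\ge 25/9$. Combined with Step 2, this gives $e^{N\chi^2(\mu^{(1)},\mu^{(0)})}\ge 25/9$, and hence $N\ge\ln(25/9)/\chi^2(\mu^{(1)},\mu^{(0)})$, as claimed. So the Cauchy--Schwarz bound $\mathrm{TV}\le\tfrac12\sqrt{\chi^2}$ is exactly the right link.

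The main point to verify is the invariance in Step 2, since $\psi$ may depend on $\omega$. This holds because the likelihood ratio $dP_1/dP_0$ does not involve $\omega$. The remaining steps are standard inequalities.
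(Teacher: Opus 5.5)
Your final chain is correct and is the paper's proof. It uses Le Cam's bound $\mathbb P_{P_0}(\psi=1)+\mathbb P_{P_1}(\psi=0)\ge 1-\mathrm{TV}(P_0,P_1)$, then the Cauchy--Schwarz bound $\mathrm{TV}\le\tfrac12\sqrt{\chi^2(P_1,P_0)}$ to get $1+\chi^2(P_1,P_0)\ge 25/9$, and finally the identity $\chi^2(P_1,P_0)=\chi^2\bigl((\mu^{(1)})^{\otimes N},(\mu^{(0)})^{\otimes N}\bigr)$ with tensorization and $1+x\le e^x$. You should delete the Bretagnolle--Huber and Bhattacharyya detours (they only give the weaker constant $\ln(9/5)$), as well as the unfinished fragments in Step 3.
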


\begin{proof}
By data-processing,
\begin{equation}
\chi^2(P_1,P_0)=\chi^2((\mu^{(1)})^{\otimes N},(\mu^{(0)})^{\otimes N})
\end{equation}
and by tensorization 
\begin{equation}1+\chi^2(P_1,P_0)\le\exp(N\cdot \chi^2(\mu^{(1)},\mu^{(0)}))
\end{equation}
(using $1+x\le e^x$). \newline 
Then, le Cam's two-point method gives
\begin{align}
    \mathbb P_{P_0}(\psi=1)+\mathbb P_{P_1}(\psi=0) &\ge 1-\mathrm{TV}(P_0,P_1)\\&\ge 1-\tfrac12\sqrt{\chi^2(P_1,P_0)}
\end{align}
The hypothesis $\le 1/3$ forces $1+\chi^2(P_1,P_0)\ge 25/9$, hence
$\exp(N\chi^2(\mu^{(1)},\mu^{(0)}))\ge 25/9$.
\end{proof}

Now, let $(\Acal_{\rm off}, \Ocal)$ be an arbitrary algorithm, we assume the algorithm satisfies the PE objective
(equivalently BVE, since $|\Acal| = 1$ on $\Fcal_{\rm val}$) of
Definition~\ref{def:pac-objectives} on $\Fcal_{\rm val}$ at confidence
levels $(\delta_{\rm off}, \delta_{\rm q}) = (1/12, 1/12)$, so that for
every $\mu \in \Fcal_{\rm val}$ and every $(x, z)$, by union bound,
\begin{equation}\label{eq:lb-pac-marginal}
\PP_\mu\!\left(\bigl|\widehat\Vcal^\pi(x, z; \omega) - V^\pi_\mu(x, z)\bigr|
> \eps\right)\ \le\ \delta_{\rm off} + \delta_{\rm q}\ =\ 1/6.
\end{equation}

By Lemma~\ref{lem:cpq-mu-unknown}, we may assume without loss of
generality that the entire sample budget $N$ is consumed offline and
that the query phase makes no generative-oracle calls. By
Lemma~\ref{lem:z-sufficient-val}, the resulting query output at
$(x_0, 1)$ is then of the form $T(\mathbf Z_N, \omega)$, where
$\mathbf Z_N = (Z_1, \dots, Z_N) \stackrel{\rm i.i.d.}{\sim} \mu$ is
the offline sample and $\omega$ is computational randomness independent
of $\mu$, so that~\eqref{eq:lb-pac-marginal} at $(x_0, 1)$ becomes
\begin{equation}\label{eq:lb-pac-T}
\PP_\mu\!\left(\bigl|T(\mathbf Z_N, \omega) - V^\pi_\mu(x_0, 1)\bigr|
> \eps\right)\ \le\ 1/6
\qquad\text{for every } \mu \in \Fcal_{\rm val}.
\end{equation}

Assume the construction of $\Fcal_{\rm val}$ is calibrated so that
$V_1 - V_0 > 2\eps$ (the calibration is determined below). Define the
test
\begin{equation*}
\psi(\mathbf Z_N, \omega) := \mathbf 1\bigl\{T(\mathbf Z_N, \omega)
\ge (V_0 + V_1)/2\bigr\}.
\end{equation*}
Under $\mu_{\rm val}^{(0)}$, the event $\{\psi = 1\}$ implies
$T \ge (V_0 + V_1)/2 > V_0 + \eps$, hence
$\PP_{\mu_{\rm val}^{(0)}}(\psi = 1) \le 1/6$ by~\eqref{eq:lb-pac-T};
symmetrically, $\PP_{\mu_{\rm val}^{(1)}}(\psi = 0) \le 1/6$. Summing,
$\PP_{\mu_{\rm val}^{(0)}}(\psi = 1) + \PP_{\mu_{\rm val}^{(1)}}(\psi = 0)
\le 1/3$. Le~Cam's two-point method (Lemma~\ref{lem:lb-lecam})
applied to the sample $\mathbf Z_N$, combined with the chi-square
bound of Lemma~\ref{lem:lb-chi2-val}, then forces
\begin{equation*}
N\ \ge\ \frac{\ln(25/9)}{4\gamma\beta\,\Delta^2},
\end{equation*}
where $\Delta := \mu_{\rm val}^{(1)}(\{1\}) - \mu_{\rm val}^{(0)}(\{1\})$.

It remains to choose $\Delta$. By Lemma~\ref{lem:lb-gap}, the gap
condition $V_1 - V_0 > 2\eps$ holds whenever $\Delta >
9\eps/(2\gamma\beta^2)$ and $\Delta \le \rho/2 = 1/(4\gamma\beta)$,
where $\rho$ is the parameter introduced in the construction of
$\Fcal_{\rm val}$. Both constraints are simultaneously satisfiable
when $9\eps/(2\gamma^2\beta^2) < 1/(4\gamma\beta)$, i.e.\
$\eps < \gamma\beta/18$, implied by $\eps < \beta/18$ for
$\gamma \ge 1/2$. Letting $\Delta$ approach $9\eps/(2\gamma^2\beta^2)$ from above,
\begin{equation*}
N\ \ge\ \frac{\ln(25/9)}{4\gamma\beta}\cdot
\frac{4\gamma^4\beta^4}{81\eps^2}
\ =\ \frac{\ln(25/9)\,\gamma^3\beta^3}{81\eps^2}
\ \ge\ \frac{\ln(25/9)}{648}\cdot\frac{\beta^3}{\eps^2}
\quad (\gamma \ge 1/2).
\end{equation*}
Since $N = n_{\rm learn} + m_{\rm query}$, this is the announced lower
bound. \hfill$\square$

\textit{Proof for BPE}

We use a four-state construction, the state space is $\Xcal = \{x_0, x_1, x_2, x_T\}$ with
$x_T$ absorbing, the action space is $\Acal = \{a_1, a_2\}$, and the
context space is $\Zcal = \{1, 2\}$. The reward is $0$ at $x_0$ and
$x_T$, and $1$ at $x_1$ and $x_2$ regardless of action or context. At
$x_0$, the transition is deterministic, $z$-independent, and routes
each action to its corresponding successor:
$P(x_i \mid x_0, a_i, z) = 1$ for $i \in \{1, 2\}$. The successors are
both action-independent. At $x_1$, the trajectory self-loops if $z = 1$
and moves to $x_T$ if $z = 2$:
$P(x_1 \mid x_1, a, 1) = 1$ and $P(x_T \mid x_1, a, 2) = 1$. At $x_2$,
the trajectory self-loops with probability $q$ if $z = 1$ (and goes
to $x_T$ otherwise), and self-loops deterministically if $z = 2$:
$P(x_2 \mid x_2, a, 1) = q$, $P(x_T \mid x_2, a, 1) = 1 - q$, and
$P(x_2 \mid x_2, a, 2) = 1$. The self-loop probability at $x_2$ is set to
$q := (1-2\rho)/(1-\rho) \in (0, 1)$, where $\rho := 1/(2\gamma\beta)
\in (0, 1/2]$. We will see lemma \ref{lem:lb-bpe-thresh} that this choice ensures that
$a_1$ is optimal at $x_0$ when $\mu_1 > 1 - \rho$, and $a_2$ is
optimal when $\mu_1 < 1 - \rho$. 

The evaluation point is $(x_\star, z_\star) := (x_0, 1)$; since the
transition at $x_0$ is $z$-independent, the choice of $z$-coordinate
is not important.

\begin{lemma}[Average dynamics and threshold on $\Fcal_{\rm pol}$]
\label{lem:lb-bpe-thresh}
Since the dynamics at $x_1$ and $x_2$ do not depend on the action,
write $P(x_i \mid x_i, z)$ for the self-loop probability at $x_i,\  i \in \{1,2\}$
under context $z$, and let
\begin{equation*}
\bar p_i(\mu) := \E_{Z \sim \mu}\!\left[P(x_i \mid x_i, Z)\right],
\qquad i \in \{1, 2\}.
\end{equation*}
Then
\begin{equation*}
\bar p_1(\mu) = \mu_1,
\qquad
\bar p_2(\mu) = q\mu_1 + (1-\mu_1) = 1 - (1-q)\mu_1,
\end{equation*}
and the $\mu$-averaged value at each successor is
\begin{equation*}
\bar V^\star_\mu(x_i) = \frac{1}{1 - \gamma\bar p_i(\mu)}.
\end{equation*}
The optimal action at $x_0$ is $\arg\max_{i \in \{1, 2\}} \bar p_i(\mu)$,
which equals $a_1$ if $\mu_1 > 1 - \rho$ and $a_2$ if $\mu_1 < 1 - \rho$.
\end{lemma}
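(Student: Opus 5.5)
First I compute the averaged self-loop probabilities directly from the kernel. At $x_1$ the self-loop probability is $1$ under $z=1$ and $0$ under $z=2$, so $\bar p_1(\mu)=\mu_1$. At $x_2$ it is $q$ under $z=1$ and $1$ under $z=2$, so $\bar p_2(\mu)=q\mu_1+(1-\mu_1)=1-(1-q)\mu_1$.

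Next I obtain the averaged values by the same argument as in Lemma~\ref{lem:lb-value}. Since $x_T$ is absorbing with zero reward, $\bar V^\star_\mu(x_T)=0$. There is no choice at $x_i$, because the dynamics are action-independent, so the contextual Bellman equation reads
\[
V^\star_\mu(x_i,z)=1+\gamma P(x_i\mid x_i,z)\,\bar V^\star_\mu(x_i).
\]
Averaging over $Z\sim\mu$ gives $\bar V^\star_\mu(x_i)=1+\gamma\bar p_i(\mu)\bar V^\star_\mu(x_i)$. Solving, $\bar V^\star_\mu(x_i)=1/(1-\gamma\bar p_i(\mu))$, which is well defined because $\gamma\bar p_i<1$.

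At $x_0$ the reward is $0$ and transitions are deterministic and independent of $z$. The Bellman optimality equation therefore gives
\[
V^\star_\mu(x_0,z)=\max_i \gamma\,\bar V^\star_\mu(x_i).
\]
The map $p\mapsto 1/(1-\gamma p)$ is strictly increasing, so the optimal action is $a_{i}$ with $i\in\arg\max_i\bar p_i(\mu)$, for every $z$.

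It remains to locate the threshold. Compute
\[
\bar p_1-\bar p_2=\mu_1-1+(1-q)\mu_1=(2-q)\mu_1-1.
\]
With $q=(1-2\rho)/(1-\rho)$ we get $1-q=\rho/(1-\rho)$ and $2-q=1/(1-\rho)$. Hence
\[
\bar p_1-\bar p_2=\frac{\mu_1-(1-\rho)}{1-\rho},
\]
which is positive exactly when $\mu_1>1-\rho$ and negative exactly when $\mu_1<1-\rho$. This is the claimed threshold.

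The only step needing care is this algebraic verification of the threshold for the specific $q$. I would also check that $q\in(0,1)$, which holds since $\rho\in(0,1/2]$, although at $\rho=1/2$ we get $q=0$ and the argument still works. Everything else is a routine repetition of the two-state computation.
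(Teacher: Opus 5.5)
Your proposal is correct and follows essentially the same route as the paper: solve the averaged Bellman equation $\bar V^\star_\mu(x_i)=1+\gamma\bar p_i(\mu)\bar V^\star_\mu(x_i)$ at each successor, reduce the choice at $x_0$ to comparing $\bar p_1$ and $\bar p_2$ via monotonicity of $p\mapsto 1/(1-\gamma p)$, and locate the threshold using $2-q=1/(1-\rho)$. Your explicit formula $\bar p_1-\bar p_2=(\mu_1-(1-\rho))/(1-\rho)$ gives both strict inequalities directly, which is slightly more complete than the paper's computation of only the equality point. You are also right that $q=0$ when $\gamma=1/2$, so the paper's claim $q\in(0,1)$ is slightly imprecise, though this does not affect the argument.
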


\begin{proof}
We first compute $\bar V^\star_\mu(x_i)$ for $i \in \{1, 2\}$. Since
both transitions and rewards at $x_i$ are independent of the action,
the value depends only on the average self-loop probability
$\bar p_i(\mu)$. The Bellman equation reads
\begin{equation*}
\bar V^\star_\mu(x_i) = 1 + \gamma\bar p_i(\mu)\,\bar V^\star_\mu(x_i),
\end{equation*}
which solves to $\bar V^\star_\mu(x_i) = 1/(1 - \gamma\bar p_i(\mu))$.

We now turn to $x_0$. The reward there is zero and the transition is
deterministic and $z$-independent, with $a_i \mapsto x_i$. Hence
\begin{equation*}
\bar V^\star_\mu(x_0) = \gamma \max_{i \in \{1, 2\}} \bar V^\star_\mu(x_i),
\end{equation*}
and the optimal action at $x_0$ is the index achieving this maximum.
Since the map $\bar p \mapsto 1/(1 - \gamma\bar p)$ is increasing on
$[0, 1)$, this index coincides with $\arg\max_{i} \bar p_i(\mu)$.

It remains to locate the threshold $\bar p_1(\mu) = \bar p_2(\mu)$.
Substituting $\bar p_1(\mu) = \mu_1$ and
$\bar p_2(\mu) = 1 - (1-q)\mu_1$, the equality reduces to
\begin{equation*}
\mu_1 = 1 - (1-q)\mu_1,
\qquad\text{i.e.}\qquad
\mu_1(2 - q) = 1.
\end{equation*}
With $q = (1-2\rho)/(1-\rho)$, we have $1 - q = \rho/(1-\rho)$ and
$2 - q = 1/(1-\rho)$, so the threshold is $\mu_1 = 1 - \rho$.
\end{proof}

For $\Delta\in(0,\rho/2]$, define 
\begin{equation}
\label{eq:lb-bpe-mus}
\mu^{(0)}_{\rm pol}:=(1-\rho-\Delta,\,\rho+\Delta),\qquad
\mu^{(1)}_{\rm pol}:=(1-\rho+\Delta,\,\rho-\Delta).
\end{equation}
By Lemma~\ref{lem:lb-bpe-thresh}, $a^\star_{\mu^{(0)}_{\rm pol}}(x_0)=a_2$
and $a^\star_{\mu^{(1)}_{\rm pol}}(x_0)=a_1$.

\begin{lemma}[Value gap on $\mathcal F_{\rm pol}$]
\label{lem:lb-bpe-gap}
For every $\Delta\in(0,\rho/2]$ and $\gamma\in[1/2,1)$,
\[
\bar V^\star_{\mu^{(1)}_{\rm pol}}(x_1)-\bar V^\star_{\mu^{(1)}_{\rm pol}}(x_2)\ \ge\ \frac{\gamma\beta^2\Delta}{3},
\]
and symmetrically
$\bar V^\star_{\mu^{(0)}_{\rm pol}}(x_2)-\bar V^\star_{\mu^{(0)}_{\rm pol}}(x_1)\ge\gamma\beta^2\Delta/3$.
\end{lemma}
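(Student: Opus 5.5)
The plan is a direct computation from the closed forms of Lemma~\ref{lem:lb-bpe-thresh}. For either $\mu\in\{\mu^{(0)}_{\rm pol},\mu^{(1)}_{\rm pol}\}$ we have $\bar V^\star_\mu(x_i)=1/(1-\gamma\bar p_i(\mu))$. Hence
\[
\bar V^\star_\mu(x_1)-\bar V^\star_\mu(x_2)=\frac{\gamma\bigl(\bar p_1(\mu)-\bar p_2(\mu)\bigr)}{\bigl(1-\gamma\bar p_1(\mu)\bigr)\bigl(1-\gamma\bar p_2(\mu)\bigr)}.
\]
So it suffices to lower bound the numerator in absolute value and upper bound both denominator factors by a constant multiple of $1/\beta$.

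\textbf{Numerator.} Use $\bar p_1=\mu_1$ and $\bar p_2=1-(1-q)\mu_1$, with $1-q=\rho/(1-\rho)$. Under $\mu^{(1)}_{\rm pol}$, where $\mu_1=1-\rho+\Delta$, this gives $\bar p_1=1-\rho+\Delta$ and $\bar p_2=1-\rho-\rho\Delta/(1-\rho)$. Therefore $\bar p_1-\bar p_2=\Delta/(1-\rho)\ge\Delta$. Under $\mu^{(0)}_{\rm pol}$ the signs flip symmetrically: $\bar p_1=1-\rho-\Delta$ and $\bar p_2=1-\rho+\rho\Delta/(1-\rho)$, so $\bar p_2-\bar p_1=\Delta/(1-\rho)\ge\Delta$.

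\textbf{Denominators.} As in Lemma~\ref{lem:lb-gap}, the baseline is $1-\gamma(1-\rho)=3/(2\beta)$. The perturbations are controlled by $\Delta\le\rho/2=1/(4\gamma\beta)$ and $\rho\le1/2$:
\begin{itemize}
\item $\gamma\Delta\le 1/(4\beta)$;
\item $\gamma\rho\Delta/(1-\rho)\le 2\gamma\rho\Delta\le\gamma\rho^2\le\gamma\rho/2=1/(4\beta)$.
\end{itemize}
Consequently, in each case one factor is at most $3/(2\beta)$ (the baseline minus a positive term) and the other is at most $3/(2\beta)+1/(4\beta)=7/(4\beta)$. The same bounds show that both factors are at least $3/(2\beta)-1/(4\beta)>0$, so the formula for $\bar V^\star$ is valid and the signs are as claimed. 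The product of the denominators is therefore at most $21/(8\beta^2)$.

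\textbf{Conclusion.} Combining the two bounds, the gap is at least $8\gamma\beta^2\Delta/21\ge\gamma\beta^2\Delta/3$, in both the $\mu^{(1)}_{\rm pol}$ and the $\mu^{(0)}_{\rm pol}$ cases. The only delicate point is bounding the $\rho\Delta/(1-\rho)$ perturbation in $\bar p_2$. The constraint $\Delta\le\rho/2$ together with $\rho\le 1/2$ keeps it at $1/(4\beta)$, which leaves enough room: $8/21$ exceeds $1/3$ with some slack.
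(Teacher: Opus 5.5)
Your proof is correct and follows the paper's argument: the same closed form $\bar V^\star_\mu(x_i)=1/(1-\gamma\bar p_i(\mu))$, the same numerator $\bar p_1-\bar p_2=\Delta/(1-\rho)\ge\Delta$, and the same denominator bounds around the baseline $3/(2\beta)$. The differences are minor: you keep the product bound $21/(8\beta^2)$ instead of loosening $7/(4\beta)$ to $2/\beta$. You also check the $\mu^{(0)}_{\rm pol}$ case explicitly rather than appealing to symmetry, which is slightly more careful, since the construction is not literally symmetric in $x_1,x_2$.
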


\begin{proof}
Under $\mu^{(1)}_{\rm pol}$, 
\begin{align*}
    \bar p_1&=1-\rho+\Delta
\end{align*}
And, 

\begin{equation*}
    \bar p_2=1-(1-q)(1-\rho+\Delta)=1-\rho-(1-q)\Delta=1-\rho-\rho\Delta/(1-\rho)
\end{equation*} (using $1-q=\rho/(1-\rho)$).\newline 
Hence, 
\begin{equation*}
    \bar p_1-\bar p_2=\Delta(1+\rho/(1-\rho))=\Delta/(1-\rho)\ge\Delta.
\end{equation*}

For the denominator,
\begin{equation}
    1-\gamma\bar p_1=3/(2\beta)-\gamma\Delta\le 3/(2\beta)
\end{equation}
And 
\begin{equation}
    1-\gamma\bar p_2=3/(2\beta)+\gamma\rho\Delta/(1-\rho)\le 3/(2\beta)+\gamma\Delta
\end{equation}

For $\Delta\le\rho/2=1/(4\gamma\beta)$, $\gamma\Delta\le 1/(4\beta)$, so
$1-\gamma\bar p_2\le 7/(4\beta)\le 2/\beta$.\newline  Therefore, 

\begin{equation}
    (1-\gamma\bar p_1)(1-\gamma\bar p_2)\le(3/(2\beta))(2/\beta)=3/\beta^2
\end{equation}
and
\[
\bar V^\star(x_1)-\bar V^\star(x_2)
=\frac{\gamma(\bar p_1-\bar p_2)}{(1-\gamma\bar p_1)(1-\gamma\bar p_2)}
\ \ge\ \frac{\gamma\Delta}{3/\beta^2}\ =\ \frac{\gamma\beta^2\Delta}{3}.
\]
The symmetric bound under $\mu^{(0)}_{\rm pol}$ follows by exchanging
$x_1$ and  $x_2$.
\end{proof}

\begin{lemma}[Suboptimality cost at $(x_0,1)$ on $\mathcal F_{\rm pol}$]
\label{lem:lb-bpe-cost}
Under $\mu^{(j)}_{\rm pol}$, playing the wrong action at $x_0$ produces,
at every $z$, suboptimality
\[
V^\star(x_0,z)-V^a(x_0,z)=\gamma\bigl|\bar V^\star(x_1)-\bar V^\star(x_2)\bigr|\ \ge\ \gamma^2\beta^2\Delta/3.
\]
\end{lemma}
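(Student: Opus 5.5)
The plan is to compute $V^\star(x_0,z)$ and $V^a(x_0,z)$ directly from the structure at $x_0$, and then invoke Lemma~\ref{lem:lb-bpe-gap}.

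First, at $x_0$ the reward is zero and the transition is deterministic and independent of $z$: action $a_i$ leads to $x_i$. From $x_i$ onward, neither rewards nor transitions depend on the action. Hence the continuation value after reaching $x_i$ is $\bar V^\star_\mu(x_i)=\bar V^{\pi}_\mu(x_i)$ for every policy $\pi$. This value is given in closed form by Lemma~\ref{lem:lb-bpe-thresh}. By the one-step identity $V^\pi(x,z)=r(x,\pi(x,z),z)+\gamma\sum_{x'}P(x'\mid x,\pi(x,z),z)\bar V^\pi(x')$ from Section~\ref{subsec:model}, playing $a_i$ at $(x_0,z)$ yields value $\gamma\bar V^\star_\mu(x_i)$. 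This holds for every $z$, since only the choice at $x_0$ matters and later choices are irrelevant.

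Next, $V^\star(x_0,z)=\gamma\max_i\bar V^\star_\mu(x_i)$. Playing the wrong action $a$ gives $\gamma\min_i\bar V^\star_\mu(x_i)$, so the suboptimality is exactly $\gamma|\bar V^\star(x_1)-\bar V^\star(x_2)|$. By Lemma~\ref{lem:lb-bpe-thresh}, the wrong action is $a_2$ under $\mu^{(1)}_{\rm pol}$ and $a_1$ under $\mu^{(0)}_{\rm pol}$. In both cases Lemma~\ref{lem:lb-bpe-gap} gives $|\bar V^\star(x_1)-\bar V^\star(x_2)|\ge\gamma\beta^2\Delta/3$. Multiplying by $\gamma$ gives the bound $\gamma^2\beta^2\Delta/3$.

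The only subtle point is the claim that the value of a policy that plays the wrong action at $x_0$ equals $\gamma\bar V^\star(x_{\rm wrong})$ regardless of its behavior elsewhere. I would justify it briefly: $x_0$ is never revisited (neither $x_1$, $x_2$ nor $x_T$ leads back to $x_0$), and dynamics from $x_1,x_2$ are action-independent. Everything else is bookkeeping.
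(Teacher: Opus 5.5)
Your proposal is correct and matches the paper's proof: from the zero reward and the deterministic, $z$-independent transition $a_i\mapsto x_i$ at $x_0$, you get $V^{a_i}(x_0,z)=\gamma\bar V^\star(x_i)$, take the max-minus-min difference, and invoke Lemma~\ref{lem:lb-bpe-gap} together with its symmetric part under $\mu^{(0)}_{\rm pol}$. Your remark that $x_0$ is never revisited and that the dynamics and rewards at $x_1,x_2$ are action-independent is what justifies writing the continuation value as $\bar V^\star(x_i)$ for any policy, a point the paper leaves implicit.
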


\begin{proof}
Since $r(x_0,\cdot,\cdot)=0$ and the transition is deterministic and
$z$-independent ($a_i\to x_i$), $V^a(x_0,z)=0+\gamma\bar V^\star(x_i)$
for $a=a_i$ and $V^\star(x_0,z)=\gamma\max_i\bar V^\star(x_i)$. The
suboptimality is $\gamma|\bar V^\star(x_1)-\bar V^\star(x_2)|$, and
Lemma~\ref{lem:lb-bpe-gap} gives the bound.
\end{proof}

\begin{lemma}[Per-sample $\chi^2$ on $\mathcal F_{\rm pol}$]
\label{lem:lb-chi2-pol}
For every $\Delta\in(0,\rho/2]$,
\[
\chi^2(\mu^{(1)}_{\rm pol},\mu^{(0)}_{\rm pol})\ =\ \frac{4\Delta^2}{(\rho+\Delta)(1-\rho-\Delta)}\ \le\ 32\gamma\beta\Delta^2.
\]
\end{lemma}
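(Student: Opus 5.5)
This is a direct computation of the chi-square divergence between two distributions on the two-point space $\Zcal=\{1,2\}$, followed by elementary bounds using $\Delta\le\rho/2$ and $\rho=1/(2\gamma\beta)$. I will follow the template of Lemma~\ref{lem:lb-chi2-val}.

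\textbf{Exact formula.} By definition, $\chi^2(\mu^{(1)},\mu^{(0)})=\sum_{z}(\mu^{(1)}(z)-\mu^{(0)}(z))^2/\mu^{(0)}(z)$. From \eqref{eq:lb-bpe-mus}, the two laws differ by $2\Delta$ on each coordinate. The reference masses are $\mu^{(0)}_{\rm pol}(1)=1-\rho-\Delta$ and $\mu^{(0)}_{\rm pol}(2)=\rho+\Delta$. Hence
\[
\chi^2=\frac{4\Delta^2}{1-\rho-\Delta}+\frac{4\Delta^2}{\rho+\Delta}=4\Delta^2\cdot\frac{(\rho+\Delta)+(1-\rho-\Delta)}{(\rho+\Delta)(1-\rho-\Delta)},
\]
and the numerator of the fraction equals $1$. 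This gives the stated equality.

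\textbf{Upper bound.} I bound each factor of the denominator from below.
\begin{itemize}
\item First, $\rho+\Delta\ge\rho=1/(2\gamma\beta)$.
\item Second, since $\Delta\le\rho/2$ and $\rho\le1/2$, we get $1-\rho-\Delta\ge1-\tfrac32\rho\ge1-\tfrac34=\tfrac14$.
\end{itemize}
Combining, $(\rho+\Delta)(1-\rho-\Delta)\ge \rho/4=1/(8\gamma\beta)$. Therefore $\chi^2\le 4\Delta^2\cdot 8\gamma\beta=32\gamma\beta\Delta^2$, which is the claimed bound.

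No step poses a real obstacle; the only care needed is to use the range $\Delta\in(0,\rho/2]$ so that $1-\rho-\Delta$ stays bounded below by an absolute constant. In particular, this guarantees both $\mu^{(j)}_{\rm pol}$ are valid probability vectors with positive masses, so the chi-square is finite.
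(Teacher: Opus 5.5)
Your proposal is correct and matches the paper's proof. Both compute the two-point $\chi^2$ exactly, then use $\rho+\Delta\ge\rho$ and $1-\rho-\Delta\ge 1/4$ (from $\Delta\le\rho/2$ and $\rho\le 1/2$) to obtain $16\Delta^2/\rho=32\gamma\beta\Delta^2$.
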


\begin{proof}
The Radon-Nikodym derivatives are
\begin{equation*}
\frac{d\mu^{(1)}_{\rm pol}}{d\mu^{(0)}_{\rm pol}}(1) = \frac{1-\rho+\Delta}{1-\rho-\Delta},
\qquad
\frac{d\mu^{(1)}_{\rm pol}}{d\mu^{(0)}_{\rm pol}}(2) = \frac{\rho-\Delta}{\rho+\Delta},
\end{equation*}
so that
\begin{equation*}
\frac{d\mu^{(1)}_{\rm pol}}{d\mu^{(0)}_{\rm pol}}(1) - 1 = \frac{2\Delta}{1-\rho-\Delta},
\qquad
\frac{d\mu^{(1)}_{\rm pol}}{d\mu^{(0)}_{\rm pol}}(2) - 1 = \frac{-2\Delta}{\rho+\Delta}.
\end{equation*}
By definition of the chi-square divergence,
\begin{align*}
\chi^2(\mu^{(1)}_{\rm pol},\mu^{(0)}_{\rm pol})
&= (1-\rho-\Delta)\left(\frac{2\Delta}{1-\rho-\Delta}\right)^2
+ (\rho+\Delta)\left(\frac{-2\Delta}{\rho+\Delta}\right)^2 \\
&= \frac{4\Delta^2}{1-\rho-\Delta} + \frac{4\Delta^2}{\rho+\Delta}
= \frac{4\Delta^2}{(\rho+\Delta)(1-\rho-\Delta)}.
\end{align*}
For $\Delta \le \rho/2$ and $\rho \le 1/2$, we have $\rho+\Delta \le
3\rho/2 \le 3/4$, hence $1-\rho-\Delta \ge 1/4$; also $\rho+\Delta \ge
\rho$. Therefore
\begin{equation*}
\chi^2(\mu^{(1)}_{\rm pol},\mu^{(0)}_{\rm pol})
\le \frac{4\Delta^2}{\rho/4} = \frac{16\Delta^2}{\rho} = 32\gamma\beta\Delta^2.
\end{equation*}
\end{proof}
\begin{lemma}[$\mathbf Z_N$-sufficiency on $\mathcal F_{\rm pol}$]
\label{lem:z-sufficient-pol}
On $\mathcal F_{\rm pol}$, every algorithm  produces an output of the form
$T(\mathbf Z_N,\omega)$ with $\omega\perp\mu$.
\end{lemma}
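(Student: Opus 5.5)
The argument follows Lemma~\ref{lem:z-sufficient-val} almost verbatim; only the meaning of ``output'' changes from a value estimate to the action $\widehat\Pi(x_0,1)$. The plan has three steps.

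First, eliminate transition samples. On $\mathcal F_{\rm pol}$ the kernel $P$ and the reward are fixed and known in closed form: they are the same for every member of the family, and only $\mu$ varies. Any oracle call that returns $X'\sim P(\cdot\mid x,a,z)$ can therefore be replaced by an internal simulation of $P$ driven by the algorithm's own randomness. This simulation reproduces the same conditional law given the past. By induction over the sequence of oracle calls, the joint law of the full transcript, and hence of the output, is unchanged. After this replacement, every remaining oracle call is a context draw $Z\sim\mu$.

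Second, move the query-time context draws offline by invoking Lemma~\ref{lem:cpq-mu-unknown}. Its proof never used that the output is a value: the query mechanism is simply a measurable map of $(\Hcal_{\rm learn},S_{\rm q})$. The same prefix/suffix simulation therefore applies verbatim when $\Ocal$ returns an action $\widehat\Pi(x,z)\in\Acal$, and gives an equal-in-distribution algorithm with budget $(N,0)$, where $N=n_{\rm learn}+m_{\rm query}$.

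Third, put the output in the required form. After the two reductions, the extracted action at $(x_0,1)$ is a measurable function of $\mathbf Z_N=(Z_1,\dots,Z_N)\stackrel{\rm i.i.d.}{\sim}\mu$ and of $\omega$. Here $\omega$ collects all internal randomness: the simulation noise of $P$, the offline randomization and the query-time randomization. Its law is fixed by the algorithm and $P$, so it is independent of $\mu$. This gives $\widehat\Pi(x_0,1)=T(\mathbf Z_N,\omega)$ with $\omega\perp\mu$.

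The only point needing care, and thus the main obstacle, is the first step when calls are adaptive. The chosen triple $(x,a,z)$ may depend on earlier contexts, so the simulated transitions must be generated sequentially from $\omega$ as functions of the transcript so far. The induction on the transcript, using that $P$ is independent of $\mu$, settles this. Measurability of the composed map is routine.
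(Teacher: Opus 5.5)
Your proposal is correct and follows essentially the paper's argument: since $P$ is fixed across $\mathcal F_{\rm pol}$, transition draws can be simulated internally from parameter-free randomness $\omega$, so the only $\mu$-dependent inputs are the context samples, and the output is a measurable function of $(\mathbf Z_N,\omega)$. Your explicit appeal to Lemma~\ref{lem:cpq-mu-unknown} (mirroring the proof of Lemma~\ref{lem:z-sufficient-val}) and your care with adaptive calls are harmless additions; the paper's shorter proof leaves both implicit.
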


\begin{proof}
In this regime, the only $\mu$-dependent randomness the algorithm can access are the i.i.d.\ context samples $\mathbf Z_N$. Transitions can be generated internally from $P$ (which is known) using parameter-free randomness $\omega$, without consulting the oracle. Hence the algorithm's output is a measurable function of $(\mathbf Z_N, \omega)$, with $\omega$ independent of $\mu$.
\end{proof}
We first check that the two constraints on $\Delta$ i.e.
\begin{equation}\label{eq:lb-bpe-Delta-choice}
\frac{3\eps}{\gamma^2\beta^2} \;<\; \Delta \;\le\; \frac{\rho}{2} = \frac{1}{4\gamma\beta}
\end{equation}
are simultaneously satisfiable. The interval is non-empty iff
$3\eps/(\gamma^2\beta^2) < 1/(4\gamma\beta)$, i.e.\ $\eps < \gamma\beta/12$,
which is implied by $\eps < \beta/24$ for $\gamma \ge 1/2$. We assume
this from now on, and pick $\Delta$ in this interval; the precise value
is optimized at the end.

By Lemma~\ref{lem:lb-bpe-cost}, the lower bound on $\Delta$
in~\eqref{eq:lb-bpe-Delta-choice} gives a suboptimality
\begin{equation*}
V^\star(x_0, 1) - V^{a_1}(x_0, 1) \;\ge\; \gamma^2\beta^2\Delta/3 \;>\; \eps
\qquad \text{under } \mu^{(0)}_{\rm pol},
\end{equation*}
since $a_2$ is optimal under $\mu^{(0)}_{\rm pol}$. The BPE PAC
guarantee
$\mathbb{P}_{\mu^{(0)}_{\rm pol}}(V^\star(x_0,1) - V^{\widehat\Pi}(x_0,1) > \eps)
\le 1/6$ therefore forces
$\mathbb{P}_{\mu^{(0)}_{\rm pol}}(A = a_1) \le 1/6$, i.e.\ $\mathbb{P}(\psi = 1) \le 1/6$.
The same argument applied under $\mu^{(1)}_{\rm pol}$ (where $a_1$ is
optimal and the suboptimality of $a_2$ satisfies the same lower bound
$\gamma^2\beta^2\Delta/3 > \eps$ by the symmetric part of
Lemma~\ref{lem:lb-bpe-cost}) gives
$\mathbb{P}_{\mu^{(1)}_{\rm pol}}(A = a_2) \le 1/6$, i.e.\
$\mathbb{P}_{\mu^{(1)}_{\rm pol}}(\psi = 0) \le 1/6$. Summing,
\begin{equation*}
\mathbb{P}_{\mu^{(0)}_{\rm pol}}(\psi = 1) + \mathbb{P}_{\mu^{(1)}_{\rm pol}}(\psi = 0) \;\le\; \frac{1}{3}.
\end{equation*}

By Lemma~\ref{lem:lb-lecam} (Le~Cam's two-point method) combined with
the chi-square bound of Lemma~\ref{lem:lb-chi2-pol},
\begin{equation*}
N \;\ge\; \frac{\ln(25/9)}{32\gamma\beta\,\Delta^2}.
\end{equation*}
This bound is increasing as $\Delta$ decreases, so we optimize by
letting $\Delta \downarrow 3\eps/(\gamma^2\beta^2)$, which gives
$\Delta^2 \to 9\eps^2/(\gamma^4\beta^4)$ and
\begin{equation*}
N \;\ge\; \frac{\ln(25/9)}{32\gamma\beta} \cdot \frac{\gamma^4\beta^4}{9\eps^2}
\;=\; \frac{\ln(25/9)\,\gamma^3\beta^3}{288\eps^2}
\;\ge\; \frac{\ln(25/9)}{2304} \cdot \frac{\beta^3}{\eps^2}
\qquad (\gamma \ge 1/2).
\end{equation*}
This concludes the proof.\hfill$\square$

\subsection{Proof of Theorem \ref{thm:main-mu-unknown}}
\label{proof:mu-known-ub-star}

This section proves Theorem~\ref{thm:main-mu-unknown}: in the regime where $P$ and $r$ are known and only $\mu$ is accessed through i.i.d.\ samples, we construct decision-time oracles satisfying the three PAC objectives of Definition~\ref{def:pac-objectives} : PE, BVE, and BPE, with offline budget $(n_{\rm learn} = \widetilde O(\beta^3/\eps^2))$ and zero query cost
$(m_{\rm query} = 0)$.

The offline phase consumes
$n_{\rm learn}$ i.i.d.\ samples from $\mu$ to produce an $\ell_\infty$-accurate estimate $\hat{\bar V}^\star$ (or $\hat{\bar V}^\pi$ for PE) of the averaged value, via a variance-reduced halving scheme detailed below.
At query time, the value oracle $\widehat\Vcal^\star$ and policy oracle
$\widehat\Pi$ are obtained from $\hat{\bar V}^\star$ by a single Bellman
backup at the realized $(x, z)$,
\begin{equation}
\widehat{\mathcal V}^\star(x,z)
\;:=\;
\max_{a\in\Acal}\Bigl\{r(x,a,z)+\gamma\sum_{x'}P(x'\mid x,a,z)\,
\hat{\bar V}^\star(x')\Bigr\},
\end{equation}
and the policy oracle as the greedy selector
\begin{equation}
\label{eq:pi-from-vbar}
\widehat\Pi(x,z)
\;\in\;
\arg\max_{a\in\Acal}\Bigl\{r(x,a,z)+\gamma\sum_{x'}P(x'\mid x,a,z)\,
\hat{\bar V}^\star(x')\Bigr\}.
\end{equation}
Similarly, for a fixed policy $\pi$, an averaged estimate $\hat{\bar V}^\pi$
induces the policy-evaluation oracle
\begin{equation}
\widehat{\mathcal V}^\pi(x,z)
\;:=\;
\E_{a\sim\pi(\cdot\mid x,z)}\Bigl[r(x,a,z)+\gamma\sum_{x'}P(x'\mid x,a,z)\,
\hat{\bar V}^\pi(x')\Bigr].
\end{equation}
which is computable in closed form since $P$ and $r$ are known, hence
$m_{\rm query} = 0$.

\subsubsection*{Notation} Fix a deterministic context-aware policy
$\pi:\Xcal\times\Zcal\to\Acal$ (the stochastic case is identical, with an
additional integration over $a$). Under $\pi$ and initial state $X_0 = x$, the
process $((X_t,Z_t,A_t))_{t\ge 0}$ evolves as
\begin{equation}
    Z_t \stackrel{\mathrm{i.i.d.}}{\sim} \mu,
\qquad A_t = \pi(X_t,Z_t),
\qquad X_{t+1}\sim P(\cdot\mid X_t,A_t,Z_t),
\end{equation}

with $X_0 = x$. We write $\E_x$ and $\Var_x$ for expectation and variance
under this law. Recall
\begin{align}
g_x^\pi(z) &:= r(x,\pi(x,z),z) + \gamma\sum_{x'} P(x'\mid x,\pi(x,z),z)\,\bar V^\pi(x'),\\
\bar V^\pi(x) &= \E_\mu[g_x^\pi(Z)],\\
v^\pi(x) &:= \Var_{Z\sim\mu}\bigl[g_x^\pi(Z)\bigr],\\
\bar P^\pi(x'\mid x) &:= \E_{Z\sim\mu}\bigl[P(x'\mid x,\pi(x,Z),Z)\bigr].
\end{align}
The total return from $x$ is $G(x) := \sum_{t\ge 0}\gamma^t r(X_t,A_t,Z_t)$, where $X_0=x$.
Note that, since $r\in[0,1]$ and $\gamma\in[0,1)$, we have $G(x)\in[0,\beta]$ a.s., hence
\begin{equation}
\label{eq:var-G-bound}
\Var_x(G(x)) \;\le\; \E_x[G(x)^2] \;\le\; \beta^2.
\end{equation}

For $t\ge 0$, define
\begin{align*}
\Fcal_t &:= \sigma\bigl(X_0,Z_0,A_0,X_1,Z_1,A_1,\dots,X_{t-1},Z_{t-1},A_{t-1},X_t\bigr),\\
\Gcal_t &:= \sigma\bigl(\Fcal_t,Z_t\bigr) = \sigma\bigl(\Fcal_t,Z_t,A_t\bigr),
\end{align*}
with the convention $\Fcal_0 = \sigma(X_0)$. Thus
$\Fcal_t\subset\Gcal_t\subset\Fcal_{t+1}$, $\Fcal_t$ records all
information up to observation of the controlled state $X_t$ \emph{before} the
context $Z_t$ is revealed, while $\Gcal_t$ further incorporates $Z_t$ and
hence $A_t = \pi(X_t,Z_t)$.

\subsubsection*{Martingale variance bound.}
\label{sec:variance-bound}

This section establishes the contextual analogue of Lemma~C.1 of
\cite{sidford2019nearoptimaltimesamplecomplexities}. The result holds for \emph{every} context-aware policy $\pi$,  and is therefore reused as-is in the analysis of PE, BVE, and BPE.

\begin{lemma}[Averaged variance bound]
\label{lem:variance-bound}
For every context-aware policy $\pi$,
\begin{equation}
    \sum_{t\ge 0}\gamma^{2t}(\bar P^\pi)^t v^\pi \;\preceq\; \beta^2\,\mathbf 1
\qquad\text{(coordinate-wise).}
\end{equation}
\end{lemma}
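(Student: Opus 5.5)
The plan is to mimic the total-variance argument of Lemma~C.1 in \cite{sidford2019nearoptimaltimesamplecomplexities}, using a martingale decomposition of the return $G(x)$ adapted to the two-stage filtration $\Fcal_t\subset\Gcal_t\subset\Fcal_{t+1}$. The key observation is that the quantity $v^\pi(X_t)$ is exactly the conditional variance of the ``context innovation'' at time $t$. Set $D_t := g^\pi_{X_t}(Z_t) - \bar V^\pi(X_t)$, which is $\Gcal_t$-measurable with $\E[D_t\mid\Fcal_t]=0$ and $\E[D_t^2\mid\Fcal_t]=v^\pi(X_t)$. Set also $E_t := \bar V^\pi(X_{t+1}) - \sum_{x'}P(x'\mid X_t,A_t,Z_t)\bar V^\pi(x')$, which is $\Fcal_{t+1}$-measurable with $\E[E_t\mid\Gcal_t]=0$.

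First we use the definition of $g^\pi$ and telescope: $r(X_t,A_t,Z_t) = g^\pi_{X_t}(Z_t) - \gamma\sum_{x'}P(x'\mid X_t,A_t,Z_t)\bar V^\pi(x') = \bar V^\pi(X_t)+D_t - \gamma\bar V^\pi(X_{t+1}) + \gamma E_t$. Multiplying by $\gamma^t$ and summing (the boundary term vanishes because $\bar V^\pi\le\beta$) gives
\[
G(x)-\bar V^\pi(x) = \sum_{t\ge0}\gamma^t\bigl(D_t + \gamma E_t\bigr).
\]
This is a sum of martingale differences along the interleaved filtration $\Fcal_0\subset\Gcal_0\subset\Fcal_1\subset\cdots$. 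Their cross terms vanish in expectation by the tower property. Since $\E_x[G(x)]=\bar V^\pi(x)$, we obtain
\[
\Var_x(G(x)) = \sum_{t\ge0}\gamma^{2t}\E_x[D_t^2] + \sum_{t\ge 0}\gamma^{2t+2}\E_x[E_t^2] \ \ge\ \sum_{t\ge0}\gamma^{2t}\E_x[v^\pi(X_t)].
\]
Because $X_t$ is a Markov chain with kernel $\bar P^\pi$ (the contexts are i.i.d.\ and independent of $X_t$), we have $\E_x[v^\pi(X_t)] = ((\bar P^\pi)^t v^\pi)(x)$. Combining with \eqref{eq:var-G-bound} gives the claim coordinate-wise.

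The main obstacle is justifying the orthogonality identity for an infinite sum. I would handle it by truncating at a horizon $H$. All the increments are bounded by $O(\beta)$, so the partial sums converge in $L^2$ geometrically fast. The orthogonal decomposition holds exactly for each finite $H$, and one then passes to the limit. A second point to check is that $\E[D_t E_s]=0$ and $\E[D_tD_s]=0$ for $s\ne t$. This follows because each increment has zero conditional mean given the $\sigma$-field generated just before it, and the other factor is measurable with respect to that $\sigma$-field. Finally, dropping the nonnegative $E_t$ terms gives an inequality, which suffices.
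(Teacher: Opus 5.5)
Your proposal is correct and follows essentially the same route as the paper: your increments $\gamma^t D_t$ and $\gamma^{t+1}E_t$ are exactly the paper's $D_t^Z$ and $D_t^X$, obtained by splitting $\E_x[G(x)\mid\Fcal_{t+1}]-\E_x[G(x)\mid\Fcal_t]$ through $\Gcal_t$. Both arguments then drop the transition terms and identify $\E_x[v^\pi(X_t)]=((\bar P^\pi)^t v^\pi)(x)$. The only difference is presentational: you build the martingale differences explicitly by telescoping the one-step Bellman identity and justify the infinite sum by truncation, whereas the paper defines $M_t=\E_x[G(x)\mid\Fcal_t]$ and invokes martingale convergence.
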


\begin{proof}
Fix $x\in\Xcal$ and consider the process started at $X_0=x$. Define
$M_t := \E_x[G(x)\mid\Fcal_t]$. Since $G(x)$ is integrable, $(M_t)_{t\ge 0}$
is a martingale w.r.t.\ $(\Fcal_t)_{t\ge 0}$, with $M_0 = \bar V^\pi(x)$ and
$M_t \xrightarrow[t\to\infty]{} G(x)$ a.s.\ and in $L^2$ (martingale convergence; $|M_t|\le\beta$). In
particular, by orthogonality of martingale increments
\begin{equation}
\label{eq:martingale-decomposition}
\Var_x(G(x)) \;=\; \sum_{t\ge 0}\E_x[(M_{t+1} - M_t)^2].
\end{equation}

Inserting $\Gcal_t$ between
$\Fcal_t$ and $\Fcal_{t+1}$,

\begin{equation}
    M_{t+1} - M_t \;=\;
\underbrace{\E_x[G(x)\mid\Fcal_{t+1}] - \E_x[G(x)\mid\Gcal_t]}_{=:D_t^X}+
\underbrace{\E_x[G(x)\mid\Gcal_t] - \E_x[G(x)\mid\Fcal_t]}_{=:D_t^Z}
\;
\end{equation}

By construction $D_t^Z$ is $\Gcal_t$-measurable with $\E_x[D_t^Z\mid\Fcal_t]=0$,
and $D_t^X$ is $\Fcal_{t+1}$-measurable with $\E_x[D_t^X\mid\Gcal_t]=0$. Since
$\Fcal_t\subset\Gcal_t$, the tower property gives
$\E_x[D_t^Z D_t^X\mid\Fcal_t] = \E_x[D_t^Z\E_x[D_t^X\mid\Gcal_t]\mid\Fcal_t]=0$,
so
\begin{equation}
\label{eq:cross-orthogonal}
\E_x[(M_{t+1}-M_t)^2] \;=\; \E_x[(D_t^Z)^2] + \E_x[(D_t^X)^2].
\end{equation}

We claim
\begin{equation}
\label{eq:Etx-formula}
\E_x[G(x)\mid\Gcal_t] \;=\; \sum_{s=0}^{t-1}\gamma^s r(X_s,A_s,Z_s)
\;+\; \gamma^t g_{X_t}^\pi(Z_t)
\qquad\text{a.s.}
\end{equation}
Indeed, the first $t$ rewards are $\Gcal_t$-measurable. Moreover
$A_t=\pi(X_t,Z_t)$ is $\Gcal_t$-measurable, and conditionally on $(X_t,A_t,Z_t)$
the future $\sum_{s\ge t}\gamma^{s-t}r(X_s,A_s,Z_s)$ has expectation
$r(X_t,A_t,Z_t) + \gamma\sum_{x'}P(x'\mid X_t,A_t,Z_t)\bar V^\pi(x') = g_{X_t}^\pi(Z_t)$
by the Markov property and the definition of $\bar V^\pi$; here we use
crucially that $(Z_s)_{s>t}$ is independent of $\Gcal_t$, since $\mu$ is i.i.d.

By exogeneity of $Z_t$ relative to $\Fcal_t$ (i.e.\ $Z_t$ is independent of
$\Fcal_t$, which depends only on $Z_0,\dots,Z_{t-1}$ and the associated
transitions), integrating \eqref{eq:Etx-formula} against $Z_t\sim\mu$ yields
\begin{equation}
\label{eq:Etx-Ft}
\E_x[G(x)\mid\Fcal_t] \;=\; \sum_{s=0}^{t-1}\gamma^s r(X_s,A_s,Z_s)
\;+\; \gamma^t \bar V^\pi(X_t).
\end{equation}

Subtracting \eqref{eq:Etx-Ft} from \eqref{eq:Etx-formula},
\[
D_t^Z \;=\; \gamma^t\bigl(g_{X_t}^\pi(Z_t) - \bar V^\pi(X_t)\bigr).
\]

\smallskip\noindent\emph{Computation of $\E_x[(D_t^Z)^2]$.} By exogeneity of
$Z_t$ once more,
\[
\E_x[(D_t^Z)^2\mid\Fcal_t]
\;=\; \gamma^{2t}\E_x\bigl[(g_{X_t}^\pi(Z_t) - \bar V^\pi(X_t))^2\,\big|\,\Fcal_t\bigr]
\;=\; \gamma^{2t} v^\pi(X_t),
\]
since $X_t$ is $\Fcal_t$-measurable, $Z_t\sim\mu$ independently of $\Fcal_t$,
and $\bar V^\pi(X_t)=\E_\mu[g_{X_t}^\pi(Z)]$. Taking expectations,
\begin{equation}
\label{eq:DtZ-squared}
\E_x[(D_t^Z)^2] \;=\; \gamma^{2t}\E_x[v^\pi(X_t)].
\end{equation}

\smallskip\noindent\emph{Conclusion.} Combining
\eqref{eq:martingale-decomposition}, \eqref{eq:cross-orthogonal},
\eqref{eq:DtZ-squared}, and \eqref{eq:var-G-bound},
\[
\sum_{t\ge 0}\gamma^{2t}\E_x[v^\pi(X_t)]
\;\le\; \sum_{t\ge 0}\E_x[(D_t^Z)^2] + \sum_{t\ge 0}\E_x[(D_t^X)^2]
\;=\; \Var_x(G(x))
\;\le\; \beta^2.
\]

It remains to identify $\E_x[v^\pi(X_t)]$ with $(\bar P^\pi)^t v^\pi(x)$.
Under $\pi$, the controlled process $(X_t)_{t\ge 0}$ is Markov on $\Xcal$:
\[
\mathbb P(X_{t+1} = x'\mid X_t = x, \dots)
\;=\; \E_{Z\sim\mu}[P(x'\mid x,\pi(x,Z),Z)] \;=\; \bar P^\pi(x'\mid x),
\]
again by independence of $Z_t$ from the controlled past. Hence
$\E_x[v^\pi(X_t)] = (\bar P^\pi)^t v^\pi(x)$, and the previous display becomes
\[
\sum_{t\ge 0}\gamma^{2t}(\bar P^\pi)^t v^\pi(x) \;\le\; \beta^2.
\]
Since $x\in\Xcal$ is arbitrary, this concludes the proof.
\end{proof}

\begin{corollary}[Bound on the complexity functional]
\label{cor:V-star-bound}
For every context-aware policy $\pi$,
\begin{equation}
    \Vcal(P,\mu,\pi) \;:=\; \bigl\|(I - \gamma\bar P^\pi)^{-1}\sqrt{v^\pi}\bigr\|_\infty
\;\le\; \sqrt{\frac{1+\gamma}{1-\gamma}}\cdot\beta \;\le\; \sqrt 2\,\beta^{3/2}.
\end{equation}
\end{corollary}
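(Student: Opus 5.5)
This is the contextual version of the standard Sidford et al. argument: a Cauchy--Schwarz step along the resolvent, with the variance sum controlled by Lemma~\ref{lem:variance-bound}. Write $(I-\gamma\bar P^\pi)^{-1}=\sum_{t\ge0}\gamma^t(\bar P^\pi)^t$. This is valid because $\bar P^\pi$ is a stochastic matrix on $\Xcal$, being a $\mu$-average of stochastic matrices, so the Neumann series converges for $\gamma<1$.

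Fix $x\in\Xcal$. Since $\sqrt{v^\pi}\ge0$, the $x$-th coordinate of the target vector is
\[
\sum_{t\ge0}\gamma^t\bigl((\bar P^\pi)^t\sqrt{v^\pi}\bigr)(x).
\]
We bound it in two steps.

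\begin{enumerate}
\item \emph{Jensen under each row.} The row $(\bar P^\pi)^t(x,\cdot)$ is a probability distribution and $\sqrt{\cdot}$ is concave. Hence
\[
\bigl((\bar P^\pi)^t\sqrt{v^\pi}\bigr)(x)\le\sqrt{\bigl((\bar P^\pi)^t v^\pi\bigr)(x)}.
\]

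\item \emph{Cauchy--Schwarz in $t$.} Split $\gamma^t=\gamma^{t/2}\cdot\gamma^{t/2}$, equivalently apply Cauchy--Schwarz with weights $1$ and $\gamma^{2t}$ after rescaling. This gives
\[
\sum_t\gamma^t\sqrt{a_t}\le\Bigl(\sum_t 1\cdot\gamma^{0}\Bigr)^{1/2}\cdots
\]
The cleaner choice is to write $\gamma^t\sqrt{a_t}=\gamma^{t/2}\cdot\sqrt{\gamma^{t}a_t}$. Then
\[
\sum_t\gamma^t\sqrt{a_t}\le\Bigl(\sum_t\gamma^t\Bigr)^{1/2}\Bigl(\sum_t\gamma^t a_t\Bigr)^{1/2}.
\]
\end{enumerate}

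This choice produces $\sum_t\gamma^t a_t$ rather than the $\gamma^{2t}$-weighted sum that Lemma~\ref{lem:variance-bound} controls. That mismatch is the main obstacle. Sidford et al.\ resolve it by passing through $\gamma^2$: they factor $I-\gamma^2\bar P^\pi=(I-\gamma\bar P^\pi)(I+\gamma\bar P^\pi)$ with commuting factors. Using that factorization,
\[
(I-\gamma\bar P^\pi)^{-1}=(I+\gamma\bar P^\pi)(I-\gamma^2\bar P^\pi)^{-1}.
\]
Since $\|I+\gamma\bar P^\pi\|_{\infty\to\infty}=1+\gamma$ and the resulting vector is nonnegative, it suffices to bound $\|(I-\gamma^2\bar P^\pi)^{-1}\sqrt{v^\pi}\|_\infty$ and multiply by $1+\gamma$.

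For that quantity, the same Jensen step followed by Cauchy--Schwarz with the split $\gamma^{2t}\sqrt{a_t}=\gamma^t\cdot\gamma^t\sqrt{a_t}$ gives
\[
\sum_t\gamma^{2t}\sqrt{a_t}\le\Bigl(\sum_t\gamma^{2t}\Bigr)^{1/2}\Bigl(\sum_t\gamma^{2t}a_t\Bigr)^{1/2}\le\frac{1}{\sqrt{1-\gamma^2}}\cdot\beta,
\]
where the last inequality uses Lemma~\ref{lem:variance-bound}: $\sum_t\gamma^{2t}a_t=\sum_t\gamma^{2t}(\bar P^\pi)^t v^\pi(x)\le\beta^2$.

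Multiplying by $1+\gamma$ gives
\[
\frac{1+\gamma}{\sqrt{(1-\gamma)(1+\gamma)}}\,\beta=\sqrt{\frac{1+\gamma}{1-\gamma}}\,\beta,
\]
which is exactly the first inequality of the statement. Finally, $\sqrt{(1+\gamma)/(1-\gamma)}\le\sqrt{2\beta}$ because $1+\gamma\le2$, so the bound is at most $\sqrt2\,\beta^{3/2}$. Taking the supremum over $x$ concludes.
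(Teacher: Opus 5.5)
There is a genuine gap in your key reduction. The identity $I-\gamma^2\bar P^\pi=(I-\gamma\bar P^\pi)(I+\gamma\bar P^\pi)$ is false: the right-hand side equals $I-\gamma^2(\bar P^\pi)^2$. Consequently $(I-\gamma\bar P^\pi)^{-1}=(I+\gamma\bar P^\pi)(I-\gamma^2\bar P^\pi)^{-1}$ fails whenever $(\bar P^\pi)^2\neq\bar P^\pi$. For example, if $\bar P^\pi$ swaps two states, the diagonal entries of the two sides are $(1+\gamma^2)/(1-\gamma^4)$ and $(1+\gamma^3)/(1-\gamma^4)$. Switching to the correct factorization $(I+\gamma\bar P^\pi)(I-\gamma^2(\bar P^\pi)^2)^{-1}$ does not repair your argument. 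After Jensen and Cauchy--Schwarz you would face $\sum_k\gamma^{2k}(\bar P^\pi)^{2k}v^\pi$, which puts weight $\gamma^{2k}$ on time $2k$ instead of $\gamma^{4k}$. That sum is not what Lemma~\ref{lem:variance-bound} controls.

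The inequality you actually need, $\|(I-\gamma\bar P^\pi)^{-1}w\|_\infty\le(1+\gamma)\|(I-\gamma^2\bar P^\pi)^{-1}w\|_\infty$, is nevertheless true, but through a different commuting factor:
\[ M:=(I-\gamma^2\bar P^\pi)(I-\gamma\bar P^\pi)^{-1}=I+\gamma(1-\gamma)\bar P^\pi(I-\gamma\bar P^\pi)^{-1}. \]
This $M$ is entrywise nonnegative with row sums $1+\gamma$. Since $(I-\gamma\bar P^\pi)^{-1}w=M(I-\gamma^2\bar P^\pi)^{-1}w$, the claim follows.

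With $M$ in place of $I+\gamma\bar P^\pi$, your $\gamma^{2t}$-weighted Jensen/Cauchy--Schwarz step goes through unchanged. It yields exactly $\sqrt{(1+\gamma)/(1-\gamma)}\,\beta$.

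The same $M$ also completes your abandoned first attempt. The $\gamma^t$-weighted Cauchy--Schwarz gives $\sqrt\beta\,\|(I-\gamma\bar P^\pi)^{-1}v^\pi\|_\infty^{1/2}$, and $M$ then gives $\|(I-\gamma\bar P^\pi)^{-1}v^\pi\|_\infty\le(1+\gamma)\|(I-\gamma^2\bar P^\pi)^{-1}v^\pi\|_\infty\le(1+\gamma)\beta^2$. That route is precisely Lemma C.3 of \cite{sidford2019nearoptimaltimesamplecomplexities}, which the paper simply cites before applying Lemma~\ref{lem:variance-bound}. So the ``mismatch'' you flagged is exactly what $M$ resolves, and no factorization of the form you wrote is needed.
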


\begin{proof}
The matrix $\bar P^\pi$ is row-stochastic, hence
$\|\bar P^\pi\|_{\ell_\infty\to\ell_\infty}\le 1$. Applying Lemma~C.3 of
\cite{sidford2019nearoptimaltimesamplecomplexities},
\begin{equation}
    \bigl\|(I-\gamma\bar P^\pi)^{-1}\sqrt{v^\pi}\bigr\|_\infty
\;\le\; \sqrt{\frac{1+\gamma}{1-\gamma}}\,
\bigl\|(I-\gamma^2\bar P^\pi)^{-1}v^\pi\bigr\|_\infty^{1/2}.
\end{equation}
By Lemma~\ref{lem:variance-bound},
\begin{equation}
    (I-\gamma^2\bar P^\pi)^{-1}v^\pi \;=\; \sum_{t\ge 0}\gamma^{2t}(\bar P^\pi)^t v^\pi
\;\preceq\; \beta^2\,\mathbf 1,
\end{equation}

so $\|(I-\gamma^2\bar P^\pi)^{-1}v^\pi\|_\infty\le\beta^2$. Combining with
$\sqrt{(1+\gamma)/(1-\gamma)}\le\sqrt{2/(1-\gamma)}=\sqrt{2\beta}$ gives the
claim.
\end{proof}

\subsection*{The HalfErr algorithm}
\label{sec:halferr-algorithm}

We now introduce\textsc{HalfErr}, the variance-reduced subroutine underlying Theorem~\ref{thm:main-mu-unknown}. The same algorithm serves both
BVE and BPE. In both cases the relevant operator is the optimality operator
\begin{equation}
    (T\bar v)(x) \;:=\; \E_{Z\sim\mu}\Bigl[\max_{a\in\Acal}\Bigl\{r(x,a,Z) + \gamma\sum_{x'} P(x'\mid x,a,Z)\,\bar v(x')\Bigr\}\Bigr],
\end{equation}
and \textsc{HalfErr} estimates its fixed point $\bar V^\star$. The BVE
guarantee is read off directly from the output $\hat{\bar V}^\star$; the BPE
guarantee follows by greedy extraction at no additional sample cost, thanks to the sub-solution invariant
$\hat{\bar V}^\star \le T\hat{\bar V}^\star$ enforced throughout the iterations.
The PE case is recovered by replacing the inner $\max_a$ by the fixed action
$\pi(x,Z)$ throughout, i.e.\ by running \textsc{HalfErr} on the
policy-evaluation operator $T^\pi$ instead; the analysis below carries over
verbatim and yields the same sample complexity.

For any value vector
$\bar v$ and context $z$, the optimal one-step Bellman backup at $x$ is
\[
g_{x,\bar v}^\star(z) \;:=\; \max_{a\in\Acal}\Bigl\{r(x,a,z) + \gamma\sum_{x'} P(x'\mid x,a,z)\,\bar v(x')\Bigr\},
\]
which is computable in closed form since $P$ and $r$ are known. The empirical
operator $\hat T \bar v(x) := \frac{1}{n}\sum_i g_{x,\bar v}^\star(Z_i)$ is the
$\hat\mu$-plug-in operator, with $\hat\mu := \frac{1}{n}\sum_i\delta_{Z_i}$.

 \textsc{HalfErr} takes as input a
sub-solution $\bar v^{(0)}\le T\bar v^{(0)}$ within $L^\infty$-distance $u$ of
$\bar V^\star$, and produces an output within distance $u/2$. The strategy
mirrors the variance-reduced QVI of
\cite{sidford2019nearoptimaltimesamplecomplexities}:
\begin{itemize}[leftmargin=*]
\item \emph{Phase 1 (anchor).} Estimate $T\bar v^{(0)}$ once, using
$n_1 = \widetilde\Theta(\beta^3/u^2)$ samples. The empirical mean is corrected
downward by an empirical Bernstein-type margin so that the resulting anchor
$w$ satisfies $w\le T\bar v^{(0)}$ pointwise on a high-probability event.

\item \emph{Phase 2 (low-variance increments).} Run $R = O(\beta\log(\beta/u))$
inner iterations. At each iteration $i$, draw a fresh batch of size
$n_2 = \widetilde\Theta(\beta^2)$, estimate the increment
$T\bar v^{(i-1)} - T\bar v^{(0)}$, and update the iterate by
$\bar v^{(i)} := \max(\bar v^{(i-1)},\, w + \Delta^{(i)})$. Because the
iterates remain within distance $u$ of $\bar v^{(0)}$, the increment
$g_{x,\bar v^{(i-1)}}^\star - g_{x,\bar v^{(0)}}^\star$ has range
$O(u)$ rather than $O(\beta)$, so Hoeffding alone suffices and the per-iteration
cost is $\widetilde O(\beta^2)$, independent of $u^{-2}$. The $\max$ with the
previous iterate enforces monotonicity.
\end{itemize}

The total sample cost is $n_1 + R\,n_2 = \widetilde O(\beta^3/u^2)$, dominated
by Phase 1.

\medskip

We now state the algorithm formally. Throughout, $L := \log(8|\Xcal|/\delta)$.

\begin{algorithm}[H]
\caption{\textsc{HalfErr} (BVE)}
\label{alg:halferr}
\begin{algorithmic}[1]
\Require Known $P$ and $r$; sampling oracle for $\mu$; initial value
$\bar v^{(0)}$ with $0\le\bar v^{(0)}\le\beta\mathbf 1$,
$\bar v^{(0)} \le T\bar v^{(0)}$, and
$\bar V^\star - \bar v^{(0)} \le u\mathbf 1$;
target accuracy $u\in(0,\beta]$; confidence $\delta\in(0,1)$.
\Ensure $\bar v$ such that $\bar v\le T\bar v$ and
$\bar V^\star - \bar v \le (u/2)\mathbf 1$.
\State Set
$R \gets \lceil c_1\beta\ln(4\beta/u)\rceil$,\;
$n_1 \gets c_2\beta^3 u^{-2}L$,\;
$n_2 \gets c_3\beta^2\log(2R|\Xcal|/\delta)$,\;
$\alpha_1 \gets L/n_1$.
\Statex \textbf{Phase 1 — anchor estimate of $T\bar v^{(0)}$}
\State Draw $Z_1,\dots,Z_{n_1}\stackrel{\mathrm{i.i.d.}}{\sim}\mu$.
\For{each $x\in\Xcal$}
  \State $\tilde w(x) \gets \frac{1}{n_1}\sum_{i=1}^{n_1} g_{x,\bar v^{(0)}}^\star(Z_i)$.
  \State $\hat\sigma(x) \gets \frac{1}{n_1}\sum_{i=1}^{n_1}\bigl(g_{x,\bar v^{(0)}}^\star(Z_i)\bigr)^2 - \tilde w(x)^2$.
  \State $w(x) \gets \tilde w(x) - \sqrt{2\alpha_1\,\hat\sigma(x)} - 4\alpha_1^{3/4}\beta - \tfrac{2}{3}\alpha_1\beta$.
\EndFor
\Statex \textbf{Phase 2 — variance-reduced inner iterations}
\For{$i = 1,\dots,R$}
  \State Draw $\tilde Z_1^{(i)},\dots,\tilde Z_{n_2}^{(i)}\stackrel{\mathrm{i.i.d.}}{\sim}\mu$,
  independent of all previous samples.
  \For{each $x\in\Xcal$}
    \State $\Delta^{(i)}(x) \gets \frac{1}{n_2}\sum_{j=1}^{n_2}\Bigl[g_{x,\bar v^{(i-1)}}^\star(\tilde Z_j^{(i)}) - g_{x,\bar v^{(0)}}^\star(\tilde Z_j^{(i)})\Bigr] - \frac{u}{16\beta}$.
    \State $\tilde v^{(i)}(x) \gets w(x) + \Delta^{(i)}(x)$.
    \State $\bar v^{(i)}(x) \gets \max\bigl(\tilde v^{(i)}(x),\, \bar v^{(i-1)}(x)\bigr)$.
    \Comment{Monotonicity}
  \EndFor
\EndFor
\State \Return $\bar v^{(R)}$.
\end{algorithmic}
\end{algorithm}

The constants $c_1, c_2, c_3$ are absolute and chosen sufficiently large in the
analysis below ($c_1\ge 4$, $c_2\ge 2^{16}$, $c_3\ge 512$).

The analysis tracks a filtration
$(\Hcal_i)_{i\ge 0}$ adapted to the sampling order:
\[
\Hcal_0 \;:=\; \sigma(Z_1,\dots,Z_{n_1}),
\qquad
\Hcal_i \;:=\; \sigma\bigl(\Hcal_0, \tilde Z^{(1)},\dots,\tilde Z^{(i)}\bigr)
\quad (i\ge 1),
\]
where $\tilde Z^{(j)}$ denotes the full $j$-th batch
$(\tilde Z_1^{(j)},\dots,\tilde Z_{n_2}^{(j)})$. By construction:
\begin{itemize}[leftmargin=*]
\item the anchor quantities $\tilde w$, $\hat\sigma$, and $w$ are
$\Hcal_0$-measurable;
\item by induction on $i$, the iterate $\bar v^{(i)}$ is $\Hcal_i$-measurable;
\item the batch $\tilde Z^{(i+1)}$ is drawn fresh and is therefore independent
of $\Hcal_i$.
\end{itemize}
This last property is crucial for the conditional version of the inner-loop
concentration lemma (Lemma~\ref{lem:inner-error} below): conditional on
$\Hcal_{i-1}$, the iterate $\bar v^{(i-1)}$ is deterministic, and the batch
$\tilde Z^{(i)}$ is i.i.d.\ $\mu$ — so standard concentration inequalities
apply pointwise.

\paragraph{Two variance functions.} The analysis distinguishes two variance
functions, which play different roles:
\[
v_0(x) \;:=\; \Var_\mu\bigl(g_{x,\bar v^{(0)}}^\star(Z)\bigr),
\qquad
v_\star(x) \;:=\; \Var_\mu\bigl(g_{x,\bar V^\star}^\star(Z)\bigr).
\]
\begin{itemize}[leftmargin=*]
\item $v_0$ is the variance at the input $\bar v^{(0)}$. It is the quantity
estimated by $\hat\sigma$ in line~5 of Algorithm~\ref{alg:halferr}, and it
controls the per-sample fluctuations of the anchor.

\item $v_\star$ is the variance at the optimal value $\bar V^\star$. It is the
quantity that ultimately appears in the $\beta^3$ rate via
Corollary~\ref{cor:V-star-bound}, since the contraction in
Lemma~\ref{lem:contraction}(iv) propagates errors against
$(I-\gamma\bar P^{\pi^\star})^{-1}$, and $\pi^\star$ is the policy associated
to $v_\star$.
\end{itemize}
The gap between $v_0$ and $v_\star$ is controlled by Lemma~\ref{lem:sqrt-stab}
below: $|\sqrt{v_0} - \sqrt{v_\star}|\le u\mathbf 1$, which is small enough
not to harm the analysis.

We now establishes the four concentration estimates that drive the analysis: a range bound (Lemma~\ref{lem:gx-range}), the anchor estimation error in mean and variance (Lemma~\ref{lem:anchor-error}), a stability bound linking $v_0$ and $v_\star$ (Lemma~\ref{lem:sqrt-stab}), and the inner-loop estimation error conditional on past samples (Lemma~\ref{lem:inner-error}).

\medskip

We use two variance functions throughout, recalled from
\S\ref{sec:halferr-algorithm}:
\begin{equation}
    v_0(x) \;:=\; \Var_\mu\bigl(g_{x,\bar v^{(0)}}^\star(Z)\bigr),
\qquad
v_\star(x) \;:=\; \Var_\mu\bigl(g_{x,\bar V^\star}^\star(Z)\bigr).
\end{equation}

The empirical variance $\hat\sigma$ computed by Algorithm~\ref{alg:halferr}
estimates $v_0$, while the bound used in the final propagation argument
(Proposition~\ref{prop:halving}) is on $v_\star$. Lemma~\ref{lem:sqrt-stab}
quantifies the gap.

\begin{lemma}[Range of $g_{x,\bar v}^\star$]
\label{lem:gx-range}
For every $x\in\Xcal$, every $\bar v$ with $0\le\bar v\le\beta\mathbf 1$, and
every $z\in\Zcal$,
\begin{equation}
    g_{x,\bar v}^\star(z) \;\in\; [0,\beta].
\end{equation}
\end{lemma}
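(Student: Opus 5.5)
The claim follows from two facts: a maximum of affine quantities stays in a common interval, and the identity $1+\gamma\beta=\beta$. Fix $x\in\Xcal$, $z\in\Zcal$ and $\bar v$ with $0\le\bar v\le\beta\mathbf 1$. For each action $a\in\Acal$, set
\[
h_a(z):=r(x,a,z)+\gamma\sum_{x'}P(x'\mid x,a,z)\,\bar v(x').
\]
Then $g_{x,\bar v}^\star(z)=\max_{a}h_a(z)$. It therefore suffices to show that each $h_a(z)$ lies in $[0,\beta]$. Since $\Acal$ is finite and nonempty, the maximum of numbers in $[0,\beta]$ is again in $[0,\beta]$.

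\emph{Lower bound.} We have $r(x,a,z)\ge 0$ because $r$ takes values in $[0,1]$. The weights $P(x'\mid x,a,z)$ are nonnegative, and $\bar v\ge 0$ coordinatewise. Hence the sum $\sum_{x'}P(x'\mid x,a,z)\bar v(x')$ is nonnegative, and so $h_a(z)\ge 0$.

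\emph{Upper bound.} We use $r\le 1$. Since $P(\cdot\mid x,a,z)\in\Delta(\Xcal)$ is a probability vector, $\sum_{x'}P(x'\mid x,a,z)\bar v(x')\le \|\bar v\|_\infty\le\beta$. This gives
\[
h_a(z)\le 1+\gamma\beta.
\]
With $\beta=(1-\gamma)^{-1}$, we compute $1+\gamma\beta=\bigl((1-\gamma)+\gamma\bigr)/(1-\gamma)=\beta$. So $h_a(z)\le\beta$.

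Taking the maximum over $a$ yields $g_{x,\bar v}^\star(z)\in[0,\beta]$. The same argument applies verbatim to $g^\pi_{x,\bar v}$, the single-action case with $a=\pi(x,z)$, which covers the PE version of the algorithm.
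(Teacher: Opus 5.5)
Your proof is correct and follows essentially the paper's argument: the upper bound via $r\le 1$, $P(\cdot\mid x,a,z)$ being a probability vector, and $1+\gamma\beta=\beta$, and the lower bound from $r\ge 0$ and $\bar v\ge 0$. The only difference is cosmetic. The paper lower-bounds through the maximizing action $a^\star$, whereas you bound every $h_a(z)$ and then take the maximum over the finite action set.
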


\begin{proof}
For the upper bound, since $r\in[0,1]$ and $\sum_{x'}P(x'\mid x,a,z) = 1$,
\begin{align}
    g_{x,\bar v}^\star(z) \;&\le\; 1 + \gamma\|\bar v\|_\infty \;\\&\le\; 1 + \gamma\beta \;\\&=\; \beta.
\end{align}

For the lower bound, let $a^\star\in\arg\max_a$ in the definition of
$g_{x,\bar v}^\star(z)$. Then 
\begin{align}
    g_{x,\bar v}^\star(z) & \ge r(x,a^\star,z) + \gamma\sum_{x'}P(x'\mid x,a^\star,z)\bar v(x')\\&\ge 0
\end{align}

using $r\ge 0$ and $\bar v\ge 0$.
\end{proof}

\begin{lemma}[Anchor estimation error]
\label{lem:anchor-error}
Let $L := \log(8|\Xcal|/\delta)$ and $\alpha_1 := L/n_1$. With probability
at least $1-\delta$, for every $x\in\Xcal$,
\begin{align}
\bigl|\tilde w(x) - (T\bar v^{(0)})(x)\bigr|
&\;\le\; \sqrt{2\alpha_1\,v_0(x)} \;+\; \tfrac{2}{3}\alpha_1\beta,
\label{eq:anchor-mean}\\[2pt]
\bigl|\hat\sigma(x) - v_0(x)\bigr|
&\;\le\; 4\beta^2\sqrt{2\alpha_1}.
\label{eq:anchor-var}
\end{align}
\end{lemma}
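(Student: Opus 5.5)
Both inequalities are statements about i.i.d.\ bounded samples at a fixed input. The input $\bar v^{(0)}$ is deterministic (it is an input to \textsc{HalfErr}), and $Z_1,\dots,Z_{n_1}$ are i.i.d.\ $\mu$. So for each fixed $x$ the variables $Y_i := g^\star_{x,\bar v^{(0)}}(Z_i)$ are i.i.d. By Lemma~\ref{lem:gx-range}, which applies since $0\le \bar v^{(0)}\le\beta\mathbf 1$, they take values in $[0,\beta]$. They have mean $(T\bar v^{(0)})(x)$ and variance $v_0(x)$. The plan is to prove each inequality for a single $x$ with failure probability $\delta/(4|\Xcal|)$, or with two-sided tails of $\delta/(8|\Xcal|)$ each, matching $L=\log(8|\Xcal|/\delta)$. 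A union bound over $x\in\Xcal$ and over the two events then gives total failure probability at most $\delta$.

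\emph{Mean bound \eqref{eq:anchor-mean}.} I will apply Bernstein's inequality to $Y_i - \E Y_i$. These have variance $v_0(x)$ and are bounded in absolute value by $\beta$. The two-sided form gives, with probability at least $1-2e^{-L}$,
\[
|\tilde w(x) - (T\bar v^{(0)})(x)| \le \sqrt{2 v_0(x) L/n_1} + \tfrac{2\beta L}{3n_1}.
\]
This is exactly $\sqrt{2\alpha_1 v_0(x)} + \tfrac23\alpha_1\beta$. Since $2e^{-L}=\delta/(4|\Xcal|)$, the per-state budget is as planned.

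\emph{Variance bound \eqref{eq:anchor-var}.} I will decompose
\[
\hat\sigma(x) - v_0(x) = \Bigl(\tfrac1{n_1}\sum_i Y_i^2 - \E Y^2\Bigr) - \bigl(\tilde w(x)^2 - (\E Y)^2\bigr).
\]
For the first term, $Y_i^2\in[0,\beta^2]$, so Hoeffding gives a deviation of at most $\beta^2\sqrt{L/(2n_1)}$ with probability $1-2e^{-L}$. For the second term, $|\tilde w^2-(\E Y)^2| = |\tilde w-\E Y|\,|\tilde w+\E Y| \le 2\beta|\tilde w - \E Y|$. By Hoeffding on $Y_i\in[0,\beta]$, this is at most $2\beta^2\sqrt{L/(2n_1)}$. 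Adding the two, the total is $3\beta^2\sqrt{\alpha_1/2}\le 4\beta^2\sqrt{2\alpha_1}$, with room to spare.

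The constant budget can be made to fit $L=\log(8|\Xcal|/\delta)$ in either of two ways. One option is to reuse the Bernstein event from the mean bound in place of the second Hoeffding event, since on that event $|\tilde w-\E Y|\le\beta\sqrt{2\alpha_1}+\tfrac23\alpha_1\beta$. The other is to absorb the extra logarithmic constant into the slack factor between $3/\sqrt2$ and $4\sqrt2$, assuming $\alpha_1\le1$. That assumption holds because $n_1=c_2\beta^3u^{-2}L\ge L$ when $u\le\beta$ and $\beta\ge1$.

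The only mildly delicate point is this bookkeeping of failure probabilities, so that everything fits under $\log(8|\Xcal|/\delta)$. The resolution is to use the mean-bound event for the cross term, which leaves two two-sided events per state, each of probability $\delta/(4|\Xcal|)$. The term $\tfrac23\alpha_1\beta$ that this introduces is then dominated using $\alpha_1\le\sqrt{\alpha_1}$.
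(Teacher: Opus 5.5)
Your proposal is correct and follows the same route as the paper. You use Bernstein for the mean bound, then Hoeffding on both the empirical second moment and on $\tilde w(x)$, together with $|\tilde w(x)^2-(T\bar v^{(0)})(x)^2|\le 2\beta\,|\tilde w(x)-(T\bar v^{(0)})(x)|$, which gives $3\beta^2\sqrt{\alpha_1/2}\le 4\beta^2\sqrt{2\alpha_1}$. The bookkeeping is not actually delicate: each two-sided event at level $L$ fails with probability $2e^{-L}=\delta/(4|\mathcal X|)$ per state, so the three events (Bernstein plus two Hoeffding bounds, exactly as in the paper) union-bound to at most $3\delta/4\le\delta$, and your detour through the Bernstein event is valid but unnecessary.
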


\begin{proof}
By Lemma~\ref{lem:gx-range}, the random variables
$\bigl(g_{x,\bar v^{(0)}}^\star(Z_i)\bigr)_{i=1}^{n_1}$ are i.i.d., bounded in
$[0,\beta]$, with mean $(T\bar v^{(0)})(x)$ and variance $v_0(x)$.

\smallskip\emph{Bound \eqref{eq:anchor-mean}.} By Bernstein's inequality, for
fixed $x\in\Xcal$ and any $t > 0$,
\begin{equation}
    \Pr\!\bigl(|\tilde w(x) - (T\bar v^{(0)})(x)| > t\bigr)
\;\le\; 2\exp\!\left(-\frac{n_1 t^2/2}{v_0(x) + \beta t/3}\right).
\end{equation}

Setting the right-hand side to $\delta/(2|\Xcal|)$ and inverting the
quadratic in $t$ yields the bound \eqref{eq:anchor-mean} for fixed $x$. A
union bound over $\Xcal$ ensures that \eqref{eq:anchor-mean} holds for every
$x$ simultaneously, with total failure probability $\le\delta/2$.

\smallskip\emph{Bound \eqref{eq:anchor-var}.} Write
\begin{equation}
    \hat\sigma(x) \;=\; \widehat{S_2}(x) - \tilde w(x)^2,
\qquad
v_0(x) \;=\; S_2(x) - (T\bar v^{(0)})(x)^2,
\end{equation}

with $\widehat{S_2}(x) := \frac{1}{n_1}\sum_i \bigl(g_{x,\bar v^{(0)}}^\star(Z_i)\bigr)^2$
and $S_2(x) := \E_\mu\bigl[\bigl(g_{x,\bar v^{(0)}}^\star(Z)\bigr)^2\bigr]$.
Hoeffding's inequality applied to $\widehat{S_2}(x)$ (range $[0,\beta^2]$)
and to $\tilde w(x)$ (range $[0,\beta]$) gives, with probability $\ge 1-\delta/(4|\Xcal|)$
each,
\begin{equation}
    |\widehat{S_2}(x) - S_2(x)| \;\le\; \beta^2\sqrt{n_1^{-1}L/2},
\qquad
|\tilde w(x) - (T\bar v^{(0)})(x)| \;\le\; \beta\sqrt{n_1^{-1}L/2}.
\end{equation}

Since $\tilde w(x), (T\bar v^{(0)})(x)\in[0,\beta]$,
\begin{equation}
    |\tilde w(x)^2 - (T\bar v^{(0)})(x)^2|
\;\le\; 2\beta\cdot|\tilde w(x) - (T\bar v^{(0)})(x)|
\;\le\; 2\beta^2\sqrt{n_1^{-1}L/2}.
\end{equation}

The triangle inequality and a union bound over $\Xcal$ then give
\begin{equation}
    |\hat\sigma(x) - v_0(x)|
\;\le\; |\widehat{S_2}(x) - S_2(x)| + |\tilde w(x)^2 - (T\bar v^{(0)})(x)^2|
\;\le\; 3\beta^2\sqrt{n_1^{-1}L/2}
\;\le\; 4\beta^2\sqrt{2\alpha_1},
\end{equation}

with total failure probability $\le\delta/2$. The two events combined give
total failure $\le\delta$, as claimed.
\end{proof}

\begin{lemma}[Variance perturbation]
\label{lem:sqrt-stab}
For any $\bar v$ with $\|\bar v - \bar V^\star\|_\infty \le u$,
\[
\sqrt{v_{\bar v}} \;\le\; \sqrt{v_\star} + u\,\mathbf 1,
\qquad
\sqrt{v_\star} \;\le\; \sqrt{v_{\bar v}} + u\,\mathbf 1,
\]
where $v_{\bar v}(x) := \Var_\mu\bigl(g_{x,\bar v}^\star(Z)\bigr)$ and the
inequalities are coordinate-wise.
\end{lemma}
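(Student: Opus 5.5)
The plan is to view the standard deviation as an $L^2(\mu)$ norm of a centered random variable and then apply the triangle inequality (Minkowski) in $L^2(\mu)$. Fix $x\in\Xcal$ and write $f(z):=g_{x,\bar v}^\star(z)$ and $h(z):=g_{x,\bar V^\star}^\star(z)$. Then
\[
\sqrt{v_{\bar v}(x)}=\bigl\|f-\E_\mu f\bigr\|_{L^2(\mu)},
\qquad
\sqrt{v_\star(x)}=\bigl\|h-\E_\mu h\bigr\|_{L^2(\mu)}.
\]
Minkowski's inequality gives
\[
\bigl|\sqrt{v_{\bar v}(x)}-\sqrt{v_\star(x)}\bigr|\le \bigl\|(f-h)-\E_\mu(f-h)\bigr\|_{L^2(\mu)}=\sqrt{\Var_\mu(f-h)}.
\]
Both claimed inequalities therefore follow once we show $\sqrt{\Var_\mu(f-h)}\le u$.

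The key step is a pointwise bound on $f-h$. For each fixed $z$, the map $\bar w\mapsto g_{x,\bar w}^\star(z)$ is a maximum over $a$ of affine functions $r(x,a,z)+\gamma\sum_{x'}P(x'\mid x,a,z)\bar w(x')$. Since $P(\cdot\mid x,a,z)$ is a probability vector, each of these functions is $\gamma$-Lipschitz in $\|\cdot\|_\infty$. Using $|\max_a\phi_a-\max_a\psi_a|\le\max_a|\phi_a-\psi_a|$, we get
\[
|f(z)-h(z)|\le\gamma\|\bar v-\bar V^\star\|_\infty\le\gamma u\le u
\quad\text{for every } z.
\]
Hence $\Var_\mu(f-h)\le\E_\mu[(f-h)^2]\le u^2$. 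The bound could be sharpened to a range argument, $\Var\le (\text{range})^2/4$, but that is not needed.

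The only point needing care is to center correctly: the argument must be applied to $f-\E f$ and $h-\E h$, not bounded through $\E[f^2]$ directly, so that the linearity of centering makes Minkowski produce exactly $\sqrt{\Var(f-h)}$. Since $x$ was arbitrary, the bound holds coordinate-wise, and both directions follow from the symmetric form of $|\sqrt{v_{\bar v}}-\sqrt{v_\star}|\le u$. The same argument applies verbatim with $g^\pi$ in place of $g^\star$ for the PE operator.
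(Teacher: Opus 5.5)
Your proposal is correct and follows essentially the same route as the paper. Both arguments use the triangle inequality for the seminorm $\sqrt{\Var_\mu(\cdot)}$ to reduce the claim to bounding $\sqrt{\Var_\mu(f-h)}$, and then close it with the pointwise bound $|f(z)-h(z)|\le\gamma\|\bar v-\bar V^\star\|_\infty\le u$, which follows from Lipschitzness of the $\max$ and $\sum_{x'}P(x'\mid x,a,z)=1$.
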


\begin{proof}
Fix $x\in\Xcal$. By the triangle inequality for the $L^2(\mu)$ semi-norm
$\|f\|_{L^2(\mu)} := \sqrt{\Var_\mu(f)}$,
\[
\bigl|\sqrt{v_{\bar v}(x)} - \sqrt{v_\star(x)}\bigr|
\;\le\; \sqrt{\Var_\mu\bigl(g_{x,\bar v}^\star(Z) - g_{x,\bar V^\star}^\star(Z)\bigr)}.
\]
By Lipschitzness of $\max$ and $\sum_{x'}P(x'\mid x,a,z) = 1$,
\[
\bigl|g_{x,\bar v}^\star(z) - g_{x,\bar V^\star}^\star(z)\bigr|
\;\le\; \gamma\,\|\bar v - \bar V^\star\|_\infty
\;\le\; \gamma u
\;\le\; u
\quad\text{pointwise in }z,
\]
so the right-hand side is at most $u$.
\end{proof}

\begin{lemma}[Inner-loop estimation error, conditional version]
\label{lem:inner-error}
Fix $i\in\{1,\dots,R\}$, and define the event
\[
\Ecal_i \;:=\; \Bigl\{\;\forall x\in\Xcal:\;
\bigl(\EE_\mu[g_{\cdot,\bar v^{(i-1)}}^\star - g_{\cdot,\bar v^{(0)}}^\star]\bigr)(x) - \tfrac{u}{8\beta}
\;\le\; \Delta^{(i)}(x)
\;\le\; \bigl(\EE_\mu[g_{\cdot,\bar v^{(i-1)}}^\star - g_{\cdot,\bar v^{(0)}}^\star]\bigr)(x)
\;\Bigr\}.
\]
On the event $\bigl\{\|\bar v^{(i-1)} - \bar v^{(0)}\|_\infty \le u\bigr\}\in\Hcal_{i-1}$,
\[
\Pr\!\bigl(\Ecal_i \mid \Hcal_{i-1}\bigr) \;\ge\; 1 - \delta/R.
\]
\end{lemma}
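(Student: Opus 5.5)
Conditioning on $\Hcal_{i-1}$ reduces the claim to a fixed-vector Hoeffding bound. On the event $\{\|\bar v^{(i-1)}-\bar v^{(0)}\|_\infty\le u\}\in\Hcal_{i-1}$, both $\bar v^{(i-1)}$ and $\bar v^{(0)}$ are deterministic given $\Hcal_{i-1}$. The fresh batch $\tilde Z^{(i)}$ is independent of $\Hcal_{i-1}$ and i.i.d.\ $\mu$. Hence, for each fixed $x$, the summands
\[
Y_j(x):=g^\star_{x,\bar v^{(i-1)}}(\tilde Z^{(i)}_j)-g^\star_{x,\bar v^{(0)}}(\tilde Z^{(i)}_j),\qquad j=1,\dots,n_2,
\]
are conditionally i.i.d.\ with conditional mean $m(x):=\bigl(\EE_\mu[g^\star_{\cdot,\bar v^{(i-1)}}-g^\star_{\cdot,\bar v^{(0)}}]\bigr)(x)$.

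The first step bounds their range. By the same Lipschitz-of-max argument used in Lemma~\ref{lem:sqrt-stab}, $|Y_j(x)|\le\gamma\|\bar v^{(i-1)}-\bar v^{(0)}\|_\infty\le\gamma u\le u$ pointwise in $z$. So each $Y_j(x)$ lies in an interval of length at most $2u$.

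The second step applies conditional Hoeffding to the empirical mean $\hat m(x):=\frac1{n_2}\sum_j Y_j(x)$. For fixed $x$,
\[
\Pr\bigl(|\hat m(x)-m(x)|>t\mid\Hcal_{i-1}\bigr)\le 2\exp\bigl(-n_2t^2/(2u^2)\bigr).
\]
We take $t=u/(16\beta)$. With $n_2=c_3\beta^2\log(2R|\Xcal|/\delta)$, the exponent equals $c_3\log(2R|\Xcal|/\delta)/512$. Since $c_3\ge512$, the right-hand side is at most $\delta/(R|\Xcal|)$. A union bound over $x\in\Xcal$ then gives, with conditional probability at least $1-\delta/R$, that $|\hat m(x)-m(x)|\le u/(16\beta)$ for all $x$.

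The last step translates this into $\Ecal_i$. By definition $\Delta^{(i)}(x)=\hat m(x)-u/(16\beta)$. On the good event,
\[
\Delta^{(i)}(x)\le m(x)+\tfrac{u}{16\beta}-\tfrac{u}{16\beta}=m(x),
\qquad
\Delta^{(i)}(x)\ge m(x)-\tfrac{u}{16\beta}-\tfrac{u}{16\beta}=m(x)-\tfrac{u}{8\beta}.
\]
These are exactly the two inequalities defining $\Ecal_i$.

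The only delicate point is the conditioning. Hoeffding must be applied with $\bar v^{(i-1)}$ frozen, which relies on the $\Hcal_{i-1}$-measurability of $\bar v^{(i-1)}$ and the independence of the batch $\tilde Z^{(i)}$, both recorded above. The constant bookkeeping (range $2u$ and threshold $u/(16\beta)$ against $c_3\ge512$) closes exactly.
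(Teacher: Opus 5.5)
Your proposal is correct and matches the paper's proof step for step: you condition on $\Hcal_{i-1}$, bound the increments by $\gamma u$ via Lipschitzness of the max, apply Hoeffding with a union bound over $\Xcal$, and use the $u/(16\beta)$ shift to turn the two-sided deviation into the one-sided sandwich. The only cosmetic difference is that you drop the factor $\gamma$ from the range before applying Hoeffding, whereas the paper drops it afterwards; with $c_3\ge 512$ the bound closes either way.
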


\begin{proof}
Conditional on $\Hcal_{i-1}$, the iterate $\bar v^{(i-1)}$ is deterministic
(measurable w.r.t.\ $\Hcal_{i-1}$), and the batch
$\tilde Z^{(i)}_1,\dots,\tilde Z^{(i)}_{n_2}$ is i.i.d.\ $\mu$, drawn fresh and
independent of $\Hcal_{i-1}$. Define, for each $x\in\Xcal$ and $j=1,\dots,n_2$,
\[
\Phi_j^{(x)} \;:=\; g_{x,\bar v^{(i-1)}}^\star(\tilde Z_j^{(i)}) - g_{x,\bar v^{(0)}}^\star(\tilde Z_j^{(i)}).
\]
By Lipschitzness of $g^\star$ in its value-function argument, on the event
$\{\|\bar v^{(i-1)} - \bar v^{(0)}\|_\infty\le u\}$,
\[
\bigl|\Phi_j^{(x)}\bigr| \;\le\; \gamma\,\|\bar v^{(i-1)} - \bar v^{(0)}\|_\infty \;\le\; \gamma u
\quad\text{pointwise.}
\]
The range of $\Phi_j^{(x)}$ is therefore at most $2\gamma u$. Hoeffding's
inequality and a union bound over $x\in\Xcal$ give, conditional on
$\Hcal_{i-1}$, with probability $\ge 1-\delta/R$,
\[
\Bigl|\tfrac{1}{n_2}\sum_{j=1}^{n_2}\Phi_j^{(x)}
- \E_\mu[\Phi^{(x)}\mid\Hcal_{i-1}]\Bigr|
\;\le\; \gamma u\sqrt{\frac{2\log(2R|\Xcal|/\delta)}{n_2}}.
\]
Substituting $n_2 = c_3\beta^2\log(2R|\Xcal|/\delta)$, the right-hand side
equals $\gamma u\sqrt{2/(c_3\beta^2)} = \gamma\sqrt{2/c_3}\cdot u/\beta$.
With $c_3\ge 512$, this is at most $\gamma u/(16\beta)\le u/(16\beta)$ (using
$\sqrt{2/512} = 1/16$ and $\gamma\le 1$). The shift by $u/(16\beta)$ in
line~10 of Algorithm~\ref{alg:halferr} converts the resulting two-sided bound
of width $u/(16\beta)$ into the claimed one-sided bound: the shifted estimate
$\Delta^{(i)}(x)$ is always $\le P_\mu[\,\cdot\,](x)$, and at most $u/(8\beta)$
below it.
\end{proof}

Lemmas~\ref{lem:gx-range}–\ref{lem:inner-error} provide concentration bounds
on the building blocks of Algorithm~\ref{alg:halferr}. We now combine them
into the central technical step: a pointwise two-sided bound on
$w + \Delta^{(i)}$ relative to $T\bar v^{(i-1)}$ (Lemma~\ref{lem:sandwich}),
which then drives an inductive contraction recursion
(Lemma~\ref{lem:contraction}).

\begin{lemma}[Pointwise sandwich]
\label{lem:sandwich}
Let $\Ecal_0$ denote the event of Lemma~\ref{lem:anchor-error} and
$\Ecal_i$ for $i\ge 1$ the event of Lemma~\ref{lem:inner-error}. Suppose,
in addition, that $\|\bar v^{(i-1)} - \bar v^{(0)}\|_\infty \le u$. Then on
$\Ecal_0\cap\Ecal_i$, for every $x\in\Xcal$,
\begin{align}
w(x) + \Delta^{(i)}(x) &\;\le\; (T\bar v^{(i-1)})(x),
\label{eq:sandwich-upper}\\[2pt]
w(x) + \Delta^{(i)}(x) &\;\ge\; (T\bar v^{(i-1)})(x) - \xi(x),
\label{eq:sandwich-lower}
\end{align}
where
\begin{equation}
\label{eq:xi-def}
\xi(x) \;:=\; \tfrac{u}{8\beta} + 2\sqrt{2\alpha_1\,v_0(x)} + 8\alpha_1^{3/4}\beta + \tfrac{4}{3}\alpha_1\beta.
\end{equation}
\end{lemma}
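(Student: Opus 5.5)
The plan is to decompose $w(x)+\Delta^{(i)}(x)$ into the anchor part and the increment part and to bound each against its target separately. The key identity is
\[
(T\bar v^{(i-1)})(x) = (T\bar v^{(0)})(x) + \bigl(\EE_\mu[g^\star_{\cdot,\bar v^{(i-1)}} - g^\star_{\cdot,\bar v^{(0)}}]\bigr)(x),
\]
which holds by linearity of expectation, since $T\bar v = \EE_\mu[g^\star_{\cdot,\bar v}]$. It therefore suffices to prove two sandwiches:
\[
(T\bar v^{(0)})(x) - e_0(x) \le w(x) \le (T\bar v^{(0)})(x)
\]
with an explicit $e_0(x)$, and
\[
\EE_\mu[\cdots](x) - \tfrac{u}{8\beta} \le \Delta^{(i)}(x) \le \EE_\mu[\cdots](x).
\]
The second sandwich is exactly the definition of $\Ecal_i$ from Lemma~\ref{lem:inner-error}. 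That lemma is applicable because we assumed $\|\bar v^{(i-1)}-\bar v^{(0)}\|_\infty\le u$. Adding the two sandwiches gives \eqref{eq:sandwich-upper} directly, and gives \eqref{eq:sandwich-lower} with $\xi = e_0 + u/(8\beta)$.

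The anchor sandwich is where the work lies, and we work on $\Ecal_0$.

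\emph{Upper bound $w\le T\bar v^{(0)}$.} By \eqref{eq:anchor-mean},
\[
\tilde w - T\bar v^{(0)} \le \sqrt{2\alpha_1 v_0} + \tfrac23\alpha_1\beta.
\]
The algorithm subtracts $\sqrt{2\alpha_1\hat\sigma} + 4\alpha_1^{3/4}\beta + \tfrac23\alpha_1\beta$, so it suffices to show
\[
\sqrt{2\alpha_1 v_0} \le \sqrt{2\alpha_1\hat\sigma} + 4\alpha_1^{3/4}\beta.
\]
Two ingredients give this. First, $\sqrt{a}\le\sqrt{b}+\sqrt{|a-b|}$. Second, \eqref{eq:anchor-var} gives $|v_0-\hat\sigma|\le 4\beta^2\sqrt{2\alpha_1}$. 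Together,
\[
\sqrt{2\alpha_1 v_0} \le \sqrt{2\alpha_1\hat\sigma} + \sqrt{2\alpha_1\cdot 4\beta^2\sqrt{2\alpha_1}} = \sqrt{2\alpha_1\hat\sigma} + 2\cdot 2^{3/4}\alpha_1^{3/4}\beta .
\]
Since $2\cdot2^{3/4}\approx 3.36\le 4$, this is enough. One caveat: if $\hat\sigma$ happens to be negative numerically, we interpret it via its positive part. With the algebra as written, $\hat\sigma\ge0$ holds anyway, being an empirical variance.

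\emph{Lower bound.} We need
\[
T\bar v^{(0)} - w \le \sqrt{2\alpha_1 v_0} + \tfrac23\alpha_1\beta + \sqrt{2\alpha_1\hat\sigma} + 4\alpha_1^{3/4}\beta + \tfrac23\alpha_1\beta .
\]
Using the reverse inequality $\sqrt{2\alpha_1\hat\sigma}\le\sqrt{2\alpha_1 v_0} + 4\alpha_1^{3/4}\beta$, obtained by the same argument with the roles swapped, we get
\[
e_0 = 2\sqrt{2\alpha_1 v_0} + 8\alpha_1^{3/4}\beta + \tfrac43\alpha_1\beta .
\]
Adding $u/(8\beta)$ yields exactly $\xi$ in \eqref{eq:xi-def}.

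The main obstacle is bookkeeping rather than conceptual: the empirical-Bernstein correction in line 5 must dominate the true-variance Bernstein width, and this requires translating $\hat\sigma$ into $v_0$ through a square-root perturbation bound. This translation is what produces the $\alpha_1^{3/4}\beta$ terms, and the constant $4$ in the algorithm must be checked against $2^{7/4}$.
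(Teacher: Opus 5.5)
Your proposal is correct and follows the paper's proof essentially step for step. It uses the same linear decomposition $T\bar v^{(i-1)} = T\bar v^{(0)} + \mathbb{E}_\mu[g^\star_{\cdot,\bar v^{(i-1)}} - g^\star_{\cdot,\bar v^{(0)}}]$, reads the increment sandwich directly off $\mathcal{E}_i$, and gets the anchor sandwich from the square-root perturbation $|\sqrt{\hat\sigma}-\sqrt{v_0}|\le 2\beta(2\alpha_1)^{1/4}$, which yields $2^{7/4}\alpha_1^{3/4}\beta\le 4\alpha_1^{3/4}\beta$ for the upper bound and $(4+2^{7/4})\alpha_1^{3/4}\beta\le 8\alpha_1^{3/4}\beta$ for the lower bound.
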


\begin{proof}
We work on $\Ecal_0\cap\Ecal_i$ throughout. Since $\max_a$ acts pointwise in
$z$ and $\E_\mu$ is linear, 
\begin{equation}
\label{eq:T-decomposition}
(T\bar v^{(i-1)})(x) \;=\; (T\bar v^{(0)})(x) + \EE_\mu\bigl[g_{\cdot,\bar v^{(i-1)}}^\star - g_{\cdot,\bar v^{(0)}}^\star\bigr](x).
\end{equation}

\smallskip\emph{Step 1: bounds on $w(x)$.} By Lemma~\ref{lem:anchor-error},
\begin{align}
(T\bar v^{(0)})(x) - \sqrt{2\alpha_1 v_0(x)} - \tfrac{2}{3}\alpha_1\beta
\;&\le\; \tilde w(x)
\;\le\; (T\bar v^{(0)})(x) + \sqrt{2\alpha_1 v_0(x)} + \tfrac{2}{3}\alpha_1\beta,
\label{eq:tilde-w-bound}\\[2pt]
v_0(x) - 4\beta^2\sqrt{2\alpha_1}
\;&\le\; \hat\sigma(x)
\;\le\; v_0(x) + 4\beta^2\sqrt{2\alpha_1}.
\label{eq:hat-sigma-bound}
\end{align}
From \eqref{eq:hat-sigma-bound} and the elementary inequality
$\sqrt{a+b}\le\sqrt a + \sqrt b$ for $a,b\ge 0$,
\begin{equation}
\label{eq:sqrt-hat-sigma}
\sqrt{v_0(x)} - 2\beta(2\alpha_1)^{1/4}
\;\le\; \sqrt{\hat\sigma(x)}
\;\le\; \sqrt{v_0(x)} + 2\beta(2\alpha_1)^{1/4}.
\end{equation}

By definition,
$w(x) = \tilde w(x) - \sqrt{2\alpha_1\,\hat\sigma(x)} - 4\alpha_1^{3/4}\beta - \tfrac{2}{3}\alpha_1\beta$.

\emph{Upper bound on $w(x)$.} Combining the upper bound in
\eqref{eq:tilde-w-bound} with the lower bound in \eqref{eq:sqrt-hat-sigma},
\begin{align*}
w(x) \;&\le\; (T\bar v^{(0)})(x) + \sqrt{2\alpha_1 v_0(x)} + \tfrac{2}{3}\alpha_1\beta\\
&\quad - \sqrt{2\alpha_1}\bigl(\sqrt{v_0(x)} - 2\beta(2\alpha_1)^{1/4}\bigr)
- 4\alpha_1^{3/4}\beta - \tfrac{2}{3}\alpha_1\beta\\
&=\; (T\bar v^{(0)})(x) + 2\sqrt{2\alpha_1}\cdot\beta(2\alpha_1)^{1/4} - 4\alpha_1^{3/4}\beta\\
&=\; (T\bar v^{(0)})(x) + (2\cdot 2^{3/4} - 4)\alpha_1^{3/4}\beta\\
&\le\; (T\bar v^{(0)})(x),
\end{align*}
where the last inequality uses $2\cdot 2^{3/4} \approx 3.36 < 4$. This gives
\begin{equation}
\label{eq:w-upper}
w(x) \;\le\; (T\bar v^{(0)})(x).
\end{equation}

\emph{Lower bound on $w(x)$.} Combining the lower bound in
\eqref{eq:tilde-w-bound} with the upper bound in \eqref{eq:sqrt-hat-sigma},
\begin{align*}
w(x) \;&\ge\; (T\bar v^{(0)})(x) - \sqrt{2\alpha_1 v_0(x)} - \tfrac{2}{3}\alpha_1\beta\\
&\quad - \sqrt{2\alpha_1}\bigl(\sqrt{v_0(x)} + 2\beta(2\alpha_1)^{1/4}\bigr)
- 4\alpha_1^{3/4}\beta - \tfrac{2}{3}\alpha_1\beta\\
&=\; (T\bar v^{(0)})(x) - 2\sqrt{2\alpha_1 v_0(x)} - (4 + 2\cdot 2^{3/4})\alpha_1^{3/4}\beta - \tfrac{4}{3}\alpha_1\beta\\
&\ge\; (T\bar v^{(0)})(x) - 2\sqrt{2\alpha_1 v_0(x)} - 8\alpha_1^{3/4}\beta - \tfrac{4}{3}\alpha_1\beta,
\end{align*}
where the last inequality uses $4 + 2\cdot 2^{3/4} \le 8$. This gives
\begin{equation}
\label{eq:w-lower}
w(x) \;\ge\; (T\bar v^{(0)})(x) - 2\sqrt{2\alpha_1 v_0(x)} - 8\alpha_1^{3/4}\beta - \tfrac{4}{3}\alpha_1\beta.
\end{equation}

\smallskip\emph{Step 2: bounds on $\Delta^{(i)}(x)$.} The hypothesis
$\|\bar v^{(i-1)} - \bar v^{(0)}\|_\infty\le u$ activates
Lemma~\ref{lem:inner-error}, which yields
\begin{equation}
\label{eq:Delta-bounds}
P_\mu\bigl[g_{\cdot,\bar v^{(i-1)}}^\star - g_{\cdot,\bar v^{(0)}}^\star\bigr](x)
- \tfrac{u}{8\beta}
\;\le\; \Delta^{(i)}(x)
\;\le\; P_\mu\bigl[g_{\cdot,\bar v^{(i-1)}}^\star - g_{\cdot,\bar v^{(0)}}^\star\bigr](x).
\end{equation}

\smallskip\emph{Step 3: combining.} Adding \eqref{eq:w-upper} and the upper
bound in \eqref{eq:Delta-bounds}, then using \eqref{eq:T-decomposition},
\[
w(x) + \Delta^{(i)}(x)
\;\le\; (T\bar v^{(0)})(x) + P_\mu\bigl[g_{\cdot,\bar v^{(i-1)}}^\star - g_{\cdot,\bar v^{(0)}}^\star\bigr](x)
\;=\; (T\bar v^{(i-1)})(x),
\]
which is \eqref{eq:sandwich-upper}. Adding \eqref{eq:w-lower} and the lower
bound in \eqref{eq:Delta-bounds},
\[
w(x) + \Delta^{(i)}(x)
\;\ge\; (T\bar v^{(i-1)})(x) - 2\sqrt{2\alpha_1 v_0(x)} - 8\alpha_1^{3/4}\beta - \tfrac{4}{3}\alpha_1\beta - \tfrac{u}{8\beta},
\]
which is \eqref{eq:sandwich-lower} with $\xi$ as defined in \eqref{eq:xi-def}.
\end{proof}

\begin{lemma}[Per-iteration contraction]
\label{lem:contraction}
Define the global success event $\Ecal := \bigcap_{i=0}^R \Ecal_i$, where
$\Ecal_0$ is the event of Lemma~\ref{lem:anchor-error} and $\Ecal_i$ for
$i\ge 1$ is the event of Lemma~\ref{lem:inner-error}. Then
\[
\Pr(\Ecal) \;\ge\; 1 - 2\delta,
\]
and on $\Ecal$, for every $i\in\{1,\dots,R\}$:
\begin{enumerate}[label=\rm(\roman*),leftmargin=*]
\item \emph{(monotonicity and sandwich)}
$\bar v^{(0)} \le \bar v^{(i-1)} \le \bar v^{(i)} \le \bar V^\star$;
\item \emph{(sub-solution)} $\bar v^{(i)} \le T\bar v^{(i)}$;
\item \emph{(distance from anchor)}
$\|\bar v^{(i)} - \bar v^{(0)}\|_\infty \le u$;
\item \emph{(contraction recursion)}
\[
\bar V^\star - \bar v^{(i)} \;\le\; \gamma\,\bar P^{\pi^\star}\bigl(\bar V^\star - \bar v^{(i-1)}\bigr) + \xi,
\]
for any optimal context-aware policy $\pi^\star$, where $\bar P^{\pi^\star}$
is the corresponding averaged transition kernel.
\end{enumerate}
\end{lemma}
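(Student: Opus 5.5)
First, the probability bound. Lemma~\ref{lem:anchor-error} gives $\Pr(\Ecal_0^c)\le\delta$. For $i\ge1$, Lemma~\ref{lem:inner-error} only controls $\Pr(\Ecal_i^c\mid\Hcal_{i-1})$ on $\{\|\bar v^{(i-1)}-\bar v^{(0)}\|_\infty\le u\}$. So I would work with the nested events $\Gcal_i:=\Ecal_0\cap\cdots\cap\Ecal_i$ and use the induction below: on $\Gcal_{i-1}$, property (iii) holds at index $i-1$. Since $\Gcal_{i-1}\in\Hcal_{i-1}$, this gives $\Pr(\Gcal_{i-1}\cap\Ecal_i^c)\le\E[\mathbf 1_{\Gcal_{i-1}}\Pr(\Ecal_i^c\mid\Hcal_{i-1})]\le\delta/R$. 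A telescoping union bound then gives $\Pr(\Ecal^c)\le\delta+R\cdot\delta/R=2\delta$. Strictly, one should define $\Ecal_i$ as trivially true off the distance event, or equivalently work with $\Gcal_R$.

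Next, (i)--(iii), by induction on $i$ on $\Gcal_i$. The base case $i=0$ is the input hypothesis: $\bar v^{(0)}\le T\bar v^{(0)}$, $\bar V^\star-\bar v^{(0)}\le u$, and $\bar v^{(0)}\le\bar V^\star$. The last of these follows because $T$ is monotone and a $\gamma$-contraction, so $\bar v^{(0)}\le T^k\bar v^{(0)}\to\bar V^\star$.

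For the inductive step, assume (i)--(iii) at $i-1$. Then Lemma~\ref{lem:sandwich} applies and gives $\tilde v^{(i)}=w+\Delta^{(i)}\le T\bar v^{(i-1)}$.
\begin{itemize}
\item \emph{Monotonicity.} $\bar v^{(i)}=\max(\tilde v^{(i)},\bar v^{(i-1)})\ge\bar v^{(i-1)}\ge\bar v^{(0)}$.
\item \emph{Upper bound and sub-solution.} Both $\tilde v^{(i)}\le T\bar v^{(i-1)}$ and $\bar v^{(i-1)}\le T\bar v^{(i-1)}$ hold, so $\bar v^{(i)}\le T\bar v^{(i-1)}\le T\bar V^\star=\bar V^\star$ by monotonicity of $T$. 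This gives (i). Moreover $\bar v^{(i)}\le T\bar v^{(i-1)}\le T\bar v^{(i)}$, using $\bar v^{(i-1)}\le\bar v^{(i)}$ and monotonicity again, which gives (ii).
\item \emph{Distance from anchor.} From $\bar v^{(0)}\le\bar v^{(i)}\le\bar V^\star$ and $\bar V^\star-\bar v^{(0)}\le u$ we get $0\le\bar v^{(i)}-\bar v^{(0)}\le u$, which gives (iii).
\end{itemize}
This closes the induction and also justifies applying Lemma~\ref{lem:inner-error} at the next step.

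For (iv), the lower half of Lemma~\ref{lem:sandwich} gives $\bar v^{(i)}\ge\tilde v^{(i)}\ge T\bar v^{(i-1)}-\xi$. So $\bar V^\star-\bar v^{(i)}\le T\bar V^\star-T\bar v^{(i-1)}+\xi$. Fix an optimal $\pi^\star$, which is greedy w.r.t.\ $\bar V^\star$ context by context. Then $g^\star_{x,\bar V^\star}(z)=g^{\pi^\star}_{x,\bar V^\star}(z)$ and $g^\star_{x,\bar v^{(i-1)}}(z)\ge g^{\pi^\star}_{x,\bar v^{(i-1)}}(z)$ pointwise in $z$. Hence
\[
(T\bar V^\star-T\bar v^{(i-1)})(x)\le\E_{Z\sim\mu}\Bigl[\gamma\sum_{x'}P(x'\mid x,\pi^\star(x,Z),Z)\bigl(\bar V^\star-\bar v^{(i-1)}\bigr)(x')\Bigr]=\gamma\bar P^{\pi^\star}(\bar V^\star-\bar v^{(i-1)})(x),
\]
which is (iv).

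I expect the main obstacle to be the conditional-probability bookkeeping: the inner-loop concentration is only valid on an $\Hcal_{i-1}$-measurable event that is itself established by the induction. This is why the probability bound and the deterministic claims must be proved jointly rather than separately. Everything else is monotonicity of $T$ plus the greedy-policy comparison.
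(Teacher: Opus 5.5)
Your proof is correct and follows the paper's argument essentially step by step. It uses the same joint induction on (i)--(iii) via the sandwich lemma and monotonicity of $T$, the same greedy comparison with $\pi^\star$ for (iv), and the same conditional union bound for $\Pr(\Ecal)\ge 1-2\delta$. Your nested-event bookkeeping and your justification of $\bar v^{(0)}\le\bar V^\star$ through $\bar v^{(0)}\le T^k\bar v^{(0)}\to\bar V^\star$ are, if anything, more careful than the paper. The paper instead presents $\bar v^{(0)}\le\bar V^\star$ as following from $\bar V^\star-\bar v^{(0)}\le u\mathbf 1$, which does not imply it.
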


\begin{proof}
\emph{Probability bound.} The event $\Ecal_0$ holds with probability
$\ge 1-\delta$ unconditionally (Lemma~\ref{lem:anchor-error}). The events
$\Ecal_i$ for $i\ge 1$ are conditional, requiring
$\|\bar v^{(i-1)} - \bar v^{(0)}\|_\infty\le u$; we will show by induction
that this hypothesis is satisfied on $\Ecal_0\cap\dots\cap\Ecal_{i-1}$, so
that Lemma~\ref{lem:inner-error} gives $\Pr(\Ecal_i\mid\Hcal_{i-1})\ge 1-\delta/R$
on this event. The tower property and a union bound over $i=1,\dots,R$ then
yield
\[
\Pr(\Ecal) \;\ge\; \Pr(\Ecal_0)\bigl(1 - R\cdot\delta/R\bigr)
\;\ge\; 1 - \delta - \delta \;=\; 1 - 2\delta.
\]

\emph{Inductive verification of (i)–(iv).} We argue by induction on $i$. The
base case $i=0$ holds by the input hypothesis: $\bar v^{(0)} \le T\bar v^{(0)}$,
$\bar V^\star - \bar v^{(0)}\le u\mathbf 1$, hence $\bar v^{(0)}\le\bar V^\star$
and $\|\bar v^{(0)} - \bar v^{(0)}\|_\infty = 0\le u$.

Assume (i)–(iii) hold up to index $i-1$. By (iii) at $i-1$,
$\|\bar v^{(i-1)} - \bar v^{(0)}\|_\infty\le u$, so the hypothesis of
Lemma~\ref{lem:sandwich} is satisfied for iteration $i$, yielding
\eqref{eq:sandwich-upper} and \eqref{eq:sandwich-lower}.

\smallskip\emph{Proof of (i).} The line~12 update is
$\bar v^{(i)}(x) = \max(\tilde v^{(i)}(x),\bar v^{(i-1)}(x))$, so
$\bar v^{(i)}\ge \bar v^{(i-1)}$ pointwise, and by induction
$\bar v^{(i)}\ge\bar v^{(0)}$. For the upper bound $\bar v^{(i)}\le\bar V^\star$,
use \eqref{eq:sandwich-upper} together with monotonicity of $T$ (which
follows from $g_{x,\bar v}^\star(z)$ being non-decreasing in $\bar v$):
\[
\tilde v^{(i)}(x) = w(x) + \Delta^{(i)}(x)
\;\le\; (T\bar v^{(i-1)})(x)
\;\le\; (T\bar V^\star)(x) = \bar V^\star(x),
\]
where the second inequality uses $\bar v^{(i-1)}\le\bar V^\star$ from the
inductive hypothesis. Hence $\tilde v^{(i)}\le\bar V^\star$, and therefore
$\bar v^{(i)} = \max(\tilde v^{(i)},\bar v^{(i-1)})\le\bar V^\star$.

\smallskip\emph{Proof of (ii).} We do a case analysis on the line~12
$\max$.

\emph{Case A: $\bar v^{(i)}(x) = \tilde v^{(i)}(x)$.} By
\eqref{eq:sandwich-upper} and monotonicity of $T$,
\[
\bar v^{(i)}(x) = \tilde v^{(i)}(x) \;\le\; (T\bar v^{(i-1)})(x)
\;\le\; (T\bar v^{(i)})(x),
\]
since $\bar v^{(i-1)}\le\bar v^{(i)}$ by (i).

\emph{Case B: $\bar v^{(i)}(x) = \bar v^{(i-1)}(x)$.} By the inductive
hypothesis (ii) and monotonicity of $T$,
\[
\bar v^{(i)}(x) = \bar v^{(i-1)}(x) \;\le\; (T\bar v^{(i-1)})(x)
\;\le\; (T\bar v^{(i)})(x).
\]

\smallskip\emph{Proof of (iii).} By (i),
$\bar v^{(0)}\le\bar v^{(i)}\le\bar V^\star$, so
$0\le \bar v^{(i)} - \bar v^{(0)} \le \bar V^\star - \bar v^{(0)}\le u\mathbf 1$.

\smallskip\emph{Proof of (iv).} By the line~12 update and
\eqref{eq:sandwich-lower},
\[
\bar v^{(i)}(x) \;\ge\; \tilde v^{(i)}(x) = w(x) + \Delta^{(i)}(x)
\;\ge\; (T\bar v^{(i-1)})(x) - \xi(x).
\]
For any optimal policy $\pi^\star$,
\[
(T\bar v^{(i-1)})(x) \;\ge\;
\E_{Z\sim\mu}\Bigl[r(x,\pi^\star(x,Z),Z) + \gamma\sum_{x'}P(x'\mid x,\pi^\star(x,Z),Z)\,\bar v^{(i-1)}(x')\Bigr],
\]
since the inner $\max_a$ is at least the value under $\pi^\star$.
Subtracting from
$\bar V^\star(x) = \E_{Z\sim\mu}\bigl[r(x,\pi^\star(x,Z),Z) + \gamma\sum_{x'}P(x'\mid x,\pi^\star(x,Z),Z)\,\bar V^\star(x')\bigr]$,
\begin{align}
    \bar V^\star(x) - (T\bar v^{(i-1)})(x)
\;&\le\; \gamma\,\E_{Z\sim\mu}\Bigl[\sum_{x'}P(x'\mid x,\pi^\star(x,Z),Z)\bigl(\bar V^\star(x') - \bar v^{(i-1)}(x')\bigr)\Bigr]
\;\\&=\; \gamma\,\bar P^{\pi^\star}\bigl(\bar V^\star - \bar v^{(i-1)}\bigr)(x).
\end{align}

Combining,
\[
\bar V^\star(x) - \bar v^{(i)}(x)
\;\le\; \bar V^\star(x) - (T\bar v^{(i-1)})(x) + \xi(x)
\;\le\; \gamma\,\bar P^{\pi^\star}\bigl(\bar V^\star - \bar v^{(i-1)}\bigr)(x) + \xi(x).
\qedhere
\]
\end{proof}

The contraction recursion of Lemma~\ref{lem:contraction}(iv) iterates into a
geometric decay of the error, plus a residual statistical error governed by
$\xi$. Proposition~\ref{prop:halving} balances the two contributions and
yields the halving guarantee.

\begin{proposition}[Halving guarantee]
\label{prop:halving}
With absolute constants $c_1\ge 4$, $c_2\ge 2^{16}$, $c_3\ge 512$,
Algorithm~\ref{alg:halferr} outputs $\bar v^{(R)}$ satisfying
\[
\bar v^{(R)} \le T\bar v^{(R)}
\qquad\text{and}\qquad
\bar V^\star - \bar v^{(R)} \le \tfrac{u}{2}\,\mathbf 1
\]
with probability at least $1-2\delta$, using
\[
n_1 + R\,n_2 \;=\; O\!\left(\frac{\beta^3\log(|\Xcal|/\delta)}{u^2}
+ \beta^3\log(\beta/u)\log(|\Xcal|R/\delta)\right)
\;=\; \widetilde O\!\left(\frac{\beta^3}{u^2}\right)
\]
samples from $\mu$.
\end{proposition}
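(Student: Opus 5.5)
On the event $\Ecal$ of Lemma~\ref{lem:contraction}, which has probability at least $1-2\delta$, the sub-solution property $\bar v^{(R)}\le T\bar v^{(R)}$ is exactly item (ii) at $i=R$. So only the error bound needs work. My plan is to unroll the recursion (iv) and control the resulting resolvent sum with Corollary~\ref{cor:V-star-bound} and Lemma~\ref{lem:sqrt-stab}.

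\emph{Unrolling.} Write $e_i:=\bar V^\star-\bar v^{(i)}\ge 0$ by (i), and $Q:=\gamma\bar P^{\pi^\star}$. Since $Q$ has nonnegative entries, iterating (iv) $R$ times preserves the inequalities and gives
\[
e_R\le Q^R e_0+\sum_{k=0}^{R-1}Q^k\xi\le \gamma^R u\,\mathbf 1+(I-Q)^{-1}\xi,
\]
using $e_0\le u\mathbf 1$ and $\xi\ge 0$. The first term is handled by the choice of $R$: with $R\ge c_1\beta\ln(4\beta/u)$ and $c_1\ge 4$, we get $\gamma^R\le e^{-R/\beta}\le (u/(4\beta))^{4}\le 1/4$, so $\gamma^R u\le u/4$. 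It therefore suffices to show $(I-Q)^{-1}\xi\le (u/4)\mathbf 1$.

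\emph{Bounding the resolvent term.} I would split $\xi$ according to \eqref{eq:xi-def} and use $\|(I-Q)^{-1}\|_{\infty\to\infty}\le\beta$ for the constant parts:
\begin{itemize}
\item $\beta\cdot u/(8\beta)=u/8$;
\item $\beta\cdot 8\alpha_1^{3/4}\beta$ and $\beta\cdot\tfrac43\alpha_1\beta$, with $\alpha_1=L/n_1=u^2/(c_2\beta^3)$.
\end{itemize}
The second bullet gives $8\beta^2 u^{3/2}c_2^{-3/4}\beta^{-9/4}=8c_2^{-3/4}u\,(u/\beta)^{1/2}\beta^{-1/4}$, which is at most $8c_2^{-3/4}u$ since $u\le\beta$ and $\beta\ge1$. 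The remaining piece is $\tfrac43 u^2/(c_2\beta)\le \tfrac43 u/c_2$. Both are tiny fractions of $u$ once $c_2\ge 2^{16}$.

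The variance term is where the difficulty lies. Lemma~\ref{lem:sqrt-stab} applies because $\|\bar v^{(0)}-\bar V^\star\|_\infty\le u$, and gives $\sqrt{v_0}\le\sqrt{v_\star}+u\mathbf 1$. Then
\[
2\sqrt{2\alpha_1}(I-Q)^{-1}\sqrt{v_0}\le 2\sqrt{2\alpha_1}\bigl(\|(I-Q)^{-1}\sqrt{v_\star}\|_\infty+\beta u\bigr).
\]
The variance $v_\star$ is exactly $v^{\pi^\star}$ of Corollary~\ref{cor:V-star-bound}, because $g^\star_{x,\bar V^\star}=g^{\pi^\star}_x$ when $\bar V^{\pi^\star}=\bar V^\star$. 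The corollary therefore bounds the norm by $\sqrt2\beta^{3/2}$. With $\sqrt{\alpha_1}=u/(\sqrt{c_2}\beta^{3/2})$ this term is at most
\[
\frac{2\sqrt2\,u}{\sqrt{c_2}}\bigl(\sqrt2+u/\sqrt\beta\bigr)\le\frac{2\sqrt2\,u}{\sqrt{c_2}}\bigl(\sqrt2+\sqrt\beta\bigr).
\]

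\emph{Main obstacle.} The $\beta u$ correction from Lemma~\ref{lem:sqrt-stab} spoils the bound when $u$ is of order $\beta$. I would first try to use $u\le\beta$ directly, but this leaves a $\sqrt\beta$ factor. The fix I would adopt is to note that this regime only arises in the first few halvings, while for $u\le\sqrt\beta$ the term is $O(u/\sqrt{c_2})$. For the initial stages, I would either pay the extra $\beta$ inside $n_1$ (harmless since $\beta^4/u^2\lesssim\beta^3/u^2\cdot\beta$ only when $u\gtrsim\sqrt\beta$, giving cost $\le\beta^3$), or inflate $c_2$ by a factor $\beta/u^{2}\cdot u^2$. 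Concretely, I would verify that $n_1\gtrsim \beta^3u^{-2}L\max(1,u^2/\beta)$ still sums to $\widetilde O(\beta^3/u^2)$.

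Summing the contributions gives $u/4+u/8+o(u)\le u/2$. The sample count is then $n_1+Rn_2$ as stated, with $Rn_2=O(\beta^3\log(\beta/u)\log(R|\Xcal|/\delta))$.
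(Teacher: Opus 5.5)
Your skeleton is exactly the paper's: on $\Ecal$, unroll Lemma~\ref{lem:contraction}(iv), kill $\gamma^R u$ through the choice of $R$, and split $(I-\gamma\bar P^{\pi^\star})^{-1}\xi$ into three parts. These are the $u/(8\beta)$ piece, the variance piece (via Lemma~\ref{lem:sqrt-stab} and Corollary~\ref{cor:V-star-bound} with $v_\star=v^{\pi^\star}$), and the deterministic residuals. The obstacle you isolate is genuine, and the paper does not resolve it. Its Step~2 asserts $\sqrt{\alpha_1}\,u\beta=u^2/\sqrt{c_2\beta}\le u/\sqrt{c_2}$ ``using $u\le\beta$'', but this actually requires $u\le\sqrt\beta$.

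Your residual computation contains a slip that hides a second, more serious instance of the same problem. In fact $8\alpha_1^{3/4}\beta^2=8c_2^{-3/4}u^{3/2}\beta^{-1/4}=8c_2^{-3/4}u\,(u/\beta)^{1/2}\beta^{1/4}$, not $\dots\beta^{-1/4}$, so it is $O(u)$ only for $u\lesssim\sqrt\beta$. The paper's inequality $u^{3/2}/(c_2^{3/4}\beta^{1/4})\le u/c_2^{3/4}$ fails for the same reason. This term is not merely an artifact of the analysis, because the shift $4\alpha_1^{3/4}\beta$ is subtracted deterministically in $w$ and is lost at every inner step.

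Here is a concrete instance. Take one state, one action, one context, $r\equiv 1$, $\bar v^{(0)}=0$, $u=\beta$. Everything is exact ($\hat\sigma=0$, $\alpha_1=1/(c_2\beta)$), and the update reads $\bar v^{(i)}=\max\{\bar v^{(i-1)},\,a+\gamma\bar v^{(i-1)}\}$ with $a=\tfrac{15}{16}-4c_2^{-3/4}\beta^{1/4}-\tfrac{2}{3c_2}$. Hence $\bar v^{(R)}\le\max(a,0)\,\beta$. As soon as $\beta>(7/64)^4c_2^3$ one has $a<1/2$, and then $\bar V^\star-\bar v^{(R)}>u/2$. So for any absolute $c_2$, the proposition with $n_1=c_2\beta^3u^{-2}L$ as in Algorithm~\ref{alg:halferr} is false when $u\gg\sqrt\beta$. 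No proof of it as literally stated exists, and since Algorithm~\ref{alg:meta} starts at $u_0=\beta$, this regime is actually visited.

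Changing $n_1$, as you propose, is therefore necessary, and your choice $n_1=c_2L\beta^3u^{-2}\max(1,u^2/\beta)$ works. The paper uses the analogous factor $1\vee u$ in Algorithm~\ref{alg:halferr-pe}. Your choice gives $\alpha_1=\min\{u^2/(c_2\beta^3),\,1/(c_2\beta^2)\}$, whence $4\sqrt{\alpha_1}\beta^{3/2}\le 4u/\sqrt{c_2}$, $\sqrt{\alpha_1}\,u\beta\le u/\sqrt{c_2}$, $\alpha_1^{3/4}\beta^2\le u/c_2^{3/4}$ (split according to whether $u^2\le\beta$), and $\alpha_1\beta^2\le u/c_2$. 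Step~2 then closes with the paper's constants. You need to redo the residual bound with this $\alpha_1$ rather than rely on the incorrect simplification.

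Two smaller corrections. For $u>\sqrt\beta$ the new $n_1=c_2L\beta^2$ exceeds $\beta^3u^{-2}L$, so it is not absorbed by that term as you claim. It is absorbed instead by $Rn_2=\Theta(\beta^3\log(4\beta/u)\log(2R|\Xcal|/\delta))$, so the stated budget is unchanged. Also drop the alternative of inflating $c_2$ by a factor $\beta$: the constants must be absolute, and this would cost $\beta^4/u^2$ at small $u$, breaking the rate. What you end up proving is the halving guarantee for this modified \textsc{HalfErr}, which is what the rest of the argument needs.
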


\begin{proof}
We work on the event $\Ecal$ of Lemma~\ref{lem:contraction}, of probability
$\ge 1-2\delta$. Iterating the contraction recursion
Lemma~\ref{lem:contraction}(iv) for $R$ rounds,
\[
\bar V^\star - \bar v^{(R)}
\;\le\; \gamma^R(\bar P^{\pi^\star})^R(\bar V^\star - \bar v^{(0)})
+ \sum_{i=0}^{R-1}\gamma^i(\bar P^{\pi^\star})^i\,\xi
\;\le\; \gamma^R u\,\mathbf 1 + (I-\gamma\bar P^{\pi^\star})^{-1}\xi,
\]
using $\bar V^\star - \bar v^{(0)}\le u\mathbf 1$ and
$\|(\bar P^{\pi^\star})^R\mathbf 1\|_\infty \le 1$ (row-stochasticity).

\medskip\emph{Step 1: contraction term.} With
$R = \lceil c_1\beta\ln(4\beta/u)\rceil$ and $c_1\ge 4$, using
$-\ln\gamma\ge 1-\gamma = 1/\beta$,
\[
\gamma^R u \;\le\; e^{-R/\beta}u
\;\le\; e^{-c_1\ln(4\beta/u)}u
\;=\; u\cdot\bigl(u/(4\beta)\bigr)^{c_1}
\;\le\; u\cdot 4^{-c_1} \;\le\; \tfrac{u}{4},
\]
where the penultimate inequality uses $u\le\beta$.

\medskip\emph{Step 2: statistical term.} Decompose $\xi$ from
\eqref{eq:xi-def}:
\[
(I-\gamma\bar P^{\pi^\star})^{-1}\xi \;\le\;
\underbrace{\tfrac{u}{8\beta}\,(I-\gamma\bar P^{\pi^\star})^{-1}\mathbf 1}_{(\textsc{A})}
+ \underbrace{2(I-\gamma\bar P^{\pi^\star})^{-1}\sqrt{2\alpha_1\,v_0}}_{(\textsc{B})}
+ \underbrace{(8\alpha_1^{3/4}\beta + \tfrac{4}{3}\alpha_1\beta)(I-\gamma\bar P^{\pi^\star})^{-1}\mathbf 1}_{(\textsc{C})}.
\]
Using $\|(I-\gamma\bar P^{\pi^\star})^{-1}\mathbf 1\|_\infty\le\beta$:
\[
\|(\textsc{A})\|_\infty \le \tfrac{u}{8},
\qquad
\|(\textsc{C})\|_\infty \le 8\alpha_1^{3/4}\beta^2 + \tfrac{4}{3}\alpha_1\beta^2.
\]

For $(\textsc{B})$, we pass from $v_0$ to $v_\star$ via
Lemma~\ref{lem:sqrt-stab} (using $\|\bar v^{(0)} - \bar V^\star\|_\infty \le u$):
$\sqrt{v_0}\le\sqrt{v_\star} + u\mathbf 1$, hence
\[
\|(\textsc{B})\|_\infty
\;\le\; 2\sqrt{2\alpha_1}\bigl(\|(I-\gamma\bar P^{\pi^\star})^{-1}\sqrt{v_\star}\|_\infty + u\beta\bigr).
\]
Corollary~\ref{cor:V-star-bound} applied to $\pi = \pi^\star$ gives
$\|(I-\gamma\bar P^{\pi^\star})^{-1}\sqrt{v_\star}\|_\infty\le\sqrt 2\beta^{3/2}$
— note that $v_\star = v^{\pi^\star}$, since $g_{x,\bar V^\star}^\star = g_x^{\pi^\star}$
(the inner $\max$ is attained at $\pi^\star(x,z)$). Therefore
\[
\|(\textsc{B})\|_\infty \;\le\; 4\sqrt{\alpha_1}\,\beta^{3/2} + 2\sqrt{2\alpha_1}\,u\beta.
\]

\smallskip\emph{Substitution of $\alpha_1$.} With $\alpha_1 = L/n_1$ and
$n_1 = c_2\beta^3 u^{-2}L$, we have $\alpha_1 = u^2/(c_2\beta^3)$, so:
\[
\sqrt{\alpha_1}\,\beta^{3/2} = \frac{u}{\sqrt{c_2}},
\quad
\sqrt{\alpha_1}\,u\beta = \frac{u^2}{\sqrt{c_2\beta}} \le \frac{u}{\sqrt{c_2}},
\quad
\alpha_1^{3/4}\beta^2 = \frac{u^{3/2}}{c_2^{3/4}\beta^{1/4}}\le\frac{u}{c_2^{3/4}},
\quad
\alpha_1\beta^2 = \frac{u^2}{c_2\beta}\le\frac{u}{c_2},
\]
where the last three inequalities use $u\le\beta$.

Combining,
\[
\|(I-\gamma\bar P^{\pi^\star})^{-1}\xi\|_\infty
\;\le\; \tfrac{u}{8} + \tfrac{4u}{\sqrt{c_2}} + \tfrac{2\sqrt 2\,u}{\sqrt{c_2}} + \tfrac{8u}{c_2^{3/4}} + \tfrac{4u}{3c_2}.
\]
With $c_2\ge 2^{16}$, $\sqrt{c_2}\ge 256$ and $c_2^{3/4}\ge 4096$, so the
last four terms sum to at most
\[
\tfrac{u}{32} + \tfrac{u}{64} + \tfrac{u}{512} + \tfrac{u}{49152} \;\le\; \tfrac{u}{16}.
\]
Therefore
\begin{equation}
\label{eq:xi-bound-final}
\|(I-\gamma\bar P^{\pi^\star})^{-1}\xi\|_\infty \;\le\; \tfrac{u}{8} + \tfrac{u}{16} \;\le\; \tfrac{u}{4}.
\end{equation}

\medskip\emph{Step 3: conclusion.} Combining Steps 1 and 2,
\[
\|\bar V^\star - \bar v^{(R)}\|_\infty \;\le\; \tfrac{u}{4} + \tfrac{u}{4} \;=\; \tfrac{u}{2}.
\]
The sub-solution property $\bar v^{(R)}\le T\bar v^{(R)}$ is
Lemma~\ref{lem:contraction}(ii) at $i=R$.

\smallskip\emph{Sample count.} The Phase-1 cost is
$n_1 = c_2\beta^3 u^{-2}L = O(\beta^3 u^{-2}\log(|\Xcal|/\delta))$, and the
Phase-2 cost is
$R\,n_2 = O(\beta\log(\beta/u))\cdot O(\beta^2\log(R|\Xcal|/\delta))
= O(\beta^3\log(\beta/u)\log(|\Xcal|R/\delta))$. The sum is the claimed
$\widetilde O(\beta^3/u^2)$.
\end{proof}

\subsection*{Meta-algorithm}
\label{sec:main-ub}

The halving guarantee of Proposition~\ref{prop:halving} reduces the error by
a factor of $2$ per call to \textsc{HalfErr}. Iterating this $K = O(\log(\beta/\eps))$
times, with fresh samples at each round to preserve independence, drives the
error from the initial value $\beta$ down to the target $\eps$.

\begin{algorithm}[H]
\caption{Meta-algorithm for BVE}
\label{alg:meta}
\begin{algorithmic}[1]
\Require Sampling oracle for $\mu$; target accuracy $\eps\in(0,\beta]$;
confidence $\delta\in(0,1)$.
\State $K \gets \lceil\log_2(\beta/\eps)\rceil$,\;
$\delta' \gets \delta/(2K)$,\;
$u_0 \gets \beta$.
\State Initialize $\bar v^{(0)}_0 \gets \mathbf 0$.
\Comment{Satisfies $0\le\mathbf 0\le\beta\mathbf 1$, $\mathbf 0\le T\mathbf 0$, and $\bar V^\star\le\beta\mathbf 1$.}
\For{$k = 1,\dots,K$}
  \State $\bar v^{(0)}_k \gets \textsc{HalfErr}(\bar v^{(0)}_{k-1}, u_{k-1}, \delta')$.
  \Comment{Fresh samples at each call.}
  \State $u_k \gets u_{k-1}/2$.
\EndFor
\State \Return $\hat{\bar V}^\star := \bar v^{(0)}_K$.
\end{algorithmic}
\end{algorithm}

\begin{theorem}[BVE upper bound]
\label{thm:main-ub-mu-unknown}
There exists an absolute constant $C > 0$ such that for every $\eps\in(0,\beta]$
and $\delta\in(0,1)$, Algorithm~\ref{alg:meta} outputs $\hat{\bar V}^\star$
satisfying
\[
\Pr\!\bigl(\|\hat{\bar V}^\star - \bar V^\star\|_\infty > \eps\bigr) \;\le\; \delta,
\]
using
\[
n \;\le\; C\cdot\frac{\beta^3\log(|\Xcal|/\delta)\log(\beta/\eps)}{\eps^2}
\;=\; \widetilde O\!\left(\frac{\beta^3}{\eps^2}\right)
\]
i.i.d.\ samples from $\mu$.
\end{theorem}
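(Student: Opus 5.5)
The proof chains the halving guarantee of Proposition~\ref{prop:halving} across the $K=\lceil\log_2(\beta/\eps)\rceil$ rounds of Algorithm~\ref{alg:meta}, with a union bound over rounds.

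\textbf{Induction.} Let $G_k$ be the event that $\bar v^{(0)}_k\le T\bar v^{(0)}_k$, $0\le \bar v^{(0)}_k\le\beta\mathbf 1$, and $\bar V^\star-\bar v^{(0)}_k\le u_k\mathbf 1$.

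\emph{Base case.} $G_0$ holds deterministically. Indeed $\mathbf 0\le T\mathbf 0$ because $r\ge 0$ and Lemma~\ref{lem:gx-range} gives $g^\star_{x,\mathbf 0}\ge 0$. Also $\bar V^\star\le\beta\mathbf 1=u_0\mathbf 1$.

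\emph{Inductive step.} On $G_{k-1}$, the input to the $k$-th call of \textsc{HalfErr} satisfies all its preconditions with accuracy $u_{k-1}\in(0,\beta]$. The $k$-th call uses a fresh sample batch, so conditionally on the past it is a call on a deterministic admissible input. Proposition~\ref{prop:halving}, run at confidence $\delta'$, then gives $\Pr(G_k^c\mid G_{k-1},\text{past})\le 2\delta'$.

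The range condition $0\le\bar v^{(0)}_k\le\beta\mathbf 1$ needs one more step. Lemma~\ref{lem:contraction}(i) gives $\bar v^{(0)}_{k-1}\le \bar v^{(R)}\le\bar V^\star\le\beta\mathbf 1$, and monotonicity gives nonnegativity, so the range condition is inherited along the chain.

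\textbf{Conclusion.} A union bound over the $K$ rounds gives $\Pr(G_K^c)\le 2K\delta'=\delta$. On $G_K$ we have $0\le\bar V^\star-\hat{\bar V}^\star\le u_K\mathbf 1=\beta 2^{-K}\mathbf 1\le\eps\mathbf 1$, where the lower bound $\hat{\bar V}^\star\le\bar V^\star$ is again Lemma~\ref{lem:contraction}(i). This is the two-sided $\ell_\infty$ guarantee.

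\textbf{Sample count.} Round $k$ costs, by Proposition~\ref{prop:halving} with $u=u_{k-1}=\beta 2^{-(k-1)}$,
\[
O\!\left(\frac{\beta^3 4^{k-1}\log(|\Xcal|/\delta')}{\beta^2}+\beta^3\log(2^{k})\log(|\Xcal|R/\delta')\right).
\]
The first term is geometric in $k$, so its sum is dominated by the last round: $O(\beta^3\log(|\Xcal|K/\delta)/\eps^2)$, since $u_{K-1}\ge\eps$. The second term sums to $O(\beta^3K^2\log(|\Xcal|\beta/\delta))$, which is lower order when $\eps\le 1$. The extra factor $\log K=\log\log(\beta/\eps)$ and the logs inside $R$ are absorbed into the stated $\log(|\Xcal|/\delta)\log(\beta/\eps)$ bound by enlarging $C$. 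Stated with the loose factor $\log(\beta/\eps)$, the bound easily covers everything.

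\textbf{Main obstacle.} The delicate point is the conditioning. Proposition~\ref{prop:halving} is stated for a fixed input, while $\bar v^{(0)}_{k-1}$ is random. Fresh independent samples at each round let me apply the proposition conditionally on the past sigma-field. On $G_{k-1}$ the input is admissible, so the conditional failure probability is at most $2\delta'$, and the tower property finishes the union bound.
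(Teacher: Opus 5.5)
Your argument is the paper's proof. You apply Proposition~\ref{prop:halving} conditionally on the $\sigma$-field of earlier batches, union-bound over the $K$ rounds with $\delta'=\delta/(2K)$, take the lower side from $\bar v^{(R)}\le\bar V^\star$, and sum the geometric anchor costs using $u_{K-1}>\eps$. You are slightly more careful than the written proof in two places. You check that the range precondition $0\le\bar v^{(0)}_k\le\beta\mathbf 1$ is inherited. You also count the Phase-2 cost correctly as order $\beta^3K^2$ up to logarithms, since $R^{(k)}\asymp\beta k$, whereas the paper writes $O(\beta^3\log(\beta/\eps)\log(\cdot))$ and drops a factor $K$.

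The one step to fix is your closing claim that the displayed bound ``easily covers everything.''
\begin{itemize}
\item For $\eps\in(1,\beta)$ it does not hold. Take $\eps\in[\beta/2,\beta)$, so $K=1$. That single call already spends $Rn_2=\Theta(\beta^3\log(\beta|\Xcal|/\delta))$ samples. The displayed bound is only $C\beta^3\log(|\Xcal|/\delta)\log(\beta/\eps)/\eps^2=O(\beta\log(|\Xcal|/\delta))$. So the explicit bound can only hold for $\eps\lesssim 1$, which is the range of Theorem~\ref{thm:main-mu-unknown}.
\item At $\eps=1$ the explicit log form still fails. Your Phase-2 total is of order $\beta^3K^2\log(|\Xcal|\beta K/\delta)$ with $K\approx\log_2\beta$. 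This exceeds the displayed $C\beta^3\log(|\Xcal|/\delta)\log\beta$ by a factor of order at least $\log\beta$. So Phase 2 is dominated only up to an extra logarithmic factor, i.e.\ in the $\widetilde O(\beta^3/\eps^2)$ sense.
\end{itemize}
The paper's proof has the same looseness; it simply asserts the Phase-2 term ``is dominated.'' This is therefore a defect of the displayed statement, not of your route. Restrict to $\eps\le 1$ and state the count as $\widetilde O(\beta^3/\eps^2)$.
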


\begin{proof}
\emph{Independence and probability.} The $K$ calls to \textsc{HalfErr} use
disjoint sample batches by construction. Let $\mathcal G_k$ denote the
$\sigma$-algebra generated by the samples used in calls $1,\dots,k$. The
input $\bar v^{(0)}_k$ to call $k+1$ is $\mathcal G_k$-measurable, and the
samples of call $k+1$ are independent of $\mathcal G_k$. By
Proposition~\ref{prop:halving}, conditional on $\mathcal G_k$, call $k+1$
succeeds with probability $\ge 1-2\delta'$. The tower property and a union
bound over $k=0,\dots,K-1$ give
\[
\Pr(\text{all $K$ calls succeed}) \;\ge\; 1 - 2K\delta' \;=\; 1 - \delta.
\]

\emph{Error decay.} On the success event, each call halves the error: if the
input to call $k$ satisfies $\bar V^\star - \bar v^{(0)}_{k-1}\le u_{k-1}\mathbf 1$,
then the output satisfies $\bar V^\star - \bar v^{(0)}_k \le u_{k-1}/2 = u_k$.
The base case $k=0$ holds by initialization
($\bar V^\star - \mathbf 0\le\beta\mathbf 1 = u_0\mathbf 1$). Iterating,
\[
\bar V^\star - \hat{\bar V}^\star \;\le\; u_K\mathbf 1
\;=\; \beta\cdot 2^{-K}\mathbf 1
\;\le\; \eps\mathbf 1.
\]
The two-sided $\ell_\infty$ bound follows from the sub-solution invariant
$\hat{\bar V}^\star\le T\hat{\bar V}^\star\le\bar V^\star$, which propagates
through all $K$ rounds by Lemma~\ref{lem:contraction}(ii).

\smallskip\emph{Sample count.} Call $k$ uses target $u_{k-1} = \beta\cdot 2^{-(k-1)}$,
so $n_1^{(k)} = c_2\beta^3 u_{k-1}^{-2}\log(|\Xcal|/\delta')
= c_2\beta\cdot 4^{k-1}\log(|\Xcal|K/\delta)$. Summing the geometric series,
\[
\sum_{k=1}^K n_1^{(k)}
\;=\; c_2\beta\log(|\Xcal|K/\delta)\sum_{k=1}^K 4^{k-1}
\;\le\; \tfrac{4 c_2}{3}\cdot\frac{\beta^3}{\eps^2}\log(|\Xcal|K/\delta),
\]
where the bound on $\sum_{k=1}^K 4^{k-1}\le \tfrac{4^K}{3}$ combined with
$4^K\le 4(\beta/\eps)^2$ (from $u_K\le\eps$) gives the claim. The Phase-2
contribution $\sum_k Rn_2^{(k)} = O(\beta^3\log(\beta/\eps)\log(|\Xcal|RK/\delta))$
is dominated. Absorbing $\log K$ and $\log R$ into a polylog factor yields
the claimed $\widetilde O(\beta^3/\eps^2)$ bound.
\end{proof}

We now extend the BVE guarantee of Theorem~\ref{thm:main-ub-mu-unknown} to
the two remaining objectives of Definition~\ref{def:pac-objectives},
policy evaluation (PE) and best-policy extraction (BPE), and lift all three
guarantees from the averaged value space $\mathbb R^{|\Xcal|}$ to the
augmented state space $\mathbb R^{|\Xcal||\Zcal|}$ in which the PAC
objectives are originally stated.

For policy evaluation, we run \textsc{HalfErr} with the policy-evaluation
operator
\[
(T^\pi\bar v)(x) \;:=\; \E_{Z\sim\mu}\Bigl[r(x,\pi(x,Z),Z) + \gamma\sum_{x'}P(x'\mid x,\pi(x,Z),Z)\,\bar v(x')\Bigr]
\]
in place of the optimality operator $T = T^\star$. Equivalently, the inner
$\max_a$ in the definition of $g_{x,\bar v}^\star(z)$ is replaced by the
fixed action $\pi(x,z)$, yielding
\[
g_{x,\bar v}^\pi(z) \;:=\; r(x,\pi(x,z),z) + \gamma\sum_{x'}P(x'\mid x,\pi(x,z),z)\,\bar v(x').
\]
All other algorithmic choices (constants $c_1, c_2, c_3$, schedule
$R, n_1, n_2$, anchor and increment definitions) are unchanged.

\begin{theorem}[PE upper bound]
\label{thm:pe-ub}
For any context-aware policy $\pi$ supplied as input, the meta-algorithm of
Algorithm~\ref{alg:meta} run with $T^\pi$ in place of $T^\star$ outputs
$\hat{\bar V}^\pi$ satisfying
\[
\Pr\!\bigl(\|\hat{\bar V}^\pi - \bar V^\pi\|_\infty > \eps\bigr) \;\le\; \delta,
\]
using $\widetilde O(\beta^3/\eps^2)$ i.i.d.\ samples from $\mu$.
\end{theorem}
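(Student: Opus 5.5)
The plan is to rerun the BVE analysis (Lemmas~\ref{lem:gx-range}--\ref{lem:contraction}, Proposition~\ref{prop:halving}, Theorem~\ref{thm:main-ub-mu-unknown}) with $T^\pi$ in place of $T$ and $g^\pi_{x,\bar v}$ in place of $g^\star_{x,\bar v}$, and to check at each step that only properties shared by both operators are used. These are four properties.
\begin{itemize}
\item Range: $g^\pi_{x,\bar v}(z)\in[0,\beta]$ whenever $0\le\bar v\le\beta\mathbf 1$, which is immediate since there is no $\max$.
\item Monotonicity: $\bar v\le\bar v'\Rightarrow g^\pi_{x,\bar v}\le g^\pi_{x,\bar v'}$ pointwise.
\item Lipschitzness: $|g^\pi_{x,\bar v}(z)-g^\pi_{x,\bar v'}(z)|\le\gamma\|\bar v-\bar v'\|_\infty$.
\item Fixed point: $T^\pi\bar V^\pi=\bar V^\pi$.
\end{itemize}
With these, Lemma~\ref{lem:anchor-error} (Bernstein and Hoeffding on bounded i.i.d.\ variables), Lemma~\ref{lem:sqrt-stab} (with $v_\star$ replaced by $v^\pi$ and $\bar V^\star$ by $\bar V^\pi$), Lemma~\ref{lem:inner-error} and Lemma~\ref{lem:sandwich} transfer verbatim. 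In Lemma~\ref{lem:sandwich}, the decomposition \eqref{eq:T-decomposition} is simply linearity of $\E_\mu$.

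The one place needing a small check is Lemma~\ref{lem:contraction}. Parts (i)--(iii) hold as before, with $\bar V^\star$ replaced by $\bar V^\pi$ and the input hypotheses $\bar v^{(0)}\le T^\pi\bar v^{(0)}$, $\bar V^\pi-\bar v^{(0)}\le u\mathbf 1$. Part (i) uses $T^\pi\bar v^{(i-1)}\le T^\pi\bar V^\pi=\bar V^\pi$. For (iv) the argument actually becomes simpler. Linearity gives the exact identity $\bar V^\pi-T^\pi\bar v^{(i-1)}=\gamma\bar P^\pi(\bar V^\pi-\bar v^{(i-1)})$, so that
\[
\bar V^\pi-\bar v^{(i)}\le\gamma\bar P^\pi(\bar V^\pi-\bar v^{(i-1)})+\xi .
\]
Iterating as in Proposition~\ref{prop:halving} yields the bound $\gamma^R u+\|(I-\gamma\bar P^\pi)^{-1}\xi\|_\infty$. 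Term (B) is then controlled by Corollary~\ref{cor:V-star-bound} applied directly to the given policy $\pi$, with $v^\pi=\Var_\mu(g^\pi_{\cdot,\bar V^\pi})$. This is exactly the quantity bounded by Lemma~\ref{lem:variance-bound}, which holds for every context-aware policy. Hence the same constants $c_1,c_2,c_3$ give the halving guarantee with probability $1-2\delta$.

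Finally, the meta-algorithm is run with $\bar v^{(0)}_0=\mathbf 0$, which is a valid input: $\mathbf 0\le T^\pi\mathbf 0$ since $r\ge0$, and $\bar V^\pi\le\beta\mathbf 1$. After $K=\lceil\log_2(\beta/\eps)\rceil$ halvings with $\delta'=\delta/(2K)$ and fresh batches, the conditional union bound gives $0\le\bar V^\pi-\hat{\bar V}^\pi\le\eps\mathbf 1$ with probability $\ge1-\delta$. The lower side follows from the invariant $\hat{\bar V}^\pi\le\bar V^\pi$ of part (i). The sample count is the same geometric sum, $\widetilde O(\beta^3/\eps^2)$.

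I expect no real obstacle. The only point requiring care is that Corollary~\ref{cor:V-star-bound} must be invoked with the evaluated policy $\pi$ rather than $\pi^\star$, together with the observation that the sub-solution property (ii) is not needed for the PE guarantee except through (i).
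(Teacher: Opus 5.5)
Your proposal is correct and follows essentially the same route as the paper: rerun the BVE halving analysis verbatim with $T^\pi$, using that Lipschitzness of $g^\pi_{x,\bar v}$ is trivial and that Lemma~\ref{lem:variance-bound} and Corollary~\ref{cor:V-star-bound} hold for the evaluated policy $\pi$ itself. Your observation that the recursion in (iv) becomes an exact identity by linearity of $T^\pi$ is a minor simplification of the paper's inequality, not a different argument.
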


\begin{proof}
The analysis carries over verbatim, with two simplifications:

\smallskip\emph{(a) Variance bound.} Lemma~\ref{lem:variance-bound} holds
for every context-aware policy $\pi$, deterministic or stochastic, and was
stated as such — its proof never used the optimality of $\pi$. The bound
$\sum_t\gamma^{2t}(\bar P^\pi)^t v^\pi\preceq\beta^2\mathbf 1$ therefore
applies to the policy under evaluation, and Corollary~\ref{cor:V-star-bound}
gives $\|(I-\gamma\bar P^\pi)^{-1}\sqrt{v^\pi}\|_\infty\le\sqrt 2\beta^{3/2}$.

\smallskip\emph{(b) Lipschitzness.} The $\max_a$ in $g_{x,\bar v}^\star$ was
needed to invoke the Lipschitz bound
$|g_{x,\bar v}^\star - g_{x,\bar V^\star}^\star|\le\gamma\|\bar v - \bar V^\star\|_\infty$
in Lemmas~\ref{lem:sqrt-stab} and \ref{lem:inner-error}. For
$g_{x,\bar v}^\pi$ the same bound holds trivially, since the action
$\pi(x,z)$ does not depend on $\bar v$:
\[
|g_{x,\bar v}^\pi(z) - g_{x,\bar V^\pi}^\pi(z)|
\;=\; \gamma\Bigl|\sum_{x'}P(x'\mid x,\pi(x,z),z)\bigl(\bar v(x') - \bar V^\pi(x')\bigr)\Bigr|
\;\le\; \gamma\|\bar v - \bar V^\pi\|_\infty.
\]
The Lipschitz constant is identical, and the argument of Lemmas~\ref{lem:sandwich}
and \ref{lem:contraction} goes through unchanged with $T^\pi$ in place of $T^\star$
and $\bar V^\pi$ in place of $\bar V^\star$.

The contraction recursion of Lemma~\ref{lem:contraction}(iv) becomes
\[
\bar V^\pi - \bar v^{(i)} \;\le\; \gamma\,\bar P^\pi\bigl(\bar V^\pi - \bar v^{(i-1)}\bigr) + \xi,
\]
which iterates as before, and Proposition~\ref{prop:halving} together with
the meta-iteration of Theorem~\ref{thm:main-ub-mu-unknown} yields the claim.
\end{proof}

The output $\hat{\bar V}^\star$ of Algorithm~\ref{alg:meta} is a sub-solution
of $T = T^\star$ that approximates $\bar V^\star$. The greedy context-aware
policy with respect to $\hat{\bar V}^\star$ is $\eps$-optimal in the averaged
sense, with no additional samples required.

\begin{theorem}[BPE upper bound]
\label{thm:bpe-ub}
Let $\hat{\bar V}^\star$ be the output of Algorithm~\ref{alg:meta} on
inputs $(\eps,\delta)$. Define the greedy context-aware policy
\[
\hat\pi(x,z) \;\in\; \arg\max_{a\in\Acal}\Bigl\{r(x,a,z) + \gamma\sum_{x'}P(x'\mid x,a,z)\,\hat{\bar V}^\star(x')\Bigr\}.
\]
Then with probability at least $1 - \delta$,
\[
\bar V^\star - \bar V^{\hat\pi} \;\le\; \eps\,\mathbf 1.
\]
The sample complexity is the same as in Theorem~\ref{thm:main-ub-mu-unknown}.
\end{theorem}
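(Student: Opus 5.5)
The argument is the classical subsolution-plus-greedy argument, lifted to the context-averaged operators; it uses no new samples. We work on the success event of Theorem~\ref{thm:main-ub-mu-unknown}, which has probability at least $1-\delta$. On this event two facts hold simultaneously. First, the sub-solution invariant $\hat{\bar V}^\star\le T\hat{\bar V}^\star$ holds, propagated through all $K$ calls by Lemma~\ref{lem:contraction}(ii) and Proposition~\ref{prop:halving}. Second, the accuracy bound $\bar V^\star-\hat{\bar V}^\star\le\eps\mathbf 1$ holds. The whole proof consists of showing $\hat{\bar V}^\star\le\bar V^{\hat\pi}$ on this event; the claim then follows from $\bar V^\star-\bar V^{\hat\pi}\le\bar V^\star-\hat{\bar V}^\star\le\eps\mathbf 1$.

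The first step identifies $T$ with $T^{\hat\pi}$ at the point $\hat{\bar V}^\star$. Since $\hat\pi(x,z)$ attains the inner maximum in $g^\star_{x,\hat{\bar V}^\star}(z)$ for every $z$, we have $g^\star_{x,\hat{\bar V}^\star}(z)=g^{\hat\pi}_{x,\hat{\bar V}^\star}(z)$ pointwise. Averaging over $Z\sim\mu$ gives $T\hat{\bar V}^\star=T^{\hat\pi}\hat{\bar V}^\star$, and therefore $\hat{\bar V}^\star\le T^{\hat\pi}\hat{\bar V}^\star$. One should check that the argmax is measurable in $z$; since $\Zcal$ is finite (or any measurable tie-breaking rule is fixed), this is immediate.

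The second step is a monotone iteration. The operator $T^{\hat\pi}\bar v=r^{\hat\pi}+\gamma\bar P^{\hat\pi}\bar v$ is affine, with averaged reward $r^{\hat\pi}(x)=\E_\mu[r(x,\hat\pi(x,Z),Z)]$ and row-stochastic kernel $\bar P^{\hat\pi}$. It is therefore monotone and a $\gamma$-contraction in $\ell_\infty$. Its unique fixed point is $\bar V^{\hat\pi}$; this follows from the policy Bellman identity of Section~\ref{subsec:model} averaged over $z$. Applying $T^{\hat\pi}$ repeatedly to $\hat{\bar V}^\star\le T^{\hat\pi}\hat{\bar V}^\star$ gives
\[
\hat{\bar V}^\star\le (T^{\hat\pi})^k\hat{\bar V}^\star\to\bar V^{\hat\pi}.
\]
Combined with the accuracy bound, this yields $\bar V^\star-\bar V^{\hat\pi}\le\eps\mathbf 1$. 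The sample count is exactly that of Algorithm~\ref{alg:meta}, since $\hat\pi$ is computed from $\hat{\bar V}^\star$ and the known pair $(P,r)$.

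The main obstacle is not this averaged bound but whether it suffices for the pointwise BPE objective of Definition~\ref{def:pac-objectives}. I would handle this explicitly. For any $(x,z)$, Section~\ref{subsec:model} gives $V^{\hat\pi}(x,z)=g^{\hat\pi}_{x,\bar V^{\hat\pi}}(z)$. Since the action $\hat\pi(x,z)$ is fixed, this quantity is monotone in its value argument, so $V^{\hat\pi}(x,z)\ge g^{\hat\pi}_{x,\hat{\bar V}^\star}(z)=g^\star_{x,\hat{\bar V}^\star}(z)$. Because $g^\star$ is $\gamma$-Lipschitz in its value argument, $g^\star_{x,\hat{\bar V}^\star}(z)\ge g^\star_{x,\bar V^\star}(z)-\gamma\eps=V^\star(x,z)-\gamma\eps$. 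Together these give the pointwise guarantee $V^\star(x,z)-V^{\hat\pi}(x,z)\le\gamma\eps\le\eps$. The same lifting argument, applied with $\widehat\Vcal^\star$, also yields BVE pointwise.
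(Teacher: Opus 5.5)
Your proof is correct and follows the paper's own argument. On the success event of Theorem~\ref{thm:main-ub-mu-unknown}, the sub-solution invariant, the greedy identity $T\hat{\bar V}^\star=T^{\hat\pi}\hat{\bar V}^\star$, and monotone iteration of the $\gamma$-contraction $T^{\hat\pi}$ give $\hat{\bar V}^\star\le\bar V^{\hat\pi}$, hence $\bar V^\star-\bar V^{\hat\pi}\le\bar V^\star-\hat{\bar V}^\star\le\eps\mathbf 1$. Your extra pointwise lift is the same as the paper's Proposition~\ref{prop:lift-augmented}(ii).
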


\begin{proof}
On the success event of Theorem~\ref{thm:main-ub-mu-unknown} (probability
$\ge 1-\delta$), the sub-solution invariant
$\hat{\bar V}^\star \le T\hat{\bar V}^\star$ holds, propagated through all $K$
rounds via Lemma~\ref{lem:contraction}(ii). By definition of $\hat\pi$,
\[
T\hat{\bar V}^\star \;=\; T^{\hat\pi}\hat{\bar V}^\star,
\]
so $\hat{\bar V}^\star \le T^{\hat\pi}\hat{\bar V}^\star$. Since $T^{\hat\pi}$
is monotone and a $\gamma$-contraction with fixed point $\bar V^{\hat\pi}$,
iterating yields
\[
\hat{\bar V}^\star \;\le\; T^{\hat\pi}\hat{\bar V}^\star \;\le\; (T^{\hat\pi})^2\hat{\bar V}^\star \;\le\; \dots \;\le\; \bar V^{\hat\pi}.
\]
Combining with $\bar V^\star - \hat{\bar V}^\star\le\eps\mathbf 1$ gives
$\bar V^\star - \bar V^{\hat\pi}\le\bar V^\star - \hat{\bar V}^\star\le\eps\mathbf 1$.
The lower bound $\bar V^{\hat\pi}\le\bar V^\star$ holds by optimality of
$\bar V^\star$.
\end{proof}

Theorems~\ref{thm:main-ub-mu-unknown}, \ref{thm:pe-ub}, and \ref{thm:bpe-ub}
provide guarantees on the averaged objects
$\bar V^\star, \bar V^\pi, \bar V^{\hat\pi}\in\mathbb R^{|\Xcal|}$. The PAC
objectives of Definition~\ref{def:pac-objectives}, however, are stated on
the augmented state space $\Xcal\times\Zcal$ in terms of $V^\star, V^\pi, V^{\hat\pi}\in\mathbb R^{|\Xcal||\Zcal|}$.
The following proposition transfers the averaged guarantees to the
augmented setting via a single Bellman backup, computable in closed form
since $P$ and $r$ are known. No additional samples are required.

\begin{proposition}[Lift from averaged to augmented]
\label{prop:lift-augmented}
Let $\hat{\bar V}^\star \in \mathbb R^{|\Xcal|}$ satisfy
\[
\|\hat{\bar V}^\star - \bar V^\star\|_\infty \;\le\; \eps
\qquad\text{and}\qquad
\hat{\bar V}^\star \;\le\; T\hat{\bar V}^\star,
\]
and define
\[
\hat V^\star(x,z) \;:=\; \max_{a\in\Acal}\Bigl\{r(x,a,z) + \gamma\sum_{x'}P(x'\mid x,a,z)\,\hat{\bar V}^\star(x')\Bigr\}
\]
for all $(x,z)\in\Xcal\times\Zcal$. Then:
\begin{enumerate}[label=\rm(\roman*),leftmargin=*]
\item \emph{(BVE)} $\|\hat V^\star - V^\star\|_\infty \;\le\; \gamma\eps$.
\item \emph{(BPE)} The greedy context-aware policy
extracted from $\hat{\bar V}^\star$
satisfies, uniformly in $(x,z)$,
\[
V^\star(x,z) - V^{\hat\pi}(x,z) \;\le\; \gamma\eps.
\]
\item \emph{(PE)} For any context-aware policy $\pi$ and any $\hat{\bar V}^\pi$
satisfying $\|\hat{\bar V}^\pi - \bar V^\pi\|_\infty\le\eps$, the lifted
estimate
\[
\hat V^\pi(x,z) \;:=\; r(x,\pi(x,z),z) + \gamma\sum_{x'}P(x'\mid x,\pi(x,z),z)\,\hat{\bar V}^\pi(x')
\]
satisfies $\|\hat V^\pi - V^\pi\|_\infty\le\gamma\eps$.
\end{enumerate}
\end{proposition}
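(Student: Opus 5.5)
The whole proposition follows from the one-step identities established in Section~\ref{subsec:model}. For the optimal value we have $V^\star(x,z)=\max_a\{r(x,a,z)+\gamma\sum_{x'}P(x'\mid x,a,z)\bar V^\star(x')\}=g^\star_{x,\bar V^\star}(z)$. For a fixed policy, $V^\pi(x,z)=g^\pi_{x,\bar V^\pi}(z)$. The lifted estimates are, by definition, $\hat V^\star(x,z)=g^\star_{x,\hat{\bar V}^\star}(z)$ and $\hat V^\pi(x,z)=g^\pi_{x,\hat{\bar V}^\pi}(z)$. The plan is therefore to compare $g$ evaluated at two value vectors, pointwise in $(x,z)$, using the $\gamma$-Lipschitz property already used in Lemma~\ref{lem:sqrt-stab} and in the proof of Theorem~\ref{thm:pe-ub}(b).

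\emph{BVE.} For item (i), fix $(x,z)$. Since the max is $1$-Lipschitz and $P(\cdot\mid x,a,z)$ is a probability vector, we get $|g^\star_{x,\hat{\bar V}^\star}(z)-g^\star_{x,\bar V^\star}(z)|\le\gamma\|\hat{\bar V}^\star-\bar V^\star\|_\infty\le\gamma\eps$. This holds uniformly in $(x,z)$, so it gives the $\ell_\infty$ bound on $\Xcal\times\Zcal$.

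\emph{PE.} Item (iii) is identical, and in fact simpler. The action $\pi(x,z)$ does not depend on the value vector, so the difference is exactly $\gamma\sum_{x'}P(x'\mid x,\pi(x,z),z)(\hat{\bar V}^\pi-\bar V^\pi)(x')$. Its absolute value is at most $\gamma\eps$.

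\emph{BPE.} Item (ii) is the only step using the sub-solution hypothesis, and I expect it to be the main (if mild) obstacle. A naive argument would only give $2\gamma\eps$-type bounds. Instead I would reuse the argument of Theorem~\ref{thm:bpe-ub}. The hypothesis $\hat{\bar V}^\star\le T\hat{\bar V}^\star=T^{\hat\pi}\hat{\bar V}^\star$ holds, and $T^{\hat\pi}$ is monotone with fixed point $\bar V^{\hat\pi}$. Iterating gives $\hat{\bar V}^\star\le\bar V^{\hat\pi}$. Then at any $(x,z)$, with $a=\hat\pi(x,z)$, we have $V^{\hat\pi}(x,z)=g^{\hat\pi}_{x,\bar V^{\hat\pi}}(z)$. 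By monotonicity of $g^{\hat\pi}_{x,\cdot}$ in the value vector, this is at least $g^{\hat\pi}_{x,\hat{\bar V}^\star}(z)$. By greediness of $\hat\pi$, $g^{\hat\pi}_{x,\hat{\bar V}^\star}(z)=g^\star_{x,\hat{\bar V}^\star}(z)=\hat V^\star(x,z)$. Hence
\[
V^\star(x,z)-V^{\hat\pi}(x,z)\le g^\star_{x,\bar V^\star}(z)-g^\star_{x,\hat{\bar V}^\star}(z)\le\gamma\|\bar V^\star-\hat{\bar V}^\star\|_\infty\le\gamma\eps.
\]
The middle inequality is the Lipschitz bound from item (i). Nonnegativity of $V^\star-V^{\hat\pi}$ follows from optimality, which completes the uniform bound.
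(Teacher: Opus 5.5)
Your proposal is correct and follows the paper's proof essentially step for step. Items (i) and (iii) come from the one-step Bellman identities together with the $\gamma$-Lipschitz property of the backup. For item (ii), you iterate $\hat{\bar V}^\star\le T^{\hat\pi}\hat{\bar V}^\star$ to obtain $\hat{\bar V}^\star\le\bar V^{\hat\pi}$, lift this to $\hat V^\star\le V^{\hat\pi}$ pointwise by monotonicity of the backup at the greedy action, and then conclude with (i).
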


\begin{proof}
\emph{(i) BVE.} The augmented optimal value satisfies the Bellman identity
\[
V^\star(x,z) \;=\; \max_a\Bigl\{r(x,a,z) + \gamma\sum_{x'}P(x'\mid x,a,z)\,\bar V^\star(x')\Bigr\},
\]
which follows from the augmented Bellman optimality equation
$V^\star(x,z) = \max_a\bigl\{r(x,a,z) + \gamma\sum_{x',z'}P(x'\mid x,a,z)\mu(z')V^\star(x',z')\bigr\}$
and the definition $\bar V^\star(x') = \E_{Z\sim\mu}[V^\star(x',Z)]$. Hence
$V^\star$ and $\hat V^\star$ are obtained from $\bar V^\star$ and
$\hat{\bar V}^\star$ by the same Bellman backup applied at $(x,z)$. By
Lipschitzness of $\max$ and $\sum_{x'}P(x'\mid x,a,z) = 1$,
\[
|\hat V^\star(x,z) - V^\star(x,z)|
\;\le\; \gamma\,\|\hat{\bar V}^\star - \bar V^\star\|_\infty
\;\le\; \gamma\eps.
\]

\emph{(ii) BPE.} We first establish the augmented sub-solution invariant
\begin{equation}
\label{eq:augmented-subsolution}
\hat V^\star(x,z) \;\le\; V^{\hat\pi}(x,z) \qquad\forall(x,z)\in\Xcal\times\Zcal.
\end{equation}
The hypothesis $\hat{\bar V}^\star\le T\hat{\bar V}^\star$, combined with
$T\hat{\bar V}^\star = T^{\hat\pi}\hat{\bar V}^\star$ (definition of
$\hat\pi$ as greedy with respect to $\hat{\bar V}^\star$), gives
$\hat{\bar V}^\star\le T^{\hat\pi}\hat{\bar V}^\star$. Iterating monotonically
under the contraction $T^{\hat\pi}$ with fixed point $\bar V^{\hat\pi}$,
\[
\hat{\bar V}^\star \;\le\; \bar V^{\hat\pi}.
\]
Now, by definition of $\hat\pi$ and the lift,
\[
\hat V^\star(x,z) \;=\; r(x,\hat\pi(x,z),z) + \gamma\sum_{x'}P(x'\mid x,\hat\pi(x,z),z)\,\hat{\bar V}^\star(x'),
\]
and using $\hat{\bar V}^\star\le\bar V^{\hat\pi}$ together with the
augmented Bellman expectation equation
$V^{\hat\pi}(x,z) = r(x,\hat\pi(x,z),z) + \gamma\sum_{x'}P(x'\mid x,\hat\pi(x,z),z)\bar V^{\hat\pi}(x')$,
\[
\hat V^\star(x,z) \;\le\; r(x,\hat\pi(x,z),z) + \gamma\sum_{x'}P(x'\mid x,\hat\pi(x,z),z)\,\bar V^{\hat\pi}(x') \;=\; V^{\hat\pi}(x,z),
\]
which is \eqref{eq:augmented-subsolution}. Combining with (i),
\[
V^\star(x,z) - V^{\hat\pi}(x,z)
\;\le\; V^\star(x,z) - \hat V^\star(x,z)
\;\le\; \gamma\eps.
\]

\emph{(iii) PE.} Identical to (i), without the $\max_a$: the augmented
Bellman expectation equation gives
$V^\pi(x,z) = r(x,\pi(x,z),z) + \gamma\sum_{x'}P(x'\mid x,\pi(x,z),z)\bar V^\pi(x')$,
and the same Lipschitz argument yields
$|\hat V^\pi(x,z) - V^\pi(x,z)|\le\gamma\|\hat{\bar V}^\pi - \bar V^\pi\|_\infty\le\gamma\eps$.
\end{proof}

This completes the proof of Theorem~\ref{thm:main-mu-unknown}.

\section{Proof of Proposition~\ref{prop:exo-look}}
\label{proof:exo-look}

\paragraph{(i) Contextual-MDP structure.}
Let
\(
\Zcal := \Xcal^{\Xcal\times\Acal},
\)
the space of functions assigning a next state to every controlled
\((x,a)\) pair. A draw \(Z\sim\mu\) is a tuple
\(\bigl(Z(x,a)\bigr)_{(x,a)\in\Xcal\times\Acal}\) with independent
coordinates, the \((x,a)\)-th of which has law \(P_0(\cdot\mid x,a)\).
Define the transition kernel \(P\) and reward \(r\) of the contextual MDP
by
\[
P(x'\mid x,a,z) \;=\; \mathbf 1\{x' = z(x,a)\},
\qquad
r(x,a,z) \;=\; r_0(x,a).
\]
Both are deterministic functions of \((x,a,z)\), so the contextual-MDP
axioms of Section~\ref{sec:setting} are satisfied: at each step the agent
observes \((X_t,Z_t)\), selects \(A_t\), receives reward \(r(X_t,A_t,Z_t)\),
and transitions to \(X_{t+1}\sim P(\cdot\mid X_t,A_t,Z_t)\).

\paragraph{(ii) Marginal consistency with the original MDP.}
The lookahead pair \((P,\mu)\) reproduces the original kernel \(P_0\) under
contextual averaging: for any \((x,a,x')\),
\[
\bar P(x'\mid x,a)
\;:=\;
\E_{Z\sim\mu}[P(x'\mid x,a,Z)]
\;=\;
\Pr_{Z\sim\mu}\bigl(Z(x,a) = x'\bigr)
\;=\;
P_0(x'\mid x,a),
\]
where the last equality uses that the \((x,a)\)-coordinate of \(Z\) has
marginal law \(P_0(\cdot\mid x,a)\) by definition of \(\mu\). Likewise,
\(\E_{Z\sim\mu}[r(x,a,Z)] = r_0(x,a)\). The induced averaged value
\(\bar V^\star\) of the contextual MDP therefore coincides with the
optimal value \(V_0^\star\) of the original MDP, and any guarantee on
\(\bar V^\star\) transfers verbatim.

\paragraph{(iii) Information regime.}
The sampling oracle for \(\mu\) is realized by drawing one lookahead
sample: a single call returns the full tuple
\(\bigl(Z(x,a)\bigr)_{(x,a)}\), one independent next-state per
\((x,a)\) pair, all sharing the same realization \(Z\). This is exactly
one i.i.d.\ draw from \(\mu\), so \(n\) lookahead calls produce
\(Z_1,\dots,Z_n\stackrel{\mathrm{iid}}{\sim}\mu\) as required by
Section~\ref{sec:mu-unknown-P-known}. The reward \(r(x,a,z)=r_0(x,a)\) is
known in closed form by assumption on \(r_0\). The kernel
\(P(\cdot\mid x,a,z) = \delta_{z(x,a)}\) is also known in closed form
\emph{once \(z\) is observed}: evaluating
\(\sum_{x'}P(x'\mid x,a,z)\bar v(x') = \bar v(z(x,a))\) requires only
table lookup, no further sampling. Both the offline phase and the
decision-time lift of Section~\ref{sec:mu-unknown-P-known} therefore apply
without modification.

This concludes the embedding: the lookahead problem is an instance of the
\(\mu\)-unknown / \(P\)-known regime of
Section~\ref{sec:mu-unknown-P-known}, and Theorem~\ref{thm:main-mu-unknown}
applies as a black box, yielding the rate announced in the main text.
\qed

\section{Proofs : \texorpdfstring{$\mu$}{μ} and \texorpdfstring{$P$}{P} unknown (policy evaluation)}
\subsection{Proof of proposition \ref{prop:minimax-tradeoff}}
\label{sec:lower-bound-p-unkn}

Throughout, fix $\gamma\in[1/2,1)$ and write
\[
    \beta := (1-\gamma)^{-1}.
\]
Set
\[
    d:=S-3,
    \qquad
    M:=\left\lceil c_Z\bar n\eps^2/\beta^2\right\rceil+1,
\]
where the absolute constant \(c_Z>0\) is chosen below.

We build a family of contextual MDPs with two disjoint parts. The states \(x^{(1)},\ldots,x^{(d)}\) are the \emph{branch states}; they carry the parameters \(\theta_1,\ldots,\theta_d\) and will be used to force the offline budget. The state \(x_\eta\) is a separate \emph{query state}; it carries the parameters \((\eta,i)\) and will be used to force the decision-time budget.

Formally, the state space is
\[
    \Xcal := \{x^{(1)},\dots,x^{(d)}\}
    \sqcup \{x_T,x_\eta,x_\beta\}.
\]
Thus \(|\Xcal|=d+3=S\). The state \(x_T\) is absorbing with reward \(0\),
\(x_\beta\) is absorbing with reward 1, and \(x_\eta\) is the query
state. The action set is the singleton \(\Acal=\{a_0\}\), and the policy is
\(\pi\equiv a_0\). Rewards are
\[
    r(x^{(k)},a_0,z)=1 \quad\text{for all }k,z,
    \qquad
    r(x_\beta,a_0,z)=1 \quad\text{for all }z,
\]
and all other rewards are zero.

The context space is
\[
    \Zcal_{\bar n}:=\{z_0\}\sqcup\{z^{(1)},\dots,z^{(M)}\},
\]
with fixed distribution
\[
    \mu := \frac12\,\delta_{z_0}
    + \frac12\,\mathrm{Unif}\{z^{(1)},\dots,z^{(M)}\}.
\]
Therefore
\[
    |\Zcal_{\bar n}|=M+1
    \le C\left(1+\frac{\bar n\eps^2}{\beta^2}\right)
\]
for a universal constant \(C\). The family of MDPs is indexed by
\[
    (\boldsymbol\theta,\eta,i)
    \in \{0,1\}^d\times\{0,1\}\times\{1,\dots,M\}.
\]
Here \(\boldsymbol\theta\) parametrizes the branch states, while
\((\eta,i)\) parametrizes the query state.

Let us first focuse on the \emph{branch states}. Let
\[
    \bar p:=\frac{4\gamma-1}{3\gamma}\in[1/2,1),
    \qquad
    p:=2\bar p-1=\frac{5\gamma-2}{3\gamma}\in[1/3,1),
\]
and
\[
    \alpha:=\frac{c_\alpha(1-\gamma\bar p)^2\eps}{\gamma^2},
\]
where \(c_\alpha>0\) is an absolute constant chosen below. Since
\(1-\gamma\bar p=4/(3\beta)\), choosing \(c_\alpha=18\gamma\) gives
\[
    \alpha=\frac{32}{\gamma\beta^2}\eps .
\]
Shrinking the universal constant \(c_0\) in the statement of the proposition if necessary, we may assume throughout that \(\alpha\le 1-p\), so the transition probabilities below are valid.

For each branch state \(x^{(k)}\), the only parameter-dependent transition is at context \(z_0\):
\begin{equation}
\label{eq:offline-transition-full}
P(\cdot\mid x^{(k)},a_0,z_0)
:=
(p+\theta_k\alpha)\,\delta_{x^{(k)}}
+
(1-p-\theta_k\alpha)\,\delta_{x_T}.
\end{equation}
On all other contexts, the branch state simply self-loops:
\[
    P(\cdot\mid x^{(k)},a_0,z)=\delta_{x^{(k)}},
    \qquad
    z\in\{z^{(1)},\dots,z^{(M)}\}.
\]
Hence, averaging over \(\mu\), the self-loop probability at \(x^{(k)}\) is
\begin{equation}
\label{eq:bar-pk}
    \bar p_k(\theta_k)
    :=
    \bar P^\pi(x^{(k)}\mid x^{(k)})
    =
    \frac12(p+\theta_k\alpha)+\frac12
    =
    \bar p+\frac{\theta_k\alpha}{2}.
\end{equation}
Thus information about \(\theta_k\) is available only from samples at the triplet \((x^{(k)},a_0,z_0)\).

We now define the \emph{query state} \(x_\eta\). At context \(z_0\), it moves deterministically to the zero-value absorbing state:
\[
    P(\cdot\mid x_\eta,a_0,z_0)=\delta_{x_T}.
\]
For contexts \(z^{(j)}\), all transitions from \(x_\eta\) go to
\(\{x_\beta,x_T\}\). Let
\[
    \Delta_{\rm query}:=\frac{\eps}{\gamma\beta},
    \qquad
    q:=\frac12.
\]
Shrinking \(c_0\) if necessary, we may also assume
\(\Delta_{\rm query}\le 1/4\), so \(q\pm\Delta_{\rm query}\in[0,1]\).

The planted context is \(z^{(i)}\). At this context, the probability of moving to \(x_\beta\) is biased according to \(\eta\), while all other query contexts are unbiased:
\[
P(x_\beta\mid x_\eta,a_0,z^{(j)})
=
\begin{cases}
q+(-1)^\eta\,\Delta_{\rm query}, & j=i,\\[0.5em]
q, & j\ne i,
\end{cases}
\]
and
\[
P(x_T\mid x_\eta,a_0,z^{(j)})
=
1-P(x_\beta\mid x_\eta,a_0,z^{(j)}).
\]

The two parts of the construction are independent in the following sense: \(V^\pi(x^{(k)},z_0)\) depends only on \(\theta_k\), whereas \(V^\pi(x_\eta,z^{(i)})\) depends only on \((\eta,i)\). Moreover, the informative triplets are disjoint: the branch parameters are revealed only through the triplets \((x^{(k)},a_0,z_0)\), while the query parameter is revealed only through the planted triplet \((x_\eta,a_0,z^{(i)})\). The context distribution \(\mu\) is fixed throughout the whole family.

\emph{Offline budget computed on the branch states. }
As said above, the value \(V^\pi(x^{(k)},\cdot)\) depends on \(\boldsymbol\theta\) only through \(\theta_k\), and does not depend on \(\eta\) or \(i\). Hence we write \(V^\pi_{\theta_k=b}(x^{(k)},\cdot)\) for \(b\in\{0,1\}\). Also, at any \(z^{(j)}\), the kernel \(P(\cdot\mid x^{(k)},a_0,z^{(j)})=\delta_{x^{(k)}}\) is deterministic and independent of \(\theta_k\). Therefore only samples drawn at \((x^{(k)},a_0,z_0)\) are informative about \(\theta_k\).

The averaged Bellman equation for \(\bar V^\pi\) reads
\[
\bar V^\pi_{\boldsymbol\theta}(x^{(k)}) = 1+\gamma\,\bar p_k(\theta_k)\, \bar V^\pi_{\boldsymbol\theta}(x^{(k)}),
\]
where \(\bar p_k(\theta_k)\) is defined in~\eqref{eq:bar-pk}. Hence
\[
\bar V^\pi_{\boldsymbol\theta}(x^{(k)}) = \frac{1}{1-\gamma\bar p_k(\theta_k)}.
\]
Since \(1-\gamma\bar p=4/(3\beta)\), we have
\[
\bar V^\pi_{\theta_k=0}(x^{(k)})=\frac{3\beta}{4}, \qquad \bar V^\pi_{\theta_k=1}(x^{(k)})\le \beta
\]
under the choice \(\alpha\le 1-p\). Moreover,
\[ \bar V^\pi_{\theta_k=1}(x^{(k)}) - \bar V^\pi_{\theta_k=0}(x^{(k)}) = \frac{\gamma\alpha/2} {(1-\gamma\bar p_k(0))(1-\gamma\bar p_k(1))} \ge \frac{9\gamma\alpha\beta^2}{32}.
\]

The PAC objective is evaluated at the pointwise value
\(V^\pi(x^{(k)},z_0)\). A one-step Bellman backup gives
\[
V^\pi_{\boldsymbol\theta}(x^{(k)},z_0) = 1+\gamma(p+\theta_k\alpha)\, \bar V^\pi_{\boldsymbol\theta}(x^{(k)}).
\]
A direct computation gives
\[
\left| V^\pi_{\theta_k=1}(x^{(k)},z_0) - V^\pi_{\theta_k=0}(x^{(k)},z_0) \right| \ge \frac{\gamma\alpha\beta^2}{4}.
\]
With \(c_\alpha=18\gamma\), this lower bound is at least \(2\eps\).

We now compute the chi-square divergence between the laws of one informative oracle call at \((x^{(k)},a_0,z_0)\) under \(\theta_k=0\) and \(\theta_k=1\). Such a call returns \(x^{(k)}\) with probability \(p+\theta_k\alpha\) and \(x_T\) otherwise. Let \(\Pcal_b\) denote the one-sample law under \(\theta_k=b\). Then
\begin{align}
\chi^2(\Pcal_1,\Pcal_0) &= \frac{\alpha^2}{p} + \frac{\alpha^2}{1-p} = \frac{\alpha^2}{p(1-p)}.
\end{align}
Since \(p\ge1/3\) and
\[
1-p=\frac{2}{3\gamma\beta},
\]
we have
\[
p(1-p)\ge \frac{2}{9\gamma\beta}.
\]
Using
\[
\alpha=\frac{32}{\gamma\beta^2}\eps,
\]
we obtain
\begin{equation}
\label{eq:chi2-offline}
\chi^2(\Pcal_1,\Pcal_0) \le \frac{c_3\eps^2}{\beta^3}
\end{equation}
for a universal constant \(c_3>0\).

Let
\[
\delta_0:=\delta_{\rm off}+\delta_{\rm q}\le \frac{1}{12}.
\]
The PE guarantee implies that for every model in the family and every
query point \((x,z)\),
\begin{equation}
\label{eq:pac-marginal}
\mathbb P\!\left( \left| \widehat{\Vcal}^{\pi}(x,z)-V^\pi(x,z) \right|>\eps \right) \le \delta_0 .
\end{equation}

Fix \(k\in\{1,\dots,d\}\). Compare two models that differ only in \(\theta_k\), with all other parameters fixed. Since the pointwise values at \((x^{(k)},z_0)\) are separated by at least \(2\eps\), the estimate \(\widehat{\Vcal}^{\pi}(x^{(k)},z_0)\) induces a test between \(\theta_k=0\) and \(\theta_k=1\) with total error at most \(2\delta_0\le 1/6\). Hence the total variation distance between the two full transcript laws is at least \(5/6\).

Let \(T_k\) be the expected number of informative samples drawn from the triplet \((x^{(k)},a_0,z_0)\) over the offline phase and the query phase for the query \((x^{(k)},z_0)\), under the \(\theta_k=0\) model. By the chain rule for KL on adaptive experiments and by
\eqref{eq:chi2-offline},
\[
\mathrm{KL}(\PP_{\theta_k=0}\,\|\,\PP_{\theta_k=1}) \le T_k\cdot \frac{c_3\eps^2}{\beta^3}.
\]
Pinsker's inequality and the total-variation lower bound give
\[
\frac56 \le d_{\rm TV}(\PP_{\theta_k=0},\PP_{\theta_k=1}) \le \sqrt{\frac12 \mathrm{KL}(\PP_{\theta_k=0}\,\|\,\PP_{\theta_k=1})}.
\]
Therefore
\[
T_k \ge c_{\rm off}\frac{\beta^3}{\eps^2}
\]
for a universal constant \(c_{\rm off}>0\).

Summing this lower bound over \(k=1,\dots,d\) gives
\[
\sum_{k=1}^d T_k \ge c_{\rm off}\frac{d\beta^3}{\eps^2}.
\]
On the other hand, over the \(d\) branch queries, the offline phase can
contribute at most \(n_{\rm learn}\) informative branch samples in total,
and each query phase can use at most \(m_{\rm query}\) samples. Hence
\[
\sum_{k=1}^d T_k \le n_{\rm learn}+d\,m_{\rm query}.
\]
Thus
\begin{equation}
\label{eq:offline-main-bound}
n_{\rm learn}+d\,m_{\rm query} \ge c_{\rm off}\frac{d\beta^3}{\eps^2}.
\end{equation}
Since \(d=S-3\) and \(S\ge4\), we have \(d\ge S/4\) after adjusting the
universal constant. Also \(d\le S\), so
\[ n_{\rm learn}+S\,m_{\rm query} \ge n_{\rm learn}+d\,m_{\rm query} \ge c_2\frac{S\beta^3}{\eps^2}
\]
for a universal constant \(c_2>0\).

\emph{Query budget computed on the query state.}
We now prove the lower bound on \(m_{\rm query}\). By causal disjointness of the offline and query gadgets, fix \(\boldsymbol\theta\) arbitrarily and analyze the query test in isolation. The value \(V^\pi(x_\eta,z^{(i)})\) depends only on \((\eta,i)\).

Since \(x_\eta\) transitions only to the absorbing states \(x_\beta\) and \(x_T\), a one-step Bellman backup at \((x_\eta,z^{(i)})\) gives
\[
V^\pi_\eta(x_\eta,z^{(i)}) = \gamma\beta\bigl(q+(-1)^\eta\Delta_{\rm query}\bigr).
\]
Therefore
\[
\left| V^\pi_{\eta=1}(x_\eta,z^{(i)}) - V^\pi_{\eta=0}(x_\eta,z^{(i)}) \right| = 2\gamma\beta\Delta_{\rm query} = 2\eps.
\]

At the planted triplet \((x_\eta,a_0,z^{(i)})\), the one-sample laws under \(\eta=0\) and \(\eta=1\) are Bernoulli laws with parameters \(q+\Delta_{\rm query}\) and \(q-\Delta_{\rm query}\). Since \(q=1/2\) and \(\Delta_{\rm query}\le1/4\), there is a universal constant \(c_4>0\) such that
\begin{equation}
\label{eq:chi2-query}
\chi^2(\Pcal_1,\Pcal_0) \le c_4\frac{\eps^2}{\beta^2}.
\end{equation}
At every other triplet, the conditional law is identical under \(\eta=0\) and \(\eta=1\).

For a fixed planted index \(i\), let \(\PP_i^\eta\) denote the full transcript law under \(\eta\). Let \(M_{\rm off}^{(i)}\) be the number of offline samples drawn at the planted triplet \((x_\eta,a_0,z^{(i)})\), and let \(M_{\rm q}^{(i)}\) be the number of query-time samples drawn at this same triplet when the query is \((x_\eta,z^{(i)})\). The chain rule for KL on adaptive sample paths, together with~\eqref{eq:chi2-query}, yields
\begin{equation}
\label{eq:KL-query}
\mathrm{KL}(\PP_i^{0}\,\|\,\PP_i^{1}) \le \left( \E_i^{0}[M_{\rm off}^{(i)}] + \E_i^{0}[M_{\rm q}^{(i)}] \right) \frac{c_4\eps^2}{\beta^2}.
\end{equation}
Since the query-time budget is pointwise bounded,
\[
M_{\rm q}^{(i)}\le m_{\rm query}
\quad\text{almost surely}.
\]

The planted index is chosen among \(M\) possible contexts. The offline phase does not know in advance which context is planted, and
\[
\sum_{j=1}^M M_{\rm off}^{(j)}\le n_{\rm learn}.
\]
Thus, averaging over \(i\sim\mathrm{Unif}\{1,\dots,M\}\),
\[
\E_i\E_i^{0}[M_{\rm off}^{(i)}]\le \frac{n_{\rm learn}}{M} \le \frac{\bar n}{M}.
\]
Choose \(c_Z:=18c_4\). Since
\[
M=\left\lceil 18c_4\bar n\eps^2/\beta^2\right\rceil+1,
\]
we obtain
\begin{equation}
\label{eq:M-calibration}
\E_i\E_i^{0}[M_{\rm off}^{(i)}]\,
\frac{c_4\eps^2}{\beta^2}
\le
\frac{1}{18}.
\end{equation}

Combining~\eqref{eq:KL-query} and~\eqref{eq:M-calibration}, averaging over \(i\), and applying Pinsker's inequality gives
\[ \E_i\,d_{\rm TV}(\PP_i^{0},\PP_i^{1}) \le \sqrt{ \frac{1}{36} + \frac12\,m_{\rm query}\, \frac{c_4\eps^2}{\beta^2} }. \]

On the other hand, for every \(i\), the two values \(V^\pi_{\eta=0}(x_\eta,z^{(i)})\) and \(V^\pi_{\eta=1}(x_\eta,z^{(i)})\) are separated by \(2\eps\). Therefore the PE guarantee at the query point \((x_\eta,z^{(i)})\) induces a test between \(\eta=0\) and \(\eta=1\) with total error at most
\(2\delta_0\le1/6\). Hence
\[ d_{\rm TV}(\PP_i^{0},\PP_i^{1}) \ge 1-2\delta_0 \ge \frac56
\qquad
\text{for every }i.
\]
Averaging over \(i\) gives
\[
\frac56 \le \sqrt{ \frac{1}{36} + \frac12\,m_{\rm query}\, \frac{c_4\eps^2}{\beta^2} }.
\]
Squaring and rearranging yields
\[
m_{\rm query} \ge \frac{4}{3c_4}\frac{\beta^2}{\eps^2}.
\]
Thus there is a universal constant \(c_1>0\) such that
\[
m_{\rm query}\ge c_1\frac{\beta^2}{\eps^2}.
\]

Combining this query lower bound with~\eqref{eq:offline-main-bound} proves
\[
m_{\rm query} \ge c_1\frac{\beta^2}{\eps^2}, \qquad n_{\rm learn}+S\,m_{\rm query} \ge c_2\frac{S\beta^3}{\eps^2}.
\]
Finally, if \(m_{\rm query}=O(\beta^2/\eps^2)\), the second inequality implies, after adjusting constants,
\[
n_{\rm learn} = \Omega\!\left(\frac{S\beta^3}{\eps^2}\right).
\]
This completes the proof.

\subsection{Upper bound}
\label{sec:unknown}

This section proves Theorem~\ref{thm:pe-ub-pe}: in the regime where both
\(\mu\) and \(P\) are accessed only through generative oracles, we construct a
decision-time mechanism for PE with complexity
\begin{equation}
(n_{\rm learn},m_{\rm query})
=
\left(
\widetilde O\!\left(\frac{|\Xcal|\beta^3}{\eps^2}\right),
\widetilde O\!\left(\frac{\beta^2}{\eps^2}\right)
\right).
\end{equation}
The policy \(\pi:\Xcal\times\Zcal\to\Acal\) is fixed and supplied through an
oracle, and the reward \(r\) is known.

The offline phase produces an \(\ell_\infty\)-accurate estimate
\(\hat{\bar V}^\pi\) of the averaged value \(\bar V^\pi\), via the
variance-reduced halving scheme detailed below. Since \(P\) is unknown, each
sampled context used in a Bellman backup must be paired with transition
samples: for every sampled \(Z\) and every \(x\in\Xcal\), the algorithm draws
\(X'\sim P(\cdot\mid x,\pi(x,Z),Z)\) and uses the unbiased sampled backup
\begin{equation}
\widehat g^\pi_{x,\bar v}(Z)
:=
r(x,\pi(x,Z),Z)+\gamma\bar v(X').
\end{equation}
This is the source of the extra factor \(|\Xcal|\) in the offline transition
budget.

At query time, given a realized pair \((x,z)\), the mechanism estimates only
the remaining one-step expectation under the unknown kernel. It draws
\begin{equation}
X_1,\dots,X_{m_{\rm query}}
\stackrel{\mathrm{iid}}{\sim}
P(\cdot\mid x,\pi(x,z),z)
\end{equation}
and returns
\begin{equation}
\widehat{\mathcal V}^\pi(x,z)
:=
r(x,\pi(x,z),z)
+
\frac{\gamma}{m_{\rm query}}
\sum_{\ell=1}^{m_{\rm query}}
\hat{\bar V}^\pi(X_\ell).
\end{equation}
Hoeffding's inequality gives the query cost
\(m_{\rm query}=\widetilde O(\beta^2/\eps^2)\), while the offline guarantee
on \(\hat{\bar V}^\pi\) gives the desired pointwise PE guarantee.

\subsection*{Notation}

Throughout,
\begin{equation}
        \beta:=(1-\gamma)^{-1}.
\end{equation}
Under the policy \(\pi\), starting from \(X_0=x\), the process evolves as
\begin{equation}
        Z_t\stackrel{\mathrm{iid}}{\sim}\mu,
        \qquad
        A_t=\pi(X_t,Z_t),
        \qquad
        X_{t+1}\sim P(\cdot\mid X_t,A_t,Z_t).
\end{equation}
For any \(\bar v\in\R^\Xcal\), recall the fixed-policy one-step expression
\begin{equation}
g^\pi_{x,\bar v}(z)
:=
r(x,\pi(x,z),z)
+
\gamma\sum_{x'}P(x'\mid x,\pi(x,z),z)\bar v(x').
\end{equation}
We use the shorthand
\begin{equation}
        g_x^\pi(z):=g^\pi_{x,\bar V^\pi}(z).
\end{equation}
Define
\begin{align}
v^\pi(x)
&:=
\Var_{Z\sim\mu}\bigl[g_x^\pi(Z)\bigr],
\\
\sigma^\pi(x)
&:=
\gamma^2\E_{Z\sim\mu}
\left[
\Var_{X'\sim P(\cdot\mid x,\pi(x,Z),Z)}
\bigl(\bar V^\pi(X')\bigr)
\right],
\\
\bar P^\pi(x'\mid x)
&:=
\E_{Z\sim\mu}
\left[
P(x'\mid x,\pi(x,Z),Z)
\right].
\end{align}
The averaged value \(\bar V^\pi\) is the unique fixed point of
\begin{equation}
        (T^\pi \bar v)(x)
        :=
        \E_{Z\sim\mu}
        \left[
        r(x,\pi(x,Z),Z)
        +
        \gamma\sum_{x'}P(x'\mid x,\pi(x,Z),Z)\bar v(x')
        \right].
\end{equation}
Let
\begin{equation}
        G(x):=\sum_{t\ge0}\gamma^t r(X_t,A_t,Z_t).
\end{equation}
Then \(G(x)\in[0,\beta]\) almost surely, and hence
\begin{equation}
        \Var_x(G(x))\le \beta^2.
\end{equation}
For \(t\ge0\), define
\begin{equation}
\Fcal_t
:=
\sigma(X_0,Z_0,A_0,\ldots,X_{t-1},Z_{t-1},A_{t-1},X_t),
\qquad
\Gcal_t:=\sigma(\Fcal_t,Z_t).
\end{equation}
Thus \(\Fcal_t\subset\Gcal_t\subset\Fcal_{t+1}\).

\paragraph{Martingale variance bound.}
The following lemma is the only modification of the martingale variance bound
from the known-dynamics case. The context contribution is the same as in
Lemma~\ref{lem:variance-bound}; the new term \(\sigma^\pi\) appears because,
after observing \(Z_t\), the transition \(X_{t+1}\) is still sampled from the
unknown kernel.

\begin{lemma}[Joint contextual--transitional variance bound]
\label{lem:variance-bound-pe}
For every context-aware policy \(\pi\),
\begin{equation}
    \sum_{t\ge0}
    \gamma^{2t}(\bar P^\pi)^t
    \bigl(v^\pi+\sigma^\pi\bigr)
    \preceq
    \beta^2\mathbf 1 .
\end{equation}
\end{lemma}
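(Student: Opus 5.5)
We reuse the martingale argument of Lemma~\ref{lem:variance-bound} verbatim, but this time we keep the transition increment $D_t^X$ instead of discarding it. Fix $x$ and set $M_t:=\E_x[G(x)\mid\Fcal_t]$. By orthogonality of martingale increments, $\Var_x(G(x))=\sum_{t\ge0}\E_x[(M_{t+1}-M_t)^2]$. Inserting $\Gcal_t$ between $\Fcal_t$ and $\Fcal_{t+1}$ splits each increment as $D_t^Z+D_t^X$. These two pieces are orthogonal, so $\E_x[(M_{t+1}-M_t)^2]=\E_x[(D_t^Z)^2]+\E_x[(D_t^X)^2]$. From the proof of Lemma~\ref{lem:variance-bound} we already have $D_t^Z=\gamma^t(g^\pi_{X_t}(Z_t)-\bar V^\pi(X_t))$ and $\E_x[(D_t^Z)^2]=\gamma^{2t}\E_x[v^\pi(X_t)]$.

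The new step is computing $D_t^X$. By the same Markov and exogeneity argument that gave \eqref{eq:Etx-Ft}, applied at time $t+1$,
\[
\E_x[G(x)\mid\Fcal_{t+1}]=\sum_{s=0}^{t}\gamma^s r(X_s,A_s,Z_s)+\gamma^{t+1}\bar V^\pi(X_{t+1}).
\]
Subtracting \eqref{eq:Etx-formula} gives
\[
D_t^X=\gamma^{t+1}\Bigl(\bar V^\pi(X_{t+1})-\sum_{x'}P(x'\mid X_t,A_t,Z_t)\bar V^\pi(x')\Bigr).
\]
Conditionally on $\Gcal_t$, the state $X_{t+1}$ is distributed as $P(\cdot\mid X_t,\pi(X_t,Z_t),Z_t)$. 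Hence
\[
\E_x[(D_t^X)^2\mid\Gcal_t]=\gamma^{2t+2}\Var_{X'\sim P(\cdot\mid X_t,\pi(X_t,Z_t),Z_t)}(\bar V^\pi(X')).
\]
We then condition on $\Fcal_t$ and integrate over $Z_t\sim\mu$, which is independent of $\Fcal_t$. This gives exactly $\gamma^{2t}\sigma^\pi(X_t)$, since the factor $\gamma^2$ is built into the definition of $\sigma^\pi$. Taking expectations, $\E_x[(D_t^X)^2]=\gamma^{2t}\E_x[\sigma^\pi(X_t)]$.

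Summing the two contributions over $t$ yields
\[
\sum_{t\ge0}\gamma^{2t}\E_x[v^\pi(X_t)+\sigma^\pi(X_t)]=\Var_x(G(x))\le\beta^2.
\]
Finally, $(X_t)_{t\ge0}$ is a Markov chain with kernel $\bar P^\pi$, as shown at the end of the proof of Lemma~\ref{lem:variance-bound}. Therefore $\E_x[f(X_t)]=(\bar P^\pi)^t f(x)$, and since $x$ is arbitrary this gives the coordinate-wise claim.

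The main obstacle is the identification of $D_t^X$ and of its conditional second moment. Two points need care there. First, the conditional law of $X_{t+1}$ given $\Gcal_t$ depends only on $(X_t,A_t,Z_t)$. Second, future contexts are independent of $\Fcal_{t+1}$, which is what justifies the time-$(t+1)$ analogue of \eqref{eq:Etx-Ft}. Once these two facts are checked, the rest is bookkeeping identical to Lemma~\ref{lem:variance-bound}.
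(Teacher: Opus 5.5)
Your proposal is correct and follows the paper's proof essentially step for step: the same $\Fcal_t\subset\Gcal_t\subset\Fcal_{t+1}$ martingale split, the same identification $D_t^X=\gamma^{t+1}\bigl(\bar V^\pi(X_{t+1})-\sum_{x'}P(x'\mid X_t,A_t,Z_t)\bar V^\pi(x')\bigr)$, whose conditional second moment integrates over $Z_t$ to $\gamma^{2t}\sigma^\pi(X_t)$, and the same final passage to $(\bar P^\pi)^t$. You derive $D_t^X$ explicitly as the difference between the time-$(t+1)$ version of \eqref{eq:Etx-Ft} and \eqref{eq:Etx-formula}, which justifies that step more fully than the paper does, and your equality (rather than inequality) with $\Var_x(G(x))$ is also valid.
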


\begin{proof}
Fix \(x\in\Xcal\) and consider the process started at \(X_0=x\). We use the
same martingale decomposition as in the proof of
Lemma~\ref{lem:variance-bound}. Namely, with
\begin{equation}
        M_t:=\E_x[G(x)\mid\Fcal_t],
\end{equation}
\((M_t)_{t\ge0}\) is a bounded martingale converging to \(G(x)\), and
\begin{equation}
        \Var_x(G(x))
        =
        \sum_{t\ge0}\E_x[(M_{t+1}-M_t)^2].
\end{equation}
Insert the intermediate \(\sigma\)-field
\(\Gcal_t=\sigma(\Fcal_t,Z_t)\) and write
\begin{equation}
        M_{t+1}-M_t
        =
        D_t^Z+D_t^X,
\end{equation}
where
\begin{equation}
        D_t^Z
        :=
        \E_x[G(x)\mid\Gcal_t]-\E_x[G(x)\mid\Fcal_t],
        \qquad
        D_t^X
        :=
        \E_x[G(x)\mid\Fcal_{t+1}]-\E_x[G(x)\mid\Gcal_t].
\end{equation}
As in Lemma~\ref{lem:variance-bound}, the two increments are orthogonal
conditionally on \(\Fcal_t\), so
\begin{equation}
        \E_x[(M_{t+1}-M_t)^2]
        =
        \E_x[(D_t^Z)^2]+\E_x[(D_t^X)^2].
\end{equation}

The context increment is exactly the one treated in
Lemma~\ref{lem:variance-bound}. It gives
\begin{equation}
        D_t^Z
        =
        \gamma^t
        \bigl(g_{X_t}^\pi(Z_t)-\bar V^\pi(X_t)\bigr),
\end{equation}
and therefore
\begin{equation}
        \E_x[(D_t^Z)^2]
        =
        \gamma^{2t}\E_x[v^\pi(X_t)].
\end{equation}

It remains to identify the additional contribution induced by the transition
draw. Conditional on \(\Gcal_t\), the variables \(X_t\), \(Z_t\), and
\(A_t=\pi(X_t,Z_t)\) are fixed, and the only remaining randomness before
\(\Fcal_{t+1}\) is
\begin{equation}
        X_{t+1}\sim P(\cdot\mid X_t,A_t,Z_t).
\end{equation}
Thus
\begin{equation}
D_t^X
=
\gamma^{t+1}
\left(
\bar V^\pi(X_{t+1})
-
\sum_{x'}P(x'\mid X_t,A_t,Z_t)\bar V^\pi(x')
\right).
\end{equation}
Taking conditional variance given \(\Gcal_t\), then averaging, yields
\begin{equation}
        \E_x[(D_t^X)^2]
        =
        \gamma^{2t}\E_x[\sigma^\pi(X_t)].
\end{equation}
Here the factor \(\gamma^{2t}\) appears because \(\sigma^\pi\) already
contains the additional factor \(\gamma^2\).

Combining the two contributions gives
\begin{equation}
\sum_{t\ge0}
\gamma^{2t}
\E_x[v^\pi(X_t)+\sigma^\pi(X_t)]
\le
\Var_x(G(x))
\le
\beta^2.
\end{equation}
Finally, as in Lemma~\ref{lem:variance-bound}, the marginal controlled
process \((X_t)_{t\ge0}\) is Markov on \(\Xcal\) with transition kernel
\(\bar P^\pi\). Hence
\begin{equation}
        \E_x[v^\pi(X_t)+\sigma^\pi(X_t)]
        =
        (\bar P^\pi)^t(v^\pi+\sigma^\pi)(x),
\end{equation}
and the desired coordinate-wise bound follows.
\end{proof}

\begin{corollary}[Complexity functional for policy evaluation]
\label{cor:V-pi-bound-pe}
For every context-aware policy \(\pi\),
\begin{equation}
\left\|
(I-\gamma\bar P^\pi)^{-1}
\sqrt{v^\pi+\sigma^\pi}
\right\|_\infty
\le
\sqrt 2\,\beta^{3/2}.
\end{equation}
\end{corollary}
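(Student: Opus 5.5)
This follows the proof of Corollary~\ref{cor:V-star-bound}, with $v^\pi$ replaced by the nonnegative vector $w:=v^\pi+\sigma^\pi$. The argument has two steps.

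\emph{Step 1 (reduction to a resolvent bound).} The matrix $\bar P^\pi$ is row-stochastic, so $\|\bar P^\pi\|_{\ell_\infty\to\ell_\infty}\le1$. Lemma~C.3 of \cite{sidford2019nearoptimaltimesamplecomplexities} then applies to any nonnegative vector, and in particular to $w$:
\[
\bigl\|(I-\gamma\bar P^\pi)^{-1}\sqrt{w}\bigr\|_\infty\le\sqrt{\tfrac{1+\gamma}{1-\gamma}}\,\bigl\|(I-\gamma^2\bar P^\pi)^{-1}w\bigr\|_\infty^{1/2}.
\]
If one prefers not to rely on that lemma as a black box, it can be rederived directly. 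Write $(I-\gamma\bar P^\pi)^{-1}\sqrt w=\sum_t\gamma^t(\bar P^\pi)^t\sqrt w$. Apply Jensen's inequality row-wise, $(\bar P^\pi)^t\sqrt w\le\sqrt{(\bar P^\pi)^t w}$. Then apply Cauchy--Schwarz to $\sum_t \gamma^{t}\sqrt{(\bar P^\pi)^tw}$ after splitting $\gamma^t=\gamma^{t}\cdot 1$ with weights $\gamma^{2t}$, which gives the factor $\sqrt{\sum_t 1\cdot\ldots}$. Applied naively this only gives $\sqrt{R}$-type factors, so the cleaner route is to cite Lemma~C.3 as in Corollary~\ref{cor:V-star-bound}.

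\emph{Step 2 (plug in the variance bound).} Since $\bar P^\pi$ is stochastic and $\gamma<1$, we have the Neumann series $(I-\gamma^2\bar P^\pi)^{-1}w=\sum_{t\ge0}\gamma^{2t}(\bar P^\pi)^tw$. Lemma~\ref{lem:variance-bound-pe} bounds this coordinate-wise by $\beta^2\mathbf 1$. Its entries are nonnegative because $w\ge0$, so the $\ell_\infty$ norm is at most $\beta^2$. Combining with Step 1 and $\sqrt{(1+\gamma)/(1-\gamma)}\le\sqrt{2\beta}$ gives
\[
\bigl\|(I-\gamma\bar P^\pi)^{-1}\sqrt{v^\pi+\sigma^\pi}\bigr\|_\infty\le\sqrt{2\beta}\cdot\beta=\sqrt2\,\beta^{3/2}.
\]

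The only point needing care is that Lemma~C.3 of \cite{sidford2019nearoptimaltimesamplecomplexities} holds for a general nonnegative vector and a general row-stochastic matrix. Its proof uses only the Jensen and Cauchy--Schwarz steps for substochastic matrices, so it does not care whether the vector is a context variance or a context-plus-transition variance. All the substantive work, namely the orthogonal decomposition into $D_t^Z$ and $D_t^X$, is already done in Lemma~\ref{lem:variance-bound-pe}, so I expect no real obstacle.
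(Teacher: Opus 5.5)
Your proposal is correct and is exactly the paper's argument: apply Lemma~C.3 of \cite{sidford2019nearoptimaltimesamplecomplexities} with $w=v^\pi+\sigma^\pi$, bound $\|(I-\gamma^2\bar P^\pi)^{-1}w\|_\infty\le\beta^2$ via Lemma~\ref{lem:variance-bound-pe}, and conclude with $\sqrt{(1+\gamma)/(1-\gamma)}\le\sqrt{2\beta}$. The self-contained rederivation you sketch and then drop is not needed; if you did want it, the route that works is Jensen under the geometric weights $(1-\gamma)\gamma^t$, followed by the sup-norm comparison $\|(I-\gamma\bar P^\pi)^{-1}w\|_\infty\le(1+\gamma)\|(I-\gamma^2\bar P^\pi)^{-1}w\|_\infty$ for $w\ge 0$.
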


\begin{proof}
This is the same resolvent step as in Corollary~\ref{cor:V-star-bound}.
Applying Lemma~C.3 of \cite{sidford2019nearoptimaltimesamplecomplexities}
with \(K=\bar P^\pi\) and \(w=v^\pi+\sigma^\pi\), and using
Lemma~\ref{lem:variance-bound-pe}, gives
\begin{equation}
\left\|
(I-\gamma\bar P^\pi)^{-1}
\sqrt{v^\pi+\sigma^\pi}
\right\|_\infty
\le
\sqrt{\frac{1+\gamma}{1-\gamma}}\,
\left\|
(I-\gamma^2\bar P^\pi)^{-1}
(v^\pi+\sigma^\pi)
\right\|_\infty^{1/2}
\le
\sqrt2\,\beta^{3/2}.
\end{equation}
\end{proof}

The new difficulty, compared with the known-dynamics case, is that the
variance of one sampled backup depends on the current value vector
\(\bar v\) through two sources of randomness: the sampled context and the
sampled transition. In the halving analysis, the anchor is built at the
current input \(\bar v^{(0)}\), whereas the variance propagation bound above
is naturally stated at the true value \(\bar V^\pi\). We therefore need a
stability estimate showing that the sampled-backup variance at
\(\bar v^{(0)}\) remains controlled whenever \(\bar v^{(0)}\) is close to
\(\bar V^\pi\).

For \(\bar v\in\R^\Xcal\), define
\begin{equation}
v^\pi_{\bar v}(x)
:=
\Var_{Z\sim\mu}\bigl(g^\pi_{x,\bar v}(Z)\bigr),
\qquad
\sigma^\pi_{\bar v}(x)
:=
\gamma^2\E_{Z\sim\mu}
\left[
\Var_{X'\sim P(\cdot\mid x,\pi(x,Z),Z)}
(\bar v(X'))
\right].
\end{equation}
The quantity \(v^\pi_{\bar v}(x)+\sigma^\pi_{\bar v}(x)\) is the variance of
one sampled Bellman backup at state \(x\) when the continuation value is
\(\bar v\).

\begin{lemma}[Variance perturbation under a fixed action map]
\label{lem:variance-perturb-pe}
Let \(\bar v_1,\bar v_2\in\R^\Xcal\) satisfy
\begin{equation}
        \|\bar v_1-\bar v_2\|_\infty\le u.
\end{equation}
Then, coordinate-wise on \(\Xcal\),
\begin{equation}
\left|
(v^\pi_{\bar v_1}+\sigma^\pi_{\bar v_1})(x)
-
(v^\pi_{\bar v_2}+\sigma^\pi_{\bar v_2})(x)
\right|
\le
2\gamma^2u^2
+
2\sqrt 2\,\gamma u
\sqrt{
(v^\pi_{\bar v_2}+\sigma^\pi_{\bar v_2})(x)
}.
\end{equation}
\end{lemma}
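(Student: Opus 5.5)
The plan is to write the sum $v^\pi_{\bar v}+\sigma^\pi_{\bar v}$ as the variance of a single random variable, namely the sampled backup, and then use the triangle inequality for standard deviations. Fix $x$. Draw $Z\sim\mu$ and, conditionally on $Z$, draw $X'\sim P(\cdot\mid x,\pi(x,Z),Z)$. Set
\[
Y_{\bar v}:=r(x,\pi(x,Z),Z)+\gamma\bar v(X').
\]
By the law of total variance, conditioning on $Z$,
\[
\Var(Y_{\bar v})=\Var_Z\bigl(\E[Y_{\bar v}\mid Z]\bigr)+\E_Z\bigl[\Var(Y_{\bar v}\mid Z)\bigr].
\]
Since $\E[Y_{\bar v}\mid Z]=g^\pi_{x,\bar v}(Z)$ and $\Var(Y_{\bar v}\mid Z)=\gamma^2\Var_{X'}(\bar v(X'))$, this gives
\[
\Var(Y_{\bar v})=(v^\pi_{\bar v}+\sigma^\pi_{\bar v})(x).
\]
The point is that the action map $\pi(x,Z)$ does not depend on $\bar v$. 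So $Y_{\bar v_1}$ and $Y_{\bar v_2}$ can be built on the same probability space, from the same pair $(Z,X')$.

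Write $s_j:=\sqrt{(v^\pi_{\bar v_j}+\sigma^\pi_{\bar v_j})(x)}$ for $j=1,2$. The map $Y\mapsto\sqrt{\Var(Y)}$ is an $L^2$ seminorm, so
\[
|s_1-s_2|\le\sqrt{\Var(Y_{\bar v_1}-Y_{\bar v_2})}.
\]
The difference is $Y_{\bar v_1}-Y_{\bar v_2}=\gamma(\bar v_1-\bar v_2)(X')$, which lies in $[-\gamma u,\gamma u]$. Hence its variance is at most $\gamma^2u^2$, and $|s_1-s_2|\le\gamma u$.

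To convert this into a bound on the difference of the variances themselves, expand
\[
|s_1^2-s_2^2|=|s_1-s_2|\,(s_1+s_2)\le|s_1-s_2|\,\bigl(2s_2+|s_1-s_2|\bigr)\le 2\gamma u\,s_2+\gamma^2u^2.
\]
This is already stronger than the claimed bound $2\gamma^2u^2+2\sqrt2\,\gamma u\,s_2$, so the statement follows with room to spare.

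The factor $\sqrt2$ and the doubled $u^2$ term in the statement suggest the authors handle the context term and the transition term separately. That route would bound $\sqrt{v}$ via Lemma~\ref{lem:sqrt-stab} and $\sqrt{\sigma}$ by an analogous seminorm argument inside $\E_Z$, then combine the two using $\sqrt a+\sqrt b\le\sqrt{2(a+b)}$. I would present the coupled single-variable argument and only fall back on the split version if a separate bound on each term is needed later. The only step needing care is the total-variance identity: one has to check that conditioning on $Z$ reproduces exactly $g^\pi_{x,\bar v}$ and the $\gamma^2$-scaled conditional variance. That is immediate because $r$ is deterministic given $(x,Z)$. I do not expect a real obstacle.
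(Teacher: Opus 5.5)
Your proof is correct, but it takes a different route from the paper. The paper never merges the two variance terms. It treats the contextual part and the transition part separately, each through the expansion $\Var(f)-\Var(h)=\Var(f-h)+2\Cov(h,f-h)$ and Cauchy--Schwarz. For the transition part this is done pointwise in $z$, followed by Jensen to move $\E_Z$ inside the square root. This yields $\gamma^2u^2+2\gamma u\sqrt{v^\pi_{\bar v_2}(x)}$ and $\gamma^2u^2+2\gamma u\sqrt{\sigma^\pi_{\bar v_2}(x)}$. The paper then adds the two bounds using $\sqrt a+\sqrt b\le\sqrt{2(a+b)}$, which is where the $2\gamma^2u^2$ and the $2\sqrt2$ come from, as you guessed (though it uses the covariance expansion rather than Lemma~\ref{lem:sqrt-stab}). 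You instead use the total-variance identity, which the paper proves right afterwards as Lemma~\ref{lem:sampled-backup-pe}, to write $v^\pi_{\bar v}+\sigma^\pi_{\bar v}$ as the variance of a single sampled backup. You then couple the two backups on the same pair $(Z,X')$. This is legitimate because the action map does not depend on $\bar v$, so the rewards cancel and the difference is $\gamma(\bar v_1-\bar v_2)(X')$. A single seminorm triangle inequality finishes the argument. Your route is shorter, avoids the Jensen step, and gives the sharper bound $\gamma^2u^2+2\gamma u\sqrt{(v^\pi_{\bar v_2}+\sigma^\pi_{\bar v_2})(x)}$. The paper's split version yields separate perturbation bounds for $v^\pi$ and $\sigma^\pi$, but only their sum is used downstream in Lemma~\ref{lem:radius-variance-pe}, so nothing is lost by your approach.
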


\begin{proof}
Fix \(x\). Let
\begin{equation}
        f(z):=g^\pi_{x,\bar v_1}(z),
        \qquad
        h(z):=g^\pi_{x,\bar v_2}(z).
\end{equation}
Since the action map is fixed,
\begin{equation}
        |f(z)-h(z)|\le \gamma u
        \qquad\text{for every } z.
\end{equation}

For the contextual part,
\begin{equation}
        \Var_\mu(f)-\Var_\mu(h)
        =
        \Var_\mu(f-h)
        +
        2\Cov_\mu(h,f-h).
\end{equation}
By Cauchy--Schwarz,
\begin{equation}
        |\Cov_\mu(h,f-h)|
        \le
        \sqrt{\Var_\mu(h)\Var_\mu(f-h)}
        \le
        \gamma u\sqrt{v^\pi_{\bar v_2}(x)}.
\end{equation}
Moreover, \(\Var_\mu(f-h)\le\gamma^2u^2\). Hence
\begin{equation}
        |v^\pi_{\bar v_1}(x)-v^\pi_{\bar v_2}(x)|
        \le
        \gamma^2u^2
        +
        2\gamma u\sqrt{v^\pi_{\bar v_2}(x)}.
\end{equation}

For the transition part, let \(D:=\bar v_1-\bar v_2\). Pointwise in \(z\),
\begin{equation}
\left|
\Var_{X'}(\bar v_1(X'))
-
\Var_{X'}(\bar v_2(X'))
\right|
\le
u^2
+
2u\sqrt{\Var_{X'}(\bar v_2(X'))}.
\end{equation}
Multiplying by \(\gamma^2\), integrating over \(Z\sim\mu\), and using
Jensen's inequality gives
\begin{equation}
        |\sigma^\pi_{\bar v_1}(x)-\sigma^\pi_{\bar v_2}(x)|
        \le
        \gamma^2u^2
        +
        2\gamma u\sqrt{\sigma^\pi_{\bar v_2}(x)}.
\end{equation}
Summing the contextual and transition bounds and using
\(\sqrt a+\sqrt b\le\sqrt{2(a+b)}\) proves the result.
\end{proof}

We now record the basic fact that replaces the exact Bellman backup used in
the known-dynamics proof. In the known-\(P\) case, a sampled context \(Z\) was
enough to compute \(g^\pi_{x,\bar v}(Z)\). Here the algorithm uses one
transition sample to obtain an unbiased estimate of this quantity, and the
variance decomposes exactly into the two terms controlled above.

\begin{lemma}[Sampled fixed-policy backup]
\label{lem:sampled-backup-pe}
Fix \(\bar v\in\R^\Xcal\) and \(x\in\Xcal\). Let \(Z\sim\mu\), and,
conditionally on \(Z\), draw
\begin{equation}
        X^{x,Z}_1\sim P(\cdot\mid x,\pi(x,Z),Z).
\end{equation}
Define
\begin{equation}
        \hat Y_{\bar v}(x)
        :=
        r(x,\pi(x,Z),Z)
        +
        \gamma\bar v(X^{x,Z}_1).
\end{equation}
Then
\begin{equation}
        \E[\hat Y_{\bar v}(x)]
        =
        (T^\pi\bar v)(x),
\end{equation}
and
\begin{equation}
        \Var(\hat Y_{\bar v}(x))
        =
        v^\pi_{\bar v}(x)+\sigma^\pi_{\bar v}(x).
\end{equation}
\end{lemma}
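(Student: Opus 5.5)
The plan is to condition on $Z$ and apply the tower property and the law of total variance. Throughout, write $a_Z:=\pi(x,Z)$ and $P_Z:=P(\cdot\mid x,a_Z,Z)$. Conditionally on $Z$, the only remaining randomness is $X^{x,Z}_1\sim P_Z$. The reward $r(x,a_Z,Z)$ is $\sigma(Z)$-measurable, so
\[
\E\bigl[\hat Y_{\bar v}(x)\mid Z\bigr]
=
r(x,a_Z,Z)+\gamma\sum_{x'}P(x'\mid x,a_Z,Z)\bar v(x')
=
g^\pi_{x,\bar v}(Z).
\]
Taking expectation over $Z\sim\mu$ gives $\E[\hat Y_{\bar v}(x)]=\E_\mu[g^\pi_{x,\bar v}(Z)]=(T^\pi\bar v)(x)$ by the definition of $T^\pi$. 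This settles the first claim.

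For the variance, I apply the law of total variance with respect to $\sigma(Z)$:
\[
\Var(\hat Y_{\bar v}(x))
=
\Var\bigl(\E[\hat Y_{\bar v}(x)\mid Z]\bigr)
+
\E\bigl[\Var(\hat Y_{\bar v}(x)\mid Z)\bigr].
\]
The first term is $\Var_\mu(g^\pi_{x,\bar v}(Z))=v^\pi_{\bar v}(x)$ by the previous display.

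For the second term, conditionally on $Z$ the reward is a constant shift, and the factor $\gamma$ pulls out squared. Hence
\[
\Var(\hat Y_{\bar v}(x)\mid Z)=\gamma^2\Var_{X'\sim P_Z}(\bar v(X')).
\]
Averaging over $Z\sim\mu$ gives exactly $\sigma^\pi_{\bar v}(x)$ as defined before Lemma~\ref{lem:variance-perturb-pe}. Summing the two terms yields the claimed identity.

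There is no real obstacle here. The only point needing care is the measurability bookkeeping: $X^{x,Z}_1$ must be drawn conditionally on $Z$ with no other dependence. All quantities are bounded because $\Xcal$ is finite and $\bar v$ is a finite vector, so every moment exists. For a stochastic $\pi$, the same argument would simply condition on $(Z,A)$ instead, but the lemma is stated for the deterministic action map.
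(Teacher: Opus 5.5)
Your proposal is correct and follows the paper's proof essentially step for step. You condition on $Z$ to identify $\E[\hat Y_{\bar v}(x)\mid Z]=g^\pi_{x,\bar v}(Z)$, then apply the law of total variance, with the between-context term $v^\pi_{\bar v}(x)$ and the averaged conditional term $\gamma^2\Var_{X'}(\bar v(X'))$ giving $\sigma^\pi_{\bar v}(x)$.
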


\begin{proof}
Conditioning on \(Z\), we get
\begin{align}
\E[\hat Y_{\bar v}(x)\mid Z]
&=
r(x,\pi(x,Z),Z)
+
\gamma
\E\!\left[\bar v(X^{x,Z}_1)\mid Z\right]
\notag\\
&=
r(x,\pi(x,Z),Z)
+
\gamma
\sum_{x'}P(x'\mid x,\pi(x,Z),Z)\bar v(x')
\notag\\
&=
g^\pi_{x,\bar v}(Z).
\end{align}
Taking expectation over \(Z\sim\mu\) yields
\begin{equation}
        \E[\hat Y_{\bar v}(x)]
        =
        \E_{Z\sim\mu}[g^\pi_{x,\bar v}(Z)]
        =
        (T^\pi\bar v)(x).
\end{equation}

For the variance identity, use the law of total variance:
\begin{equation}
        \Var(\hat Y_{\bar v}(x))
        =
        \Var_Z\!\left(
        \E[\hat Y_{\bar v}(x)\mid Z]
        \right)
        +
        \E_Z\!\left[
        \Var(\hat Y_{\bar v}(x)\mid Z)
        \right].
\end{equation}
The first term is
\begin{equation}
        \Var_Z\!\left(
        \E[\hat Y_{\bar v}(x)\mid Z]
        \right)
        =
        \Var_{Z\sim\mu}\bigl(g^\pi_{x,\bar v}(Z)\bigr)
        =
        v^\pi_{\bar v}(x).
\end{equation}
For the second term, conditionally on \(Z\), only the transition draw
\(X^{x,Z}_1\) is random, and the reward term is fixed. Hence
\begin{equation}
        \Var(\hat Y_{\bar v}(x)\mid Z)
        =
        \gamma^2
        \Var_{X'\sim P(\cdot\mid x,\pi(x,Z),Z)}
        \bigl(\bar v(X')\bigr).
\end{equation}
Averaging over \(Z\sim\mu\) gives
\begin{equation}
        \E_Z[
        \Var(\hat Y_{\bar v}(x)\mid Z)
        ]
        =
        \sigma^\pi_{\bar v}(x).
\end{equation}
Combining the two terms proves the result.
\end{proof}

The halving scheme is the same as in the known-dynamics proof, except that
exact backups are replaced by the sampled backups of
Lemma~\ref{lem:sampled-backup-pe}. There is one additional technical point:
since the initial radius of the meta-algorithm is \(u_0=\beta\), the proof
must close uniformly for radii \(u\in(0,\beta]\). The perturbation lemma gives
a variance bound at radius \(u\) with a factor \(1\vee u\); hence the anchor
batch must contain the same factor.

For a radius \(u\in(0,\beta]\), set
\begin{equation}
        \rho(u):=1\vee u.
\end{equation}
The anchor batch size is chosen as
\begin{equation}
        n_1
        =
        \left\lceil
        c_2\,\rho(u)\,\beta^3u^{-2}L
        \right\rceil,
\end{equation}
where \(c_2\) is an absolute numerical constant. This distinction is needed
for the present proof: without the factor \(\rho(u)\), the variance term in
the one-call halving argument would scale as \(u\sqrt{1\vee u}\) rather than
as \(u\), and the halving step would not close uniformly for large radii.

Equivalently,
\begin{equation}
        n_1=\widetilde\Theta(\beta^3/u)
        \quad\text{when }u>1,
        \qquad
        n_1=\widetilde\Theta(\beta^3/u^2)
        \quad\text{when }u\le1.
\end{equation}
Thus the expensive \(u^{-2}\) scaling is paid only once the target radius is
below \(1\), while the coarse-radius calls cost only
\(\widetilde O(\beta^3/u)\).

All logarithmic factors below are absorbed in
\begin{equation}
        L := \log(64|\Xcal|R\,n_1\,n_2/\delta),
\end{equation}
with constants chosen large enough to make rounding effects immaterial.

\begin{algorithm}[H]
\caption{\textsc{HalfErr-PE} under unknown \(\mu\) and unknown \(P\)}
\label{alg:halferr-pe}
\begin{algorithmic}[1]
\Require Generative oracles for \(\mu\) and \(P\); reward \(r\); oracle for
\(\pi\); initial value \(\bar v^{(0)}\) satisfying
\(0\le \bar v^{(0)}\le\beta\mathbf 1\),
\(\bar v^{(0)}\le T^\pi\bar v^{(0)}\), and
\(\bar V^\pi-\bar v^{(0)}\le u\mathbf 1\);
radius \(u\in(0,\beta]\); confidence \(\delta\in(0,1)\).
\Ensure \(\bar v\) such that
\(\bar v\le T^\pi\bar v\) and
\(\bar V^\pi-\bar v\le(u/2)\mathbf 1\).
\State \(R\gets\lceil c_1\beta\log(4\beta/u)\rceil\).
\State \(n_1\gets\lceil c_2(1\vee u)\beta^3u^{-2}L\rceil\),
\(n_2\gets\lceil c_3\beta^2L\rceil\),
\(\alpha_1\gets L/n_1\).
\Statex \textbf{Phase 1: anchor estimate of \(T^\pi\bar v^{(0)}\).}
\State Draw \(Z_1,\ldots,Z_{n_1}\stackrel{\mathrm{iid}}{\sim}\mu\).
\For{each \(i\in[n_1]\) and \(x\in\Xcal\)}
    \State Draw
    \(X^{x,Z_i}_1\sim P(\cdot\mid x,\pi(x,Z_i),Z_i)\), independently.
\EndFor
\For{each \(x\in\Xcal\)}
    \State \(\displaystyle
    \hat Y_i(x)
    \gets
    r(x,\pi(x,Z_i),Z_i)
    +
    \gamma \bar v^{(0)}(X^{x,Z_i}_1),
    \qquad i=1,\ldots,n_1.
    \)
    \State \(\displaystyle
    \tilde w(x)\gets\frac1{n_1}\sum_{i=1}^{n_1}\hat Y_i(x),
    \qquad
    \hat S(x)\gets\frac1{n_1}\sum_{i=1}^{n_1}\hat Y_i(x)^2.
    \)
    \State \(\displaystyle
    \hat\sigma(x)\gets\max\{0,\hat S(x)-\tilde w(x)^2\}.
    \)
    \State \(\displaystyle
    w(x)
    \gets
    \tilde w(x)
    -
    \sqrt{2\alpha_1\hat\sigma(x)}
    -
    4\alpha_1^{3/4}\beta
    -
    \frac23\alpha_1\beta.
    \)
\EndFor
\Statex \textbf{Phase 2: variance-reduced iterations.}
\For{\(i=1,\ldots,R\)}
    \State Draw
    \(\tilde Z^{(i)}_1,\ldots,\tilde Z^{(i)}_{n_2}
    \stackrel{\mathrm{iid}}{\sim}\mu\).
    \For{each \(j\in[n_2]\) and \(x\in\Xcal\)}
        \State Draw
        \(\displaystyle
        \tilde X^{(i),x,j}_1
        \sim
        P(\cdot\mid x,\pi(x,\tilde Z^{(i)}_j),\tilde Z^{(i)}_j).
        \)
        \Comment{The same transition is used for both values.}
    \EndFor
    \For{each \(x\in\Xcal\)}
        \State \(\displaystyle
        \Delta^{(i)}(x)
        \gets
        \frac1{n_2}
        \sum_{j=1}^{n_2}
        \gamma
        \left[
        \bar v^{(i-1)}(\tilde X^{(i),x,j}_1)
        -
        \bar v^{(0)}(\tilde X^{(i),x,j}_1)
        \right]
        -
        \frac{u}{16\beta}.
        \)
        \State \(\displaystyle
        \bar v^{(i)}(x)
        \gets
        \max\{w(x)+\Delta^{(i)}(x),\bar v^{(i-1)}(x)\}.
        \)
    \EndFor
\EndFor
\State \Return \(\bar v^{(R)}\).
\end{algorithmic}
\end{algorithm}

The constants \(c_1,c_2,c_3\) are absolute. It is enough to take
\begin{equation}
        c_1\ge4,
        \qquad
        c_2\ge 2^{18},
        \qquad
        c_3\ge512.
\end{equation}
Let \(\Scal_0\) collect the contexts and transitions drawn in Phase 1, and
let \(\Scal_i\) collect those drawn in Phase 2, iteration \(i\). Define
\begin{equation}
        \Hcal_0:=\sigma(\Scal_0),
        \qquad
        \Hcal_i:=\sigma(\Hcal_0,\Scal_1,\ldots,\Scal_i).
\end{equation}
We also write
\begin{equation}
        (v_0,\sigma_0)
        :=
        (v^\pi_{\bar v^{(0)}},\sigma^\pi_{\bar v^{(0)}}),
        \qquad
        (v_\star,\sigma_\star)
        :=
        (v^\pi_{\bar V^\pi},\sigma^\pi_{\bar V^\pi})
        =
        (v^\pi,\sigma^\pi).
\end{equation}

The next two lemmas are the analogues of
Lemmas~\ref{lem:anchor-error} and~\ref{lem:inner-error}. The proofs are the
same concentration arguments as in the known-dynamics case; the only new
input is Lemma~\ref{lem:sampled-backup-pe}, which identifies the correct
mean and variance of the sampled backup.

\begin{lemma}[Anchor concentration]
\label{lem:anchor-error-pe}
With probability at least \(1-\delta\), the following inequalities hold
simultaneously for all \(x\in\Xcal\):
\begin{align}
\left|
\tilde w(x)-(T^\pi\bar v^{(0)})(x)
\right|
&\le
\sqrt{2\alpha_1(v_0+\sigma_0)(x)}
+
\frac23\alpha_1\beta,
\label{eq:anchor-mean-pe}
\\
\left|
\hat\sigma(x)-(v_0+\sigma_0)(x)
\right|
&\le
4\beta^2\sqrt{2\alpha_1}.
\label{eq:anchor-var-pe}
\end{align}
\end{lemma}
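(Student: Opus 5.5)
The plan mirrors the proof of Lemma~\ref{lem:anchor-error}, with Lemma~\ref{lem:sampled-backup-pe} supplying the mean and variance of the samples. Fix $x\in\Xcal$. Since $\bar v^{(0)}$ is deterministic (it is the input), the pairs $(Z_i,X^{x,Z_i}_1)$ for $i=1,\dots,n_1$ are i.i.d. Hence the variables $\hat Y_i(x)$ are i.i.d. By Lemma~\ref{lem:sampled-backup-pe} they have mean $(T^\pi\bar v^{(0)})(x)$ and variance $(v_0+\sigma_0)(x)$. They take values in $[0,\beta]$, because $r\in[0,1]$ and $0\le\bar v^{(0)}\le\beta\mathbf 1$ give $0\le \hat Y_i(x)\le 1+\gamma\beta=\beta$.

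\emph{Mean bound \eqref{eq:anchor-mean-pe}.} We apply Bernstein's inequality to $\tilde w(x)$ with variance proxy $(v_0+\sigma_0)(x)$ and range $\beta$, at failure level $\delta/(2|\Xcal|)$. Inverting the quadratic gives a deviation of at most $\sqrt{2(v_0+\sigma_0)(x)\log(4|\Xcal|/\delta)/n_1}+\tfrac23\beta\log(4|\Xcal|/\delta)/n_1$. Since $L=\log(64|\Xcal|Rn_1n_2/\delta)$ dominates $\log(4|\Xcal|/\delta)$, this is at most $\sqrt{2\alpha_1(v_0+\sigma_0)(x)}+\tfrac23\alpha_1\beta$. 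A union bound over $x$ costs total failure $\delta/2$.

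\emph{Variance bound \eqref{eq:anchor-var-pe}.} Write the unclipped estimate as $\hat S(x)-\tilde w(x)^2$ and the target as $S(x)-(T^\pi\bar v^{(0)})(x)^2$, where $S(x)=\E[\hat Y_1(x)^2]$. Apply Hoeffding to $\hat S(x)$ (range $[0,\beta^2]$) and to $\tilde w(x)$ (range $[0,\beta]$), each at level $\delta/(4|\Xcal|)$. Then use $|a^2-b^2|\le 2\beta|a-b|$ for $a,b\in[0,\beta]$. This gives $|\hat S-\tilde w^2-(v_0+\sigma_0)|\le 3\beta^2\sqrt{L/(2n_1)}\le 4\beta^2\sqrt{2\alpha_1}$. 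The one new point is the clipping $\hat\sigma=\max\{0,\hat S-\tilde w^2\}$. Since the target $(v_0+\sigma_0)(x)\ge0$, projecting onto $[0,\infty)$ is $1$-Lipschitz and fixes the target, so clipping can only decrease the distance. A union bound over $x$ again costs $\delta/2$.

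Combining the two events gives total failure at most $\delta$. I expect no real obstacle here. The only care points are (a) verifying that the mean and variance of the sampled backups are exactly those of Lemma~\ref{lem:sampled-backup-pe}, which requires that the transitions at different $x$ be drawn independently given $Z_i$ (only per-$x$ independence across $i$ is actually used), and (b) checking that the enlarged logarithm $L$ only loosens the constants.
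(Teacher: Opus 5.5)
Your proposal is correct and follows the paper's route. You identify the mean and variance of the i.i.d.\ sampled backups via Lemma~\ref{lem:sampled-backup-pe}, rerun the Bernstein/Hoeffding argument of Lemma~\ref{lem:anchor-error} with $v_0$ replaced by $v_0+\sigma_0$, and union-bound over $\Xcal$; your extra checks on the enlarged $L$ and on the clipping in $\hat\sigma$ are valid, and the clipping is in fact a no-op since $\hat S(x)\ge\tilde w(x)^2$ always holds for an empirical second moment.
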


\begin{proof}
For fixed \(x\), the variables \(\hat Y_i(x)\), \(i=1,\ldots,n_1\), are
independent, bounded in \([0,\beta]\), and by
Lemma~\ref{lem:sampled-backup-pe} satisfy
\begin{equation}
        \E[\hat Y_i(x)]=(T^\pi\bar v^{(0)})(x),
        \qquad
        \Var(\hat Y_i(x))=(v_0+\sigma_0)(x).
\end{equation}
The Bernstein bound for the empirical mean and the Hoeffding bound for the
empirical second moment are therefore identical to the proof of
Lemma~\ref{lem:anchor-error}, with \(v_0\) replaced by \(v_0+\sigma_0\).
A union bound over \(x\in\Xcal\) gives
\eqref{eq:anchor-mean-pe}--\eqref{eq:anchor-var-pe}.
\end{proof}

For each \(i\in[R]\), let \(\Ecal_i\) denote the event that the conclusion
of the next lemma holds at iteration \(i\).

\begin{lemma}[Increment concentration]
\label{lem:inner-error-pe}
Fix \(i\in[R]\). On the event
\begin{equation}
        \|\bar v^{(i-1)}-\bar v^{(0)}\|_\infty\le u,
\end{equation}
which is \(\Hcal_{i-1}\)-measurable, the following holds with conditional
probability at least \(1-\delta/R\): for every \(x\in\Xcal\),
\begin{equation}
\E_\mu
\left[
g^\pi_{x,\bar v^{(i-1)}}(Z)
-
g^\pi_{x,\bar v^{(0)}}(Z)
\right]
-
\frac{u}{8\beta}
\le
\Delta^{(i)}(x)
\le
\E_\mu
\left[
g^\pi_{x,\bar v^{(i-1)}}(Z)
-
g^\pi_{x,\bar v^{(0)}}(Z)
\right].
\end{equation}
\end{lemma}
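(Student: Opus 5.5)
We would follow the proof of Lemma~\ref{lem:inner-error} almost verbatim. The only new input is an identification of the mean of the sampled increment, in the spirit of Lemma~\ref{lem:sampled-backup-pe}. Throughout, we work conditionally on $\Hcal_{i-1}$. Since $\bar v^{(i-1)}$ is $\Hcal_{i-1}$-measurable, both $\bar v^{(i-1)}$ and $\bar v^{(0)}$ are then deterministic vectors. The batch $\Scal_i$ (the contexts $\tilde Z^{(i)}_j$ and the transitions $\tilde X^{(i),x,j}_1$) is drawn fresh, hence is independent of $\Hcal_{i-1}$. For each fixed $x$, we define the summands
\[
\Phi^{(x)}_j:=\gamma\bigl[\bar v^{(i-1)}(\tilde X^{(i),x,j}_1)-\bar v^{(0)}(\tilde X^{(i),x,j}_1)\bigr],\qquad j=1,\dots,n_2 .
\]
Conditionally on $\Hcal_{i-1}$, these are i.i.d. across $j$, because each pair $(\tilde Z^{(i)}_j,\tilde X^{(i),x,j}_1)$ is an independent draw from the same joint law.

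\emph{Step 1 (mean).} Condition first on $\tilde Z^{(i)}_j=z$. The transition is drawn from $P(\cdot\mid x,\pi(x,z),z)$, so
\[
\E\bigl[\Phi^{(x)}_j\mid \Hcal_{i-1},\tilde Z^{(i)}_j=z\bigr]=\gamma\sum_{x'}P(x'\mid x,\pi(x,z),z)\bigl(\bar v^{(i-1)}-\bar v^{(0)}\bigr)(x')=g^\pi_{x,\bar v^{(i-1)}}(z)-g^\pi_{x,\bar v^{(0)}}(z).
\]
The last equality holds because the action map $\pi(x,z)$ does not depend on the value vector, so the reward terms cancel exactly. This is precisely why the algorithm uses the \emph{same} transition draw for both values. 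Averaging over $z\sim\mu$ then gives $\E[\Phi^{(x)}_j\mid\Hcal_{i-1}]=\E_\mu[g^\pi_{x,\bar v^{(i-1)}}(Z)-g^\pi_{x,\bar v^{(0)}}(Z)]$.

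\emph{Step 2 (range and Hoeffding).} On the event $\|\bar v^{(i-1)}-\bar v^{(0)}\|_\infty\le u$, we have $|\Phi^{(x)}_j|\le\gamma u$ pointwise, so each summand has range at most $2\gamma u$. Conditional Hoeffding and a union bound over $x\in\Xcal$ then give the following: with conditional probability at least $1-\delta/R$, for every $x$,
\[
\Bigl|\tfrac1{n_2}\textstyle\sum_j\Phi^{(x)}_j-\E_\mu[g^\pi_{x,\bar v^{(i-1)}}-g^\pi_{x,\bar v^{(0)}}]\Bigr|\le\gamma u\sqrt{2\log(2R|\Xcal|/\delta)/n_2}.
\]
We have $L=\log(64|\Xcal|Rn_1n_2/\delta)\ge\log(2R|\Xcal|/\delta)$ and $n_2\ge c_3\beta^2L$ with $c_3\ge512$. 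Hence the right-hand side is at most $\gamma u\sqrt{2/c_3}/\beta\le u/(16\beta)$.

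\emph{Step 3 (shift).} By definition, $\Delta^{(i)}(x)=\tfrac1{n_2}\sum_j\Phi^{(x)}_j-u/(16\beta)$. Subtracting $u/(16\beta)$ turns the two-sided window of half-width $u/(16\beta)$ into the one-sided statement: $\Delta^{(i)}(x)$ lies below the true mean, and at most $u/(8\beta)$ below it. This is the claim.

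The only step that needs genuine care is Step 1, namely checking that the coupled single transition draw is unbiased for the increment $T^\pi\bar v^{(i-1)}-T^\pi\bar v^{(0)}$ while keeping the range $O(u)$ rather than $O(\beta)$. Here the fixed-policy structure is used in an essential way: with a $\max_a$ the cancellation would fail. A secondary subtlety is measurability, that is, conditioning on $\Hcal_{i-1}$ so that $\bar v^{(i-1)}$ is a fixed vector when Hoeffding is applied. This is handled exactly as in Lemma~\ref{lem:inner-error}.
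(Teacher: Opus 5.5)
Your proposal is correct and follows the paper's proof. Both condition on $\Hcal_{i-1}$ and use the shared transition draw, so the reward terms cancel and each summand is unbiased for the increment with magnitude at most $\gamma u$. Both then apply conditional Hoeffding with a union bound over $\Xcal$, followed by the deterministic $-u/(16\beta)$ shift. Your explicit checks that the summand range is $2\gamma u$ and that $L\ge\log(2R|\Xcal|/\delta)$ fill in small details the paper leaves implicit.
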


\begin{proof}
Condition on \(\Hcal_{i-1}\). The iterate \(\bar v^{(i-1)}\) is then fixed.
For fixed \(x\), the summands in \(\Delta^{(i)}(x)\) are independent and
equal to
\begin{equation}
        \gamma
        \left[
        \bar v^{(i-1)}(\tilde X^{(i),x,j}_1)
        -
        \bar v^{(0)}(\tilde X^{(i),x,j}_1)
        \right].
\end{equation}
The same transition sample is used for the two values. Therefore their
conditional expectation is
\begin{equation}
\E_\mu
\left[
g^\pi_{x,\bar v^{(i-1)}}(Z)
-
g^\pi_{x,\bar v^{(0)}}(Z)
\right],
\end{equation}
because the reward terms cancel. On the event
\(\|\bar v^{(i-1)}-\bar v^{(0)}\|_\infty\le u\), each summand has range at
most \(\gamma u\). Hoeffding's inequality and a union bound over
\(x\in\Xcal\) give
\begin{equation}
\left|
\frac1{n_2}\sum_{j=1}^{n_2}
\gamma
\left[
\bar v^{(i-1)}(\tilde X^{(i),x,j}_1)
-
\bar v^{(0)}(\tilde X^{(i),x,j}_1)
\right]
-
\E_\mu[g^\pi_{x,\bar v^{(i-1)}}-g^\pi_{x,\bar v^{(0)}}]
\right|
\le
\gamma u\sqrt{\frac{2L}{n_2}}.
\end{equation}
Since \(n_2\ge c_3\beta^2L\) and \(c_3\ge512\),
\begin{equation}
        \gamma u\sqrt{\frac{2L}{n_2}}
        \le
        \frac{u}{16\beta}.
\end{equation}
The deterministic shift \(-u/(16\beta)\) in the definition of
\(\Delta^{(i)}\) turns this two-sided estimate into the claimed one-sided
sandwich.
\end{proof}

The next two lemmas are structurally identical to
Lemmas~\ref{lem:sandwich} and~\ref{lem:contraction}. We state them for
completeness, but the proofs only differ by replacing the exact context
backup by the sampled-backup estimates above and by replacing \(v_0\) with
\(v_0+\sigma_0\).

Let \(\Ecal_0\) denote the event of Lemma~\ref{lem:anchor-error-pe}.

\begin{lemma}[Pointwise Bellman sandwich]
\label{lem:sandwich-pe}
On \(\Ecal_0\cap\Ecal_i\), if
\begin{equation}
        \|\bar v^{(i-1)}-\bar v^{(0)}\|_\infty\le u,
\end{equation}
then, for every \(x\in\Xcal\),
\begin{equation}
        w(x)+\Delta^{(i)}(x)
        \le
        (T^\pi\bar v^{(i-1)})(x),
\end{equation}
and
\begin{equation}
        w(x)+\Delta^{(i)}(x)
        \ge
        (T^\pi\bar v^{(i-1)})(x)-\xi(x),
\end{equation}
where
\begin{equation}
        \xi(x)
        :=
        \frac{u}{8\beta}
        +
        2\sqrt{2\alpha_1(v_0+\sigma_0)(x)}
        +
        8\alpha_1^{3/4}\beta
        +
        \frac43\alpha_1\beta.
\end{equation}
\end{lemma}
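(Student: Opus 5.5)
The proof follows that of Lemma~\ref{lem:sandwich} line by line. The only changes are that $v_0$ is replaced by $v_0+\sigma_0$ and $T$ by $T^\pi$. We work on $\Ecal_0\cap\Ecal_i$ and first record the linear decomposition
\[
(T^\pi\bar v^{(i-1)})(x)=(T^\pi\bar v^{(0)})(x)+\E_\mu\bigl[g^\pi_{x,\bar v^{(i-1)}}(Z)-g^\pi_{x,\bar v^{(0)}}(Z)\bigr].
\]
This holds with no subtlety here: the action $\pi(x,z)$ does not depend on $\bar v$, and $T^\pi$ is affine.

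\emph{Step 1 (bounds on $w$).} From Lemma~\ref{lem:anchor-error-pe}, equation \eqref{eq:anchor-mean-pe} sandwiches $\tilde w(x)$ around $(T^\pi\bar v^{(0)})(x)$ with width $\sqrt{2\alpha_1(v_0+\sigma_0)(x)}+\tfrac23\alpha_1\beta$. Equation \eqref{eq:anchor-var-pe}, combined with $\sqrt{a+b}\le\sqrt a+\sqrt b$, gives
\[
\bigl|\sqrt{\hat\sigma(x)}-\sqrt{(v_0+\sigma_0)(x)}\bigr|\le 2\beta(2\alpha_1)^{1/4}.
\]
Two points need checking. First, the truncation $\hat\sigma=\max\{0,\cdot\}$ in Algorithm~\ref{alg:halferr-pe} only helps, since projecting onto $[0,\infty)$ does not increase distance to the nonnegative target $(v_0+\sigma_0)(x)$; so \eqref{eq:anchor-var-pe} still holds for the truncated $\hat\sigma$. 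Second, the square-root step uses $|\sqrt a-\sqrt b|\le\sqrt{|a-b|}$. Substituting these into $w(x)=\tilde w(x)-\sqrt{2\alpha_1\hat\sigma(x)}-4\alpha_1^{3/4}\beta-\tfrac23\alpha_1\beta$ gives the upper bound $w(x)\le (T^\pi\bar v^{(0)})(x)+(2\cdot2^{3/4}-4)\alpha_1^{3/4}\beta\le (T^\pi\bar v^{(0)})(x)$. It also gives the lower bound
\[
w(x)\ge (T^\pi\bar v^{(0)})(x)-2\sqrt{2\alpha_1(v_0+\sigma_0)(x)}-8\alpha_1^{3/4}\beta-\tfrac43\alpha_1\beta,
\]
using $4+2\cdot2^{3/4}\le8$. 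This arithmetic is identical to Lemma~\ref{lem:sandwich}.

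\emph{Step 2 (bounds on $\Delta^{(i)}$).} The hypothesis $\|\bar v^{(i-1)}-\bar v^{(0)}\|_\infty\le u$ activates Lemma~\ref{lem:inner-error-pe}. On $\Ecal_i$ it gives
\[
\E_\mu[g^\pi_{x,\bar v^{(i-1)}}-g^\pi_{x,\bar v^{(0)}}]-\tfrac{u}{8\beta}\le\Delta^{(i)}(x)\le\E_\mu[g^\pi_{x,\bar v^{(i-1)}}-g^\pi_{x,\bar v^{(0)}}].
\]

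\emph{Step 3 (combining).} Adding the upper bounds from Steps 1 and 2 and applying the decomposition yields $w+\Delta^{(i)}\le T^\pi\bar v^{(i-1)}$. Adding the lower bounds yields $w+\Delta^{(i)}\ge T^\pi\bar v^{(i-1)}-\xi$, with $\xi$ exactly as stated.

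I expect no real obstacle; the argument is bookkeeping. The only step needing care is Step 1. There one must confirm two things: the range bound $\hat Y_i(x)\in[0,\beta]$ used in Lemma~\ref{lem:anchor-error-pe}, which holds because $0\le\bar v^{(0)}\le\beta\mathbf 1$ and $r\in[0,1]$; and that the truncated empirical variance still obeys \eqref{eq:anchor-var-pe}.
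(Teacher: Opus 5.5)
Your proposal is correct and follows the paper's proof: the paper also reruns the algebra of Lemma~\ref{lem:sandwich} with $(v_0+\sigma_0)(x)$ in place of $v_0(x)$, takes the one-sided bounds on $\Delta^{(i)}$ from Lemma~\ref{lem:inner-error-pe}, and adds the two estimates using linearity of $T^\pi$. Your extra checks are harmless and consistent with the paper's assumptions: the truncation of $\hat\sigma$ (in fact vacuous, since $\hat S(x)-\tilde w(x)^2\ge 0$ always holds for an empirical variance) and the range $\hat Y_i(x)\in[0,\beta]$.
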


\begin{proof}
The proof is the same algebra as in Lemma~\ref{lem:sandwich}. On
\(\Ecal_0\), Lemma~\ref{lem:anchor-error-pe} gives the same upper and lower
bounds on \(w\) as in the known-dynamics case, with
\((v_0+\sigma_0)(x)\) replacing \(v_0(x)\). On \(\Ecal_i\),
Lemma~\ref{lem:inner-error-pe} gives the same one-sided bounds for
\(\Delta^{(i)}\). Adding these two estimates and using the linearity of
\(T^\pi\) yields the two displayed inequalities.
\end{proof}

\begin{lemma}[Monotonicity and contraction]
\label{lem:contraction-pe}
Let
\begin{equation}
        \Ecal:=\bigcap_{i=0}^R\Ecal_i.
\end{equation}
Then \(\Pr(\Ecal)\ge1-2\delta\). On \(\Ecal\), for every
\(i=1,\ldots,R\),
\begin{enumerate}[label=\rm(\roman*),leftmargin=*]
\item
\begin{equation}
        \bar v^{(0)}
        \le
        \bar v^{(i-1)}
        \le
        \bar v^{(i)}
        \le
        \bar V^\pi;
\end{equation}
\item
\begin{equation}
        \bar v^{(i)}\le T^\pi\bar v^{(i)};
\end{equation}
\item
\begin{equation}
        \|\bar v^{(i)}-\bar v^{(0)}\|_\infty\le u;
\end{equation}
\item
\begin{equation}
        \bar V^\pi-\bar v^{(i)}
        \le
        \gamma\bar P^\pi
        (\bar V^\pi-\bar v^{(i-1)})
        +
        \xi .
\end{equation}
\end{enumerate}
\end{lemma}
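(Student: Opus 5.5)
The plan is to mirror the proof of Lemma~\ref{lem:contraction} line by line, with $T^\pi$ in place of $T$, $\bar V^\pi$ in place of $\bar V^\star$, and Lemmas~\ref{lem:anchor-error-pe}, \ref{lem:inner-error-pe}, \ref{lem:sandwich-pe} in place of their known-dynamics counterparts.

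\emph{Probability bound.} The event $\Ecal_0$ has probability at least $1-\delta$ by Lemma~\ref{lem:anchor-error-pe}. For $i\ge1$, Lemma~\ref{lem:inner-error-pe} gives $\Pr(\Ecal_i\mid\Hcal_{i-1})\ge 1-\delta/R$, but only on $\{\|\bar v^{(i-1)}-\bar v^{(0)}\|_\infty\le u\}$. The induction below shows that this event contains $\Ecal_0\cap\dots\cap\Ecal_{i-1}$. Two facts then close the argument:
\begin{itemize}
\item the batch $\Scal_i$ is independent of $\Hcal_{i-1}$;
\item the tower property combined with a union bound over $i=1,\dots,R$ gives $\Pr(\Ecal)\ge 1-\delta-R\cdot\delta/R=1-2\delta$.
\end{itemize}
To make this rigorous without circularity, I would define modified events $\Ecal_i'$ equal to $\Ecal_i$ on the distance event and to the whole space otherwise. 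I would then bound $\Pr(\cap\Ecal_i')$ and observe that on $\cap\Ecal_i'$ the induction forces the distance events, so that $\cap\Ecal_i'=\cap\Ecal_i$.

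\emph{Induction for (i)--(iv).} The base case comes from the input hypotheses: $\bar v^{(0)}\le T^\pi\bar v^{(0)}$ and $\bar V^\pi-\bar v^{(0)}\le u\mathbf 1$. The first, together with monotonicity and contraction of $T^\pi$, gives $\bar v^{(0)}\le\bar V^\pi$. Assuming (i)--(iii) at $i-1$, (iii) activates Lemma~\ref{lem:sandwich-pe}. The steps are then:
\begin{itemize}
\item \emph{(i).} The $\max$ update gives $\bar v^{(i)}\ge\bar v^{(i-1)}$. The sandwich upper bound and monotonicity of $T^\pi$ give $w+\Delta^{(i)}\le T^\pi\bar v^{(i-1)}\le T^\pi\bar V^\pi=\bar V^\pi$.
\item \emph{(ii).} I would split into two cases according to which branch of the $\max$ is active, exactly as in Lemma~\ref{lem:contraction}.
\item \emph{(iii).} This follows from (i): $0\le\bar v^{(i)}-\bar v^{(0)}\le\bar V^\pi-\bar v^{(0)}\le u$.
\item \emph{(iv).} The sandwich lower bound gives $\bar v^{(i)}\ge T^\pi\bar v^{(i-1)}-\xi$. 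Since $T^\pi$ is affine, $\bar V^\pi-T^\pi\bar v^{(i-1)}=\gamma\bar P^\pi(\bar V^\pi-\bar v^{(i-1)})$ holds with equality, so no optimal-policy comparison is needed.
\end{itemize}

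\emph{Main obstacle.} The only delicate point is the conditioning and measurability bookkeeping in the probability bound, since each $\Ecal_i$ is defined conditionally on an event that the induction itself must certify. The modified-event device above handles this. A secondary check is that the lower bound $w\ge0$-type facts used implicitly, in particular $\bar v^{(i)}\in[0,\beta]$, are not needed, because monotonicity of $T^\pi$ holds on all of $\R^\Xcal$.
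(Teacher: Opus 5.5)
Your proposal is correct and follows the paper's proof. The paper likewise reduces to the conditional union bound of Lemma~\ref{lem:contraction} plus the verbatim induction, using only the sandwich of Lemma~\ref{lem:sandwich-pe}, monotonicity of $T^\pi$, and $\bar V^\pi=T^\pi\bar V^\pi$. Your modified-event device and your derivation of $\bar v^{(0)}\le\bar V^\pi$ from the sub-solution hypothesis (rather than from $\bar V^\pi-\bar v^{(0)}\le u\mathbf 1$) make explicit two points the paper leaves implicit.
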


\begin{proof}
The probability bound follows from Lemmas~\ref{lem:anchor-error-pe} and
\ref{lem:inner-error-pe} by the same conditional union-bound argument as in
Lemma~\ref{lem:contraction}. The four inductive claims then follow verbatim
from the proof of Lemma~\ref{lem:contraction}, replacing \(T\) by \(T^\pi\)
and \(\bar V^\star\) by \(\bar V^\pi\). The only ingredients are the
sandwich of Lemma~\ref{lem:sandwich-pe}, monotonicity of \(T^\pi\), and the
Bellman identity
\begin{equation}
        \bar V^\pi=T^\pi\bar V^\pi.
\end{equation}
\end{proof}

The following lemma is the point at which the unknown-\(P\) analysis differs
substantially from the known-dynamics proof. Since the anchor samples are
drawn at \(\bar v^{(0)}\), the resolvent term in the halving proof contains
\(v_0+\sigma_0\), not \(v_\star+\sigma_\star\). The perturbation lemma shows
that this only costs a factor \(1\vee u\), which is exactly the reason for
the anchor batch size used in Algorithm~\ref{alg:halferr-pe}.

\begin{lemma}[Variance functional at radius \(u\)]
\label{lem:radius-variance-pe}
Assume
\begin{equation}
        0\le \bar v^{(0)}\le\bar V^\pi,
        \qquad
        \bar V^\pi-\bar v^{(0)}\le u\mathbf 1,
        \qquad
        u\in(0,\beta].
\end{equation}
Then
\begin{equation}
\left\|
(I-\gamma^2\bar P^\pi)^{-1}
(v_0+\sigma_0)
\right\|_\infty
\le
6(1\vee u)\beta^2.
\end{equation}
\end{lemma}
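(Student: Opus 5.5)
The bound follows from the perturbation Lemma~\ref{lem:variance-perturb-pe}, applied pointwise with $\bar v_1=\bar v^{(0)}$ and $\bar v_2=\bar V^\pi$, combined with the resolvent bound of Lemma~\ref{lem:variance-bound-pe}. Write $s_\star:=v_\star+\sigma_\star=v^\pi+\sigma^\pi$ and $s_0:=v_0+\sigma_0$. Since $0\le\bar V^\pi-\bar v^{(0)}\le u\mathbf 1$, we have $\|\bar v^{(0)}-\bar V^\pi\|_\infty\le u$, so Lemma~\ref{lem:variance-perturb-pe} gives, coordinate-wise,
\[
s_0\le s_\star+2\gamma^2u^2\mathbf 1+2\sqrt2\,\gamma u\sqrt{s_\star}.
\]
The plan is to apply the nonnegative operator $(I-\gamma^2\bar P^\pi)^{-1}=\sum_t\gamma^{2t}(\bar P^\pi)^t$, which preserves coordinate-wise inequalities, and bound the three resulting terms separately.

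The first term is at most $\beta^2$ by Lemma~\ref{lem:variance-bound-pe}. For the second, use $\|(I-\gamma^2\bar P^\pi)^{-1}\mathbf 1\|_\infty\le(1-\gamma^2)^{-1}\le\beta$. The contribution is then at most $2u^2\beta$, and with $u\le\beta$ this is $\le 2u\beta^2$. The cross term is the delicate one. Applying Corollary~\ref{cor:V-pi-bound-pe} directly would give $\beta^{3/2}$ with the $\gamma$-resolvent rather than the $\gamma^2$-resolvent. Instead I would use AM--GM pointwise:
\[
2\sqrt2\,\gamma u\sqrt{s_\star}\le \lambda s_\star+\frac{2\gamma^2u^2}{\lambda}.
\]
With $\lambda=1$ this yields $s_\star+2u^2\mathbf 1$, so the cross term contributes at most $\beta^2+2u^2\beta\le\beta^2+2u\beta^2$.

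Summing the three contributions gives
\[
\|(I-\gamma^2\bar P^\pi)^{-1}s_0\|_\infty\le 2\beta^2+4u\beta^2\le 6(1\vee u)\beta^2.
\]
This is exactly the claim, since $2+4u\le6\max(1,u)$.

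The main obstacle is avoiding a $\beta^{3}$-type loss in the constant terms. Using $u^2\beta$ rather than $u^2\beta^2$ relies on the $(1-\gamma^2)^{-1}\le\beta$ resolvent norm and on $u\le\beta$ to trade one factor of $u$ for $\beta$. Handling the cross term by Cauchy--Schwarz through Corollary~\ref{cor:V-pi-bound-pe} would give a $u\beta^{3/2}\cdot\beta^{1/2}$ term, which is also acceptable, but the AM--GM route keeps everything in the $\gamma^2$-resolvent and avoids that detour. One should also double-check that the perturbation lemma's right-hand side involves $s_\star$ (the $\bar v_2$ argument), so that the inequality is oriented correctly.
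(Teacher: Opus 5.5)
Your proof is correct. It shares the paper's skeleton: Lemma~\ref{lem:variance-perturb-pe} with $\bar v_1=\bar v^{(0)}$ and $\bar v_2=\bar V^\pi$, then Lemma~\ref{lem:variance-bound-pe}, then $(1-\gamma^2)^{-1}\le\beta$. The difference is how you treat the cross term $2\sqrt2\,\gamma u\sqrt{s_\star}$.

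\emph{The paper's route.} It uses the pointwise estimate $s_\star(x)\le\beta^2$, which holds because the one-step target lies in $[0,\beta]$. This gives the uniform bound $\|s_0-s_\star\|_\infty\le 2u^2+2\sqrt2\,u\beta\le 5u\beta$. One application of the resolvent norm then yields $\beta^2+5u\beta^2$.

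\emph{Your route.} You absorb the cross term into $s_\star$ by AM--GM, obtaining $s_0\le 2s_\star+4\gamma^2u^2\mathbf 1$. The only variance input is then Lemma~\ref{lem:variance-bound-pe} itself, and no pointwise bound on $s_\star$ is needed. This costs a factor $2$ on the leading $\beta^2$. In exchange, your intermediate estimate $2\beta^2+4u^2\beta$ is sharper: it is $O(\beta^2)$ whenever $u\lesssim\sqrt\beta$, instead of $O(u\beta^2)$. You then deliberately relax $u^2\beta\le u\beta^2$ to land on the stated $6(1\vee u)\beta^2$.

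\emph{What each buys.} The paper's argument is a one-line sup-norm bound. Yours is equally short, and if left unrelaxed it would let the factor $1\vee u$ in the anchor size $n_1$ be replaced by $1\vee u^2/\beta$. That does not change the final rate.

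One step you leave implicit: $s_0\ge 0$. Hence the nonnegative resolvent maps it to a nonnegative vector, and your one-sided coordinate-wise bound does control the $\ell_\infty$ norm.
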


\begin{proof}
Let
\begin{equation}
        s_0:=v_0+\sigma_0,
        \qquad
        s_\star:=v_\star+\sigma_\star.
\end{equation}
By Lemma~\ref{lem:variance-bound-pe},
\begin{equation}
        \left\|
        (I-\gamma^2\bar P^\pi)^{-1}s_\star
        \right\|_\infty
        \le
        \beta^2.
\end{equation}
By Lemma~\ref{lem:variance-perturb-pe}, applied with
\(\bar v_1=\bar v^{(0)}\) and \(\bar v_2=\bar V^\pi\),
\begin{equation}
        |s_0(x)-s_\star(x)|
        \le
        2\gamma^2u^2
        +
        2\sqrt2\,\gamma u\sqrt{s_\star(x)}.
\end{equation}
For each \(x\), \(s_\star(x)\) is the variance of the one-step random target
\begin{equation}
        r(x,\pi(x,Z),Z)+\gamma\bar V^\pi(X^{x,Z}_1),
\end{equation}
which is bounded in \([0,\beta]\). Hence \(s_\star(x)\le\beta^2\). Since
\(u\le\beta\), we obtain
\begin{equation}
        \|s_0-s_\star\|_\infty
        \le
        2u^2+2\sqrt2\,u\beta
        \le
        5u\beta.
\end{equation}
Since \(\bar P^\pi\) is stochastic,
\begin{equation}
        \left\|
        (I-\gamma^2\bar P^\pi)^{-1}
        \right\|_{\infty\to\infty}
        =
        \frac1{1-\gamma^2}
        \le
        \beta.
\end{equation}
Therefore
\begin{align}
\left\|
(I-\gamma^2\bar P^\pi)^{-1}s_0
\right\|_\infty
&\le
\left\|
(I-\gamma^2\bar P^\pi)^{-1}s_\star
\right\|_\infty
+
\left\|
(I-\gamma^2\bar P^\pi)^{-1}(s_0-s_\star)
\right\|_\infty
\notag\\
&\le
\beta^2+5u\beta^2
\le
6(1\vee u)\beta^2.
\end{align}
\end{proof}

\begin{proposition}[Uniform halving guarantee]
\label{prop:halving-pe}
There exist absolute constants \(c_1,c_2,c_3\) such that the following
holds. Let \(u\in(0,\beta]\), and suppose the input
\(\bar v^{(0)}\) satisfies
\begin{equation}
        0\le \bar v^{(0)}\le \beta\mathbf 1,
        \qquad
        \bar v^{(0)}\le T^\pi\bar v^{(0)},
        \qquad
        \bar V^\pi-\bar v^{(0)}\le u\mathbf 1.
\end{equation}
Run Algorithm~\ref{alg:halferr-pe} with
\begin{equation}
        n_1
        =
        \left\lceil
        c_2(1\vee u)\beta^3u^{-2}L
        \right\rceil,
        \qquad
        n_2
        =
        \left\lceil
        c_3\beta^2L
        \right\rceil.
\end{equation}
Then, with probability at least \(1-2\delta\), its output
\(\bar v^{(R)}\) satisfies
\begin{equation}
        \bar v^{(R)}\le T^\pi\bar v^{(R)},
        \qquad
        \bar V^\pi-\bar v^{(R)}\le \frac u2\,\mathbf 1.
\end{equation}
Moreover, the number of transition oracle calls is
\begin{equation}
        \widetilde O\!\left(
        \beta^3|\Xcal|
        \left(
        1+\frac{1\vee u}{u^2}
        \right)
        \right),
\end{equation}
and the number of context oracle calls is
\begin{equation}
        \widetilde O\!\left(
        \beta^3
        \left(
        1+\frac{1\vee u}{u^2}
        \right)
        \right).
\end{equation}
\end{proposition}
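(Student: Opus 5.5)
The proof follows the template of Proposition~\ref{prop:halving}, working on the event $\Ecal$ of Lemma~\ref{lem:contraction-pe}, which has probability at least $1-2\delta$. On $\Ecal$, item (ii) of that lemma at $i=R$ gives the sub-solution property. Iterating item (iv) gives
\[
\bar V^\pi-\bar v^{(R)}\le \gamma^R(\bar P^\pi)^R(\bar V^\pi-\bar v^{(0)})+(I-\gamma\bar P^\pi)^{-1}\xi\le \gamma^R u\mathbf 1+(I-\gamma\bar P^\pi)^{-1}\xi .
\]
The contraction term is at most $u/4$ by the same computation as in Step~1 of Proposition~\ref{prop:halving}, using $R=\lceil c_1\beta\log(4\beta/u)\rceil$, $c_1\ge4$ and $u\le\beta$. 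It then remains to show $\|(I-\gamma\bar P^\pi)^{-1}\xi\|_\infty\le u/4$. We split $\xi$ from Lemma~\ref{lem:sandwich-pe} into three pieces, handled as follows.
\begin{itemize}
\item The constant term $u/(8\beta)$ contributes at most $u/8$.
\item The lower-order terms $(8\alpha_1^{3/4}\beta+\tfrac43\alpha_1\beta)(I-\gamma\bar P^\pi)^{-1}\mathbf 1$ are at most $8\alpha_1^{3/4}\beta^2+\tfrac43\alpha_1\beta^2$.
\item The variance term is $2\sqrt{2\alpha_1}\,(I-\gamma\bar P^\pi)^{-1}\sqrt{v_0+\sigma_0}$.
\end{itemize}

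The variance term is the crux, and we handle it differently from the known-dynamics case. There we perturbed the square roots additively, via Lemma~\ref{lem:sqrt-stab}, and paid $u\beta$. Here we instead apply Lemma~C.3 of \cite{sidford2019nearoptimaltimesamplecomplexities} directly to $s_0=v_0+\sigma_0$ with $K=\bar P^\pi$, exactly as in Corollary~\ref{cor:V-pi-bound-pe}, and then invoke Lemma~\ref{lem:radius-variance-pe}. This requires $\bar v^{(0)}\le\bar V^\pi$, which follows from the sub-solution hypothesis and monotonicity of $T^\pi$. Combining the two gives
\[
\|(I-\gamma\bar P^\pi)^{-1}\sqrt{s_0}\|_\infty\le\sqrt{2\beta}\cdot\sqrt{6(1\vee u)}\,\beta=\sqrt{12(1\vee u)}\,\beta^{3/2}.
\]
Since $\alpha_1=L/n_1\le u^2/(c_2(1\vee u)\beta^3)$, the variance term is then at most $2\sqrt{2}\sqrt{12}\,u/\sqrt{c_2}\le 10u/\sqrt{c_2}$. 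The factor $(1\vee u)$ cancels exactly, which is the reason $n_1$ carries the factor $\rho(u)$.

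For the lower-order terms we have $\alpha_1\le u^2/(c_2\beta^3)$ as well. Using $u\le\beta$, this gives $\alpha_1^{3/4}\beta^2\le u/c_2^{3/4}$ and $\alpha_1\beta^2\le u/c_2$, just as in Proposition~\ref{prop:halving}. With $c_2\ge2^{18}$ we have $\sqrt{c_2}\ge512$, so these contributions together with the variance term sum to at most $u/16$. Hence $\|(I-\gamma\bar P^\pi)^{-1}\xi\|_\infty\le u/8+u/16\le u/4$, and therefore $\bar V^\pi-\bar v^{(R)}\le u/2$.

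For the sample count, Phase~1 uses $n_1$ contexts and $n_1|\Xcal|$ transitions. Phase~2 uses $Rn_2=\widetilde O(\beta^3)$ contexts and $Rn_2|\Xcal|$ transitions. Summing gives $\widetilde O(\beta^3(1+(1\vee u)/u^2))$ context calls and $|\Xcal|$ times that many transition calls. The only step requiring care is the variance term: we should make sure the resolvent bound is applied to $s_0$ rather than perturbing $\sqrt{s_0}$ additively. The additive route would cost $\beta\sqrt{u\beta}$ inside the resolvent and break the halving for small $u$.
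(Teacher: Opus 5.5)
Your argument follows the paper's proof of this proposition essentially step for step. You use the same success event, the same unrolled recursion, the same three-way split of $\xi$, and the same treatment of the variance term via Lemma~C.3 of \cite{sidford2019nearoptimaltimesamplecomplexities} and Lemma~\ref{lem:radius-variance-pe}, which gives the same constant $4\sqrt6\,u/\sqrt{c_2}$. You also correctly note that $\bar v^{(0)}\le\bar V^\pi$, needed for that lemma, follows from the sub-solution hypothesis.

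One step does not follow as written. For the residual $8\alpha_1^{3/4}\beta^2+\tfrac43\alpha_1\beta^2$ you drop the factor $(1\vee u)$ and keep only $\alpha_1\le u^2/(c_2\beta^3)$. That yields $\alpha_1^{3/4}\beta^2\le u^{3/2}/(c_2^{3/4}\beta^{1/4})$, which is at most $u/c_2^{3/4}$ only when $u\le\sqrt\beta$, not for all $u\le\beta$. At $u=\beta$, the first call of Algorithm~\ref{alg:meta-pe}, your bound on the residual is of order $\beta^{5/4}/c_2^{3/4}$. This exceeds $u/16$ once $\beta$ is large compared with the absolute constant $c_2$, so uniformity in $u\in(0,\beta]$ is lost exactly where the proposition is meant to hold. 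Appealing to Proposition~\ref{prop:halving} does not help, because the same computation there is also valid only for $u\le\sqrt\beta$.

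The fix uses what you already have: keep the factor, as the paper does. Since $\alpha_1\le u^2/(c_2(1\vee u)\beta^3)=\min(u^2,u)/(c_2\beta^3)$, you get $\alpha_1^{3/4}\beta^2\le\min(u^{3/2},u^{3/4})\,\beta^{-1/4}c_2^{-3/4}\le u/c_2^{3/4}$ for every $u>0$. The bound $\alpha_1\beta^2\le u/c_2$ holds as before. So $\rho(u)$ is needed for this lower-order term too, not only for the variance term.

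Separately, your closing remark has the direction reversed. The crude additive route $\sqrt{s_0}\le\sqrt{s_\star}+\sqrt{5u\beta}$ adds $\sqrt{5u}\,\beta^{3/2}$ inside the resolvent, which is harmless for $u\le1$. It is large radii that need $\rho(u)$, and with it the additive route closes as well. Completing the square in Lemma~\ref{lem:variance-perturb-pe} even gives $\sqrt{s_0}\le\sqrt{s_\star}+\sqrt2\,u$, the exact analogue of Lemma~\ref{lem:sqrt-stab}. That remark is not load-bearing, but the two routes are in fact interchangeable here.
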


\begin{proof}
The proof follows the proof of Proposition~\ref{prop:halving}, with one new
ingredient: the resolvent term involving the anchor variance is controlled by
Lemma~\ref{lem:radius-variance-pe} rather than directly by
Corollary~\ref{cor:V-star-bound}.

Work on the event \(\Ecal\) of Lemma~\ref{lem:contraction-pe}. By that
lemma, \(\Pr(\Ecal)\ge1-2\delta\). Let
\begin{equation}
        e_i:=\bar V^\pi-\bar v^{(i)}.
\end{equation}
The contraction inequality gives
\begin{equation}
        e_i
        \le
        \gamma\bar P^\pi e_{i-1}+\xi,
        \qquad i=1,\ldots,R.
\end{equation}
Iterating,
\begin{equation}
        e_R
        \le
        (\gamma\bar P^\pi)^R e_0
        +
        \sum_{t=0}^{R-1}(\gamma\bar P^\pi)^t\xi
        \le
        \gamma^R u\,\mathbf 1
        +
        (I-\gamma\bar P^\pi)^{-1}\xi.
\end{equation}
Since \(R\ge c_1\beta\log(4\beta/u)\), \(c_1\ge4\), and \(u\le\beta\),
\begin{equation}
        \gamma^R u\le \frac u4.
\end{equation}

It remains to control the statistical term. Decompose
\begin{equation}
        \xi
        =
        \frac{u}{8\beta}\mathbf 1
        +
        2\sqrt{2\alpha_1(v_0+\sigma_0)}
        +
        \left(
        8\alpha_1^{3/4}\beta
        +
        \frac43\alpha_1\beta
        \right)\mathbf 1.
\end{equation}
The first term contributes at most \(u/8\), exactly as in
Proposition~\ref{prop:halving}. For the variance term, Lemma~C.3 of
\cite{sidford2019nearoptimaltimesamplecomplexities} and
Lemma~\ref{lem:radius-variance-pe} give
\begin{align}
2
\left\|
(I-\gamma\bar P^\pi)^{-1}
\sqrt{2\alpha_1(v_0+\sigma_0)}
\right\|_\infty
&\le
2\sqrt{2\beta}
\left\|
(I-\gamma^2\bar P^\pi)^{-1}
2\alpha_1(v_0+\sigma_0)
\right\|_\infty^{1/2}
\notag\\
&\le
2\sqrt{2\beta}
\left(
12\alpha_1(1\vee u)\beta^2
\right)^{1/2}.
\end{align}
Since
\begin{equation}
        \alpha_1
        =
        \frac{L}{n_1}
        \le
        \frac{u^2}{c_2(1\vee u)\beta^3},
\end{equation}
the previous display is at most
\begin{equation}
        4\sqrt6\,\frac{u}{\sqrt{c_2}}
        \le
        \frac{u}{16}
\end{equation}
for \(c_2\) large enough.

The remaining deterministic residual satisfies
\begin{equation}
\left\|
(I-\gamma\bar P^\pi)^{-1}
\left(
8\alpha_1^{3/4}\beta
+
\frac43\alpha_1\beta
\right)\mathbf 1
\right\|_\infty
\le
8\alpha_1^{3/4}\beta^2+\frac43\alpha_1\beta^2.
\end{equation}
Using again
\begin{equation}
        \alpha_1
        \le
        \frac{u^2}{c_2(1\vee u)\beta^3},
\end{equation}
we have
\begin{equation}
        \alpha_1^{3/4}\beta^2
        \le
        \frac{u}{c_2^{3/4}},
        \qquad
        \alpha_1\beta^2
        \le
        \frac{u}{c_2}.
\end{equation}
Thus the deterministic residual is at most \(u/512\) after increasing
\(c_2\).

Combining these bounds,
\begin{equation}
        \|e_R\|_\infty
        \le
        \frac u4+\frac u8+\frac u{16}+\frac u{512}
        \le
        \frac u2.
\end{equation}
The sub-solution property
\begin{equation}
        \bar v^{(R)}\le T^\pi\bar v^{(R)}
\end{equation}
is Lemma~\ref{lem:contraction-pe}(ii).

It remains to count samples. Phase 1 draws \(n_1\) contexts and, for each
sampled context, one transition for every \(x\in\Xcal\). Hence it costs
\begin{equation}
        \widetilde O\!\left(
        \beta^3\frac{1\vee u}{u^2}
        \right)
\end{equation}
context samples and the same quantity multiplied by \(|\Xcal|\) transition
samples. Phase 2 uses \(Rn_2\) contexts and \(Rn_2|\Xcal|\) transitions.
Since
\begin{equation}
        R=\widetilde O(\beta),
        \qquad
        n_2=\widetilde O(\beta^2),
\end{equation}
Phase 2 contributes \(\widetilde O(\beta^3)\) context samples and
\(\widetilde O(\beta^3|\Xcal|)\) transition samples. This proves the stated
sample bounds.
\end{proof}

\begin{algorithm}[H]
\caption{Meta-algorithm for PE under unknown \(\mu\) and unknown \(P\)}
\label{alg:meta-pe}
\begin{algorithmic}[1]
\Require Generative oracles for \(\mu\) and \(P\); reward \(r\); oracle for
\(\pi\); target accuracy \(\eps\in(0,1]\); confidence \(\delta\in(0,1)\).
\State \(K\gets \max\{1,\lceil\log_2(\beta/\eps)\rceil\}\),
\(\delta'\gets\delta/(2K)\).
\State \(\bar v_0\gets\mathbf 0\), \(u_0\gets\beta\).
\For{\(k=1,\ldots,K\)}
    \State \(\bar v_k
        \gets
        \textsc{HalfErr-PE}(\bar v_{k-1},u_{k-1},\delta')\).
    \State \(u_k\gets u_{k-1}/2\).
\EndFor
\State \Return \(\hat{\bar V}^\pi:=\bar v_K\).
\end{algorithmic}
\end{algorithm}

\begin{theorem}[Averaged PE upper bound, unknown \(\mu\) and unknown \(P\)]
\label{thm:pe-ub-pe}
There exists an absolute constant \(C>0\) such that, for every
\(\eps\in(0,1]\), every \(\delta\in(0,1)\), and every context-aware policy
\(\pi\) supplied through an oracle, Algorithm~\ref{alg:meta-pe} outputs
\(\hat{\bar V}^\pi\) satisfying
\begin{equation}
        \Pr\!\left(
        \|\hat{\bar V}^\pi-\bar V^\pi\|_\infty>\eps
        \right)
        \le
        \delta.
\end{equation}
Moreover,
\begin{equation}
        n_{\rm transitions}
        \le
        C
        \frac{\beta^3|\Xcal|}{\eps^2}
        \operatorname{polylog}
        (|\Xcal|,\beta,1/\eps,1/\delta),
\end{equation}
and
\begin{equation}
        n_{\rm contexts}
        \le
        C
        \frac{\beta^3}{\eps^2}
        \operatorname{polylog}
        (|\Xcal|,\beta,1/\eps,1/\delta).
\end{equation}
\end{theorem}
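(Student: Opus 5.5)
The plan mirrors the proof of Theorem~\ref{thm:main-ub-mu-unknown}, with Proposition~\ref{prop:halving-pe} taking the place of Proposition~\ref{prop:halving}. There are three steps: establish the probability guarantee and the halving invariant by induction over the $K$ calls, deduce the two-sided $\ell_\infty$ bound from the sub-solution property, and then sum the per-call sample counts.

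\emph{Induction over calls.} Let $\mathcal G_k$ be the $\sigma$-field generated by all contexts and transitions drawn in calls $1,\dots,k$. The input $\bar v_{k}$ to call $k+1$ is $\mathcal G_k$-measurable, and the samples of call $k+1$ are fresh, so they are independent of $\mathcal G_k$. Suppose the input satisfies the three hypotheses of Proposition~\ref{prop:halving-pe} at radius $u_k$:
\[
0\le\bar v_k\le\beta\mathbf 1,\qquad \bar v_k\le T^\pi\bar v_k,\qquad \bar V^\pi-\bar v_k\le u_k\mathbf 1 .
\]
Then, conditionally on $\mathcal G_k$, call $k+1$ succeeds with probability at least $1-2\delta'$. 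Its output is again a sub-solution within radius $u_{k+1}=u_k/2$. It is also nonnegative and bounded by $\beta$, by Lemma~\ref{lem:contraction-pe}(i), since $\bar v^{(0)}\le\bar v^{(R)}\le\bar V^\pi\le\beta\mathbf 1$.

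The base case is $\bar v_0=\mathbf 0$. Since $r\ge0$, we have $T^\pi\mathbf 0\ge0$, and $\bar V^\pi\le\beta\mathbf 1=u_0\mathbf 1$. The tower property and a union bound over the $K$ calls give overall success probability at least $1-2K\delta'=1-\delta$.

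\emph{Two-sided bound.} On the success event, $\bar V^\pi-\hat{\bar V}^\pi\le u_K\mathbf 1=\beta2^{-K}\mathbf 1\le\eps\mathbf 1$. For the reverse inequality, the sub-solution property and monotonicity of the contraction $T^\pi$ give
\[
\hat{\bar V}^\pi\le T^\pi\hat{\bar V}^\pi\le (T^\pi)^2\hat{\bar V}^\pi\le\cdots\le\bar V^\pi .
\]
Together these yield $\|\hat{\bar V}^\pi-\bar V^\pi\|_\infty\le\eps$.

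\emph{Sample count.} This is the step where care is needed, because $n_1$ now depends on the radius through the factor $(1\vee u)/u^2$. Split the calls into two regimes.
\begin{itemize}
\item For calls with $u_{k-1}>1$, the Phase~1 cost is $\widetilde O(\beta^3/u_{k-1})$. These radii decrease geometrically from $\beta$ down to $1$, so the sum is dominated by the last term and is $\widetilde O(\beta^3)$.
\item For calls with $u_{k-1}\le1$, the cost is $\widetilde O(\beta^3/u_{k-1}^2)$. The quantities $u_{k-1}^{-2}$ grow by a factor $4$ per call, so the geometric sum is dominated by its final term. This term is at most $4/\eps^2$ up to constants, because $u_{K-1}=2u_K$ and $u_K$ is of order $\eps$.
\end{itemize}
Each call's Phase~2 costs $\widetilde O(\beta^3)$ contexts, and there are $K=O(\log(\beta/\eps))$ calls. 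Because $\eps\le1$, the term $\beta^3$ is at most $\beta^3/\eps^2$, so every piece is absorbed into the final bound. Adding the factor $|\Xcal|$ for transitions, which Proposition~\ref{prop:halving-pe} charges per context, gives $n_{\rm contexts}=\widetilde O(\beta^3/\eps^2)$ and $n_{\rm transitions}=\widetilde O(|\Xcal|\beta^3/\eps^2)$.

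The only bookkeeping is that $L$ depends on $\delta'=\delta/(2K)$, $R$, $n_1$ and $n_2$. All of these are polynomial in $\beta$, $1/\eps$, $|\Xcal|$ and $1/\delta$, so the logarithmic factors stay polylogarithmic.
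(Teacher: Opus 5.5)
Your proposal is correct and follows essentially the same route as the paper: you chain Proposition~\ref{prop:halving-pe} over the $K$ calls with a conditional union bound, then split the sample count into the regimes $u_{k-1}>1$ and $u_{k-1}\le 1$. Your explicit checks that each output stays in $[0,\beta]$, so that the next call's hypotheses hold, and that $\hat{\bar V}^\pi\le\bar V^\pi$ via the sub-solution property fill in details the paper leaves implicit.
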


\begin{proof}
The correctness of the chaining is identical to the proof of
Theorem~\ref{thm:main-ub-mu-unknown}. The initialization
\(\bar v_0=\mathbf 0\) satisfies
\begin{equation}
        0\le\bar v_0\le\beta\mathbf 1,
        \qquad
        \bar v_0\le T^\pi\bar v_0,
        \qquad
        \bar V^\pi-\bar v_0\le \beta\mathbf 1.
\end{equation}
At call \(k\), Proposition~\ref{prop:halving-pe} fails with probability at
most \(2\delta'=\delta/K\). On the success event,
\begin{equation}
        \bar v_k\le T^\pi\bar v_k,
        \qquad
        \bar V^\pi-\bar v_k\le \frac{u_{k-1}}2\mathbf 1
        =
        u_k\mathbf 1.
\end{equation}
A union bound over the \(K\) calls gives total failure probability at most
\(\delta\). On the complementary event,
\begin{equation}
        \|\hat{\bar V}^\pi-\bar V^\pi\|_\infty
        \le
        u_K
        =
        \beta 2^{-K}
        \le
        \eps.
\end{equation}

It remains to bound the total number of samples. At call \(k\),
\begin{equation}
        u_{k-1}=\beta 2^{-(k-1)}.
\end{equation}
By Proposition~\ref{prop:halving-pe}, the transition cost of call \(k\) is
\begin{equation}
        \widetilde O\!\left(
        \beta^3|\Xcal|
        \left[
        1+
        \frac{1\vee u_{k-1}}{u_{k-1}^2}
        \right]
        \right).
\end{equation}
The additive \(1\) contributes only \(\widetilde O(\beta^3|\Xcal|)\), which
is absorbed by the final bound since \(\eps\le1\).

For the anchor terms, split according to whether \(u_{k-1}>1\) or
\(u_{k-1}\le1\). If \(u_{k-1}>1\), then
\begin{equation}
        \frac{1\vee u_{k-1}}{u_{k-1}^2}
        =
        \frac1{u_{k-1}},
\end{equation}
and the corresponding geometric sum is \(\widetilde O(1)\). If
\(u_{k-1}\le1\), then
\begin{equation}
        \frac{1\vee u_{k-1}}{u_{k-1}^2}
        =
        \frac1{u_{k-1}^2},
\end{equation}
and the geometric sum is dominated by the last radius, giving
\begin{equation}
        \sum_{k:\,u_{k-1}\le1}
        \frac1{u_{k-1}^2}
        =
        \widetilde O\!\left(\frac1{\eps^2}\right).
\end{equation}
Therefore
\begin{equation}
        n_{\rm transitions}
        =
        \widetilde O\!\left(
        \frac{\beta^3|\Xcal|}{\eps^2}
        \right).
\end{equation}
The context count is identical without the multiplicative factor
\(|\Xcal|\):
\begin{equation}
        n_{\rm contexts}
        =
        \widetilde O\!\left(
        \frac{\beta^3}{\eps^2}
        \right).
\end{equation}
\end{proof}

Theorem~\ref{thm:pe-ub-pe} estimates the averaged value
\(\bar V^\pi\in\R^\Xcal\). The PE objective of
Definition~\ref{def:pac-objectives} is stated at a realized pair
\((x,z)\). Since \(P\) is unknown, the lift from \(\bar V^\pi\) to
\(V^\pi(x,z)\) must use fresh transition samples at the queried triplet. This
is the decision-time cost in Theorem~\ref{thm:main-pe-Punknown}.

\begin{proposition}[Pointwise lift for PE]
\label{prop:pe-lift-unknown}
Fix \((x,z)\in\Xcal\times\Zcal\). Let
\begin{equation}
        a:=\pi(x,z).
\end{equation}
Run Algorithm~\ref{alg:meta-pe} with accuracy \(\eps/2\) and confidence
\(\delta/2\), obtaining \(\hat{\bar V}^\pi\). Then draw
\begin{equation}
        m_{\rm PE}
        :=
        \left\lceil
        \frac{2\beta^2\log(4/\delta)}{\eps^2}
        \right\rceil
\end{equation}
fresh transitions
\begin{equation}
        X_1,\ldots,X_{m_{\rm PE}}
        \stackrel{\mathrm{iid}}{\sim}
        P(\cdot\mid x,a,z),
\end{equation}
and define
\begin{equation}
        \widehat{\mathcal V}^\pi(x,z)
        :=
        r(x,a,z)
        +
        \frac{\gamma}{m_{\rm PE}}
        \sum_{\ell=1}^{m_{\rm PE}}
        \hat{\bar V}^\pi(X_\ell).
\end{equation}
Then
\begin{equation}
        \Pr\!\left(
        |\widehat{\mathcal V}^\pi(x,z)-V^\pi(x,z)|>\eps
        \right)
        \le
        \delta.
\end{equation}
The query-time transition cost is
\begin{equation}
        m_{\rm PE}
        =
        \widetilde O(\beta^2/\eps^2).
\end{equation}
\end{proposition}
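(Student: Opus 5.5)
The plan is to split the error into an offline part and a query-time sampling part, bound each by \(\eps/2\) on an event of probability at least \(1-\delta/2\), and combine them with a union bound. Throughout, write \(a=\pi(x,z)\) and set \(\mu_{x,z}(\bar v):=\sum_{x'}P(x'\mid x,a,z)\bar v(x')\). The Bellman expectation identity from Section~\ref{sec:setting} gives \(V^\pi(x,z)=r(x,a,z)+\gamma\,\mu_{x,z}(\bar V^\pi)\). From this,
\[
\widehat{\mathcal V}^\pi(x,z)-V^\pi(x,z)
=\gamma\Bigl(\tfrac1{m_{\rm PE}}\textstyle\sum_\ell \hat{\bar V}^\pi(X_\ell)-\mu_{x,z}(\hat{\bar V}^\pi)\Bigr)
+\gamma\,\mu_{x,z}\bigl(\hat{\bar V}^\pi-\bar V^\pi\bigr).
\]

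For the second (offline) term, I will invoke Theorem~\ref{thm:pe-ub-pe} with accuracy \(\eps/2\) and confidence \(\delta/2\). This gives an event \(E_{\rm off}\) with \(\Pr(E_{\rm off})\ge1-\delta/2\) on which \(\|\hat{\bar V}^\pi-\bar V^\pi\|_\infty\le\eps/2\). Since \(P(\cdot\mid x,a,z)\) is a probability vector, on \(E_{\rm off}\) the second term is at most \(\gamma\eps/2\le\eps/2\) in absolute value.

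For the first (query) term, I will condition on the offline \(\sigma\)-algebra. The query samples \(X_\ell\) are drawn fresh, so they are independent of it, and \(\hat{\bar V}^\pi\) can be treated as a fixed vector. The one boundedness issue is whether \(\hat{\bar V}^\pi\) takes values in \([0,\beta]\). This holds because the meta-algorithm starts at \(\mathbf 0\) and the iterates are monotone and bounded by \(\bar V^\pi\le\beta\) (Lemma~\ref{lem:contraction-pe}(i)) on the success event. To avoid relying on that event, I can instead clip \(\hat{\bar V}^\pi\) to \([0,\beta]\). Clipping does not increase the distance to \(\bar V^\pi\in[0,\beta]\), so the offline bound is unaffected, and the clipped variables \(\hat{\bar V}^\pi(X_\ell)\) have range \(\beta\). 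Conditional Hoeffding then gives
\[
\Pr\Bigl(\bigl|\tfrac1{m}\textstyle\sum_\ell\hat{\bar V}^\pi(X_\ell)-\mu_{x,z}(\hat{\bar V}^\pi)\bigr|>\tfrac{\eps}{2\gamma}\Bigr)\le 2\exp\bigl(-m\eps^2/(2\gamma^2\beta^2)\bigr).
\]
With \(m=m_{\rm PE}\ge 2\beta^2\log(4/\delta)/\eps^2\), this is at most \(\delta/2\). Integrating over the offline randomness makes the bound unconditional, so the first term is at most \(\eps/2\) outside an event of probability \(\delta/2\).

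A union bound over the two failure events gives total error at most \(\eps\) with probability at least \(1-\delta\). The query cost \(m_{\rm PE}=\widetilde O(\beta^2/\eps^2)\) is immediate from its definition.

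The only delicate point is making the conditioning rigorous. The Hoeffding step must be applied conditionally on the data-dependent \(\hat{\bar V}^\pi\), which requires the range bound to hold deterministically; the clipping device, or the monotonicity invariant, supplies this.
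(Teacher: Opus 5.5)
Your proof is correct and follows the paper's argument essentially step by step: offline accuracy $\eps/2$ from Theorem~\ref{thm:pe-ub-pe}, conditional Hoeffding on the fresh query draws using the range $[0,\beta]$ of $\hat{\bar V}^\pi$, and a union bound. The clipping device is harmless but not needed. As the paper does, it suffices to apply Hoeffding conditionally on the offline data restricted to the offline success event, which is measurable with respect to the offline data; there $0\le\hat{\bar V}^\pi\le\bar V^\pi\le\beta\mathbf 1$ already holds, so the range bound need not hold deterministically.
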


\begin{proof}
By Theorem~\ref{thm:pe-ub-pe}, run with accuracy \(\eps/2\) and confidence
\(\delta/2\), with probability at least \(1-\delta/2\),
\begin{equation}
        \|\hat{\bar V}^\pi-\bar V^\pi\|_\infty
        \le
        \frac{\eps}{2}.
\end{equation}
Condition on this event. Since the algorithm preserves
\(0\le \hat{\bar V}^\pi\le\bar V^\pi\le\beta\mathbf 1\), each random
variable \(\gamma\hat{\bar V}^\pi(X_\ell)\) lies in \([0,\beta]\). Hoeffding's
inequality gives, with probability at least \(1-\delta/2\),
\begin{equation}
\left|
\frac{\gamma}{m_{\rm PE}}
\sum_{\ell=1}^{m_{\rm PE}}
\hat{\bar V}^\pi(X_\ell)
-
\gamma
\sum_{x'}P(x'\mid x,a,z)\hat{\bar V}^\pi(x')
\right|
\le
\frac{\eps}{2}.
\end{equation}
On the intersection of the two events,
\begin{align}
\left|
\widehat{\mathcal V}^\pi(x,z)-V^\pi(x,z)
\right|
&\le
\left|
\frac{\gamma}{m_{\rm PE}}
\sum_{\ell=1}^{m_{\rm PE}}
\hat{\bar V}^\pi(X_\ell)
-
\gamma
\sum_{x'}P(x'\mid x,a,z)\hat{\bar V}^\pi(x')
\right|
\notag\\
&\quad
+
\gamma
\left|
\sum_{x'}P(x'\mid x,a,z)
\bigl(\hat{\bar V}^\pi(x')-\bar V^\pi(x')\bigr)
\right|
\notag\\
&\le
\frac{\eps}{2}
+
\gamma
\|\hat{\bar V}^\pi-\bar V^\pi\|_\infty
\le
\eps.
\end{align}
The failure probability is at most \(\delta\) by a union bound.
\end{proof}

Combining Theorem~\ref{thm:pe-ub-pe} and
Proposition~\ref{prop:pe-lift-unknown} gives a decision-time PE mechanism
with
\begin{equation}
(n_{\rm learn},m_{\rm query})
=
\left(
\widetilde O\!\left(\frac{|\Xcal|\beta^3}{\eps^2}\right),
\widetilde O\!\left(\frac{\beta^2}{\eps^2}\right)
\right),
\end{equation}
as claimed.
\end{document}